\documentclass{article}
\PassOptionsToPackage{numbers, compress}{natbib}
\usepackage[main,final]{neurips_2026}
\usepackage[utf8]{inputenc}
\usepackage[T1]{fontenc}    
\usepackage{hyperref}      
\usepackage{url}          
\usepackage{booktabs}      
\usepackage{amsfonts}     
\usepackage{nicefrac}    
\usepackage{microtype}    
\usepackage{xcolor}        
\usepackage{graphicx}
\usepackage{amsmath,amssymb,amsthm,mathtools}
\usepackage{bbm}
\usepackage{enumitem}
\usepackage{algorithm}
\usepackage{algorithmic}
\usepackage{cleveref}
\usepackage{titletoc}
\newcommand\DoToC{%
  \startcontents
  \printcontents{}{1}{\textbf{Table of Contents}\vskip3pt\hrule\vskip5pt}
  \vskip3pt\hrule\vskip5pt
}
\usepackage{subcaption}
\theoremstyle{plain}
\newtheorem{theorem}{Theorem}[section]
\newtheorem{proposition}[theorem]{Proposition}
\newtheorem{corollary}[theorem]{Corollary}
\newtheorem{lemma}[theorem]{Lemma}

\theoremstyle{remark}
\newtheorem{remark}[theorem]{Remark}

\newcommand{\istar}{i^\star}

\newcommand{\Ber}{\mathrm{Ber}}
\newcommand{\KL}[2]{d\!\left(#1,#2\right)}
\newcommand{\Neff}{N_{\mathrm{eff}}}
\newcommand{\one}{\mathbf{1}}

\newcommand{\argmax}{\operatorname*{arg\,max}}
\newcommand{\argmin}{\operatorname*{arg\,min}}

\newcommand{\floor}[1]{\left\lfloor #1 \right\rfloor}
\newcommand{\ceil}[1]{\left\lceil #1 \right\rceil}
\newcommand{\eps}{\varepsilon}
\newcommand{\cF}{\mathcal{F}}
\newcommand{\cG}{\mathcal{G}}
\newcommand{\cK}{\mathcal{K}}

\newcommand{\cA}{\mathcal{A}}
\newcommand{\cE}{\mathcal{E}}

\newcommand{\Tstar}{T^\star}
\newcommand{\Gdisc}{\Gamma^{\mathrm{disc}}}
\title{Sharp Non-Asymptotic Analysis of the Penalized Challenger in $\beta$-EB-TCI for Bernoulli Bandits}
\author{%
  Nam Nguyen$^{1,2}$ \qquad Tuan Dam$^{1}$\thanks{Corresponding author.} \\
  $^{1}$Hanoi University of Science and Technology, Hanoi, Vietnam\\
  $^{2}$Center for AI Research, VinUniversity, Hanoi, Vietnam\\
  \texttt{26nam.n@vinuni.edu.vn}, \texttt{tuandq@soict.hust.edu.vn}
}
\hypersetup{
  pdftitle={Sharp Non-Asymptotic Analysis of the Penalized Challenger in beta-EB-TCI for Bernoulli Bandits},
  pdfauthor={Nam Nguyen and Tuan Quang Dam},
  colorlinks=true,
  linkcolor=blue,
  citecolor=blue,
  urlcolor=blue
}
\begin{document}
\numberwithin{equation}{section} %
\maketitle
\begin{abstract}
Top-two algorithms are simple and effective for fixed-confidence best-arm
identification, but their sharp non-asymptotic behavior is still not well
understood. We study this question for Bernoulli bandits through
$\beta$-EB-TCI, the empirical-best top-two rule of \citet{jourdan2022top},
whose challenger minimizes a Bernoulli transportation cost plus a logarithmic
penalty in its pull count. We first analyze the stabilized regime, in which the
empirical leader has become the true best arm and its sampling fraction stays
close to $\beta$. In this regime, every suboptimal arm is sampled linearly
often, and, with high probability, the stopping time is at most $T_{\beta}^{\star}(\mu)\log(1/\delta)$
plus stabilization and lower-order concentration terms, with the exact
fixed-$\beta$ leading coefficient. The main remaining difficulty is therefore to control when the
regime is reached. For the original rule, we obtain a non-asymptotic
high-probability stopping bound on every Bernoulli instance with a unique best
arm. Under a finite-mean sufficient-exploration condition, the bound yields the
sharp expected sample complexity; combined with the exploration result of
\citet{jourdan2022top}, this covers pairwise-distinct means under their
tie-breaking convention. A mild forced-exploration guard, which uses only
$O(\sqrt{Kt})$ pulls up to time $t$, gives a self-contained sharp expected
bound for any number of arms with a unique best arm. We also identify a limitation of
proof strategies that try to handle equal suboptimal means through a single
index-comparison argument.
\end{abstract}
\section{Introduction}
\label{sec:introduction}

Stochastic bandits model sequential decision-making under uncertainty. At each
round, a learner chooses an arm, observes a reward, and uses the accumulated
data to decide what to sample next \citep{lattimore2020bandit}. We study
fixed-confidence best-arm identification for a Bernoulli bandit with a unique
best arm \citep{even2006action,kaufmann2016complexity}. A strategy stops at a
random time $\tau_{\delta}$ and recommends an arm, which must be the best one
with probability at least $1-\delta$. The goal is to minimize the expected
number of samples while preserving both termination and the target error
probability. We use \emph{finite-confidence} for guarantees that hold at each
fixed $\delta$, rather than only in the asymptotic limit $\delta\to0$.

A central benchmark is the instance-dependent characteristic time $\Tstar(\mu)$.
Every $\delta$-correct strategy needs at least about $\Tstar(\mu)\log(1/\delta)$
samples in expectation as $\delta\to0$, and $1/\Tstar(\mu)$ is the largest rate
at which an allocation of samples can accumulate evidence against the hardest
competitor \citep{kaufmann2016complexity,garivier2016track}. Track-and-Stop
attains this benchmark by tracking an estimated optimal allocation, which
requires repeatedly solving a global optimization problem.

Top-two algorithms provide a simpler alternative
\citep{russo2016simple,shang2020fixed,jourdan2022top}. They maintain a current
leader and compare it with one challenger at a time; in the randomized rules
considered here, the leader is sampled with probability $\beta\in(0,1)$ and the
challenger otherwise. This structure is easy to implement and has strong
asymptotic guarantees. Once the leader remains the best arm, its sampling
fraction converges to $\beta$, so the natural benchmark becomes the fixed-$\beta$
characteristic time $\Tstar_{\beta}(\mu)\ge\Tstar(\mu)$: the best achievable when
the remaining fraction $1-\beta$ is split optimally among the competitors
\citep{russo2016simple,jourdan2022top} (Section~\ref{subsec:characteristic-time}).

In particular, \citet{jourdan2022top} prove asymptotic $\beta$-optimality,
$\limsup_{\delta\to0}\mathbb{E}_{\mu}[\tau_{\delta}]/\log(1/\delta)\le\Tstar_{\beta}(\mu)$,
for several top-two rules on bounded rewards, including Bernoulli $\beta$-EB-TCI
with uniform tie-breaking when all means are distinct. Non-asymptotic results
with the exact leading coefficient are more limited: the limit does not control
the remainder at a fixed $\delta$, and high-probability stopping bounds alone do
not control the expectation \citep{balagopalan2025fixing}. The closest existing
analysis is the Gaussian Top-Two UCB (TTUCB) result of \citet{jourdan2023ucbtop},
which treats a different top-two rule and a different reward model. This leaves
the following question. \emph{Does Bernoulli $\beta$-EB-TCI stop after
$\Tstar_{\beta}(\mu)\log(1/\delta)$ samples up to explicit lower-order terms at
each fixed $\delta$?}

We analyze the exact Bernoulli $\beta$-EB-TCI rule, where EB-TCI stands for
Empirical-Best leader with Transportation-Cost-Improved challenger. Under the
unguarded rule, the empirical-best arm is sampled with probability $\beta$ at each
round. Otherwise, the algorithm samples the challenger against which the
Bernoulli generalized likelihood-ratio (GLR) evidence is weakest, after adding
the logarithm of its pull count. This term favors less-sampled challengers when
the evidence values are comparable. It is part of the algorithm, not an
auxiliary proof term.

A finite-confidence analysis of this rule must address three coupled questions.
First, how closely do the empirical GLR statistics follow their population
counterparts under adaptive sampling? Second, once the true best arm is the
persistent leader, does the penalized challenger rule sample every suboptimal
arm often enough? Third, how long does it take until the leader stays correct
and the best-arm sampling fraction stays near $\beta$? Since the error guarantee
of \citet{jourdan2022top} concerns runs that stop, termination must also be
proved. We answer the first two questions in the \emph{stabilized regime}, using
a time-uniform comparison of empirical and population GLR statistics
(Theorem~\ref{thm:symmetric-glr-control}) and a discrete benchmark
(Proposition~\ref{prop:oracle}) showing that the penalty adds a common $\log t$ to the continuous rate
$t/\Tstar_{\alpha}(\mu)$, up to a bounded remainder. We then bound the time needed to reach
this regime.

For Bernoulli means $\mu\in(0,1)^K$ with a unique best arm and fixed $\beta$,
our expected-time bound reads informally, for every $\delta\in(0,1/2]$,
\[
  \mathbb{E}_{\mu}[\tau_{\delta}]
  \le
  \Tstar_{\beta}(\mu)\log(1/\delta)
  +O\!\left(\sqrt{\log(1/\delta)\,\log\log(e/\delta)}\right),
\]
under finite-mean sufficient exploration for the unguarded rule: every arm's
count is at least a positive constant times $\sqrt{t/K}$ after a random time
with finite mean. An exploration guard, which pulls a least-sampled arm
whenever some count falls below $\sqrt{t/K}$ and uses $O(\sqrt{Kt})$ pulls by
time $t$, gives the same bound for every such instance.

\paragraph{Contributions.}
\begin{enumerate}[leftmargin=1.35em]
\item \textbf{Stopping and count growth after stabilization.}
High-probability stopping bounds have the exact fixed-$\beta$ leading
coefficient, as do expected bounds when stabilization admits a summable
upper-quantile bound (Theorem~\ref{thm:conditional}); each suboptimal count
grows linearly after a transient (Proposition~\ref{prop:linear}).
Throughout, \emph{sharp} refers only to this exact leading coefficient; we do
not optimize lower-order constants or prove a matching non-asymptotic lower bound.

\item \textbf{Whole-run guarantees.}
We prove a high-probability stopping bound for every unique-best instance
(Corollary~\ref{cor:unguarded-hp}) and the expected bounds above
(Theorems~\ref{thm:unguarded-suffexp} and~\ref{thm:guarded}). Combined with
\citet{jourdan2022top}, the unguarded expected bound covers pairwise-distinct
means under their tie-breaking convention, which does not include our
deterministic default (Corollary~\ref{cor:distinctmeans}).

\item \textbf{A limitation of one-step comparisons.}
With equal-mean suboptimal arms and logarithmically many pulls of a higher-mean
arm, a population index comparison required to hold for all sufficiently large
leader counts requires a number of pulls of an
equal-mean arm that is polynomial in the inverse confidence level
(Proposition~\ref{prop:barrier}).
\end{enumerate}

Section~\ref{sec:setting-algorithm}
defines the problem and algorithm; Sections~\ref{sec:oracle-glr}--\ref{sec:unconditional-closures}
develop the tools, stopping bounds, and entry guarantees;
Section~\ref{sec:experiments} reports experiments.

\section{Related Work}
\label{sec:related-work}

We review fixed-confidence best-arm identification and the concentration tools
used in our analysis.

\paragraph{Fixed-confidence identification and allocation tracking.}
Fixed-confidence best-arm identification has progressed from elimination
methods to information-theoretic, instance-dependent analyses
\citep{mannor2004sample,even2006action,audibert2010best,karnin2013almost}.
The information-theoretic lower bounds of \citet{kaufmann2016complexity} and the
characteristic-time formulation and Track-and-Stop rule of \citet{garivier2016track}
turn these bounds into an adaptive rule that tracks a globally optimal allocation. Methods using lower and
upper confidence bounds (LUCB), including KL-based refinements, form another family
\citep{kalyanakrishnan2012pac,kaufmann2013information,jamieson2014lilucb}.
We fix the best-arm sampling fraction at $\beta$ and use local
leader--challenger updates rather than tracking the globally optimal allocation.

\paragraph{Top-two methods.}
Top-two algorithms repeatedly compare a current leader with a selected
challenger. They were introduced in Bayesian form by \citet{russo2016simple},
studied in fixed-confidence settings by \citet{shang2020fixed}, and developed
into a frequentist bounded-reward framework by \citet{jourdan2022top}.
In that framework, $\beta$-EB-TCI uses the empirical leader. We specialize
the rule to Bernoulli rewards and analyze the challenger index with its exact
logarithmic count penalty, rather than an unpenalized rule, a Gaussian
approximation, or a different top-two construction. The closest existing
non-asymptotic analysis is the Gaussian TTUCB result of
\citet{jourdan2023ucbtop}.

\paragraph{Time-uniform concentration and stopping-time tails.}
Our concentration arguments use mixture martingales and time-uniform
confidence bounds, or confidence sequences
\citep{kaufmann2021mixture,howard2020time,howard2021confidence}.
We convert armwise KL confidence sequences into time-uniform control of pairwise
Bernoulli GLR statistics on the effective-sample-size scale of the two arms.
Correctness and high-probability stopping bounds alone do not imply a sharp
expected stopping-time bound
\citep{balagopalan2025fixing}. We use the GLR stopping-rule error
guarantee of \citet{jourdan2022top} and prove termination and finite-time
challenger-count bounds here. Appendix~\ref{app:prior-results} records the
role of each result used from prior work.

\section{Problem Setting and Algorithm}
\label{sec:setting-algorithm}

We define the bandit model, evidence statistics, allocation benchmarks, and algorithm.

\subsection{Problem Formulation}

Consider $K\ge2$ Bernoulli arms, indexed by $[K]:=\{1,\dots,K\}$, with means
$\mu=(\mu_1,\dots,\mu_K)\in(0,1)^K$.
We assume a unique best arm,
$
  \istar=\argmax_{i\in[K]}\mu_i.
$
The algorithm first pulls each arm once. After $t\ge K$ pulls, $N_{t,i}$ and
$\widehat{\mu}_{t,i}$ denote arm $i$'s pull count and empirical mean;
$I_{t+1}$ denotes the arm sampled at round $t+1$. Unless stated otherwise,
all finite-set $\argmax$ and $\argmin$ operations use fixed deterministic
tie-breaking (smallest-index in experiments), with the exception stated in
Corollary~\ref{cor:distinctmeans}.

A strategy samples adaptively, stops at a stopping time $\tau_{\delta}$ for its
observation and randomization history, and recommends $\widehat{i}_{\tau_{\delta}}$.
Write $\mathbb{P}_{\mu}$ for the law of this history on instance $\mu$.
For $\delta\in(0,1)$, the strategy is $\delta$-correct if, for every instance
$\mu$ with a unique best arm,
\[
  \mathbb{P}_{\mu}(\tau_{\delta}<\infty)=1
  \quad\text{and}\quad
  \mathbb{P}_{\mu}(\widehat{i}_{\tau_{\delta}}\neq \istar)\le\delta.
\]
The goal is to minimize the expected number of samples subject to these conditions.
\emph{Finite-confidence} guarantees hold at each fixed $\delta$, not only as $\delta\to0$.

\subsection{Evidence Statistics}

The Bernoulli divergence, measuring the cost of explaining mean $p$ by $q$, is
defined for $p\in[0,1]$ and $q\in(0,1)$ by
\[
  \KL{p}{q}
  :=
  p\log\frac{p}{q}+(1-p)\log\frac{1-p}{1-q},
\]
with $0\log0=0$ and $1\log1=0$. Here $\Ber(p)$ denotes the Bernoulli law
with mean $p$. For interior arguments,
$\KL{p}{q}=\operatorname{KL}(\Ber(p),\Ber(q))$, the Kullback--Leibler
divergence between these laws. All logarithms are natural.

Evidence that distinct arms $i,j\in[K]$ have different means is measured by
the symmetric two-sample GLR statistic
$
  V_t(i,j)
  :=
  \inf_{x\in(0,1)}
  \left\{
    N_{t,i}\KL{\widehat{\mu}_{t,i}}{x}
    +
    N_{t,j}\KL{\widehat{\mu}_{t,j}}{x}
  \right\}.
$
The infimum is unchanged over $[0,1]$ with the standard Bernoulli-KL boundary
convention. Replacing empirical means by true means isolates allocation
information in the population counterpart
$
  V_t^{\mu}(i,j)
  :=
  \inf_{x\in(0,1)}
  \left\{
    N_{t,i}\KL{\mu_i}{x}
    +
    N_{t,j}\KL{\mu_j}{x}
  \right\}.
$

The one-sided transportation cost of \citet{jourdan2022top} counts evidence
that $i$ beats $j$ only under that empirical ordering:
\[
  W_t(i,j)
  :=
  \begin{cases}
  V_t(i,j), & \widehat{\mu}_{t,i}\ge\widehat{\mu}_{t,j},\\
  0, & \widehat{\mu}_{t,i}<\widehat{\mu}_{t,j}.
  \end{cases}
\]
If $i$ is the empirical leader and $j\neq i$, then
$
  W_t(i,j)=V_t(i,j).
$
The symmetric GLR is minimized at the sample-size-weighted pooled mean,
with the boundary convention above:
$
  x_t(i,j)
  =
  \frac{N_{t,i}\widehat{\mu}_{t,i}+N_{t,j}\widehat{\mu}_{t,j}}
       {N_{t,i}+N_{t,j}}.
$
For $V_t^{\mu}(i,j)$, the pooled mean uses true rather than empirical means.
Appendix~\ref{app:bernoulli-specialization} derives these identities.

\subsection{Characteristic Times}
\label{subsec:characteristic-time}

Prior work gives the leading lower-bound term $\Tstar(\mu)\log(1/\delta)$
for the expected number of samples used by $\delta$-correct strategies as $\delta\to0$
\citep{kaufmann2016complexity,garivier2016track}. The fixed-$\beta$ benchmark
of \citet{jourdan2022top} prescribes the best arm's sampling fraction.

Allocate fraction $\alpha$ of the samples to the best arm
and split $1-\alpha$ among competitors. If arm $j$ receives fraction $\omega$,
its population GLR grows at rate $I_j^{\alpha}(\omega)$ per sample, defined
below. With $K=2$ and arm $1$ best, all remaining samples go to arm $2$,
giving $1/\Tstar_{\alpha}(\mu)=I_2^{\alpha}(1-\alpha)$.

For $\alpha\in(0,1)$, $j\in[K]\setminus\{\istar\}$, and scalar $\omega>0$,
define the pairwise information rate by
\[
  I_j^{\alpha}(\omega)
  :=\inf_{x\in(0,1)}
  \left\{\alpha \KL{\mu_{\istar}}{x}+\omega \KL{\mu_j}{x}\right\}.
\]
For the allocation vector $w=(w_j)_{j\neq\istar}$, the feasible set and
fixed-fraction characteristic time are
\[
\begin{aligned}
  \Delta_{1-\alpha}
  &:=\left\{w\in(0,\infty)^{K-1}:\sum_{j\neq \istar}w_j=1-\alpha\right\},\\
  \Tstar_{\alpha}(\mu)^{-1}
  &:=\max_{w\in\Delta_{1-\alpha}}
  \min_{j\neq \istar} I_j^{\alpha}(w_j).
\end{aligned}
\]
The global characteristic time is
$\Tstar(\mu):=\min_{\alpha\in(0,1)}\Tstar_{\alpha}(\mu)$.

For more than two arms, the smallest pairwise rate limits progress. The best
split equalizes these rates (Appendix~\ref{app:boundary-inverses}), giving the
leading sample requirement $\Tstar_{\beta}(\mu)\log(1/\delta)$ at best-arm
fraction $\beta$. Since $\Tstar_{\beta}(\mu)\ge\Tstar(\mu)$, fixing $\beta$
can increase this requirement. We keep $\beta$ fixed throughout; online
estimation or adaptive choice of $\beta$ is outside our scope.

\subsection{The \texorpdfstring{$\beta$}{beta}-EB-TCI Algorithm}

We now specify the leader--challenger comparisons and stopping rule.
After $t$ pulls, the empirical leader is
$
B_{t+1}\in\argmax_{i\in[K]}\widehat{\mu}_{t,i}.
$
Set $\widehat{i}_t:=B_{t+1}$ as the recommendation. On an unguarded round
before stopping, the challenger minimizes evidence plus a logarithmic count
penalty, favoring less-sampled arms when evidence is comparable:
$
  C_{t+1}\in\argmin_{j\in[K]\setminus\{B_{t+1}\}}
  \left\{W_t(B_{t+1},j)+\log N_{t,j}\right\}.
$
On an unguarded round, pull the leader with probability $\beta\in(0,1)$ and
the challenger with probability $1-\beta$; after stabilization, $\beta$
controls the true best arm's sampling fraction.

The optional \emph{exploration guard} instead pulls a least-sampled arm when
$\min_{i\in[K]}N_{t,i}<\sqrt{t/K}$. The leader is still defined; the challenger
is used only on unguarded rounds. Write $G_t^{\mathrm{tot}}$ for the number of
guarded pulls by time $t$; the \emph{unguarded algorithm} omits the guard.
The GLR stopping threshold of \citet{jourdan2022top} is
\begin{equation}
\label{eq:threshold}
  c(t,\delta)=\log(1/\delta)+2\log(1+t/2)+2+\log(K-1).
\end{equation}
The stopping time and terminal recommendation are
\begin{equation}
\label{eq:stopping-rule}
  \tau_{\delta}
  :=
  \inf\left\{t\ge K:
  \min_{j\neq B_{t+1}}W_t(B_{t+1},j)>c(t,\delta)\right\},
  \qquad
  \widehat{i}_{\tau_{\delta}}:=B_{\tau_{\delta}+1}.
\end{equation}
For bounded rewards and any adaptive sampling rule, \citet{jourdan2022top} give
\[
  \mathbb{P}_{\mu}\!\left(
    \tau_{\delta}<\infty,
    \widehat{i}_{\tau_{\delta}}\neq \istar
  \right)
  \le \delta.
\]
This bounds the probability of a wrong recommendation on runs that stop.
The almost-sure termination results in
Sections~\ref{sec:stable-entry}--\ref{sec:unconditional-closures} then give
$\delta$-correctness in the covered regimes (Appendix~\ref{app:correctness}).
Algorithm~\ref{alg:ebtci} summarizes the rule.

\begin{algorithm}[H]
\caption{$\beta$-EB-TCI with GLR stopping and optional exploration guard.}
\label{alg:ebtci}
\begin{algorithmic}[1]
\STATE Pull each arm once.
\FOR{$t=K,K+1,\dots$}
  \STATE $B_{t+1}\in\argmax_{i\in[K]}\widehat{\mu}_{t,i}$ \hfill \emph{fixed deterministic tie-breaking}
  \STATE $\widehat{i}_t\gets B_{t+1}$
  \IF{$\min_{j\neq B_{t+1}}W_t(B_{t+1},j)>c(t,\delta)$}
    \STATE stop and recommend $\widehat{i}_t$
  \ENDIF
  \IF{the guard is enabled and $\min_{i\in[K]}N_{t,i}<\sqrt{t/K}$}
    \STATE $I_{t+1}\in\argmin_{i\in[K]}N_{t,i}$ \hfill \emph{guarded pull}
  \ELSE
    \STATE $C_{t+1}\in\argmin_{j\in[K]\setminus\{B_{t+1}\}}\left\{W_t(B_{t+1},j)+\log N_{t,j}\right\}$
    \STATE Draw $U_{t+1}\sim\Ber(\beta)$ independently of the past.
    \STATE $I_{t+1}\gets B_{t+1}$ if $U_{t+1}=1$, and $I_{t+1}\gets C_{t+1}$ otherwise.
  \ENDIF
\ENDFOR
\end{algorithmic}
\end{algorithm}

\section{Population Allocation Benchmark and Uniform GLR Control}
\label{sec:oracle-glr}

We develop a population allocation benchmark and a uniform empirical GLR
comparison to control challenger evidence once the empirical leader is correct.
Throughout, bound constants $C_{\cdots}$ are positive and finite unless stated
otherwise; their subscripts and arguments specify dependence, with $K$ included in $\mu$.
Values may differ between results.

\subsection{A Discrete Penalized Allocation Benchmark}
\label{subsec:discrete-oracle}

The algorithm's integer counts and logarithmic count penalty require a discrete
counterpart to $\Tstar_{\alpha}(\mu)$ from Section~\ref{subsec:characteristic-time}.
Let integers $n\ge1$ and $s\ge K-1$ be the best-arm pull count and total
suboptimal-arm pull count. For each suboptimal arm $j$ and integer $m\ge1$, define
$
  \Phi_j(n,m)
  :=
  \inf_{x\in(0,1)}
  \left\{n \KL{\mu_{\istar}}{x}+m \KL{\mu_j}{x}\right\}
  +\log m.
$
For count vectors $m=(m_j)_{j\neq\istar}$, define the discrete penalized
allocation benchmark
\[
  \Gdisc_{n,s}(\mu)
  :=
  \max_{m\in\{1,2,\dots\}^{K-1}:\sum_{j\neq \istar}m_j=s}
  \min_{j\neq \istar}\Phi_j(n,m_j).
\]
This benchmark gives the largest minimum population index at the prescribed
counts, including $\beta$-EB-TCI's logarithmic count penalty. It depends on true
means unavailable to the learner.

\begin{proposition}[Approximation of the discrete penalized benchmark]
\label{prop:oracle}
Fix a compact interval $U\Subset(0,1)$. There exists a constant
$C_{\mu}(U)<\infty$ such that for all integers $n\ge1$ and $s\ge K-1$, if
$t=n+s$, $\alpha=n/t$, and $\alpha\in U$, then
\[
  \left\lvert
    \Gdisc_{n,s}(\mu)
    -
    \left(
      \frac{t}{\Tstar_{\alpha}(\mu)}+\log t
    \right)
  \right\rvert
  \le C_{\mu}(U).
\]
\end{proposition}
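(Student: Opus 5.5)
Both inequalities come from one scaling identity. Since $\KL{\cdot}{\cdot}$ is jointly linear in the weights, for real $n,m>0$ with $t=n+s$,
\[
  \inf_{x}\{n\KL{\mu_{\istar}}{x}+m\KL{\mu_j}{x}\}=t\,I_j^{\alpha}(m/t),
\]
with $\alpha=n/t$. So $\Phi_j(n,m)=t\,I_j^{\alpha}(m/t)+\log m$. On the compact set $\{(\alpha,\omega):\alpha\in U,\ \omega\in(0,1-\alpha]\}$, each $\omega\mapsto I_j^{\alpha}(\omega)$ is concave, increasing and continuous, with $I_j^\alpha(0)=0$. By the envelope theorem (Danskin, the minimizer $x$ being the pooled mean, which stays interior), its derivative is $\partial_\omega I_j^\alpha(\omega)=\KL{\mu_j}{x^\star}$. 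This derivative is bounded above by some $L_\mu(U)$ uniformly in $\alpha\in U$ and $\omega\le 1$. I would also use the characterization from Appendix~\ref{app:boundary-inverses}: the optimal $w^\star(\alpha)\in\Delta_{1-\alpha}$ equalizes the rates, $I_j^\alpha(w^\star_j)=1/\Tstar_\alpha(\mu)$ for all $j$. Continuity and positivity in $\alpha$ give $w_j^\star(\alpha)\ge c_\mu(U)>0$ uniformly on $U$.

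\emph{Lower bound.} Take $m_j=\lfloor t w_j^\star\rfloor$ and give the remaining at most $K-1$ units to arbitrary arms, so that $\sum_j m_j=s$. For $t$ large enough (depending on $U$), each $m_j\ge 1$ and $m_j\ge t w_j^\star-1$. Monotonicity and the Lipschitz bound then give $t I_j^\alpha(m_j/t)\ge t/\Tstar_\alpha-L_\mu(U)$. We also have $\log m_j\ge \log t+\log c_\mu(U)-1$ for large $t$. Hence $\Gdisc_{n,s}\ge t/\Tstar_\alpha+\log t-C$. The finitely many small $t$ are absorbed into the constant, since both sides are finite there.

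\emph{Upper bound.} Fix any feasible $m$, and set $v:=\min_j\Phi_j(n,m_j)$. Using only $\log m_j\le\log t$, every $j$ satisfies $t I_j^\alpha(m_j/t)\ge v-\log t$. The vector $(m_j/t)_j$ lies in $\Delta_{1-\alpha}$, since $\sum_j m_j/t=s/t=1-\alpha$. By the max--min definition of $\Tstar_\alpha$, $\min_j I_j^\alpha(m_j/t)\le 1/\Tstar_\alpha(\mu)$. Therefore $v-\log t\le t/\Tstar_\alpha$, and taking the max over $m$ gives $\Gdisc_{n,s}\le t/\Tstar_\alpha+\log t$ with no error at all. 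This direction is essentially free.

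\emph{Main obstacle.} The delicate part is the uniformity in the lower bound. I need the Lipschitz constant of $I_j^\alpha$, the lower bound $c_\mu(U)$ on $w^\star_j(\alpha)$, and the threshold on $t$ all uniform over $\alpha\in U$. I would prove the equalizing characterization and the continuity of $w^\star(\alpha)$ via the implicit function theorem, using that $\omega\mapsto I_j^\alpha(\omega)$ is strictly increasing with positive derivative on the compact range. Boundedness of $\KL{\mu_j}{x^\star}$ follows because the pooled mean $x^\star$ lies between $\mu_j$ and $\mu_{\istar}$, both in $(0,1)$. The rounding step must lose only $O(1)$ in $t I_j^\alpha$; Lipschitz continuity of $I_j^\alpha$ is exactly what makes the rounding loss $O(1)$ rather than $O(t)$.
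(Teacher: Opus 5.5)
Your proof is correct and follows the paper's argument. The upper bound uses $\log(m_j/t)\le0$ together with the max--min definition of $\Tstar_{\alpha}(\mu)$, and the lower bound rounds $t\,w^{\alpha}(\mu)$ to integers, with a uniform slope bound on $I_j^{\alpha}$, the positivity of $\inf_{\alpha\in U}\min_j w_j^{\alpha}(\mu)$, and small $t$ absorbed into the constant; your floor-plus-remainder rounding and global bound $\partial_\omega I_j^{\alpha}\le\KL{\mu_j}{\mu_{\istar}}$ are only minor simplifications of the paper's fixed-sum rounding lemma and its slope bound on $[c_U/2,1]$.
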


Writing $w_j=m_j/t$ separates the continuous contribution
$t/\Tstar_{\alpha}(\mu)$ from the common penalty $\log t$.
Rounding and arm-dependent penalty terms have bounded cost uniformly for $\alpha\in U$
(Appendix~\ref{app:oracle-proof}).

\subsection{Uniform Comparison of Empirical and Population Pairwise GLR Statistics}
\label{subsec:glr-control-main}

We next control the error from replacing true means by adaptive empirical
means. For $t\ge K$ and $\eta\in(0,1)$, define the time-uniform confidence term
$
  H_t(\eta):=\log(K/\eta)+\log\log(et)
$
and, for distinct arms $i,j\in[K]$, the two-arm effective sample size
$
  N_{\mathrm{eff},t}(i,j):=\frac{N_{t,i}N_{t,j}}{N_{t,i}+N_{t,j}}.
$
It satisfies
$\tfrac12\min\{N_{t,i},N_{t,j}\}\le N_{\mathrm{eff},t}(i,j)\le
\min\{N_{t,i},N_{t,j}\}$ and sets the fluctuation scale in the next theorem.

\begin{theorem}[Time-uniform pairwise Bernoulli GLR approximation]
\label{thm:symmetric-glr-control}
There exists an instance-dependent constant $C_{\mu}<\infty$ such that for every
$\eta\in(0,1)$ there is an event $\cG_{\eta}$ with
$\mathbb{P}_{\mu}(\cG_{\eta})\ge1-\eta$ such that, on
$\cG_{\eta}$, simultaneously for all $t\ge K$ and all pairs
$i,j\in[K]$ with $i\neq j$,
\[
  \left\lvert V_t(i,j)-V_t^{\mu}(i,j)\right\rvert
  \le
  C_{\mu}
  \left(
    \sqrt{N_{\mathrm{eff},t}(i,j)H_t(\eta)}
    +
    H_t(\eta)
  \right).
\]
\end{theorem}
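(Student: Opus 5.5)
The plan is to reduce the pairwise statement to armwise time-uniform KL concentration and then run a perturbation argument for the infimum defining $V_t(i,j)$.

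\textbf{Step 1: armwise concentration.} Using a mixture-martingale (or stitched) KL confidence sequence for each arm \citep{kaufmann2021mixture,howard2021confidence}, with a union bound over the $K$ arms at level $\eta/K$, we build an event $\cG_{\eta}$ with $\mathbb{P}_{\mu}(\cG_{\eta})\ge1-\eta$. On this event, for all $t\ge K$ and all $i$,
\[
N_{t,i}\,\KL{\widehat{\mu}_{t,i}}{\mu_i}\le C\,H_t(\eta).
\]
Here $\log\log N_{t,i}\le\log\log(et)$ absorbs the iterated-logarithm term. Because $\mu_i\in(0,1)$ is fixed, Pinsker's inequality gives
\[
|\widehat{\mu}_{t,i}-\mu_i|\le\sqrt{C H_t(\eta)/N_{t,i}}.
\]

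\textbf{Step 2: upper bound on $V_t-V_t^{\mu}$.} Let $x^\mu$ denote the population pooled mean, which lies in $[\min\mu,\max\mu]\subset(0,1)$, and plug it into the infimum defining $V_t$. This gives
\[
V_t(i,j)-V_t^\mu(i,j)\le \sum_{k\in\{i,j\}}N_{t,k}\bigl[\KL{\widehat{\mu}_{t,k}}{x^\mu}-\KL{\mu_k}{x^\mu}\bigr].
\]
I will expand $q\mapsto \KL{q}{x}$ as a function of $q$. Its derivative in $q$ is $\log\frac{q(1-x)}{x(1-q)}$, which vanishes at $q=x$. For $x$ bounded away from $0$ and $1$, the difference equals a linear term $\log\frac{\mu_k(1-x)}{x(1-\mu_k)}(\widehat{\mu}_k-\mu_k)$ plus a remainder. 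I will bound the remainder by $C\,\KL{\widehat{\mu}_k}{\mu_k}$; this holds because $\KL{q}{x}-\KL{p}{x}-\partial_p\KL{p}{x}(q-p)=\KL{q}{p}$ exactly, by the Bregman identity for the entropy. The remainder terms sum to at most $2CH_t$.

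The linear terms are where the effective sample size enters. The first-order optimality condition for $x^\mu$ gives $\sum_k N_k(\mu_k-x^\mu)/(x^\mu(1-x^\mu))=0$. The coefficient $\log\frac{\mu_k(1-x)}{x(1-\mu_k)}$ is of order $|\mu_k-x^\mu|$, and $N_i|\mu_i-x^\mu|=\frac{N_iN_j}{N_i+N_j}|\mu_i-\mu_j|$ up to constants. Each linear term is therefore bounded by
\[
C N_k|\mu_k-x^\mu|\sqrt{H/N_k}\lesssim \sqrt{N_k}\,\frac{N_{\mathrm{eff}}}{N_k}\sqrt H\le\sqrt{N_{\mathrm{eff}}H},
\]
using $N_{\mathrm{eff}}^2/N_k\le N_{\mathrm{eff}}$.

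\textbf{Step 3: lower bound on $V_t-V_t^{\mu}$.} This is the main obstacle. The empirical minimizer $x_t$ can approach $0$ or $1$ when the empirical means hit the boundary, and the Taylor constants then blow up. I would handle this in two cases.

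In the first case, $N_{\min}:=\min\{N_{t,i},N_{t,j}\}\le C' H_t$. Then both $V_t$ and $V_t^\mu$ are at most $C\,N_{\mathrm{eff}}\le C\,N_{\min}\le C H$, since the KL to an interior point is bounded. The claimed bound holds trivially.

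In the second case, $N_{\min}> C'H_t$. By Step 1 both empirical means lie in a compact sub-interval of $(0,1)$, so $x_t$ does too. I then plug $x_t$ into $V^\mu$ and repeat the symmetric expansion. Alternatively, the Bregman identity gives, for any $x$,
\[
\KL{\mu_k}{x}=\KL{\widehat{\mu}_k}{x}+\KL{\mu_k}{\widehat{\mu}_k}+(\mu_k-\widehat{\mu}_k)\log\tfrac{\widehat{\mu}_k(1-x)}{x(1-\widehat{\mu}_k)}.
\]
The reverse KL $\KL{\mu_k}{\widehat{\mu}_k}$ is comparable to $\KL{\widehat{\mu}_k}{\mu_k}$ on the compact interval. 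The linear term is controlled as in Step 2, using the empirical first-order condition $\sum_kN_k(\widehat{\mu}_k-x_t)=0$ and $|\widehat{\mu}_k-x_t|\le|\mu_i-\mu_j|+O(\sqrt{H/N_{\min}})$. The extra cross terms are $O(H)$.

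\textbf{Step 4: uniformity of constants.} Combining the cases, every constant depends only on $\mu$ through the range of $\mu$ and $\max|\mu_i-\mu_j|$. This gives $C_\mu$, uniformly over all $t\ge K$ and all pairs on $\cG_\eta$.
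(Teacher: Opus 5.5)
Your proposal is correct, but its core estimate differs from the paper's. The shared parts are the armwise KL event and the small-count/large-count split. In the large-count regime, the paper proves a single two-sided local Lipschitz bound for $V_{m,\ell}(p,q)$ in the means. It starts from the entropy form $V_{m,\ell}(p,q)=(m+\ell)[\alpha h(p)+(1-\alpha)h(q)-h(r)]$ with $r=\alpha p+(1-\alpha)q$. This gives $\lvert\partial_pV_{m,\ell}\rvert=m\lvert h'(p)-h'(r)\rvert\le N_{\mathrm{eff}}/(\varepsilon(1-\varepsilon))$, because $\lvert p-r\rvert=(1-\alpha)\lvert p-q\rvert$. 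The paper then applies the mean value theorem with all four means confined to $[\varepsilon,1-\varepsilon]$. That confinement is needed in both directions, so the paper's small-count case has to bound both $V_t$ and $V_t^{\mu}$ crudely by $C_\mu H$.

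You instead prove two one-sided bounds. You plug the minimizer of one objective into the other and use the exact three-point identity $d(q,x)=d(p,x)+\partial_pd(p,x)(q-p)+d(q,p)$. This buys you something concrete. Your upper bound $V_t-V_t^{\mu}\le C\sqrt{N_{\mathrm{eff}}H}+2c_0H$ holds for all counts, including empirical means at $0$ or $1$. Its quadratic remainder is exactly $N_{t,k}\,d(\widehat\mu_{t,k},\mu_k)$, which the confidence event controls directly, without Pinsker. Only your lower bound needs interior confinement and reverse-KL comparability. The paper's route instead packages both directions into one reusable symmetric lemma. The effective-sample-size mechanism is the same in both: your identity $N_{t,i}\lvert\mu_i-x^{\mu}\rvert=N_{\mathrm{eff}}\lvert\mu_i-\mu_j\rvert$ is the paper's $m\lvert p-r\rvert=N_{\mathrm{eff}}\lvert p-q\rvert$, and your linear coefficient is exactly its partial derivative $h'(p)-h'(r)$.

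One small slip is harmless. In your first case of Step 3, you claim $V_t\le CN_{\mathrm{eff}}$, but this does not follow from evaluating at an interior point. That evaluation gives $C(N_{t,i}+N_{t,j})$, or $CN_{\min}+c_0H$ at $x=\mu_j$ as in the paper. For the lower direction, however, you only need $V_t\ge0$ and $V_t^{\mu}\le CN_{\min}$, so nothing breaks.
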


For distinct means, $V_t^{\mu}(i,j)$ grows linearly in
$N_{\mathrm{eff},t}(i,j)$, so the error is lower order when
$H_t(\eta)=o(N_{\mathrm{eff},t}(i,j))$. For equal means,
$V_t^{\mu}(i,j)=0$ and the theorem bounds absolute fluctuations
(Appendix~\ref{app:proof-glr-control}).

\section{Sharp Guarantees in the Stabilized Regime}
\label{sec:stable-entry}

Starting from stabilization, we lower-bound evidence against every challenger
(Proposition~\ref{prop:envelope}), invert the stopping threshold
(Theorem~\ref{thm:conditional}), and deduce linear count growth
(Proposition~\ref{prop:linear}). Section~\ref{sec:unconditional-closures}
then bounds the time needed to reach this regime.
For analysis, continue sampling beyond $\tau_{\delta}$, which remains the first
threshold crossing.
For integers $t\ge K$, let
$\alpha_t:=N_{t,\istar}/t$. Fix a compact interval $U\Subset(0,1)$ with
$\beta$ in its interior. A \emph{$U$-stabilization time} is a random time
$t_U\in\{K,K+1,\dots\}\cup\{\infty\}$ such that
$
  B_{t+1}=\istar \text{ and } \alpha_t\in U, \qquad \forall t\ge t_U.
$
Throughout this section, \emph{stabilization} refers to this joint condition.

For an almost surely finite nonnegative random time $T$, an
\emph{upper-quantile bound} is a function $u_T:(0,1]\to[0,\infty)$ satisfying
\[
  \mathbb{P}_{\mu}\left(T\le u_T(\eta)\right)\ge1-\eta
  \qquad\text{for every }\eta\in(0,1].
\]
We call the bound \emph{summable} if
$\sum_{k\ge0}2^{-k}u_T(2^{-k})<\infty$, which implies $\mathbb{E}_{\mu}[T]<\infty$.
Constants in expected bounds may also depend on the assumed upper-quantile
bound or sufficient-exploration parameters.

For $t\ge t_U$ and $j\neq \istar$, define the population penalized index
$
  Q_t(j):=\Phi_j(N_{t,\istar},N_{t,j})
  =V_t^{\mu}(\istar,j)+\log N_{t,j}.
$
Since $B_{t+1}=\istar$,
$\widehat{\mu}_{t,\istar}\ge\widehat{\mu}_{t,j}$ for every $j\neq \istar$
and $W_t(\istar,j)=V_t(\istar,j)$. For $\eta\in(0,1)$, use the looser confidence term
$
  \widetilde{H}_t(\eta):=\log(K/\eta)+\log(et).
$

\begin{proposition}[Uniform lower bound on evidence against all challengers]
\label{prop:envelope}
Fix a compact interval $U\Subset(0,1)$ containing $\beta$ in its interior. There
exist constants $C_{\mu}(U),\overline{C}_{\mu}(U)<\infty$ such that for every
$\eta\in(0,1)$ there is an event $\cE_{\mathrm{env}}(\eta)$ with
$\mathbb{P}_{\mu}(\cE_{\mathrm{env}}(\eta))\ge1-2\eta$ on which the
following holds simultaneously for every $U$-stabilization time $t_U$ and every
$s\ge t_U$:
\[
  \min_{j\neq \istar}Q_s(j)
  \ge
  \Gdisc_{N_{s,\istar},\,s-N_{s,\istar}}(\mu)
  -
  C_{\mu}(U)
  \left(t_U+\sqrt{s\widetilde{H}_s(\eta)}+\widetilde{H}_s(\eta)\right)
  -
  \overline{C}_{\mu}(U).
\]
Consequently, on the same event,
\[
  \min_{j\neq \istar}W_s(\istar,j)
  \ge
  \frac{s}{\Tstar_{\alpha_s}(\mu)}
  -
  C_{\mu}(U)
  \left(t_U+\sqrt{s\widetilde{H}_s(\eta)}+\widetilde{H}_s(\eta)\right)
  -
  \overline{C}_{\mu}(U).
\]
\end{proposition}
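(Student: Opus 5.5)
The plan is a \emph{water-filling} (overshoot) argument. On a high-probability event the empirical challenger index $W_t(\istar,j)+\log N_{t,j}$ tracks the population index $Q_t(j)$. The arm sampled as challenger is therefore always nearly index-minimal, so no arm's index can lag far behind the others. Combined with the structure of $\Gdisc$ and Proposition~\ref{prop:oracle}, this gives the claim.

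\textbf{Step 1: the good event.} I would take $\cE_{\mathrm{env}}(\eta)$ to be the intersection of the event $\cG_\eta$ of Theorem~\ref{thm:symmetric-glr-control} with a second event of probability at least $1-\eta$. The second event is a time-uniform concentration event, for instance Freedman's inequality applied to the $\Ber(\beta)$ draws. It ensures that the number of challenger pulls between any two times deviates from $(1-\beta)$ times the number of unguarded rounds by at most $O(\sqrt{s\widetilde H_s(\eta)})$. On $\cG_\eta$, using $N_{\mathrm{eff}}\le s$ and $H_s\le\widetilde H_s$, we get for all $r\le s$ and $j\ne\istar$:
\[
\bigl|W_r(\istar,j)+\log N_{r,j}-Q_r(j)\bigr|\le C_\mu\bigl(\sqrt{s\widetilde H_s(\eta)}+\widetilde H_s(\eta)\bigr)=:E_s .
\]
Here we use that for $r\ge t_U$ we have $B_{r+1}=\istar$, so $W_r=V_r$. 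Since nothing depends on $t_U$ beyond its defining property, the event works simultaneously for every stabilization time.

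\textbf{Step 2: the overshoot argument.} Fix $s\ge t_U$, write $n=N_{s,\istar}$ and $m_j=N_{s,j}$, and let $m^\star$ be a maximizer in $\Gdisc_{n,s-n}$. Since $\sum_j m_j=\sum_j m^\star_j$, some arm $k$ has $m_k\ge m^\star_k$.
\begin{itemize}[leftmargin=1.35em]
\item \emph{Case 1: $k$ was never pulled after $t_U$.} Then $m^\star_k\le m_k\le t_U$. Using the continuous optimal weights from Proposition~\ref{prop:oracle} (bounded below uniformly on $U$), this forces $s\lesssim t_U$. The claimed bound is then trivial, since $\Gdisc\le C(s+\log s)$ is absorbed by the $C_\mu(U)\,t_U$ term.
\item \emph{Case 2: $k$ was pulled after $t_U$.} Let $r\in[t_U,s)$ be the last round at which $k$ was pulled. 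It was pulled as a challenger (no guard in the unguarded rule), so for the minimizing arm $j$ at time $s$:
\[
Q_s(j)\ge Q_r(j)\ge Q_r(k)-2E_s\ge \Phi_k(N_{r,\istar},m_k-1)-2E_s .
\]
The first inequality uses that $\Phi_j$ is nondecreasing in both counts.
\end{itemize}

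\textbf{Step 3: transferring back to time $s$.} It remains to compare $\Phi_k(N_{r,\istar},m_k-1)$ with $\Phi_k(n,m^\star_k)\ge\Gdisc_{n,s-n}$. Changing $m$ by one costs $O(1)$, since the $m$-derivative is bounded. Changing $n$ costs at most $(n-N_{r,\istar})\,\KL{\mu_{\istar}}{x}$, which is of order $s-r$. This is the main obstacle: $r$ may lie far in the past.

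I would first try choosing $k$ more carefully. Instead of any arm with $m_k\ge m^\star_k$, take $k$ maximizing $m_k/w^\star_k(\alpha_s)$, where $w^\star(\alpha)$ is the continuous optimal allocation, Lipschitz in $\alpha$ on $U$ by the equalization in Appendix~\ref{app:boundary-inverses}. Then argue by contradiction:
\begin{itemize}[leftmargin=1.35em]
\item If $s-r$ exceeds a large multiple of $E_s$, then over $(r,s]$ about $(1-\beta)(s-r)$ challenger pulls went to the other arms.
\item All of them stay in $U$-proportion, so every other arm's normalized count rises while $k$'s is frozen and $\alpha$ drifts only boundedly.
\item This contradicts maximality of $k$'s ratio at time $s$, up to the concentration slack of Step~1.
\end{itemize}
Hence $s-r\lesssim t_U+E_s$, which yields the first display with the stated error terms.

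\textbf{Step 4: the consequence.} The second display follows by adding $-\log N_{s,j}\ge-\log s$ to pass from $Q_s$ to $V^\mu_s$. We then use Theorem~\ref{thm:symmetric-glr-control} once more to pass to $V_s=W_s$, and finally Proposition~\ref{prop:oracle} to replace $\Gdisc$ by $s/\Tstar_{\alpha_s}(\mu)+\log s$; the two $\log s$ terms cancel up to constants.
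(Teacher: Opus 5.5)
Your route differs from the paper's and can be completed, but the decisive inequality in Step~3 is not the one you state.

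\textbf{Comparison with the paper.} The paper never uses a last-pull time. It fixes targets $p_j^{\beta}S_u$ with $p_j^{\beta}=w_j^{\beta}(\mu)/(1-\beta)$ and tracks the deficits $D_{u,j}=p_j^{\beta}S_u-N_{u,j}$ over the whole window $[t_U,s]$. Using the GLR comparison and Lemma~\ref{lem:phi-slopes}, it shows that every challenger pull serves an arm whose deficit is a constant fraction of the largest one, up to $O(\eps_u+u\lvert\alpha_u-\beta\rvert+1)$. It concludes that the positive deficit mass satisfies $D_s^{+}=O(t_U+\eps_s+\sup_{u\le s}u\lvert\alpha_u-\beta\rvert+1)+G_s^{\mathrm{tot}}$, and bounds the drift with Proposition~\ref{prop:alpha}. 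You replace this amortized potential by one comparison between the last selection time $r$ of an over-sampled arm and $s$. Your inputs are monotonicity of $Q$ in both counts and the identity $\Phi_j(n,tw_j^{\alpha}(\mu))=t/\Tstar_{\alpha}(\mu)+\log t+\log w_j^{\alpha}(\mu)$ for $\alpha=n/t$. Your argument is more local and avoids the equalizer $r_{j,n,S}$ and the large/small-deficit case split. The paper's potential, in exchange, absorbs guard pulls and the transient with no extra cases.

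\textbf{Completing Step~3.} The fact that other arms' normalized counts rise while $k$'s is frozen does not contradict maximality of $m_k/w_k^{\alpha_s}(\mu)$, since $k$ may have started far ahead. The contradiction comes from the selection at $r$.
\begin{enumerate}
\item Because $\sum_jN_{r,j}=(1-\alpha_r)r$, some arm $j$ has $N_{r,j}\le rw_j^{\alpha_r}(\mu)$. Hence $Q_r(j)\le r/\Tstar_{\alpha_r}(\mu)+\log r$, and so $Q_r(k)\le r/\Tstar_{\alpha_r}(\mu)+\log r+2E_s$.
\item Let $c_U:=\inf_{\alpha\in U}\min_jw_j^{\alpha}(\mu)>0$. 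The lower slope bound at the common count $N_{r,\istar}$, together with $\log w_k^{\alpha_r}(\mu)\ge\log c_U$, gives $N_{r,k}\le rw_k^{\alpha_r}(\mu)+C(E_s+1)$.
\item Maximality gives $m_k\ge sw_k^{\alpha_s}(\mu)$, because the $w^{\alpha_s}(\mu)$-weighted average of the ratios is exactly $s$.
\item Since $m_k=N_{r,k}+1$, this yields $(s-r)\,w_k^{\alpha_s}(\mu)\le r\lvert w_k^{\alpha_r}(\mu)-w_k^{\alpha_s}(\mu)\rvert+C(E_s+1)$.
\item On your coin event, $r\lvert\alpha_r-\alpha_s\rvert\le r\lvert\alpha_r-\beta\rvert+s\lvert\alpha_s-\beta\rvert\lesssim t_U+\sqrt{s\widetilde{H}_s(\eta)}$. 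With the Lipschitz bound on $w^{\alpha}(\mu)$, this gives $s-r\lesssim t_U+E_s+1$. This drift is the only place $t_U$ enters.
\end{enumerate}
For this choice of $k$, you should also replace $\Phi_k(n,m_k^{\star})\ge\Gdisc$ by $\Phi_k(n,m_k)\ge\Phi_k(n,sw_k^{\alpha_s}(\mu))\ge\Gdisc_{n,s-n}(\mu)-C$, using Proposition~\ref{prop:oracle}.

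\textbf{Guarded algorithm.} The paper also applies this proposition to the guarded algorithm, since Theorem~\ref{thm:guarded} goes through Theorem~\ref{thm:conditional}. Your Step~2 assumes $k$'s last pull was a challenger pull. The repair is short. A guarded last pull means $N_{r,k}=\min_iN_{r,i}<\sqrt{r/K}$, which is incompatible with $m_k\ge c_Us$ except for bounded $s$. The drift bound then acquires the extra term $G_s^{\mathrm{tot}}\le\sqrt{Ks}+K$.
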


The evidence against every challenger tracks the fixed-fraction information
rate, up to stabilization and concentration errors.
Appendix~\ref{app:proof-envelope} gives the proof.

\begin{theorem}[Non-asymptotic stopping bound in the stabilized regime]
\label{thm:conditional}
Fix $\beta\in(0,1)$ and a compact interval $U\Subset(0,1)$ containing $\beta$
in its interior. Let $t_U$ be an almost surely finite $U$-stabilization time.
Then there exist constants $C_{\mu,k}(U)<\infty$, $k\in\{0,1,2,3\}$, such that, for
every $\delta\in(0,1/2]$ and every $\eta\in(0,1/4]$, with probability at least
$1-3\eta$,
\begin{align*}
  \tau_{\delta}
  &\le
  (1+C_{\mu,0}(U))t_U
  +\Tstar_{\beta}(\mu)\log(1/\delta)
  +C_{\mu,1}(U)
  \sqrt{\log(1/\delta)
  \left(\log\frac{K}{\eta}+\log\log\frac{e}{\delta}\right)} \\
  &\quad
  +C_{\mu,2}(U)
  \left(\log\frac{K}{\eta}+\log\log\frac{e}{\delta}\right)
  +C_{\mu,3}(U).
\end{align*}
Moreover, if $t_U$ admits a summable upper-quantile bound, then there exist
constants $C'_{\mu,k}(U)<\infty$, $k\in\{1,2,3\}$, such that, for all
$\delta\in(0,1/2]$,
\[
\begin{aligned}
  \mathbb{E}_{\mu}[\tau_{\delta}]
  &\le
  \Tstar_{\beta}(\mu)\log(1/\delta)
  +
  C'_{\mu,1}(U)
  \sqrt{\log(1/\delta)\log\log(e/\delta)} \\
  &\quad +C'_{\mu,2}(U)\log\log(e/\delta)
  +
  C'_{\mu,3}(U).
\end{aligned}
\]
\end{theorem}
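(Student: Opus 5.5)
The plan is to invert the stopping threshold using the evidence lower bound of Proposition~\ref{prop:envelope}. After that, convert the high-probability bound into an expectation bound through the upper-quantile tail of $t_U$.

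\textbf{Step 1: from $\alpha_s$ to $\beta$.} Work on $\cE_{\mathrm{env}}(\eta)$, intersected with one further event of probability at least $1-\eta$. On this further event the best-arm fraction satisfies
\[
  \lvert\alpha_s-\beta\rvert\le C\,(t_U+\sqrt{s\widetilde H_s(\eta)})/s
  \qquad\text{for all } s\ge t_U .
\]
This follows because after $t_U$ the leader is $\istar$ on unguarded rounds. Hence $N_{s,\istar}-N_{t_U,\istar}-\beta(s-t_U)$ is a martingale with bounded increments, and a time-uniform Freedman or Azuma bound with a union over dyadic blocks gives the $\sqrt{s\widetilde H_s}$ term. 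The map $\alpha\mapsto 1/\Tstar_\alpha(\mu)$ is Lipschitz on $U$, since it is a max--min of smooth strictly monotone rates $I_j^\alpha$ and can be handled through the equalization characterization in Appendix~\ref{app:boundary-inverses}. Therefore
\[
  s/\Tstar_{\alpha_s}(\mu)\ge s/\Tstar_\beta(\mu)-C(t_U+\sqrt{s\widetilde H_s})
\]
on this event. Combining with Proposition~\ref{prop:envelope}, on an event of probability at least $1-3\eta$ and for all $s\ge t_U$,
\[
  \min_{j\neq\istar}W_s(\istar,j)\ge \frac{s}{\Tstar_\beta(\mu)}-C\bigl(t_U+\sqrt{s\widetilde H_s(\eta)}+\widetilde H_s(\eta)\bigr)-\overline C .
\]

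\textbf{Step 2: threshold inversion.} Since $B_{s+1}=\istar$ for $s\ge t_U$, we have $\tau_\delta\le s$ for any $s\ge t_U$ at which the right-hand side above exceeds $c(s,\delta)$. Set $L=\log(1/\delta)$ and try
\[
  s^\star=(1+C_0)t_U+\Tstar_\beta L+C_1\sqrt{L(\log(K/\eta)+\log\log(e/\delta))}+C_2(\log(K/\eta)+\log\log(e/\delta))+C_3 .
\]
The main obstacle is that $\widetilde H_s$ and $c(s,\delta)$ contain $\log s$ rather than $\log\log$. I would handle it with the usual self-bounding argument: at $s=s^\star$ one has $\log s^\star\lesssim \log t_U+\log L+\log\log(K/\eta)$. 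The terms $\log t_U$ and $2\log(1+s/2)$ are absorbed by the linear slack: the factor $C_0t_U$ dominates $\log t_U$ terms, and $s$ grows faster than $\log s$. The surviving terms are $\sqrt{s\log\log}$, which becomes the $\sqrt{L(\cdot)}$ term after substituting $s\approx \Tstar_\beta L$, and the additive $\log(K/\eta)$. A case split on whether $t_U$ or $\Tstar_\beta L$ dominates $s^\star$ makes this monotone-inversion argument rigorous. Note that $\log\log(es)$ in $H_t$ only enters through the comparison in Theorem~\ref{thm:symmetric-glr-control}; the looser $\widetilde H$ costs a $\sqrt{L\log L}$ term, which may be why the stated bound has $\log\log(e/\delta)$ under the root.

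\textbf{Step 3: expectation.} Apply the high-probability bound with $\eta=2^{-k}/3$ and intersect with $\{t_U\le u_{t_U}(2^{-k})\}$. This gives a quantile bound for $\tau_\delta$ of the form
\[
  q_k=\Tstar_\beta L+C\sqrt{L(k+\log\log(e/\delta))}+C(k+\log\log(e/\delta))+C(1+u(2^{-k})),
\]
valid with probability at least $1-2^{-k+1}$. Then
\[
  \mathbb E[\tau_\delta]\le q_0+\sum_k 2^{-k+1}q_{k+1}
\]
by the layer-cake formula. The sums $\sum_k2^{-k}\sqrt k$ and $\sum_k2^{-k}k$ are finite, and summability handles the $u$ term. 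We use $\sqrt{L(k+\ell)}\le\sqrt{L\ell}+\sqrt{Lk}$ with $\ell=\log\log(e/\delta)\ge \log\log(2e)>0$. This gives $\sqrt{Lk}\le \sqrt{L\ell}\,\sqrt{k/\ell}$, which absorbs into the $C'_1$ term uniformly for $\delta\le1/2$. The result is the stated expected bound.
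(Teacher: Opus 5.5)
Your overall structure matches the paper's, but Step 3 as written does not give the sharp constant. The bound $\mathbb{E}_\mu[\tau_\delta]\le q_0+\sum_k 2^{-k+1}q_{k+1}$ is true, yet each $q_{k+1}$ contains the full term $\Tstar_\beta(\mu)\log(1/\delta)$. The series therefore contributes $\Tstar_\beta(\mu)\log(1/\delta)\sum_{k\ge0}2^{-k+1}=4\Tstar_\beta(\mu)\log(1/\delta)$, and the leading coefficient becomes $5\Tstar_\beta(\mu)$. The finiteness of $\sum_k2^{-k}\sqrt{k}$ and $\sum_k2^{-k}k$, and the summability of $u$, only control the remainder.

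You must integrate only above the deterministic level $A_\delta:=\Tstar_\beta(\mu)\log(1/\delta)$. Write $q_k=A_\delta+\rho_k$ with $q_k$ nondecreasing in $k$. The layer-cake formula then gives $\mathbb{E}_\mu[\tau_\delta]\le q_0+\sum_k(q_{k+1}-q_k)\,\mathbb{P}_\mu(\tau_\delta>q_k)\le A_\delta+\rho_0+\sum_k2^{-k+1}\rho_{k+1}$, because $A_\delta$ telescopes away. Monotonicity of $q_k$ requires $k\mapsto u(2^{-k})$ to be nondecreasing. You get this by replacing $u$ with the exact upper-tail quantile of $t_U$, which is still an upper-quantile bound and still summable because it is dominated by $u$. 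This is the paper's centered quantile integration (Theorem~\ref{thm:centered-quantile-integration} with Lemma~\ref{lemma:monotone-summable-localization}). After this change your remainder bookkeeping, including $\sqrt{Lk}\le\sqrt{L\ell}\sqrt{k/\ell}$, goes through.

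A smaller flaw is in Step 1. $t_U$ is not a stopping time, since it requires $B_{t+1}=\istar$ and $\alpha_t\in U$ at all later times. So $N_{s,\istar}-N_{t_U,\istar}-\beta(s-t_U)$ is not a martingale started at $t_U$; conditioning on $t_U$ biases the later coins. The conclusion still holds if you apply a time-uniform bound to the coin martingale $\sum_{u<s}(U_{u+1}-\beta)\one\{G_{u+1}=0\}$ from a deterministic start and use the triangle inequality at $t_U$ and $s$. Alternatively, as in Proposition~\ref{prop:alpha}, take a union bound over all deterministic start times $m$ with weights proportional to $m^{-2}$. On the event that the leader is $\istar$ from $m$ on, the count deviation equals this martingale up to $G_s^{\mathrm{tot}}\le\sqrt{Ks}+K$. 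Keep that guard term, because the theorem is also applied to the guarded rule.

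Your Step 2, a self-bounding inversion with a case split on whether $t_U$ or $\Tstar_\beta(\mu)\log(1/\delta)$ dominates, is a workable alternative to Theorem~\ref{thm:threshold-inversion-linear-delay}. That theorem instead shows the threshold residual is positive at zero entrance delay and increasing in the delay, which gives uniformity in $t_U$ without a case split.
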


The bound has the exact fixed-$\beta$ leading coefficient, with corrections for
stabilization and concentration errors and bounded remainders
(Appendix~\ref{app:proof-conditional}).
A high-probability bound on $t_U$ yields a high-probability stopping bound;
a summable upper-quantile bound yields an expected stopping-time bound.

\begin{proposition}[Linear growth of every suboptimal-arm pull count in the stabilized regime]
\label{prop:linear}
Fix a compact interval $U\Subset(0,1)$ containing $\beta$ in its interior and
define
$
  \lambda_{\mu,U}
  :=
  \inf_{\alpha\in U}\Tstar_{\alpha}(\mu)^{-1}>0.
$
For every $\eta\in(0,1)$, on $\cE_{\mathrm{env}}(\eta)$ from
Proposition~\ref{prop:envelope}, simultaneously for every $U$-stabilization
time $t_U$, every $s\ge t_U$, and every $j\neq \istar$,
\[
  N_{s,j}
  \ge
  \frac{
    \lambda_{\mu,U}s
    -C_{\mu}(U)
    \left(t_U+\sqrt{s\widetilde{H}_s(\eta)}+\widetilde{H}_s(\eta)\right)
    -\overline{C}_{\mu}(U)-\log s
  }{\KL{\mu_j}{\mu_{\istar}}}.
\]
\end{proposition}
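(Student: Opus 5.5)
The plan is to derive the count lower bound directly from the first inequality of Proposition~\ref{prop:envelope}, by upper-bounding the population penalized index $Q_s(j)$ in terms of $N_{s,j}$ alone. We work on $\cE_{\mathrm{env}}(\eta)$, fix a $U$-stabilization time $t_U$, some $s\ge t_U$, and a suboptimal arm $j$.

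First, bound $Q_s(j)$ from above. Take $x=\mu_{\istar}$ in the infimum defining $V_s^{\mu}(\istar,j)$. The term $N_{s,\istar}\KL{\mu_{\istar}}{\mu_{\istar}}$ vanishes, which gives
\[
  Q_s(j)=V_s^{\mu}(\istar,j)+\log N_{s,j}\le N_{s,j}\KL{\mu_j}{\mu_{\istar}}+\log N_{s,j}.
\]
Since $\mu_{\istar}\in(0,1)$, this choice of $x$ is admissible. Using $N_{s,j}\le s$, we get $\log N_{s,j}\le\log s$.

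Second, bound $Q_s(j)$ from below. Because $\min_{j'}Q_s(j')\le Q_s(j)$, Proposition~\ref{prop:envelope} gives
\[
  Q_s(j)\ge \Gdisc_{N_{s,\istar},s-N_{s,\istar}}(\mu)-C_{\mu}(U)\bigl(t_U+\sqrt{s\widetilde{H}_s(\eta)}+\widetilde{H}_s(\eta)\bigr)-\overline{C}_{\mu}(U).
\]
It remains to lower-bound the discrete benchmark by $\lambda_{\mu,U}s$. Since $s\ge t_U$, we have $\alpha_s\in U$, so Proposition~\ref{prop:oracle} applies with $n=N_{s,\istar}$ and $t=s$. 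It gives $\Gdisc\ge s/\Tstar_{\alpha_s}(\mu)+\log s-C'_{\mu}(U)$. We then use $1/\Tstar_{\alpha_s}(\mu)\ge\lambda_{\mu,U}$ and absorb $C'_{\mu}(U)$ into $\overline{C}_{\mu}(U)$.

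This produces a spare $+\log s$, so the $-\log s$ in the statement is generous. Alternatively, one can use the second display of Proposition~\ref{prop:envelope} together with $W_s(\istar,j)=V_s(\istar,j)$. The cleanest route is to keep the $\Gdisc$ form and drop the $+\log s$ after it cancels against $\log N_{s,j}\le\log s$.

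Combining the two bounds yields
\[
  N_{s,j}\KL{\mu_j}{\mu_{\istar}}\ge \lambda_{\mu,U}s-C_{\mu}(U)\bigl(t_U+\sqrt{s\widetilde{H}_s(\eta)}+\widetilde{H}_s(\eta)\bigr)-\overline{C}_{\mu}(U)-\log s.
\]
Dividing by $\KL{\mu_j}{\mu_{\istar}}>0$, which is positive because $\mu_j\ne\mu_{\istar}$, gives the claim. The event and constants are those of Proposition~\ref{prop:envelope}, so the bound holds simultaneously over all $t_U$, $s$, and $j$.

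Two side points need checking. The first is $\lambda_{\mu,U}>0$. The map $\alpha\mapsto\Tstar_\alpha(\mu)^{-1}$ is continuous and positive on the compact set $U$: the equal split $w_j=(1-\alpha)/(K-1)$ gives a positive minimum rate, and that rate is continuous in $\alpha$. The second is that the constant-matching step, which absorbs the oracle constant into $\overline{C}_{\mu}(U)$, is legitimate given the paper's convention that constants may change between results. This bookkeeping is the only real subtlety.
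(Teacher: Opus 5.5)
Your proof is correct and follows the paper's argument. Both lower-bound $Q_s(j)$ through Proposition~\ref{prop:envelope} combined with Proposition~\ref{prop:oracle} and $1/\Tstar_{\alpha_s}(\mu)\ge\lambda_{\mu,U}$, upper-bound $Q_s(j)$ by evaluating $V_s^{\mu}(\istar,j)$ at $x=\mu_{\istar}$ with $\log N_{s,j}\le\log s$, and divide by $\KL{\mu_j}{\mu_{\istar}}$. The only differences are that you keep the benchmark's $+\log s$, which makes the $-\log s$ slack where the paper simply discards it, and that you justify $\lambda_{\mu,U}>0$ via the equal split, which the paper takes for granted.
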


Each suboptimal-arm count grows linearly after a transient controlled by $t_U$
and the concentration error; Appendix~\ref{app:challenger-growth} gives the proof.

\section{Reaching the Stabilized Regime}
\label{sec:unconditional-closures}

We bound entry into the regime of Section~\ref{sec:stable-entry} using
armwise confidence bounds and sufficient sampling.
Here $t_U$ denotes the earliest $U$-stabilization time, or $\infty$ if none exists.
To quantify empirical separation, let $c_0$ be the positive finite universal
constant from Lemma~\ref{lem:armwisecs} in Appendix~\ref{app:armwise-cs}.
For $\eta\in(0,1)$, positive integers $n$, and $j\in[K]\setminus\{\istar\}$, define
\[
  r(n,\eta)
  :=
  \sqrt{\frac{c_0}{2n}
  \left(\log\frac{8K^2}{\eta}+\log\log(en)\right)},
\]
\[
  n_j(\eta)
  :=
  \min\left\{n\ge1:r(n,\eta)\le\frac{\mu_{\istar}-\mu_j}{4}\right\},
  \qquad
  n_{\star}(\eta):=\max_{j\neq \istar}n_j(\eta).
\]
The count $n_j(\eta)$ makes the confidence radius at most one quarter of the
gap between arms $\istar$ and $j$. The first time all count requirements hold is
$
  t_{\mathrm{sep}}(\eta)
  :=
  \inf\left\{
    t\ge K:
    N_{t,\istar}\ge n_{\star}(\eta)
    \text{ and }
    N_{t,j}\ge n_j(\eta)\ \forall j\neq \istar
  \right\}.
$

Let $\cK^{\mathrm{sep}}_{\eta}$ denote the simultaneous armwise event
\[
  \lvert\widehat{\mu}_{t,i}-\mu_i\rvert
  \le r(N_{t,i},\eta),
  \qquad \forall t\ge K,\ i\in[K].
\]
Then $\mathbb{P}_{\mu}(\cK^{\mathrm{sep}}_{\eta})\ge1-\eta$
(Appendix~\ref{app:armwise-cs}).

\begin{corollary}[Time until all arm counts support empirical separation]
\label{cor:separation}
Fix a compact interval $U\Subset(0,1)$ containing $\beta$ in its interior. There
exists $C_{\mu,\beta}(U)<\infty$ such that for every $\eta\in(0,1/4]$, on the
event of Proposition~\ref{prop:envelope},
$
  t_{\mathrm{sep}}(\eta)
  \le
  C_{\mu,\beta}(U)
  \left(t_U+\log(K/\eta)+\log\log(e/\eta)+1\right).
$
\end{corollary}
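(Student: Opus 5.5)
Once every count is linear, the separation requirements are met as soon as the linear rate beats the required counts $n_j(\eta)$. The plan is to combine Proposition~\ref{prop:linear}, which covers suboptimal arms, with the definition of $U$-stabilization, which covers the best arm. The first step bounds the thresholds. Since $r(n,\eta)\le(\mu_{\istar}-\mu_j)/4$ holds once $n\ge C_\mu(\log(8K^2/\eta)+\log\log(en))$, solving this implicit inequality gives
\[
n_{\star}(\eta)\le C_\mu\bigl(\log(K/\eta)+\log\log(e/\eta)+1\bigr)=:C_\mu L_\eta .
\]
The $\log\log(en)$ term is absorbed by the standard fact that $n\ge a\log\log(en)+b$ holds for $n\ge C(a\log\log(ea)+b)$. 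We can take $a$ to be a constant multiple of $\Delta_{\min}^{-2}$ and fold it into $C_\mu$.

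The second step handles the best arm. For $s\ge t_U$ we have $\alpha_s\in U$, so $N_{s,\istar}\ge(\inf U)\,s$. Hence $N_{s,\istar}\ge n_\star(\eta)$ as soon as $s\ge\max\{t_U,\,n_\star(\eta)/\inf U\}$. If the intended stabilization time is the earliest one, the same inequality holds for it as well.

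The third step handles each suboptimal arm, on $\cE_{\mathrm{env}}(\eta)$. Proposition~\ref{prop:linear} gives, for all $s\ge t_U$ and $j\neq\istar$,
\[
N_{s,j}\ge \frac{\lambda_{\mu,U}s-C_\mu(U)\bigl(t_U+\sqrt{s\widetilde H_s(\eta)}+\widetilde H_s(\eta)\bigr)-\overline C_\mu(U)-\log s}{\KL{\mu_j}{\mu_{\istar}}}.
\]
Take $s=A\,(t_U+L_\eta)$ with $A$ a large constant depending on $\mu$ and $U$. The goal is to show that the numerator is at least $\frac{\lambda}{2}s$. For this we need:
\begin{itemize}[leftmargin=1.35em]
\item $C_\mu(U)t_U\le\lambda s/8$, which holds once $A\ge 8C_\mu(U)/\lambda$;
\item $\sqrt{s\widetilde H_s}\le \lambda s/(8C_\mu(U))$, i.e.\ $s\gtrsim\widetilde H_s=\log(K/\eta)+\log(es)$;
\item the analogous bounds for $\widetilde H_s$, $\log s$ and the constant.
\end{itemize}
The condition $s\ge c(\log(K/\eta)+\log(es))$ is again implicit. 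It holds for all $s\ge C(\log(K/\eta)+1)$ since $\log s$ is sublinear, and the choice $s\ge A(L_\eta+1)$ covers it. Then $N_{s,j}\ge \lambda s/(2\KL{\mu_j}{\mu_{\istar}})\ge c' A L_\eta\ge n_j(\eta)$ once $A$ is large.

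The main obstacle is the bookkeeping in the third step. The lower bound must hold at the specific $s$, which needs $s\ge t_U$; this is automatic since $A\ge1$. The implicit self-referential conditions in $s$, namely $\log(es)$ inside $\widetilde H_s$, must be resolved uniformly in $\eta\le1/4$ and in $t_U$. A large $t_U$ only helps, because $s$ scales with it while the errors are of order $t_U$ plus sublinear terms. Finally, $t_{\mathrm{sep}}(\eta)\le s$ follows from the definition as an infimum, giving $C_{\mu,\beta}(U)=A$. The dependence on $\beta$ enters only through $U$ and $\inf U$.
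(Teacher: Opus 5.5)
Your proposal is correct and follows the paper's proof essentially step for step. The paper also bounds $n_\star(\eta)$ logarithmically (Lemma~\ref{lem:logsep}), evaluates Proposition~\ref{prop:linear} at $s=\lceil C_{\mu,\beta}(U)(t_U+R_\eta)\rceil$ after absorbing $\sqrt{s\widetilde H_s(\eta)}+\widetilde H_s(\eta)+\log s$ into $\rho s+C_{\mu,\rho}(U)R_\eta$, and handles $\istar$ through $N_{s,\istar}\ge(\inf U)s$. Your only omission is cosmetic: round $s$ up to an integer at least $K$, as the paper's ceiling does, since $t_{\mathrm{sep}}(\eta)$ is an infimum over integer times.
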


Linear count growth controls separation by $t_U$ and logarithmic confidence
terms (Appendix~\ref{app:challenger-growth}). We next show why these counts
ensure permanent leader correctness, then bound their accumulation before stabilization.

\begin{proposition}[Sufficient sampling makes the leader permanently correct]
\label{prop:permanent}
For every $\eta\in(0,1)$, on $\cK^{\mathrm{sep}}_{\eta}$, the empirical
leader is permanently correct after $t_{\mathrm{sep}}(\eta)$:
$
  B_{t+1}=\istar,
  \forall t\ge t_{\mathrm{sep}}(\eta).
$
\end{proposition}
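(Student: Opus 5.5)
The argument is deterministic on $\cK^{\mathrm{sep}}_{\eta}$. It uses only that counts are nondecreasing in $t$ and that the confidence radius $r(n,\eta)$ is nonincreasing in $n$, at least for $n$ beyond the point where the statement needs it. We fix an outcome in $\cK^{\mathrm{sep}}_{\eta}$ with $t_{\mathrm{sep}}(\eta)<\infty$; if $t_{\mathrm{sep}}(\eta)=\infty$ there is nothing to prove. We then fix any $t\ge t_{\mathrm{sep}}(\eta)$.

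\textbf{Step 1: the count requirements persist.} Pull counts never decrease, so $N_{t,\istar}\ge n_\star(\eta)$ and $N_{t,j}\ge n_j(\eta)$ for every $j\neq\istar$. Next we check that $r(\cdot,\eta)$ is nonincreasing. Writing $r(n,\eta)^2=\frac{c_0}{2}\,g(n)$ with $g(n)=(L+\log\log(en))/n$ and $L=\log(8K^2/\eta)\ge\log 8$, we have $\frac{d}{dn}(n\,g(n))=1/(n\log(en))\le 1\le L+\log\log(en)$. Hence $g$ is nonincreasing on $n\ge1$. It follows that $r(N_{t,\istar},\eta)\le r(n_\star(\eta),\eta)\le r(n_j(\eta),\eta)\le(\mu_{\istar}-\mu_j)/4$ for every $j$, and likewise $r(N_{t,j},\eta)\le(\mu_{\istar}-\mu_j)/4$. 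Here we used $n_\star\ge n_j$ and monotonicity once more.

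\textbf{Step 2: empirical separation.} On $\cK^{\mathrm{sep}}_{\eta}$ the two radius bounds give
\[
\widehat{\mu}_{t,\istar}\ge \mu_{\istar}-\tfrac{\mu_{\istar}-\mu_j}{4}
>\mu_j+\tfrac{\mu_{\istar}-\mu_j}{4}\ge\widehat{\mu}_{t,j}.
\]
The middle inequality is strict because the gap is positive, by uniqueness of the best arm. Therefore $\istar$ is the unique maximizer of $\widehat{\mu}_{t,\cdot}$. Whatever the deterministic tie-breaking rule, $B_{t+1}=\istar$ follows. Since $t$ was arbitrary, the claim holds.

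\textbf{Main obstacle.} The only nontrivial point is the monotonicity of $r$. The $\log\log(en)$ term grows, so one must check that the $1/n$ factor dominates, which the derivative computation in Step 1 handles. If one wanted to avoid that computation, an alternative is to redefine $n_j$ so that the requirement holds for all $n\ge n_j$. With the given definition, monotonicity makes the two formulations equivalent, so no change is needed.
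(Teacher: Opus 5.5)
Your proof is correct and follows the same route as the paper: you show $r(\cdot,\eta)$ is monotone via a derivative check, note that the count requirements persist after $t_{\mathrm{sep}}(\eta)$, and use the quarter-gap radius bounds to get strict empirical separation. One small slip in Step 1: $g$ is nonincreasing iff $\frac{d}{dn}(n g(n))\le g(n)$, not $\le n g(n)$, so your chain should read $\frac{1}{n\log(en)}\le\frac{1}{n}\le\frac{L+\log\log(en)}{n}$; this holds by the same bounds you cite.
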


The confidence event excludes later incorrect leaders
(Appendix~\ref{app:permanent-proof}).

\begin{proposition}[High-probability stabilization bound for the unguarded algorithm]
\label{prop:poly}
The unguarded $\beta$-EB-TCI algorithm admits
finite exponents $a_{\mu}>0$, $b_{\mu}\ge0$, and a constant
$C_{\mu,\beta,1}<\infty$ such that for every $\eta\in(0,1/4]$,
$
  \mathbb{P}_{\mu}
  \left(
    t_{\mathrm{sep}}(\eta/6)
    \le
    C_{\mu,\beta,1}\eta^{-a_{\mu}}(\log(e/\eta))^{b_{\mu}}
  \right)
  \ge1-\eta/2.
$
For every compact interval $U\Subset(0,1)$ containing $\beta$ in its interior,
there is a constant $C_{\mu,\beta,2}(U)<\infty$ such that
$
  \mathbb{P}_{\mu}
  \left(
    t_U\le C_{\mu,\beta,2}(U)\eta^{-a_{\mu}}(\log(e/\eta))^{b_{\mu}}
  \right)
  \ge1-\eta.
$
\end{proposition}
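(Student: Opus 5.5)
The plan has three parts. First, show that $t_{\mathrm{sep}}(\eta/6)$ is polynomial in $1/\eta$ with high probability. Then convert this into a bound on $t_U$ using Proposition~\ref{prop:permanent}. Finally, show that the leader's sampling fraction concentrates around $\beta$.

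\textbf{Step 1: every arm reaches its separation count in polynomial time.} We work on $\cK^{\mathrm{sep}}_{\eta/6}$, and on a second event of probability at least $1-\eta/6$ on which the Bernoulli coins $U_t$ obey a time-uniform deviation bound. On the coin event, the number of challenger rounds up to time $t$ is at least $(1-\beta)t-O(\sqrt{t\log(e t/\eta)})$, and the number of leader rounds satisfies the symmetric bound.

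The core claim is a \emph{starvation bound}: no arm can stay below count $n$ beyond a time polynomial in $n$. Suppose arm $j$ has $N_{t,j}=m$ while $t$ is large. Compare the challenger index of $j$ with that of any arm $k$ having $N_{t,k}\ge t/(2K)$.

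\begin{itemize}
\item The evidence $W_t(B_{t+1},j)$ is at most $m\,\KL{\widehat\mu_{t,j}}{\widehat\mu_{t,B}}$.
\item Under the confidence event all empirical means lie in a compact subinterval of $(0,1)$, once counts exceed a constant. This makes $W_t(B,j)\le C_\mu m$ up to confidence terms.
\item Arms with linear counts either have $W_t(B,k)$ growing like a constant times $t$ (if their mean is separated from the leader's), or else they are leader-like.
\end{itemize}

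Hence $j$ becomes the challenger whenever $C_\mu m+\log m<\min_k\{W_t(B,k)+\log N_{t,k}\}$. Since the right side is at least $\log(t/2K)$, this holds once $t\ge \exp(C_\mu m)$ is exceeded only by a polynomial factor. Iterating over counts $m$ up to $n_\star(\eta/6)=O(\log(K/\eta)+\log\log)$ then gives
\[
t_{\mathrm{sep}}(\eta/6)\le \exp\!\bigl(C_\mu n_\star\bigr)=\mathrm{poly}(1/\eta)\,(\log(e/\eta))^{b_\mu}.
\]
This exponential-in-count, logarithmic-penalty mechanism is exactly what produces the exponent $a_\mu$.

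\textbf{Step 2: handling an arm that is itself the leader.} The comparison in Step 1 needs care when the starved arm $j$ is the leader. Then $j$ is pulled with probability $\beta$ on each unguarded round, so its count grows linearly on the coin event. A subtler issue is an arm with small $W_t$ but many pulls, i.e.\ one that is empirically close to the leader. Such an arm has an index of only about $\log t$, which competes with the small-count arm's index $C_\mu m+\log m$. Because $\log t$ grows while the small-count arm's index stays bounded, the small-count arm still wins eventually.

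I expect the main obstacle to be making this rigorous before the empirical means are separated. The leader can switch, and $W$ against arms whose means are close can be near $0$. So I would bound the index of $j$ only by $m$ times a uniform KL bound, valid because all empirical means lie in $[\epsilon_\mu,1-\epsilon_\mu]$ after a constant count. I would then use the worst-case lower bound $\log N_{t,k}$ for the competitor, which needs no separation. This yields the first display, with probability at least $1-\eta/2$ (three events each of mass $\eta/6$).

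\textbf{Step 3: from separation to stabilization.} On $\cK^{\mathrm{sep}}_{\eta/6}$, Proposition~\ref{prop:permanent} gives $B_{t+1}=\istar$ for all $t\ge t_{\mathrm{sep}}$. After that time $\istar$ is pulled exactly on the coin-success rounds, since it is never the challenger. Hence
\[
N_{t,\istar}=N_{t_{\mathrm{sep}},\istar}+\beta(t-t_{\mathrm{sep}})+O\!\bigl(\sqrt{t\log(e t/\eta)}\bigr).
\]
It follows that $\alpha_t\in U$ once $t\ge C_U\bigl(t_{\mathrm{sep}}+\log(e/\eta)\bigr)$, because $\beta$ lies in the interior of $U$. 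So $t_U$ is bounded by a constant multiple of the polynomial bound, on an event of probability at least $1-\eta$ after adding the remaining $\eta/2$ for the coin concentration at level $\eta/2$.
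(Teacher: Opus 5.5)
\textbf{Genuine gap in Step~1.} You infer that the starved arm $j$ itself becomes the challenger once $t\gtrsim e^{C_\mu m}$, and this does not follow. $C_{t+1}$ minimizes $W_t(B_{t+1},k)+\log N_{t,k}$ over \emph{all} non-leaders $k$, but your lower bound $\log(t/2K)$ holds only for competitors with linear counts. Other arms with counts below $e^{C_\mu m}$ can have smaller indices than $j$ for long stretches. Their indices are also not monotone in time, because empirical means move and the leader can switch. So the per-arm scheme ``iterate over counts $m$ up to $n_\star$'' does not close. The same objection applies to Step~2: a deficient leader's count grows linearly only while it remains the leader.

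What your index comparison actually gives is weaker but sufficient, and it is the content of Lemma~\ref{lem:deficient}. On the good event and up to a fixed horizon $T_\eta$, whenever some separation-deficient non-leader exists, the selected challenger has count below a cap $M_\eta=\lceil C\eta^{-a}(\log(e/\eta))^{b}\rceil$. The reason is that the deficient arm's index is at most $\log(M_\eta-1)$, while every arm with $N_{t,k}\ge M_\eta$ has index at least $\log M_\eta$.

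\textbf{Missing idea: a global potential argument.} The capped deficit $\mathcal{D}_t=\sum_{i}(M_\eta-N_{t,i})_+$ starts at $K(M_\eta-1)$ and decreases by one exactly when a below-cap arm is pulled. Before $t_{\mathrm{sep}}(\eta/6)$, one of two cases holds. Either a deficient non-leader exists, so the challenger is below the cap and is pulled with probability $1-\beta$. Or the leader is the only deficient arm, so it is below the cap (as $n_\star\le M_\eta$) and is pulled with probability $\beta$. Each round therefore makes progress with conditional probability at least $p=\min\{\beta,1-\beta\}$. Azuma--Hoeffding applied to the indicators $\one\{N_{t,I_{t+1}}<M_\eta\}$ then shows that $\mathcal{D}$ reaches zero by $T_\eta\approx 2KM_\eta/p+16p^{-2}\log(6/\eta)$ with probability at least $1-\eta/6$, which gives the first display.

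Your coin event cannot substitute for this, since it controls only the total number of challenger rounds. Which coin outcome counts as progress depends on a predictable case split, so you need a martingale bound on the progress indicators themselves.

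Two smaller repairs are needed:
\begin{enumerate}
\item Bound the deficient arm's evidence through Theorem~\ref{thm:symmetric-glr-control} together with $V_t^{\mu}(B_{t+1},j)\le N_{t,j}\KL{\mu_j}{\mu_{B_{t+1}}}$. Your bound $m\,\KL{\widehat{\mu}_{t,j}}{\widehat{\mu}_{t,B_{t+1}}}$ is infinite when a low-count leader has empirical mean $1$.
\item Fix $T_\eta$ in advance, so that the $\log\log(et)$ term in $H_t$ is frozen.
\end{enumerate}

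Step~3 is fine. Using a single time-uniform bound on the cumulative coin sum is a valid simplification of the paper's union over start times, because every post-initialization round of the unguarded rule draws a coin.
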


The earliest $t_U$ is defined on each sample path without reference to $\eta$.
The bounds for all $\eta$ therefore concern the same random time.
Letting $\eta\downarrow0$ gives $t_U<\infty$ almost surely
(Appendix~\ref{app:poly-localization}).

\begin{corollary}[Non-asymptotic high-probability bound for the unguarded algorithm]
\label{cor:unguarded-hp}
Fix a compact interval $U\Subset(0,1)$ containing $\beta$ in its interior.
Under the standing unique-best-arm assumption, the unguarded algorithm admits
constants $C_{\mu,\beta,k}(U)<\infty$, $k\in\{1,2,3\}$, such that for every
$\delta\in(0,1/2]$ and $\eta\in(0,1/4]$,
\begin{align*}
  \mathbb{P}_{\mu}\Bigg(
  \tau_{\delta}
  &\le
  \Tstar_{\beta}(\mu)\log(1/\delta)
  +C_{\mu,\beta,1}(U)
  \sqrt{\log(1/\delta)
  \left(\log(K/\eta)+\log\log(e/\delta)\right)} \\
  &\quad
  +C_{\mu,\beta,2}(U)
  \left(\log(K/\eta)+\log\log(e/\delta)\right)
  +C_{\mu,\beta,3}(U)\eta^{-a_{\mu}}(\log(e/\eta))^{b_{\mu}}
  \Bigg)
  \ge1-4\eta.
\end{align*}
\end{corollary}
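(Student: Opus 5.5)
The plan is to combine the first part of Theorem~\ref{thm:conditional} with the high-probability stabilization bound of Proposition~\ref{prop:poly}, using a union bound. First I will check the hypothesis of Theorem~\ref{thm:conditional}. Let $t_U$ be the earliest $U$-stabilization time. Proposition~\ref{prop:poly} holds for every $\eta\in(0,1/4]$ and always refers to this same random time, so letting $\eta\downarrow0$ gives $t_U<\infty$ almost surely. Hence $t_U$ is an almost surely finite $U$-stabilization time for the unguarded algorithm.

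Fix $\delta\in(0,1/2]$ and $\eta\in(0,1/4]$. Applying Theorem~\ref{thm:conditional} to this $t_U$ gives an event $A_\eta$ with $\mathbb{P}_{\mu}(A_\eta)\ge1-3\eta$. On $A_\eta$,
\[
  \tau_{\delta}\le(1+C_{\mu,0}(U))t_U+\Tstar_{\beta}(\mu)\log(1/\delta)+C_{\mu,1}(U)\sqrt{\log(1/\delta)\,L}+C_{\mu,2}(U)L+C_{\mu,3}(U),
\]
where $L:=\log(K/\eta)+\log\log(e/\delta)$. The second statement of Proposition~\ref{prop:poly}, at the same $\eta$, gives an event $B_\eta$ with $\mathbb{P}_{\mu}(B_\eta)\ge1-\eta$. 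On $B_\eta$ we have $t_U\le C_{\mu,\beta,2}(U)\eta^{-a_\mu}(\log(e/\eta))^{b_\mu}$. A union bound gives $\mathbb{P}_{\mu}(A_\eta\cap B_\eta)\ge1-4\eta$, which matches the stated probability.

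On $A_\eta\cap B_\eta$, I substitute the bound on $t_U$ into the stopping bound. Since $\eta\le1/4$, we have $\eta^{-a_\mu}(\log(e/\eta))^{b_\mu}\ge1$. The additive constant $C_{\mu,3}(U)$ can therefore be absorbed into $\bigl(C_{\mu,3}(U)+(1+C_{\mu,0}(U))C_{\mu,\beta,2}(U)\bigr)\eta^{-a_\mu}(\log(e/\eta))^{b_\mu}$, which defines the new constant $C_{\mu,\beta,3}(U)$. The square-root term and the $L$ term carry over unchanged. This gives the displayed bound with constants depending only on $\mu$, $\beta$ and $U$.

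The main point to check is that the probabilistic events are compatible. Theorem~\ref{thm:conditional} is stated for a fixed almost surely finite stabilization time, with constants that do not depend on the distribution of $t_U$. Its high-probability part should hold pathwise in $t_U$ on a single event, essentially $\cE_{\mathrm{env}}(\eta)$ together with an extra concentration event, exactly as Proposition~\ref{prop:envelope} holds simultaneously for every stabilization time. If so, no conditioning on $t_U$ is needed and intersecting with $B_\eta$ is legitimate. I expect this to be the case, and it is the only step that needs more than bookkeeping.
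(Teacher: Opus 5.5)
Your proposal is correct and follows the paper's proof. Like the paper, you intersect the probability-$(1-3\eta)$ event of Theorem~\ref{thm:conditional} with the probability-$(1-\eta)$ stabilization event of Proposition~\ref{prop:poly}, substitute the polynomial bound on $t_U$, and absorb $(1+C_{\mu,0}(U))$ times that bound, together with $C_{\mu,3}(U)$, into the last term.

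The compatibility issue you flag is automatic. The stopping bound is an event involving the random $t_U$ and is nondecreasing in $t_U$, so a plain union bound suffices without any pathwise structure. As you expect, the paper's event $\cE_{\mathrm{env}}(\eta)\cap\cA_{\eta}$ is in fact independent of $t_U$.
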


Appendix~\ref{app:poly-localization} derives the corollary from
Theorem~\ref{thm:conditional}. The polynomial-in-$1/\eta$ bound need not be
summable, so it gives no expected bound by itself. The next theorem supplies
a condition that does.

\begin{theorem}[Expected stopping time under finite-mean sufficient exploration]
\label{thm:unguarded-suffexp}
Consider the unguarded algorithm and fix a compact interval
$U\Subset(0,1)$ containing $\beta$ in its interior. Assume \emph{finite-mean
sufficient exploration}: there exist a positive finite constant $c_{\mathrm{se}}$ and an
almost surely finite random time $T_{\mathrm{se}}$ with
$\mathbb{E}_{\mu}[T_{\mathrm{se}}]<\infty$ such that
\[
  \min_{i\in[K]}N_{t,i}\ge c_{\mathrm{se}}\sqrt{t/K}
  \qquad\forall t\ge T_{\mathrm{se}}.
\]
Then there exists a $U$-stabilization time $t_U$ that admits
a summable upper-quantile bound. Consequently, there are constants
$C_{\mu,k}(U)<\infty$, $k\in\{1,2,3\}$, such that, for all
$\delta\in(0,1/2]$,
\[
\begin{aligned}
  \mathbb{E}_{\mu}[\tau_{\delta}]
  &\le
  \Tstar_{\beta}(\mu)\log(1/\delta)
  +C_{\mu,1}(U)\sqrt{\log(1/\delta)\log\log(e/\delta)} \\
  &\quad +C_{\mu,2}(U)\log\log(e/\delta)
  +C_{\mu,3}(U).
\end{aligned}
\]
\end{theorem}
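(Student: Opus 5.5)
The plan is to build a $U$-stabilization time from three ingredients: the sufficient-exploration time $T_{\mathrm{se}}$, the armwise separation event $\cK^{\mathrm{sep}}_{\eta}$ with Proposition~\ref{prop:permanent}, and a concentration argument for the leader's sampling fraction. Once this time has a summable upper-quantile bound, the expected-time bound follows directly from the second part of Theorem~\ref{thm:conditional}, so all the work is in the quantile bound.

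\textbf{Leader correctness.} Under the exploration assumption, for $t\ge T_{\mathrm{se}}$ every count is at least $c_{\mathrm{se}}\sqrt{t/K}$. Since $n_j(\eta)=O(\log(K/\eta)+\log\log\log(1/\eta))$, the separation requirements hold once $c_{\mathrm{se}}\sqrt{t/K}\ge n_\star(\eta)$. Hence
\[
  t_{\mathrm{sep}}(\eta)\le \max\{T_{\mathrm{se}},\,C_\mu(\log(e/\eta))^2\}.
\]
On $\cK^{\mathrm{sep}}_{\eta}$, Proposition~\ref{prop:permanent} then gives $B_{t+1}=\istar$ for all $t$ beyond this time.

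\textbf{Best-arm fraction.} After the leader is permanently correct, the best arm is pulled at each unguarded round with probability exactly $\beta$, via $U_{t+1}$. The one exception is that the challenger is never $\istar$, so $N_{t,\istar}-\beta(t-t_0)$ is a martingale with bounded increments from any time $t_0$ onward. I would use a time-uniform Azuma or Freedman bound, union-bounded over the start time on a dyadic grid, to get on an event of probability $\ge1-\eta$
\[
  |N_{t,\istar}-N_{t_0,\istar}-\beta(t-t_0)|\le C\sqrt{t\log(\log(et)/\eta)}\qquad\text{for all } t\ge t_0.
\]
Because $\beta$ lies in the interior of $U$, with margin $\rho>0$ say, $\alpha_t\in U$ holds as soon as $(t_0+C\sqrt{t\log(\log(et)/\eta)})/t\le\rho$. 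This is guaranteed for $t\ge C'(t_0+\log(1/\eta))/\rho^2$ up to log-log factors.

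\textbf{Summable quantile.} Define $t_U$ as the earliest time after which both conditions hold; it is finite almost surely. Take $T_{\mathrm{se}}$'s quantile from Markov's inequality, $u(\eta)=\mathbb{E}_\mu[T_{\mathrm{se}}]/\eta$. Combining the three pieces gives a quantile bound of the form
\[
  u_{t_U}(\eta)\le C\bigl(\mathbb{E}_\mu[T_{\mathrm{se}}]/\eta+\log^2(e/\eta)\bigr).
\]
This is not summable: $\sum_k 2^{-k}\cdot 2^k/\mathbb{E}$ diverges. So Markov must not be applied naively.

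\textbf{Main obstacle.} Summability of $\sum 2^{-k}u(2^{-k})$ essentially requires $\mathbb{E}[T]<\infty$ together with a quantile at level comparable to the tail. The fix I would try first is to define $u$ through the true quantile function of $T_{\mathrm{se}}$, $q(\eta)=\inf\{x:\mathbb{P}(T_{\mathrm{se}}>x)\le\eta\}$. For any nonnegative $T$ with finite mean, $\sum_k 2^{-k}q(2^{-k})\le 2\mathbb{E}[T]+C$, since $q$ is nonincreasing and $\int_0^1 q(\eta)\,d\eta=\mathbb{E}[T]$. The remaining risk is that the split into three events costs constant factors in $\eta$, which is harmless. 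The other risk is the interaction between $t_0$ (random, from $T_{\mathrm{se}}$) and the martingale bound; I would handle it by making the concentration bound uniform over all start times, so no conditioning on $T_{\mathrm{se}}$ is needed.

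Then
\[
  u_{t_U}(\eta):=C\bigl(q(\eta/3)+\log^2(e/\eta)\bigr)
\]
is summable, and Theorem~\ref{thm:conditional} yields the stated expectation bound, with constants depending on $\mathbb{E}_\mu[T_{\mathrm{se}}]$ and $c_{\mathrm{se}}$.
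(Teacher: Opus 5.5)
Your proposal is correct and follows the paper's proof essentially step for step. The paper also replaces Markov by the exact upper-tail quantile $q_{\mathrm{se}}$ of $T_{\mathrm{se}}$, whose dyadic series is bounded by $2\mathbb{E}_{\mu}[T_{\mathrm{se}}]$, and uses the separation horizon $q_{\mathrm{se}}(\eta/3)+\ceil{K n_{\star}(\eta/3)^2/c_{\mathrm{se}}^2}+1$. It then intersects with $\cK_{\eta/3}$ and the start-time-uniform leader-fraction event $\cA_{\eta/3}$ before invoking the expectation part of Theorem~\ref{thm:conditional}. The differences are cosmetic, such as your dyadic grid of start times in place of the paper's union bound over all integer start times with weights proportional to $m^{-2}$.
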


Finite-mean sufficient exploration makes stabilization integrable
(Appendix~\ref{app:suff-exp}).

\begin{corollary}[Expected bound for pairwise-distinct means]
\label{cor:distinctmeans}
Assume pairwise-distinct arm means and let the unguarded Bernoulli
$\beta$-EB-TCI rule break ties as in \citet{jourdan2022top}
(e.g., uniformly at random). Then there exist instance-dependent constants
$C_{\mu,\beta,1},C_{\mu,\beta,2},C_{\mu,\beta,3}<\infty$ such that, for all
$\delta\in(0,1/2]$,
\[
  \mathbb{E}_{\mu}[\tau_{\delta}]
  \le
  \Tstar_{\beta}(\mu)\log(1/\delta)
  +C_{\mu,\beta,1}\sqrt{\log(1/\delta)\log\log(e/\delta)}
  +C_{\mu,\beta,2}\log\log(e/\delta)
  +C_{\mu,\beta,3}.
\]
\end{corollary}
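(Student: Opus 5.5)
The plan is to deduce Corollary~\ref{cor:distinctmeans} from Theorem~\ref{thm:unguarded-suffexp}. All that needs checking is its finite-mean sufficient-exploration hypothesis. The only outside input is the exploration result of \citet{jourdan2022top}, whose sufficient-exploration lemma covers $\beta$-EB-TCI with their tie-breaking convention when means are pairwise distinct. Their lemma gives a random time $N_0$ with $\mathbb{E}_{\mu}[N_0]<\infty$ such that $\min_{i\in[K]}N_{t,i}\ge\sqrt{t/K}$ for all $t\ge N_0$. I would take $T_{\mathrm{se}}:=N_0$ and $c_{\mathrm{se}}:=1$. If their statement instead gives a count threshold of the form $\sqrt{t/K}$ up to a constant, or a polynomial moment bound from which integrability follows, I would adjust $c_{\mathrm{se}}$ accordingly.

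Before invoking their result, I will check that it applies verbatim. Their setting is bounded rewards, which includes Bernoulli. Their transportation cost for Bernoulli is exactly our $W_t$ from Section~\ref{subsec:characteristic-time} (Appendix~\ref{app:bernoulli-specialization}). The challenger penalty $\log N_{t,j}$ and the leader are the same, and the sampling is unguarded. The one discrepancy is tie-breaking: our default is deterministic, but the corollary assumes theirs, so their lemma applies unchanged.

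Our own analysis is insensitive to this change. Theorem~\ref{thm:unguarded-suffexp}, and the results it rests on (Propositions~\ref{prop:envelope}, \ref{prop:permanent}, Theorem~\ref{thm:conditional}), use only the pathwise properties $B_{t+1}\in\argmax$ and $C_{t+1}\in\argmin$. They therefore hold for any measurable selection rule, including randomized ones. The extra randomization must be added to the history filtration so that $\tau_{\delta}$ remains a stopping time. It is also worth checking that the martingale concentration events (Theorem~\ref{thm:symmetric-glr-control}, Lemma~\ref{lem:armwisecs}) still hold, since sampling may now depend on the independent tie-break draws. They do, because those bounds are time-uniform under any predictable sampling rule.

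With the hypothesis verified, Theorem~\ref{thm:unguarded-suffexp} yields a $U$-stabilization time with a summable upper-quantile bound, and then the expected bound. To remove the $U$-dependence in the constants, I would fix one compact interval $U$ containing $\beta$ in its interior, say $U=[\beta/2,(1+\beta)/2]$. Since $U$ is then a function of $\beta$, the constants can be written $C_{\mu,\beta,k}$, giving the displayed inequality for all $\delta\in(0,1/2]$.

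The main obstacle is the interface step. The exact form of the sufficient-exploration statement in \citet{jourdan2022top} must be matched to finite-mean integrability with the $\sqrt{t/K}$ scaling; for instance, it may be stated as $N_{t,i}\ge\sqrt{t/K}$ beyond a time with finite expectation, or as a tail bound on that time. Separately, one must confirm that their distinct-means assumption is what rules out ties that could starve an arm.
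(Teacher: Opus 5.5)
Your proposal is correct and follows the paper's proof. The paper likewise invokes the sufficient-exploration result of \citet{jourdan2022top}: their Lemma~7, fed by the leader property for the empirical best arm (their Lemma~17) and the challenger property for the TCI index (their Lemma~21), under their distinct-means Assumption~2. This yields $T_{\mathrm{se}}$ with finite mean and $c_{\mathrm{se}}=1$, after which the paper applies Theorem~\ref{thm:unguarded-suffexp}. You additionally check that the paper's own analysis does not depend on deterministic tie-breaking, and that fixing $U$ as a function of $\beta$ gives constants of the form $C_{\mu,\beta,k}$; the paper leaves both points implicit.
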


The cited exploration result supplies the premise of
Theorem~\ref{thm:unguarded-suffexp} (Appendix~\ref{app:distinct-means}).

\paragraph{A simple exploration guard.}
Pulling a least-sampled arm when its count is below $\sqrt{t/K}$ controls
expected stabilization time under a unique best arm.

\begin{proposition}[Deterministic exploration rate of the guarded algorithm]
\label{prop:guard}
Assume the exploration guard in Algorithm~\ref{alg:ebtci} is enabled. Then for
all $t\ge K$,
$
  \min_{i\in[K]}N_{t,i}\ge \floor{\sqrt{t/K}}-1.
$
Consequently, for every $\eta\in(0,1)$,
$
  t_{\mathrm{sep}}(\eta)\le K\left(n_{\star}(\eta)+1\right)^2.
$
For every compact interval $U\Subset(0,1)$ containing $\beta$ in its interior,
there exists a constant $C_{\mu,\beta}(U)<\infty$ such that the earliest
$U$-stabilization time $t_U$ satisfies, for every $\eta\in(0,1/4]$,
\[
  \mathbb{P}_{\mu}\!\left(t_U\le C_{\mu,\beta}(U)(\log(e/\eta))^2\right)
  \ge1-\eta.
\]
\end{proposition}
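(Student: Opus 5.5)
We prove the three claims in order: a deterministic bound on the minimum count, then the separation time, then stabilization.

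\textbf{Step 1: the minimum count.} We argue by induction on $t$. Let $g(t):=\floor{\sqrt{t/K}}$ and suppose $\min_i N_{t,i}<g(t)-1$ at some time $t$. Let $t_0<t$ be the last time at which $\min_i N_{t_0,i}\ge g(t_0)-1$; it exists because $g(K)=1$ and every count equals $1$ at time $K$. At every round $u\in[t_0,t)$, if $\min_i N_{u,i}<\sqrt{u/K}$, the guard pulls a least-sampled arm. Each minimum-count level contains at most $K$ arms, so it is cleared within $K$ guarded pulls.

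We need the threshold $g$ to rise by one only slowly. Going from level $m$ to $m+1$ requires $t$ to grow from $Km^2$ to $K(m+1)^2$, that is, by $K(2m+1)\ge K$ rounds. So before the threshold rises again, the guard has had enough rounds to lift every arm sitting at the deficient level. Counts never decrease, and guarded pulls occur whenever the minimum lies below $\sqrt{u/K}$. Hence the minimum lags $g(u)$ by at most one, which gives $\min_i N_{t,i}\ge g(t)-1$. The only fussy point is the lag of one caused by floors; I expect this to be a short counting argument.

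\textbf{Step 2: the separation time.} Set $t=K(n_\star(\eta)+1)^2$. Then $\floor{\sqrt{t/K}}-1=n_\star(\eta)$, so every arm satisfies $N_{t,i}\ge n_\star(\eta)$. This covers both $N_{t,\istar}\ge n_\star(\eta)$ and $N_{t,j}\ge n_j(\eta)$ for all $j\ne\istar$. Therefore $t_{\mathrm{sep}}(\eta)\le t$.

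Now estimate $n_\star(\eta)$. Solving $r(n,\eta)\le \min_j(\mu_{\istar}-\mu_j)/4$ gives
\[
n_\star(\eta)\le C_\mu\bigl(\log(K/\eta)+\log\log\log(e/\eta)+1\bigr)\le C'_\mu\log(e/\eta).
\]
The $\log\log(en)$ term inside $r$ is absorbed by a standard inversion. Consequently $t_{\mathrm{sep}}(\eta)\le C_\mu(\log(e/\eta))^2$.

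\textbf{Step 3: stabilization.} By Proposition~\ref{prop:permanent}, on $\cK^{\mathrm{sep}}_{\eta/2}$ the leader equals $\istar$ for all $t\ge t_{\mathrm{sep}}(\eta/2)$. It remains to show that $\alpha_t\in U$ from some time onwards. After $t_{\mathrm{sep}}$, there are two kinds of rounds.
\begin{itemize}
\item \emph{Guarded rounds.} By Step 1 the minimum count is already close to the threshold, so the guard fires at most about $\sqrt{Kt}$ times up to time $t$.
\item \emph{Unguarded rounds.} On these, $\istar$ is pulled exactly when $U_{u+1}=1$.
\end{itemize}
Write $N_{t,\istar}-N_{t_{\mathrm{sep}},\istar}$ as $\beta$ times the number of unguarded rounds, plus a martingale $M_t$ with bounded increments. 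By a time-uniform Freedman/Azuma bound,
\[
|M_t|\le C\sqrt{t\log(t/\eta)}\quad\text{for all }t,
\]
with probability at least $1-\eta/2$. On this event,
\[
|\alpha_t-\beta|\le \frac{t_{\mathrm{sep}}+C\sqrt{Kt}+C\sqrt{t\log(t/\eta)}}{t}.
\]
The interval $U$ contains a neighbourhood of $\beta$ of radius $\rho>0$. The right-hand side falls below $\rho$ once $t\ge C_{\mu,\beta}(U)\bigl(t_{\mathrm{sep}}+\log(e/\eta)\bigr)$. Combining with Step 2 yields $t_U\le C(\log(e/\eta))^2$ with probability at least $1-\eta$.

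The main obstacle I expect is in Step 3. We must make sure the guard cannot keep firing on $\istar$-related deficits. We must also make sure the concentration is uniform in $t$, since $t_U$ is defined as a permanent property. The uniform martingale bound plus the $O(\sqrt{Kt})$ guard count handle both issues.
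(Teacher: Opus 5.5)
Your proposal is correct and follows essentially the same route as the paper. Both use epoch counting for the deterministic count bound ($K(2m+1)\ge K$ rounds per level, at most $K$ guarded pulls needed to clear a level) and then evaluate that bound at $K(n_{\star}(\eta)+1)^2$. Both then combine permanent leader correctness (Proposition~\ref{prop:permanent}) with leader-fraction concentration and the $O(\sqrt{Kt})$ guard count; the paper packages these as Proposition~\ref{prop:alpha}, Lemma~\ref{lem:guard-count} and Lemma~\ref{lem:compactU}.

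One small point: the guard-count bound in Step~3 does not follow from Step~1's lower bound as you phrase it. It needs the same per-epoch counting (at most $K$ guarded pulls per epoch), or the one-line observation that each arm is guard-pulled at most once at each count value below $\sqrt{t/K}$.
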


This bound is summable (Appendix~\ref{app:guarded-closure}).

\paragraph{Guarded rounds.}
Guarded rounds bypass the $\Ber(\beta)$ leader--challenger coin, changing the
best-arm sampling fraction by at most $G_t^{\mathrm{tot}}/t$.
Proposition~\ref{prop:alpha} in Appendix~\ref{app:alpha} gives concentration;
Lemma~\ref{lem:guard-count} in Appendix~\ref{app:guard-count} gives
$G_t^{\mathrm{tot}}\le\sqrt{Kt}+K$.

\begin{theorem}[Expected stopping time of the guarded algorithm]
\label{thm:guarded}
For any Bernoulli instance $\mu$ with a unique best arm,
Algorithm~\ref{alg:ebtci} with the exploration guard enabled admits
instance-dependent constants $C_{\mu,\beta,1},C_{\mu,\beta,2},C_{\mu,\beta,3}<\infty$ such that, for all
$\delta\in(0,1/2]$,
\[
  \mathbb{E}_{\mu}[\tau_{\delta}]
  \le
  \Tstar_{\beta}(\mu)\log(1/\delta)
  +C_{\mu,\beta,1}\sqrt{\log(1/\delta)\log\log(e/\delta)}
  +C_{\mu,\beta,2}\log\log(e/\delta)
  +C_{\mu,\beta,3}.
\]
\end{theorem}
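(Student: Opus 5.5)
The plan is to reduce Theorem~\ref{thm:guarded} to the expected-time part of Theorem~\ref{thm:conditional}, applied to the guarded algorithm. Proposition~\ref{prop:guard} supplies the needed stabilization control. It gives, for every $\eta\in(0,1/4]$, the upper-quantile bound $u_{t_U}(\eta)=C_{\mu,\beta}(U)(\log(e/\eta))^2$, extended by $u(\eta)=u(1/4)$ for $\eta\in(1/4,1]$. This is summable, since $\sum_k 2^{-k}(\log(e 2^k))^2<\infty$. In particular $t_U<\infty$ almost surely and $\mathbb{E}_\mu[t_U]<\infty$. Once Theorem~\ref{thm:conditional} is shown to hold for the guarded dynamics, its second display gives exactly the claimed bound.

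The substantive step is to check that the proof of Theorem~\ref{thm:conditional}, via Propositions~\ref{prop:envelope} and~\ref{prop:oracle}, survives guarded rounds. I would structure this as follows.
\begin{enumerate}
\item Theorem~\ref{thm:symmetric-glr-control} and the armwise confidence sequences hold for any adaptive sampling rule. They are unaffected by the guard.
\item The stabilization time $t_U$ encodes only $B_{t+1}=\istar$ and $\alpha_t\in U$. It makes no reference to how $\alpha_t$ is produced, so the conditional argument converting $\alpha_s\in U$ into $s/\Tstar_{\alpha_s}(\mu)$ is unchanged.
\item What remains is the passage from $\Tstar_{\alpha_s}$ to $\Tstar_\beta$. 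This needs $|\alpha_s-\beta|$ to be small beyond $t_U$, at rate $O(\sqrt{\log(1/\eta)/s})$. On unguarded rounds the leader coin is $\Ber(\beta)$, giving a martingale deviation of that order. Guarded rounds perturb the count by at most $G_s^{\mathrm{tot}}\le\sqrt{Ks}+K$ (Lemma~\ref{lem:guard-count}). By Proposition~\ref{prop:alpha}, $|\alpha_s-\beta|\le C\bigl(\sqrt{\widetilde H_s(\eta)/s}+\sqrt{K/s}\bigr)$.
\item Since $\alpha\mapsto 1/\Tstar_\alpha(\mu)$ is Lipschitz on $U$, the induced error in $s/\Tstar_{\alpha_s}$ is $O(\sqrt{s\widetilde H_s}+\sqrt{Ks})$. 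This is absorbed into the existing $\sqrt{s\widetilde{H}_s(\eta)}$ term, because $\widetilde H_s\ge 1$.
\end{enumerate}

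Proposition~\ref{prop:envelope} must also hold when some rounds between $t_U$ and $s$ are guarded. Its key use of the challenger rule is the claim that the minimum population index cannot lag $\Gdisc$ by more than a bounded amount plus concentration terms: whenever some arm's index is lowest, it is the one pulled on challenger rounds. A guarded pull only adds samples to a least-sampled arm, and this never decreases any $Q_s(j)$, since $\Phi_j$ is nondecreasing in both counts. So guarded pulls only strengthen the envelope. The one cost is that the suboptimal total $s-N_{s,\istar}$ is shifted by guarded pulls, and that is already reflected in $\alpha_s$.

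With these adjustments, inverting the threshold proceeds exactly as in Theorem~\ref{thm:conditional}. The stopping time is at most the first $s$ with $s/\Tstar_\beta-C(t_U+\sqrt{s\widetilde H_s})>c(s,\delta)$. Integrating the resulting high-probability bound over dyadic $\eta=2^{-k}$, using the summable quantile $u_{t_U}$, yields the expectation bound with constants depending on $\mu,\beta$ and on the fixed choice of $U$, say $U=[\beta/2,(1+\beta)/2]$.

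The main obstacle I expect is step~3 together with the monotonicity claim: confirming that Proposition~\ref{prop:alpha} gives the $\alpha_s$ concentration uniformly in $s\ge t_U$ with an $\eta$-dependence of logarithmic form, so that the dyadic summation still converges. If it only controls $\alpha_s$ after an additional random burn-in time, I would fold that time into an enlarged stabilization time. Its quantile is again polylogarithmic in $1/\eta$, because the $\sqrt{K/s}$ bias falls below the margin separating $\beta$ from $\partial U$ after a deterministic time of order $K/\mathrm{dist}(\beta,\partial U)^2$.
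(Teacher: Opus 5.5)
Your reduction is the paper's own. The proof of Theorem~\ref{thm:guarded} feeds the summable polylogarithmic quantile bound $C_{\mu,\beta}(U)(\log(e/\eta))^2$ from Proposition~\ref{prop:guard} into the expectation part of Theorem~\ref{thm:conditional}. That theorem's proof already accommodates guarded rounds, through the $G_s^{\mathrm{tot}}$ terms in Lemma~\ref{lem:spread} and Proposition~\ref{prop:alpha} together with $G_s^{\mathrm{tot}}\le\sqrt{Ks}+K$ from Lemma~\ref{lem:guard-count}.

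The obstacle you anticipate does not arise. Proposition~\ref{prop:alpha} is uniform over start times, so it applies directly at $m=t_U$. The $\log m^2$ inside its logarithm is absorbed because $t_U\le s$. Its initial-discrepancy term $(1+\beta)t_U/s$, which your step~3 drops, only contributes the $O(t_U)$ that your final inversion already carries.

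One justification in your re-check is wrong, though repairable: guarded pulls do not ``only strengthen'' the envelope. The envelope compares $\min_j Q_s(j)$ with $\Gdisc_{N_{s,\istar},\,s-N_{s,\istar}}(\mu)$, and this benchmark grows with the suboptimal budget $s-N_{s,\istar}$. A guarded pull of a suboptimal arm other than a minimizer of $Q_s$ enlarges that budget, and hence the benchmark, while leaving $\min_j Q_s(j)$ unchanged. This misallocation within the suboptimal arms is not reflected in $\alpha_s$.

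The paper handles it by charging each guarded pull at most one unit of the positive deficit $D^{+}$ in Lemma~\ref{lem:spread}, which yields an extra $C_{\mu}(U)G_s^{\mathrm{tot}}$ term. The same bound $G_s^{\mathrm{tot}}\le\sqrt{Ks}+K\le C_K\sqrt{s\widetilde{H}_s(\eta)}$ that you used for the $\alpha_s$ deviation then absorbs it, and your conclusion stands.
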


The guard recovers the fixed-$\beta$ leading term
(Appendix~\ref{app:guarded-closure}).

\paragraph{The unguarded expected bound.}
For the unguarded algorithm under the default convention, finite-mean
sufficient exploration and expected stabilization time remain unresolved when
suboptimal means coincide. We examine one proof route.

Fix a provisional leader $b\in[K]\setminus\{\istar\}$ and a candidate
challenger $u\in[K]\setminus\{b\}$. For positive integer counts $n,m$, define
the population penalized index with the leader count held at $n$ by
$
  \Phi_u^{(b)}(n,m)
  :=
  \inf_{x\in(0,1)}
  \left\{n\KL{\mu_b}{x}+m\KL{\mu_u}{x}\right\}
  +\log m.
$

\begin{proposition}[Limitation of one-step penalized-index comparisons]
\label{prop:barrier}
Fix distinct arms $b,j,u\in[K]$ such that $\mu_b=\mu_j=\lambda<\mu_u$. Let $C>0$ and,
for $\eta\in(0,e^{-1}]$, define $m_u(\eta):=\ceil{C\log(e/\eta)}$. Suppose
$M:(0,e^{-1}]\to\{1,2,\dots\}$ is such that for every sufficiently small $\eta$ and
all sufficiently large $n$,
$
  \Phi_u^{(b)}(n,m_u(\eta))\le \Phi_j^{(b)}(n,M(\eta)).
$
Then for every $a<C\KL{\mu_u}{\lambda}$,
$
  M(\eta)\ge\eta^{-a}
$
for all sufficiently small $\eta$.
\end{proposition}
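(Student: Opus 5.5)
The plan is to let $n\to\infty$ with $m$ fixed and compute both sides of the assumed inequality explicitly. For fixed $m\ge1$ and a mean $\nu\in(0,1)$, the map
\[
  g_n(x):=n\KL{\lambda}{x}+m\KL{\nu}{x}
\]
is minimized at the pooled mean $x_n=(n\lambda+m\nu)/(n+m)$, which tends to $\lambda$ as $n\to\infty$. Hence
\[
  \inf_x g_n(x)\;\to\; m\KL{\nu}{\lambda}
  \qquad (n\to\infty).
\]
The upper bound is immediate by taking $x=\lambda$. For the lower bound, the term $n\KL{\lambda}{x_n}$ is nonnegative, and by continuity $m\KL{\nu}{x_n}\to m\KL{\nu}{\lambda}$. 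Applying this with $\nu=\mu_u$ and $m=m_u(\eta)$ gives
\[
  \Phi_u^{(b)}(n,m_u(\eta))\to m_u(\eta)\KL{\mu_u}{\lambda}+\log m_u(\eta).
\]
Applying it with $\nu=\mu_j=\lambda$ gives $\KL{\lambda}{\lambda}=0$, so in fact
\[
  \Phi_j^{(b)}(n,M)=\log M
  \qquad\text{for every } n,
\]
since the infimum is attained at $x=\lambda$ with value $0$.

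Fix a small $\eta$ for which the hypothesis holds for all large $n$. Letting $n\to\infty$ in $\Phi_u^{(b)}(n,m_u(\eta))\le\Phi_j^{(b)}(n,M(\eta))$ yields
\[
  m_u(\eta)\KL{\mu_u}{\lambda}+\log m_u(\eta)\le\log M(\eta).
\]
Since $m_u(\eta)\ge1$, the term $\log m_u(\eta)$ is nonnegative and can be dropped. Using $m_u(\eta)\ge C\log(e/\eta)$, this gives
\[
  \log M(\eta)\ge C\KL{\mu_u}{\lambda}\bigl(1+\log(1/\eta)\bigr)\ge C\KL{\mu_u}{\lambda}\log(1/\eta).
\]
So $M(\eta)\ge\eta^{-C\KL{\mu_u}{\lambda}}$. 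Because $\eta\le e^{-1}<1$, this exceeds $\eta^{-a}$ for every $a<C\KL{\mu_u}{\lambda}$, and the bound in fact holds for every sufficiently small $\eta$ and not just asymptotically.

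The only step needing care is the limit of the infimum: we must justify $\inf_x g_n(x)\to m\KL{\mu_u}{\lambda}$ when $m$ is held fixed. Even this is not strictly required. The one-sided bound
\[
  \inf_x g_n(x)\ge m\KL{\mu_u}{x_n}-o(1),
\]
together with the continuity of $\KL{\mu_u}{\cdot}$ at $\lambda\in(0,1)$, suffices to take the limit in the hypothesis. All other steps are elementary.
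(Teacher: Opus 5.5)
Your proof is correct and follows the paper's route: the equal-mean index satisfies $\Phi_j^{(b)}(n,M(\eta))=\log M(\eta)$ for every $n$, while the higher-mean index with $m_u(\eta)$ held fixed tends, as $n\to\infty$, to $m_u(\eta)\KL{\mu_u}{\lambda}+\log m_u(\eta)$ (the paper's Lemma~\ref{lem:frozen}). The only difference is that you pass to the exact limit at fixed $\eta$, whereas the paper uses an $\eps$-slack at a finite large $n$; your version gives the slightly sharper endpoint bound $M(\eta)\ge\eta^{-C\KL{\mu_u}{\lambda}}$.
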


With $m_u(\eta)=\ceil{C\log(e/\eta)}$ pulls of $u$, the comparison required for
all sufficiently large $n$ requires polynomially
many pulls of the equal-mean suboptimal arm $j$ (Appendix~\ref{app:barrier}).
A global, path-dependent analysis may still establish finite-mean sufficient
exploration with deterministic tie-breaking.

\section{Experiments}
\label{sec:experiments}

We compare empirical mean stopping times with the $\Tstar_{\beta}(\mu)$ predictor
and examine sensitivity to the leader-sampling probability $\beta$, the
guarded-pull fraction, and removal of the logarithmic count penalty. All experiments use
Bernoulli rewards, the stopping threshold in \eqref{eq:threshold}, and
smallest-index tie-breaking.

\paragraph{Protocol.}
We use three representative mean vectors:
$\textsc{sep5}=[0.70,0.50,0.40,0.30,0.20]$ has well-separated arms,
$\textsc{cluster-close5}=[0.70,0.64,0.63,0.62,0.61]$ has distinct but closely
clustered suboptimal arms, and
$\textsc{tie-close4}=[0.60,0.56,0.56,0.56]$ has three equal-mean suboptimal arms.
The evaluation grid combines these instances with $\beta\in\{0.3,0.5,0.7\}$ and
$\delta\in\{10^{-4},10^{-5},10^{-6},10^{-7},10^{-8}\}$. For each setting we
run both the unguarded and guarded algorithms with $200$ independent
replications. We compare the empirical mean stopping time with the
threshold-crossing predictor
\[
  t_{\mathrm{pred}}(\mu,\beta,\delta)
  :=
  \min\{t\ge K:t/\Tstar_{\beta}(\mu)>c(t,\delta)\}.
\]
The minimum is over integer times: the predictor is the first round when the
leading information term $t/\Tstar_{\beta}(\mu)$ exceeds the stopping threshold.
Appendix~\ref{app:exp-details} gives the full grids, random-seed policy,
numerical solver, uncertainty summaries, stopping caps, and additional
diagnostics, together with the code and reproduction commands.

\paragraph{Predictor agreement and sensitivity to $\beta$.}
Figure~\ref{fig:exp-scaling-revised} compares empirical mean stopping times with
$t_{\mathrm{pred}}$ at $\beta=0.5$. The empirical means track the predictor on
both \textsc{sep5} and \textsc{tie-close4}. Across the $45$ guarded settings,
the mean and maximum absolute relative errors of the empirical means
with respect to $t_{\mathrm{pred}}$ are $1.53\%$ and $3.86\%$; for the
unguarded settings, they are $1.66\%$ and $4.20\%$. None of the $18{,}000$
penalized runs reaches the $700{,}000$-round cap or recommends a wrong arm.
Figure~\ref{fig:exp-main-diagnostics} also shows that the dependence on $\beta$
is nonmonotone and instance-specific, as predicted by
$\Tstar_{\beta}(\mu)$; the mean gaps alone do not determine the best
choice of $\beta$.

\begin{figure*}[t]
  \centering
  \includegraphics[width=0.75\textwidth]{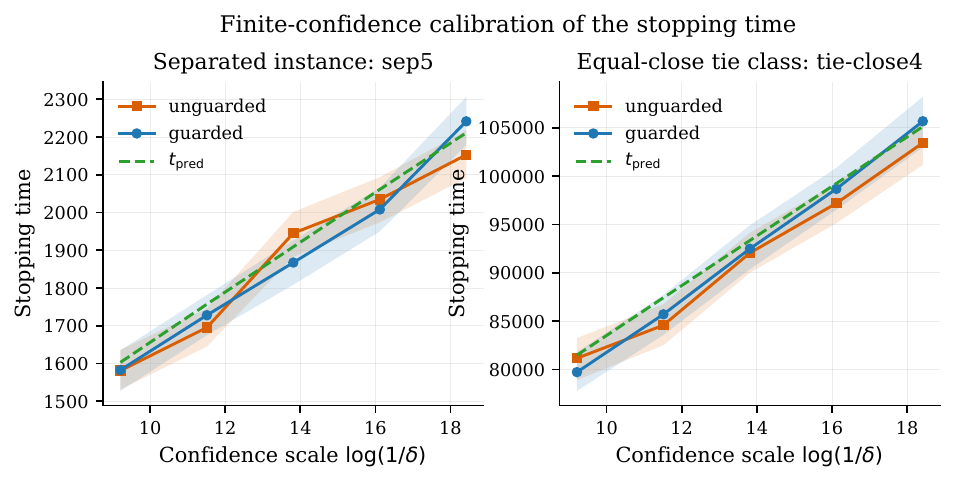}
  \caption{Agreement between observed stopping times and the predictor at $\beta=0.5$.
  Shaded bands are bootstrap percentile $95\%$ confidence intervals for the
  empirical mean over $200$ replications per point. The dashed curve is the
  threshold-crossing predictor $t_{\mathrm{pred}}$.}
  \label{fig:exp-scaling-revised}
\end{figure*}

\begin{figure*}[t]
  \centering
  \begin{minipage}[t]{0.36\textwidth}
    \centering
    \includegraphics[width=\linewidth]{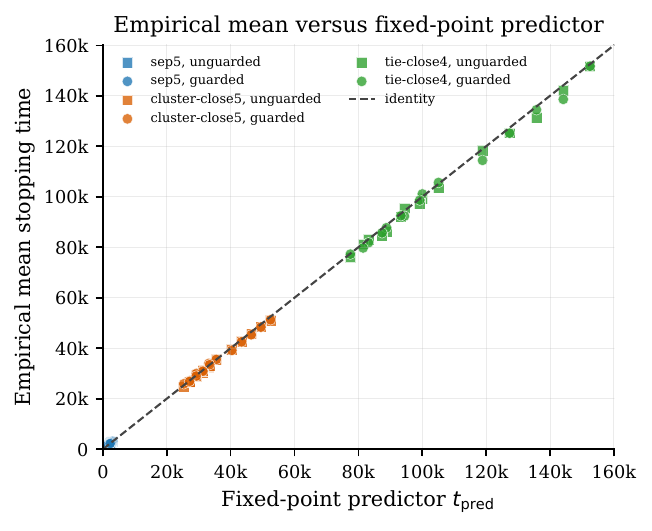}
    \vspace{-0.5em}

    {\small (a) Empirical mean versus $t_{\mathrm{pred}}$.}
  \end{minipage}
  \hfill
  \begin{minipage}[t]{0.60\textwidth}
    \centering
    \includegraphics[width=\linewidth]{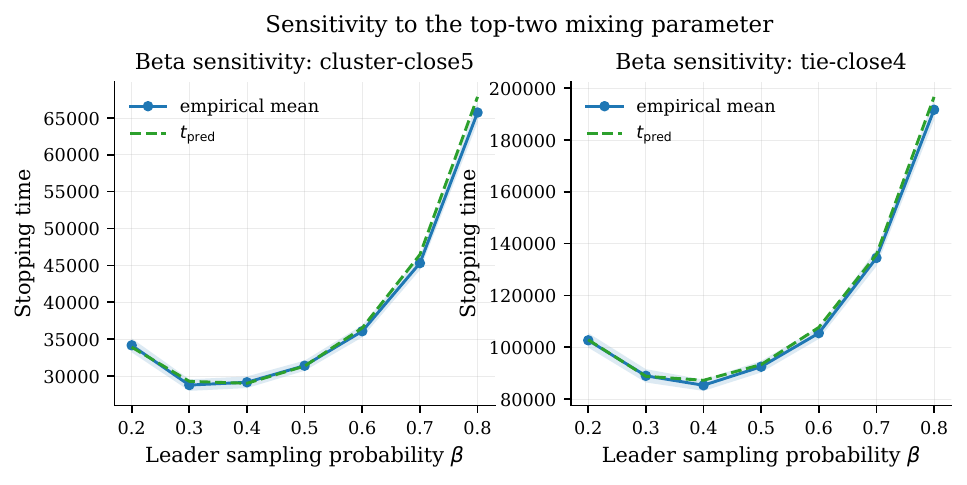}
    \vspace{-0.5em}

    {\small (b) Sensitivity to the leader-sampling probability $\beta$.}
  \end{minipage}
  \caption{Main experimental diagnostics. Left: empirical mean stopping time
  versus the threshold-crossing predictor over the complete evaluation grid for the
  guarded and unguarded algorithms. Right: $\beta$-sensitivity on
  \textsc{cluster-close5} and \textsc{tie-close4} with $\delta=10^{-6}$.}
  \label{fig:exp-main-diagnostics}
\end{figure*}

\paragraph{Guard overhead and penalty ablation.}
On \textsc{tie-close4} with $\beta=0.5$ and $\delta=10^{-8}$, using $600$
replications per penalized variant, the guard changes the median stopping time
from $104{,}918$ to $103{,}743$ and the
empirical $95$th percentile from $134{,}140$ to $130{,}792$. The largest mean
guarded-pull fraction $G_{\tau_{\delta}}^{\mathrm{tot}}/\tau_{\delta}$ in the
tested equal-mean instances is $0.0069\%$.
Removing the penalty $\log N_{t,j}$ raises the mean recorded time from
$105{,}593$ for the unguarded penalized algorithm to $188{,}036$ for the unpenalized
ablation ($500$ replications). In the ablation, $12.8\%$ of runs reach the cap
and contribute the cap value to this mean.
Appendix~\ref{app:exp-tracking-reference} gives an idealized population-allocation
reference that tracks a globally optimal allocation computed from the true means.
It compares stopping-time scales rather than ranking algorithms and is distinct
from empirical Track-and-Stop.

\paragraph{Scope of the comparisons.}
All variants share the stopping threshold and tie-breaking, so differences reflect
the sampling rule alone; the results concern the tested Bernoulli instances and
grid and do not claim superiority over other best-arm identification algorithms.

\section{Conclusion}
\label{sec:conclusion}

For the exact penalized challenger of Bernoulli $\beta$-EB-TCI, we proved linear
growth of every suboptimal-arm count in the stabilized regime and stopping bounds
with exact leading term $\Tstar_{\beta}(\mu)\log(1/\delta)$: with high probability
for the unguarded rule, in expectation under finite-mean sufficient exploration,
and in expectation for every unique-best instance with the guard. Extensions to
other reward distributions, adaptive $\beta$, and finite-mean stabilization of the
unguarded rule with equal-mean suboptimal arms remain open.

\begin{ack}
We thank the anonymous reviewers and the Area Chair for their valuable
feedback, which helped improve this work. We are also grateful to
Marc Jourdan for helpful discussions and comments on the initial
direction of this research.

Nam Nguyen is supported by grant No.~VUNI.2526.CAIR.04.
\end{ack}
\bibliographystyle{plainnat}
\bibliography{references}

@article{mannor2004sample,
  title   = {The Sample Complexity of Exploration in the Multi-Armed Bandit Problem},
  author  = {Mannor, Shie and Tsitsiklis, John N.},
  journal = {Journal of Machine Learning Research},
  volume  = {5},
  pages   = {623--648},
  year    = {2004},
  url     = {https://www.jmlr.org/papers/v5/mannor04b.html}
}

@article{even2006action,
  title   = {Action Elimination and Stopping Conditions for the Multi-Armed Bandit and Reinforcement Learning Problems},
  author  = {Even-Dar, Eyal and Mannor, Shie and Mansour, Yishay},
  journal = {Journal of Machine Learning Research},
  volume  = {7},
  number  = {39},
  pages   = {1079--1105},
  year    = {2006},
  url     = {https://jmlr.org/papers/v7/evendar06a.html}
}

@InProceedings{audibert2010best,
  title     = {Best Arm Identification in Multi-Armed Bandits},
  author    = {Audibert, Jean-Yves and Bubeck, S{\'e}bastien and Munos, R{\'e}mi},
  booktitle = {Proceedings of the Twenty-Third Conference on Learning Theory},
  pages     = {41--53},
  year      = {2010},
  editor    = {Kalai, Adam Tauman and Mohri, Mehryar},
  publisher = {Omnipress},
  isbn      = {978-0-9822529-2-5},
  url       = {https://www.learningtheory.org/colt2010/papers/59Audibert.pdf}
}

@InProceedings{kalyanakrishnan2012pac,
  title     = {{PAC} Subset Selection in Stochastic Multi-Armed Bandits},
  author    = {Kalyanakrishnan, Shivaram and Tewari, Ambuj and Auer, Peter and Stone, Peter},
  booktitle = {Proceedings of the Twenty-Ninth International Conference on Machine Learning},
  pages     = {655--662},
  year      = {2012},
  editor    = {Langford, John and Pineau, Joelle},
  publisher = {Omnipress},
  address   = {New York, NY, USA},
  location  = {Edinburgh, UK},
  month     = {June--July},
  isbn      = {978-1-4503-1285-1},
  url       = {https://www.cs.utexas.edu/~pstone/Papers/bib2html/b2hd-ICML12-shivaram.html}
}

@InProceedings{karnin2013almost,
  title     = {Almost Optimal Exploration in Multi-Armed Bandits},
  author    = {Karnin, Zohar and Koren, Tomer and Somekh, Oren},
  booktitle = {Proceedings of the 30th International Conference on Machine Learning},
  pages     = {1238--1246},
  year      = {2013},
  editor    = {Dasgupta, Sanjoy and McAllester, David},
  volume    = {28},
  number    = {3},
  series    = {Proceedings of Machine Learning Research},
  address   = {Atlanta, Georgia, USA},
  month     = {17--19 Jun},
  publisher = {PMLR},
  url       = {https://proceedings.mlr.press/v28/karnin13.html}
}

@InProceedings{kaufmann2013information,
  title     = {Information Complexity in Bandit Subset Selection},
  author    = {Kaufmann, Emilie and Kalyanakrishnan, Shivaram},
  booktitle = {Proceedings of the 26th Annual Conference on Learning Theory},
  pages     = {228--251},
  year      = {2013},
  editor    = {Shalev-Shwartz, Shai and Steinwart, Ingo},
  volume    = {30},
  series    = {Proceedings of Machine Learning Research},
  address   = {Princeton, NJ, USA},
  month     = {12--14 Jun},
  publisher = {PMLR},
  url       = {https://proceedings.mlr.press/v30/Kaufmann13.html}
}

@InProceedings{jamieson2014lilucb,
  title     = {lil' {UCB}: An Optimal Exploration Algorithm for Multi-Armed Bandits},
  author    = {Jamieson, Kevin and Malloy, Matthew and Nowak, Robert and Bubeck, S{\'e}bastien},
  booktitle = {Proceedings of The 27th Conference on Learning Theory},
  pages     = {423--439},
  year      = {2014},
  editor    = {Balcan, Maria Florina and Feldman, Vitaly and Szepesv{\'a}ri, Csaba},
  volume    = {35},
  series    = {Proceedings of Machine Learning Research},
  address   = {Barcelona, Spain},
  month     = {13--15 Jun},
  publisher = {PMLR},
  url       = {https://proceedings.mlr.press/v35/jamieson14.html}
}

@article{kaufmann2016complexity,
  title   = {On the Complexity of Best-Arm Identification in Multi-Armed Bandit Models},
  author  = {Kaufmann, Emilie and Capp{\'e}, Olivier and Garivier, Aur{\'e}lien},
  journal = {Journal of Machine Learning Research},
  volume  = {17},
  number  = {1},
  pages   = {1--42},
  year    = {2016},
  url     = {https://jmlr.org/papers/v17/kaufman16a.html}
}

@InProceedings{garivier2016track,
  title     = {Optimal Best Arm Identification with Fixed Confidence},
  author    = {Garivier, Aur{\'e}lien and Kaufmann, Emilie},
  booktitle = {29th Annual Conference on Learning Theory},
  pages     = {998--1027},
  year      = {2016},
  editor    = {Feldman, Vitaly and Rakhlin, Alexander and Shamir, Ohad},
  volume    = {49},
  series    = {Proceedings of Machine Learning Research},
  address   = {Columbia University, New York, New York, USA},
  month     = {23--26 Jun},
  publisher = {PMLR},
  url       = {https://proceedings.mlr.press/v49/garivier16a.html}
}

@InProceedings{russo2016simple,
  title     = {Simple Bayesian Algorithms for Best Arm Identification},
  author    = {Russo, Daniel},
  booktitle = {29th Annual Conference on Learning Theory},
  pages     = {1417--1418},
  year      = {2016},
  editor    = {Feldman, Vitaly and Rakhlin, Alexander and Shamir, Ohad},
  volume    = {49},
  series    = {Proceedings of Machine Learning Research},
  address   = {Columbia University, New York, New York, USA},
  month     = {23--26 Jun},
  publisher = {PMLR},
  url       = {https://proceedings.mlr.press/v49/russo16.html}
}

@InProceedings{shang2020fixed,
  title     = {Fixed-Confidence Guarantees for Bayesian Best-Arm Identification},
  author    = {Shang, Xuedong and de Heide, Rianne and M{\'e}nard, Pierre and Kaufmann, Emilie and Valko, Michal},
  booktitle = {Proceedings of the Twenty Third International Conference on Artificial Intelligence and Statistics},
  pages     = {1823--1832},
  year      = {2020},
  editor    = {Chiappa, Silvia and Calandra, Roberto},
  volume    = {108},
  series    = {Proceedings of Machine Learning Research},
  month     = {26--28 Aug},
  publisher = {PMLR},
  url       = {https://proceedings.mlr.press/v108/shang20a.html}
}

@article{howard2020time,
  title   = {Time-Uniform Chernoff Bounds via Nonnegative Supermartingales},
  author  = {Howard, Steven R. and Ramdas, Aaditya and McAuliffe, Jon and Sekhon, Jasjeet},
  journal = {Probability Surveys},
  volume  = {17},
  pages   = {257--317},
  year    = {2020},
  doi     = {10.1214/18-PS321},
  url     = {https://doi.org/10.1214/18-PS321}
}

@article{kaufmann2021mixture,
  title   = {Mixture Martingales Revisited with Applications to Sequential Tests and Confidence Intervals},
  author  = {Kaufmann, Emilie and Koolen, Wouter M.},
  journal = {Journal of Machine Learning Research},
  volume  = {22},
  number  = {246},
  pages   = {1--44},
  year    = {2021},
  url     = {https://jmlr.org/papers/v22/18-798.html}
}

@InProceedings{jourdan2022top,
  title     = {Top Two Algorithms Revisited},
  author    = {Jourdan, Marc and Degenne, R{\'e}my and Baudry, Dorian and de Heide, Rianne and Kaufmann, Emilie},
  booktitle = {Advances in Neural Information Processing Systems},
  pages     = {26791--26803},
  year      = {2022},
  editor    = {Koyejo, Sanmi and Mohamed, S. and Agarwal, Alekh and Belgrave, Danielle and Cho, Kyunghyun and Oh, Alice},
  volume    = {35},
  publisher = {Curran Associates, Inc.},
  doi       = {10.52202/068431-1943},
  url       = {https://proceedings.neurips.cc/paper_files/paper/2022/hash/ab5f5f22e3e09f4424592ffb06840ab0-Abstract-Conference.html}
}

@InProceedings{jourdan2023ucbtop,
  title     = {Non-Asymptotic Analysis of a {UCB}-Based Top Two Algorithm},
  author    = {Jourdan, Marc and Degenne, R{\'e}my},
  booktitle = {Advances in Neural Information Processing Systems},
  pages     = {68980--69020},
  year      = {2023},
  editor    = {Oh, Alice and Naumann, Tristan and Globerson, Amir and Saenko, Kate and Hardt, Moritz and Levine, Sergey},
  volume    = {36},
  publisher = {Curran Associates, Inc.},
  doi       = {10.52202/075280-3019},
  url       = {https://proceedings.neurips.cc/paper_files/paper/2023/hash/d9b564716709357b4bccec9fc9ad04d2-Abstract-Conference.html}
}

@InProceedings{balagopalan2025fixing,
  title     = {Fixing the Loose Brake: Exponential-Tailed Stopping Time in Best Arm Identification},
  author    = {Balagopalan, Kapilan and Nguyen, Ngo Tuan and Zhao, Yao and Jun, Kwang-Sung},
  booktitle = {Proceedings of the 42nd International Conference on Machine Learning},
  pages     = {2603--2645},
  year      = {2025},
  editor    = {Singh, Aarti and Fazel, Maryam and Hsu, Daniel and Lacoste-Julien, Simon and Berkenkamp, Felix and Maharaj, Tegan and Wagstaff, Kiri and Zhu, Jerry},
  volume    = {267},
  series    = {Proceedings of Machine Learning Research},
  month     = {13--19 Jul},
  publisher = {PMLR},
  url       = {https://proceedings.mlr.press/v267/balagopalan25a.html}
}

@article{howard2021confidence,
  title   = {Time-Uniform, Nonparametric, Nonasymptotic Confidence Sequences},
  author  = {Howard, Steven R. and Ramdas, Aaditya and McAuliffe, Jon and Sekhon, Jasjeet},
  journal = {The Annals of Statistics},
  volume  = {49},
  number  = {2},
  pages   = {1055--1080},
  year    = {2021},
  doi     = {10.1214/20-AOS1991},
  url     = {https://doi.org/10.1214/20-AOS1991}
}

@book{lattimore2020bandit,
  title     = {Bandit Algorithms},
  author    = {Lattimore, Tor and Szepesv{\'a}ri, Csaba},
  publisher = {Cambridge University Press},
  year      = {2020}
}
\newpage
\appendix
\begin{center}
{\bfseries\Large Appendix to ``Sharp Non-Asymptotic Analysis of the \\
Penalized Challenger in $\beta$-EB-TCI for Bernoulli Bandits''}
\end{center}
\DoToC
\medskip
\noindent\textbf{Main notation.}

\begin{center}
\begin{tabular}{ll}
\toprule
Symbol & Meaning \\
\midrule
$K$ & number of arms, with $K\ge2$ \\
$\mu=(\mu_1,\dots,\mu_K)$ & vector of Bernoulli means \\
$\istar$ & unique best arm \\
$\delta$ & target recommendation-error probability \\
$\beta$ & probability of sampling the empirical leader on unguarded rounds \\
$N_{t,i}$ & number of pulls of arm $i$ after $t$ total pulls \\
$\widehat{\mu}_{t,i}$ & empirical mean of arm $i$ after $t$ total pulls \\
$B_{t+1}$ & empirical leader at round $t+1$ \\
$C_{t+1}$ & challenger on unguarded round $t+1$ \\
$V_t(i,j)$ & symmetric empirical pairwise Bernoulli GLR \\
$V_t^{\mu}(i,j)$ & population counterpart of $V_t(i,j)$ \\
$W_t(i,j)$ & one-sided transportation cost used by the algorithm \\
$N_{\mathrm{eff},t}(i,j)$ & effective sample size $N_{t,i}N_{t,j}/(N_{t,i}+N_{t,j})$ \\
$\tau_{\delta}$ & first threshold-crossing time \\
$c(t,\delta)$ & stopping threshold \\
$\alpha_t$ & best-arm sampling fraction $N_{t,\istar}/t$ \\
$t_U$ & time from which $B_{t+1}=\istar$ and $\alpha_t\in U$ persist \\
$\Gdisc_{n,s}(\mu)$ & discrete penalized allocation benchmark \\
$\Tstar_{\alpha}(\mu)$ & characteristic time with best-arm fraction fixed to $\alpha$ \\
$\Tstar(\mu)$ & global characteristic time $\min_{\alpha\in(0,1)}\Tstar_{\alpha}(\mu)$ \\
\bottomrule
\end{tabular}
\end{center}

\section{Technical Ingredients from Prior Work}
\label{app:prior-results}

We collect the results from prior work used in the analysis and explain
their roles in our guarantees. Error control, concentration under adaptive
sampling, and sufficient exploration serve distinct roles.

\paragraph{Stopping-rule error guarantee.}
For every $\delta\in(0,1)$, the bounded-reward result of
\citet[Lemma~2]{jourdan2022top}, specialized to the Bernoulli transportation
cost and threshold~\eqref{eq:threshold}, gives
\[
  \mathbb{P}_{\mu}\!\left(
    \tau_{\delta}<\infty,
    \widehat{i}_{\tau_{\delta}}\neq \istar
  \right)\le\delta,
\]
for any adaptive sampling rule. This bounds the probability of stopping with
an incorrect recommendation. It does not establish almost-sure termination,
a stopping-time upper bound, or a conditional error bound given stopping.
Appendix~\ref{app:correctness} verifies that our Bernoulli stopping statistic
is covered by this result.

\paragraph{Time-uniform Bernoulli concentration.}
We use the one-dimensional KL confidence-sequence bound of
\citet[Section~3.3.2, Equation~(14)]{kaufmann2021mixture}.
For every $\eta\in(0,1)$, Lemma~\ref{lem:armwisecs} applies this bound to each
arm and evaluates it at the adaptive pull counts, yielding an event
$\cK_{\eta}$ with $\mathbb{P}_{\mu}(\cK_{\eta})\ge1-\eta$ on which, for a
positive finite universal constant $c_0$,
\[
  N_{t,i}\KL{\widehat{\mu}_{t,i}}{\mu_i}
  \le c_0\!\left(\log\frac{K}{\eta}+\log\log(et)\right)
  \quad\text{for all }t\ge K\text{ and }i\in[K].
\]
This event underlies the empirical-to-population pairwise-GLR comparison.
Entry into the stabilized regime, challenger growth, and stopping require
further analysis of the algorithm.

\paragraph{Finite-mean sufficient exploration for pairwise-distinct means.}
Corollary~\ref{cor:distinctmeans} uses the finite-mean sufficient-exploration theorem of
\citet{jourdan2022top} for pairwise-distinct means and the unguarded variant
specified in that corollary. The cited theorem
is not used in the unconditional high-probability analysis or the guarded
expected stopping-time bound.

\paragraph{Results established in this paper.}
We prove a deterministic approximation of the discrete penalized allocation
benchmark. Using the concentration event above, we establish the simultaneous
empirical-to-population pairwise-GLR comparison, linear challenger growth in
the stabilized regime, and the uniform evidence lower bound. We also prove a
high-probability bound on the time to reach that regime and invert the
stopping-threshold inequality. The exploration guard gives deterministic sufficient
exploration with sublinear pull overhead. The one-step index-comparison
limitation concerns only the stated index-domination proof route; it is not
an impossibility theorem for the algorithm.

\paragraph{Connection to the main theorems.}
The deterministic allocation-benchmark analysis and time-uniform concentration
yield Theorem~\ref{thm:conditional} and the challenger-growth results. The
high-probability stabilization bound gives Corollary~\ref{cor:unguarded-hp}
and almost-sure termination of the unguarded algorithm under a unique best
arm.

If the stabilization time also admits a summable upper-quantile bound,
integration gives the expected stopping-time bound in
Theorem~\ref{thm:unguarded-suffexp}. For pairwise-distinct means, the
finite-mean sufficient-exploration theorem of \citet{jourdan2022top} verifies this premise
for the tie-breaking variant in Corollary~\ref{cor:distinctmeans}. A
high-probability bound on a random time alone need not give a finite
expectation.

For the guarded algorithm, Proposition~\ref{prop:guard} verifies finite-mean
sufficient exploration directly and leads to Theorem~\ref{thm:guarded}. Whenever our
analysis establishes almost-sure termination, the stopping-rule error
guarantee above gives $\delta$-correctness.

\section{Error Control and \texorpdfstring{$\delta$}{delta}-Correctness}
\label{app:correctness}

We derive the Bernoulli transportation cost and apply the bounded-reward
stopping-rule guarantee to control recommendation errors. Combined with
almost-sure termination, this guarantee yields $\delta$-correctness.

\subsection{Bernoulli Specialization of the Transportation Cost}
\label{app:bernoulli-specialization}

We identify the one-sided transportation cost with the symmetric pairwise GLR
when the empirical means are ordered. In the bounded-reward framework of
\citet{jourdan2022top}, let $\mathcal{M}$ be the class of all probability
distributions supported on $[0,1]$, $F\in\mathcal{M}$ a reward distribution,
and $u\in(0,1)$ a test mean. For a reward $X$ with law $G$, define
\[
  K_{\inf}^{+}(F,u)
  :=
  \inf\left\{\operatorname{KL}(F,G):G\in\mathcal{M},\ \mathbb{E}_{G}[X]>u\right\},
\]
\[
  K_{\inf}^{-}(F,u)
  :=
  \inf\left\{\operatorname{KL}(F,G):G\in\mathcal{M},\ \mathbb{E}_{G}[X]<u\right\}.
\]
For distinct arms $i,j\in[K]$ and $t\ge K$, the one-sided transportation cost is
\[
  W_t^{\mathrm{gen}}(i,j)
  :=
  \inf_{x\in(0,1)}
  \left\{
    N_{t,i}K_{\inf}^{-}(F_{t,i},x)
    +
    N_{t,j}K_{\inf}^{+}(F_{t,j},x)
  \right\},
\]
where $F_{t,i}$ is arm $i$'s empirical reward distribution after $t$ pulls.

In our Bernoulli setting, $F_{t,i}=\Ber(\widehat{\mu}_{t,i})$. The next
proposition shows that the information infima over $\mathcal{M}$ reduce to
infima over Bernoulli mean parameters. Remark~\ref{rem:optimal-root} treats
empirical means at $0$ or $1$ using the boundary convention in
Section~\ref{sec:setting-algorithm}.

\begin{proposition}[Bernoulli specialization of the transportation cost]
\label{prop:bernoulli-specialization}
For any $p,u\in(0,1)$,
\[
  K_{\inf}^{+}(\Ber(p),u)
  =
  \begin{cases}
    0, & u<p,\\
    \KL{p}{u}, & u\ge p,
  \end{cases}
  \qquad
  K_{\inf}^{-}(\Ber(p),u)
  =
  \begin{cases}
    \KL{p}{u}, & u\le p,\\
    0, & u>p,
  \end{cases}.
\]
Consequently, for any $p_i,p_j\in(0,1)$ with $p_i\ge p_j$ and any
$N_i,N_j\ge0$,
\[
  \inf_{x\in(0,1)}
  \left\{
    N_iK_{\inf}^{-}(\Ber(p_i),x)
    +
    N_jK_{\inf}^{+}(\Ber(p_j),x)
  \right\}
  =
  \inf_{x\in(0,1)}
  \left\{
    N_i \KL{p_i}{x}+N_j \KL{p_j}{x}
  \right\}.
\]
\end{proposition}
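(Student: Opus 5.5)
We prove the two formulas for $K_{\inf}^{\pm}$ first and then deduce the identity for the transportation cost. Take $K_{\inf}^{+}(\Ber(p),u)$. If $u<p$, then $G=\Ber(p)$ is feasible because $\mathbb{E}_G[X]=p>u$, and $\operatorname{KL}(\Ber(p),\Ber(p))=0$; the infimum is therefore $0$. If $u\ge p$, any $G$ with $\operatorname{KL}(\Ber(p),G)<\infty$ must satisfy $\Ber(p)\ll G$, so $G$ puts positive mass on $\{0,1\}$. We then split $G$ into atoms at $0$ and $1$ and a remainder $R$ on the open interval $(0,1)$.

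Since $F=\Ber(p)$ lives on $\{0,1\}$, the divergence is
\[
  \operatorname{KL}(F,G)=p\log\frac{p}{G(\{1\})}+(1-p)\log\frac{1-p}{G(\{0\})},
\]
and it depends on $G$ only through the two atoms. Next we project $G$ onto a Bernoulli law $\widetilde G=\Ber(q)$ with $q:=\mathbb{E}_G[X]$, moving each interior point $y$ of $R$ into mass $y$ at $1$ and mass $1-y$ at $0$. This keeps the mean at $q$ and only increases both atoms, so $\operatorname{KL}(F,\widetilde G)\le\operatorname{KL}(F,G)$. Hence the infimum can be taken over Bernoulli laws, giving $\inf_{q>u}\KL{p}{q}$.

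The map $q\mapsto\KL{p}{q}$ is continuous on $(0,1)$ and increasing for $q\ge p$. So for $u\ge p$ the infimum over $q>u$ equals $\KL{p}{u}$, attained only in the limit $q\downarrow u$. For $u$ near $1$ we restrict to $q\in(u,1)$; the value is still $\KL{p}{u}$ by continuity. The formula for $K_{\inf}^{-}$ follows in the same way, or by the reflection $X\mapsto1-X$, which sends $\Ber(p)$ to $\Ber(1-p)$ and exchanges the two one-sided constraints.

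For the consequence, fix $p_i\ge p_j$ and split $x\in(0,1)$ into three ranges.
\begin{enumerate}[leftmargin=1.35em]
\item For $x\in[p_j,p_i]$, both one-sided terms equal the full divergences, so the two objectives coincide there.
\item For $x>p_i$, the left objective is $N_jK_{\inf}^{+}(\Ber(p_j),x)=N_j\KL{p_j}{x}$. This is at least $N_j\KL{p_j}{p_i}$, which is the value of the left objective at $x=p_i$, by monotonicity of $\KL{p_j}{\cdot}$ on $[p_j,1)$.
\item The case $x<p_j$ is symmetric: the left objective is $N_i\KL{p_i}{x}$, which is at least its value at $x=p_j$.
\end{enumerate}
So the left infimum equals the infimum over $[p_j,p_i]$.

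The right objective $x\mapsto N_i\KL{p_i}{x}+N_j\KL{p_j}{x}$ is convex. Each term is monotone with the appropriate sign outside $[p_j,p_i]$: both terms are increasing for $x>p_i$ and both are decreasing for $x<p_j$. Its infimum is therefore also attained on $[p_j,p_i]$, where the two objectives agree, so the two infima are equal.

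The main obstacle is the projection step in the $K_{\inf}^{+}$ computation: reducing an arbitrary $G\in\mathcal{M}$ to a Bernoulli law without increasing the divergence. The key observation is that a mean-preserving spread of interior mass onto $\{0,1\}$ only increases $G(\{0\})$ and $G(\{1\})$. A secondary point of care is the strict inequality in the constraint, which makes the infimum non-attained but equal to $\KL{p}{u}$ by continuity.
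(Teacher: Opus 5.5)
Your proposal is correct and follows essentially the same route as the paper. Your mean-preserving spread of interior mass onto $\{0,1\}$ is exactly the kernel $y\mapsto\Ber(y)$ that the paper feeds into the data-processing inequality; you verify that inequality directly by noting that both atoms increase. From there both arguments use monotonicity of $\KL{p}{\cdot}$ and the same three-range localization to $[p_j,p_i]$, while your reflection $X\mapsto 1-X$ for $K_{\inf}^{-}$ and your explicit localization of the right-hand objective spell out steps the paper leaves implicit.
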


\begin{proof}
We reduce the information infima to Bernoulli candidates and use KL
monotonicity to restrict the ordered-pair minimization to the interval between
the two means. The probability kernel $x\mapsto\Ber(x)$ maps any
$G\in\mathcal{M}$ to $\Ber(\mathbb{E}_{G}[X])$ and leaves $\Ber(p)$ unchanged.
The data-processing inequality therefore gives
$\operatorname{KL}(\Ber(p),G)\ge\KL{p}{\mathbb{E}_{G}[X]}$.
Bernoulli candidates with means in $(0,1)$ attain the resulting infima or
approach their boundary values, so
\[
  K_{\inf}^{+}(\Ber(p),u)=\inf_{q>u}\KL{p}{q},
  \qquad
  K_{\inf}^{-}(\Ber(p),u)=\inf_{q<u}\KL{p}{q},
\]
where both infima are over $q\in(0,1)$.
For $q\in(0,1)$,
\[
  \frac{\partial}{\partial q}\KL{p}{q}
  =
  -\frac{p}{q}+\frac{1-p}{1-q}
  =
  \frac{q-p}{q(1-q)}.
\]
The map $q\mapsto \KL{p}{q}$ is decreasing on $(0,p]$, increasing on $[p,1)$, and
attains its unique minimum $0$ at $q=p$. The formulas for $K_{\inf}^{+}$ and
$K_{\inf}^{-}$ follow, with limiting values at the strict constraint boundaries.

For the second claim, let $x\in(0,1)$ and define
\[
  \Psi(x)
  :=
  N_iK_{\inf}^{-}(\Ber(p_i),x)
  +
  N_jK_{\inf}^{+}(\Ber(p_j),x),
  \qquad p_i\ge p_j.
\]
If $x<p_j$, then
\[
  \Psi(x)=N_i \KL{p_i}{x}\ge N_i \KL{p_i}{p_j}=\Psi(p_j),
\]
because $\KL{p_i}{\cdot}$ is decreasing on $(0,p_i]$. If $x>p_i$, then
\[
  \Psi(x)=N_j \KL{p_j}{x}\ge N_j \KL{p_j}{p_i}=\Psi(p_i),
\]
because $\KL{p_j}{\cdot}$ is increasing on $[p_j,1)$. The infimum is therefore
attained on $[p_j,p_i]$, where the one-sided terms give
\[
  \Psi(x)=N_i \KL{p_i}{x}+N_j\KL{p_j}{x}.
\]
The two infima agree.
\end{proof}

\begin{remark}
\label{rem:optimal-root}
For $x\in(0,1)$, $\KL{0}{x}=-\log(1-x)$ and $\KL{1}{x}=-\log x$.
The same kernel argument and KL monotonicity give the one-sided
formulas at $p=0$ and $p=1$.
The ordered-pair argument then applies on $[p_j,p_i]\cap(0,1)$; if both means
equal the same endpoint, both infima tend to zero as $x$ approaches that
endpoint. The cost identity therefore also covers endpoint empirical means.
For distinct arms $i,j\in[K]$ and $t\ge K$, whenever
$\widehat{\mu}_{t,i}\ge\widehat{\mu}_{t,j}$,
\[
  W_t^{\mathrm{gen}}(i,j)
  =
  \inf_{x\in(0,1)}
  \left\{
    N_{t,i}\KL{\widehat{\mu}_{t,i}}{x}
    +
    N_{t,j}\KL{\widehat{\mu}_{t,j}}{x}
  \right\}
  =V_t(i,j).
\]
For $p_i,p_j\in[0,1]$ and $N_i,N_j\ge0$ with $N_i+N_j>0$, the pooled mean is
\[
  x^{\star}=\frac{N_i p_i+N_j p_j}{N_i+N_j}.
\]
The derivative of $x\mapsto N_i\KL{p_i}{x}+N_j\KL{p_j}{x}$ on $(0,1)$ has the
sign of
$(N_i+N_j)x-N_i p_i-N_j p_j$. If $x^{\star}\in(0,1)$, it is the minimizer;
if $x^{\star}=0$ or $1$, the infimum is approached as $x$ tends to that
endpoint and is attained on $[0,1]$ with the boundary convention in
Section~\ref{sec:setting-algorithm}.
\end{remark}

\subsection{Uniform Error Control}
\label{app:correctness-proof}

The bounded-reward stopping-rule guarantee controls recommendation errors
under either sampling rule; almost-sure termination is established separately.

\begin{theorem}[Uniform error control for the Bernoulli $\beta$-EB-TCI stopping rule]
\label{thm:bernoulli-ebtci-correctness}
For $\delta\in(0,1)$, with the threshold
\[
  c(t,\delta)
  =
  \log(1/\delta)+2\log(1+t/2)+2+\log(K-1),
\]
Algorithm~\ref{alg:ebtci}, with or without the optional exploration guard,
satisfies
\[
  \mathbb{P}_{\mu}
  \left(
    \tau_{\delta}<\infty,
    \widehat{i}_{\tau_{\delta}}\neq \istar
  \right)
  \le\delta.
\]
\end{theorem}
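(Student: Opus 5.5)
The plan is to reduce the statement to the bounded-reward stopping-rule guarantee of \citet[Lemma~2]{jourdan2022top}. That result holds for \emph{any} adaptive sampling rule, so the whole task is to check that our Bernoulli stopping statistic and threshold are exactly the ones it covers.

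\textbf{Step 1: the stopping statistic is the general one.} By Proposition~\ref{prop:bernoulli-specialization} and Remark~\ref{rem:optimal-root}, whenever $\widehat{\mu}_{t,i}\ge\widehat{\mu}_{t,j}$ the general transportation cost satisfies $W_t^{\mathrm{gen}}(i,j)=V_t(i,j)$. At the leader $i=B_{t+1}$ we have $\widehat{\mu}_{t,B_{t+1}}\ge\widehat{\mu}_{t,j}$ for every $j$. Hence $W_t(B_{t+1},j)=W_t^{\mathrm{gen}}(B_{t+1},j)$ for all $j\neq B_{t+1}$, and this also covers empirical means at $0$ or $1$. So $\tau_\delta$ in \eqref{eq:stopping-rule} is, pathwise, the GLR stopping time of \citet{jourdan2022top} with leader recommendation $B_{\tau_\delta+1}$. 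The threshold \eqref{eq:threshold} is the one from their lemma.

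\textbf{Step 2: the mechanism behind their guarantee.} On the event $\{\tau_\delta<\infty,\ \widehat{i}_{\tau_\delta}=b\neq\istar\}$, the stopping condition gives $W_t(b,\istar)>c(t,\delta)$ at $t=\tau_\delta$. Taking $x=\mu_{\istar}\ge\mu_b$ (or, if $\mu_b$ is larger, any point of $[\mu_{\istar},\mu_b]$), monotonicity of the one-sided $K_{\inf}$ terms gives
\[
N_{t,b}\,K_{\inf}^{-}(F_{t,b},\mu_b)+N_{t,\istar}\,K_{\inf}^{+}(F_{t,\istar},\mu_{\istar})\ \ge\ W_t(b,\istar)>c(t,\delta).
\]
So a wrong stop forces a time-uniform deviation of a two-arm sum of one-sided KL statistics above $c(t,\delta)$. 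Their mixture-martingale bound for bounded distributions controls each such event by $\delta/(K-1)$. A union bound over the $K-1$ candidates $b\neq\istar$ then gives $\delta$, which accounts for the $\log(K-1)$ term in the threshold.

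\textbf{Step 3: sampling-rule independence.} The lemma requires only that the sampling be predictable with respect to the filtration generated by observations and internal randomization. In Algorithm~\ref{alg:ebtci}, both guarded and unguarded pulls are functions of the past together with $U_{t+1}$, which is drawn independently of the past. Appending the $U$'s to the filtration keeps the reward sequences conditionally i.i.d.\ given the past, so the supermartingale argument is unaffected and both variants are covered.

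The main obstacle is Step 1 at the boundary: confirming that the Bernoulli infimum over $x\in(0,1)$ coincides with the general infimum when empirical means equal $0$ or $1$. It is also necessary to check that the leader tie-breaking rule does not break the ordering assumption. Both points are settled by Remark~\ref{rem:optimal-root} together with the fact that $B_{t+1}$ maximizes the empirical mean.
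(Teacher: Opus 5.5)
Your proposal is correct and follows essentially the paper's route. You invoke \citet[Lemma~2]{jourdan2022top} as a black box after using Proposition~\ref{prop:bernoulli-specialization} and Remark~\ref{rem:optimal-root} to identify $W_t(B_{t+1},j)$ with the general transportation cost at the maximizing leader (endpoints included), and you observe that the guard and the coins $U_{t+1}$ only change the adaptive sampling, which the lemma permits.

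Your Step~2 sketch of the lemma's internals is optional. Its parenthetical case $\mu_b>\mu_{\istar}$ cannot occur under the unique-best assumption, and in that case the $K_{\inf}$ monotonicity inequality would actually reverse, so drop it.
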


\begin{proof}
We apply \citet[Lemma~2]{jourdan2022top} with the bounded-reward class
$\mathcal{M}$ defined above. By Remark~\ref{rem:optimal-root}, the cost used
by Algorithm~\ref{alg:ebtci} agrees with that lemma's transportation cost,
including at endpoint empirical means. The stopping rule therefore matches
the rule in the lemma, with recommendation
\[
  \widehat{i}_t\in\argmax_{i\in[K]}\widehat{\mu}_{t,i}.
\]
For this threshold and any adaptive sampling rule, the lemma bounds
the probability of stopping with a wrong recommendation by $\delta$.

The optional guard changes how arms are sampled; the statistic
$W_t(\widehat{i}_t,j)$, threshold, and recommendation rule remain the same.
The time-uniform error guarantee therefore applies to both the guarded and
unguarded algorithms.
\end{proof}

Theorem~\ref{thm:bernoulli-ebtci-correctness} controls errors on runs that stop;
it does not establish almost-sure termination for an arbitrary sampling rule.
Termination follows in the regimes of Corollary~\ref{cor:unguarded-hp},
Theorem~\ref{thm:unguarded-suffexp}, Corollary~\ref{cor:distinctmeans}, and
Theorem~\ref{thm:guarded}; combining termination with the displayed error
bound gives $\delta$-correctness in those regimes.

\section{Population Allocation Calculus and the Discrete Penalized Benchmark}
\label{app:oracle}

We characterize the optimal continuous population allocation at a fixed
best-arm fraction and establish its regularity as that fraction varies.
Rounding this allocation gives the discrete penalized allocation benchmark
approximation in Proposition~\ref{prop:oracle}.

\subsection{Bernoulli Two-Arm Calculus}
\label{app:bernoulli-two-arm-calculus}

We first compute the pairwise information rate and its derivatives. Fix
$\alpha\in(0,1)$, $j\in[K]\setminus\{\istar\}$, and a scalar weight $\omega>0$.
Define the population pooled mean
\[
  m_j^{\alpha}(\omega)
  :=
  \frac{\alpha\mu_{\istar}+\omega\mu_j}{\alpha+\omega}.
\]
By Remark~\ref{rem:optimal-root}, the infimum in the definition of
$I_j^{\alpha}(\omega)$ is uniquely attained at $x=m_j^{\alpha}(\omega)$, and
\[
  I_j^{\alpha}(\omega)
  =
  \alpha \KL{\mu_{\istar}}{m_j^{\alpha}(\omega)}
  +
  \omega \KL{\mu_j}{m_j^{\alpha}(\omega)}.
\]

\begin{proposition}[Bernoulli population-allocation calculus]
\label{prop:berncalc}
The map $I_j^{\alpha}$ is $C^{\infty}$ on $(0,\infty)$, strictly increasing,
and strictly concave. More precisely,
\[
  \frac{d}{d\omega}I_j^{\alpha}(\omega)
  =
  \KL{\mu_j}{m_j^{\alpha}(\omega)}>0,
\]
and
\[
  \frac{d^2}{d\omega^2}I_j^{\alpha}(\omega)
  =
  -
  \frac{\alpha^2(\mu_{\istar}-\mu_j)^2}
  {(\alpha+\omega)^3m_j^{\alpha}(\omega)(1-m_j^{\alpha}(\omega))}
  <0.
\]
\end{proposition}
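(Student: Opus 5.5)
The plan is to use an envelope (Danskin-type) argument for the first derivative, and then differentiate once more using the explicit pooled mean. Throughout, write $m=m_j^{\alpha}(\omega)$, and set
\[
  F(\omega,x):=\alpha\KL{\mu_{\istar}}{x}+\omega\KL{\mu_j}{x}.
\]

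\textbf{Smoothness and the first derivative.} Since $\mu_{\istar},\mu_j\in(0,1)$, the pooled mean $m$ lies strictly between them, hence in $(0,1)$. By Remark~\ref{rem:optimal-root}, $m$ is the unique stationary point of $x\mapsto F(\omega,x)$. The map $\omega\mapsto m$ is a rational function with positive denominator, so it is $C^\infty$ on $(0,\infty)$. Also, $\KL{p}{\cdot}$ is $C^\infty$ on $(0,1)$. Hence $I_j^{\alpha}(\omega)=F(\omega,m)$ is $C^\infty$ as a composition of smooth maps. Differentiating gives
\[
  \frac{dI_j^{\alpha}}{d\omega}=\partial_\omega F(\omega,m)+\partial_xF(\omega,m)\,m'(\omega).
\]
The second term vanishes because $\partial_x F(\omega,m)=0$ at the stationary point. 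Therefore the derivative equals $\KL{\mu_j}{m}$. This is strictly positive: $m\neq\mu_j$ because $\mu_{\istar}>\mu_j$ and $\alpha>0$, so $m>\mu_j$. Strict monotonicity follows.

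\textbf{Second derivative.} Next, differentiate $\KL{\mu_j}{m(\omega)}$ by the chain rule, using the formula
\[
  \partial_q\KL{p}{q}=\frac{q-p}{q(1-q)}
\]
from the proof of Proposition~\ref{prop:bernoulli-specialization}. This gives
\[
  \frac{d^2I_j^{\alpha}}{d\omega^2}=\frac{m-\mu_j}{m(1-m)}\,m'(\omega).
\]
Two direct computations finish the formula:
\[
  m-\mu_j=\frac{\alpha(\mu_{\istar}-\mu_j)}{\alpha+\omega},
  \qquad
  m'(\omega)=\frac{\mu_j(\alpha+\omega)-(\alpha\mu_{\istar}+\omega\mu_j)}{(\alpha+\omega)^2}=-\frac{\alpha(\mu_{\istar}-\mu_j)}{(\alpha+\omega)^2}.
\]
Multiplying these yields exactly the displayed expression. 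It is strictly negative because $\mu_{\istar}\neq\mu_j$, $\alpha>0$, and $m(1-m)>0$. Strict concavity then follows from the negative second derivative.

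\textbf{Main obstacle.} Little is genuinely hard. The one point needing care is justifying the envelope step, namely that the infimum is attained at the interior smooth critical point $m$, so that the $x$-derivative vanishes there. Remark~\ref{rem:optimal-root} supplies this, provided we check that $m\in(0,1)$. That check holds because both true means are interior.
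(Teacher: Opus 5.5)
Your proposal is correct and follows essentially the same route as the paper. Both identify the pooled mean as the unique interior minimizer, apply the envelope step to get $\frac{d}{d\omega}I_j^{\alpha}(\omega)=\KL{\mu_j}{m_j^{\alpha}(\omega)}$, and differentiate once more using $\partial_q\KL{p}{q}=(q-p)/(q(1-q))$ together with the explicit derivative of $m_j^{\alpha}$. The only cosmetic difference is that the paper checks uniqueness of the minimizer by computing $f_\omega''>0$ directly, whereas you cite the sign analysis in Remark~\ref{rem:optimal-root}; either suffices.
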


\begin{proof}
We differentiate the two-arm objective at its pooled minimizer.
For $x\in(0,1)$, let
\[
  f_{\omega}(x):=\alpha \KL{\mu_{\istar}}{x}+\omega \KL{\mu_j}{x}.
\]
For $p,x\in(0,1)$,
\[
  \frac{\partial}{\partial x}\KL{p}{x}
  =
  -\frac{p}{x}+\frac{1-p}{1-x}
  =
  \frac{x-p}{x(1-x)}.
\]
Applying this identity gives
\[
  f_{\omega}'(x)
  =
  \frac{(\alpha+\omega)x-(\alpha\mu_{\istar}+\omega\mu_j)}{x(1-x)}.
\]
Hence $f_{\omega}'(x)=0$ if and only if $x=m_j^{\alpha}(\omega)$. Also,
\[
  f_{\omega}''(x)
  =
  \frac{\alpha\mu_{\istar}+\omega\mu_j}{x^2}
  +
  \frac{\alpha(1-\mu_{\istar})+\omega(1-\mu_j)}{(1-x)^2}>0,
\]
so $f_{\omega}$ is strictly convex and the minimizer is unique.

Because $\omega\mapsto m_j^{\alpha}(\omega)$ is smooth, so is $I_j^{\alpha}$.
The envelope theorem gives
\[
  \frac{d}{d\omega}I_j^{\alpha}(\omega)
  =
  \frac{\partial}{\partial \omega}f_{\omega}(m_j^{\alpha}(\omega))
  =
  \KL{\mu_j}{m_j^{\alpha}(\omega)}.
\]
Since $\mu_j<\mu_{\istar}$,
$m_j^{\alpha}(\omega)\in(\mu_j,\mu_{\istar})$, and therefore
$\KL{\mu_j}{m_j^{\alpha}(\omega)}>0$.

For $p,q\in(0,1)$,
\[
  \frac{d}{dq}\KL{p}{q}=\frac{q-p}{q(1-q)},
  \qquad
  \frac{d}{d\omega}m_j^{\alpha}(\omega)
  =
  \frac{\alpha(\mu_j-\mu_{\istar})}{(\alpha+\omega)^2}.
\]
Differentiating once more yields
\[
  \frac{d^2}{d\omega^2}I_j^{\alpha}(\omega)
  =
  \frac{m_j^{\alpha}(\omega)-\mu_j}{m_j^{\alpha}(\omega)(1-m_j^{\alpha}(\omega))}
  \cdot
  \frac{\alpha(\mu_j-\mu_{\istar})}{(\alpha+\omega)^2}.
\]
Using
\[
  m_j^{\alpha}(\omega)-\mu_j
  =
  \frac{\alpha(\mu_{\istar}-\mu_j)}{\alpha+\omega},
\]
we obtain the stated second derivative, which is strictly negative.
\end{proof}

\subsection{Boundary Behavior and Inverse Maps}
\label{app:boundary-inverses}

The endpoint limits determine the ranges of the pairwise information rates
and hence the domains of their inverse maps. These inverses characterize the
unique fixed-$\alpha$ optimizer and its regularity.

\begin{lemma}[Boundary behavior]
\label{lem:bern-boundary}
Fix $\alpha\in(0,1)$ and $j\in[K]\setminus\{\istar\}$. Then
\[
  \lim_{\omega\downarrow0}I_j^{\alpha}(\omega)=0,
  \qquad
  \lim_{\omega\uparrow\infty}I_j^{\alpha}(\omega)
  =
  \alpha \KL{\mu_{\istar}}{\mu_j}.
\]
Consequently, $I_j^{\alpha}$ is a $C^{\infty}$ strictly increasing bijection
from $(0,\infty)$ onto $(0,\alpha \KL{\mu_{\istar}}{\mu_j})$.
\end{lemma}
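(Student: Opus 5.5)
The plan is to compute both limits directly from the closed form $I_j^{\alpha}(\omega)=\alpha\KL{\mu_{\istar}}{m_j^{\alpha}(\omega)}+\omega\KL{\mu_j}{m_j^{\alpha}(\omega)}$ at the pooled mean. The bijection claim then follows from Proposition~\ref{prop:berncalc}. Throughout, write $m(\omega):=m_j^{\alpha}(\omega)$. It is a continuous function with $m(\omega)\to\mu_{\istar}$ as $\omega\downarrow0$ and $m(\omega)\to\mu_j$ as $\omega\uparrow\infty$, and all of these values lie in the compact range $[\mu_j,\mu_{\istar}]\subset(0,1)$. On that range $(p,q)\mapsto\KL{p}{q}$ is continuous and bounded.

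\emph{Limit at $0$.} The first term $\alpha\KL{\mu_{\istar}}{m(\omega)}$ tends to $\alpha\KL{\mu_{\istar}}{\mu_{\istar}}=0$ by continuity. The second term is bounded by $\omega\,\KL{\mu_j}{\mu_{\istar}}$, because $m(\omega)\in(\mu_j,\mu_{\istar})$ and $\KL{\mu_j}{\cdot}$ is increasing on $[\mu_j,1)$. So it also tends to $0$. A simpler route is the upper bound $I_j^{\alpha}(\omega)\le f_\omega(\mu_{\istar})=\omega\KL{\mu_j}{\mu_{\istar}}\to0$, combined with nonnegativity.

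\emph{Limit at $\infty$.} For the upper bound, evaluate the infimum at $x=\mu_j$: this gives $I_j^{\alpha}(\omega)\le\alpha\KL{\mu_{\istar}}{\mu_j}$ for every $\omega$. For the lower bound, drop the nonnegative second term: $I_j^{\alpha}(\omega)\ge\alpha\KL{\mu_{\istar}}{m(\omega)}$, which tends to $\alpha\KL{\mu_{\istar}}{\mu_j}$ by continuity since $m(\omega)\to\mu_j$. Squeezing the two bounds gives the limit.

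\emph{Bijection.} Proposition~\ref{prop:berncalc} gives that $I_j^{\alpha}$ is $C^\infty$ with strictly positive derivative, hence strictly increasing and continuous. By the intermediate value theorem together with the two limits, its image is the open interval $(0,\alpha\KL{\mu_{\istar}}{\mu_j})$. The endpoints are not attained, by strict monotonicity. The inverse map is $C^\infty$ by the inverse function theorem, since the derivative never vanishes. No step is a real obstacle; the only care needed is justifying continuity of the KL terms, which holds because $m(\omega)$ stays in $[\mu_j,\mu_{\istar}]$, away from $0$ and $1$.
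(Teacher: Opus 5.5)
Your proof is correct and has the same overall structure as the paper's. Both use the pooled-mean closed form from Proposition~\ref{prop:berncalc}. Both send $m_j^{\alpha}(\omega)$ to $\mu_{\istar}$ or $\mu_j$ at the two ends. Both then read off the bijection from strict monotonicity and the two limits.

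The one real difference is the limit as $\omega\uparrow\infty$. The paper shows directly that the second term $\omega\KL{\mu_j}{m_j^{\alpha}(\omega)}$ vanishes. It does this by combining $m_j^{\alpha}(\omega)-\mu_j=O(\omega^{-1})$ with the quadratic behaviour $\KL{p}{q}=O((q-p)^2)$ near the diagonal. You instead squeeze $I_j^{\alpha}(\omega)$ between $\alpha\KL{\mu_{\istar}}{m_j^{\alpha}(\omega)}$ and the value $\alpha\KL{\mu_{\istar}}{\mu_j}$ of the objective at $x=\mu_j$.

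Your squeeze needs no local expansion of the divergence. It also gives the uniform bound $I_j^{\alpha}\le\alpha\KL{\mu_{\istar}}{\mu_j}$ for free, and the vanishing of the second term follows as a by-product. The paper's route additionally yields the explicit $O(1/\omega)$ rate for that term.
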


\begin{proof}
We evaluate the two terms in the information rate as the competitor weight
tends to zero or infinity.
By Proposition~\ref{prop:berncalc},
\[
  I_j^{\alpha}(\omega)
  =
  \alpha \KL{\mu_{\istar}}{m_j^{\alpha}(\omega)}
  +
  \omega \KL{\mu_j}{m_j^{\alpha}(\omega)}.
\]
As $\omega\downarrow0$, $m_j^{\alpha}(\omega)\to\mu_{\istar}$, so both terms vanish. As
$\omega\uparrow\infty$, $m_j^{\alpha}(\omega)\to\mu_j$, and hence
\[
  \alpha \KL{\mu_{\istar}}{m_j^{\alpha}(\omega)}
  \to
  \alpha \KL{\mu_{\istar}}{\mu_j}.
\]
Moreover,
\[
  m_j^{\alpha}(\omega)-\mu_j
  =
  \frac{\alpha(\mu_{\istar}-\mu_j)}{\alpha+\omega}
  =O(\omega^{-1}),
\]
and, for fixed $p\in(0,1)$, $\KL{p}{q}=O((q-p)^2)$ as $q\to p$.
Therefore
$\omega \KL{\mu_j}{m_j^{\alpha}(\omega)}\to0$. The bijection claim follows from
Proposition~\ref{prop:berncalc} and the two endpoint limits.
\end{proof}

\begin{proposition}[Unique fixed-$\alpha$ optimizer]
\label{prop:fixedalpha}
For every $\alpha\in(0,1)$ there exist a unique vector
$w^{\alpha}(\mu)\in\Delta_{1-\alpha}$ and a unique scalar $\lambda_{\alpha}>0$
such that
\[
  I_j^{\alpha}(w_j^{\alpha}(\mu))=\lambda_{\alpha}
  \qquad\forall j\neq \istar.
\]
Moreover, $\lambda_{\alpha}=\Tstar_{\alpha}(\mu)^{-1}$. On compact subsets of
$(0,1)$, the following map is $C^1$:
\[
  \alpha\mapsto
  \left(w^{\alpha}(\mu),\Tstar_{\alpha}(\mu)^{-1}\right).
\]
\end{proposition}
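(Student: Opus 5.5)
The plan is a water-filling argument built on the inverse maps from Lemma~\ref{lem:bern-boundary}, followed by the implicit function theorem.

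\textbf{Existence and uniqueness.} Fix $\alpha\in(0,1)$. Lemma~\ref{lem:bern-boundary} says that each $I_j^{\alpha}$ is a $C^\infty$ strictly increasing bijection from $(0,\infty)$ onto $(0,\alpha\KL{\mu_{\istar}}{\mu_j})$. Set $\Lambda_\alpha:=\min_{j\neq\istar}\alpha\KL{\mu_{\istar}}{\mu_j}$. For $\lambda\in(0,\Lambda_\alpha)$, define the inverses $g_j(\lambda):=(I_j^{\alpha})^{-1}(\lambda)$ and put $S(\lambda):=\sum_{j\neq\istar}g_j(\lambda)$.

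Each $g_j$ is continuous and strictly increasing, with $g_j(\lambda)\to0$ as $\lambda\downarrow0$. For any arm $j$ attaining the minimum in $\Lambda_\alpha$, $g_j(\lambda)\to\infty$ as $\lambda\uparrow\Lambda_\alpha$. Hence $S$ is a strictly increasing continuous bijection from $(0,\Lambda_\alpha)$ onto $(0,\infty)$, and there is a unique $\lambda_\alpha$ with $S(\lambda_\alpha)=1-\alpha$. Setting $w_j^\alpha:=g_j(\lambda_\alpha)$ gives an element of $\Delta_{1-\alpha}$ that equalizes the rates.

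Conversely, any equalizing vector in $\Delta_{1-\alpha}$ with common value $\lambda$ must satisfy $w_j=g_j(\lambda)$ and $S(\lambda)=1-\alpha$. So the pair $(w^\alpha,\lambda_\alpha)$ is unique.

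\textbf{Identification with $\Tstar_\alpha(\mu)^{-1}$.} At $w^\alpha$ the value of $\min_j I_j^\alpha(w_j)$ is $\lambda_\alpha$, so $\Tstar_\alpha(\mu)^{-1}\ge\lambda_\alpha$. Now take any $w\in\Delta_{1-\alpha}$ with $w\neq w^\alpha$. Since both vectors sum to $1-\alpha$, some $j$ has $w_j<w_j^\alpha$. Strict monotonicity then gives $\min_k I_k^\alpha(w_k)\le I_j^\alpha(w_j)<\lambda_\alpha$. So the maximum equals $\lambda_\alpha$ and is attained only at $w^\alpha$.

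\textbf{$C^1$ regularity.} The main obstacle is that $\alpha$ enters both the functions $I_j^\alpha$ and the budget $1-\alpha$, so I would apply the implicit function theorem to the joint map
\[
  F(\alpha,w,\lambda):=\Bigl(\bigl(I_j^{\alpha}(w_j)-\lambda\bigr)_{j\neq\istar},\ \textstyle\sum_{j\neq\istar}w_j-(1-\alpha)\Bigr)
\]
on $(0,1)\times(0,\infty)^{K-1}\times(0,\infty)$.

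\emph{Smoothness of $F$.} The function $(\alpha,\omega)\mapsto I_j^\alpha(\omega)$ is jointly $C^\infty$ in both variables: it equals $\alpha\KL{\mu_{\istar}}{m}+\omega\KL{\mu_j}{m}$ evaluated at the pooled mean $m=m_j^\alpha(\omega)$, which is smooth and takes values in $(\mu_j,\mu_{\istar})\subset(0,1)$.

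\emph{Invertibility of the Jacobian.} The Jacobian in $(w,\lambda)$ is the bordered matrix $\begin{pmatrix} D & -\mathbf{1}\\ \mathbf{1}^\top & 0\end{pmatrix}$, where $D=\operatorname{diag}(d_j)$ with $d_j=\KL{\mu_j}{m_j^\alpha(w_j)}>0$ by Proposition~\ref{prop:berncalc}. Its determinant is $\prod_j d_j\cdot\sum_j d_j^{-1}\neq0$, so it is invertible.

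\emph{Conclusion.} The implicit function theorem gives a local $C^1$ solution map near each $\alpha$. By the uniqueness established above, these local solutions coincide with $\alpha\mapsto(w^\alpha,\lambda_\alpha)$. Hence that map is $C^1$ on all of $(0,1)$, and in particular on every compact subset.
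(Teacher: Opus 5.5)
Your proof is correct and follows the paper's route: water-filling through the inverses of $I_j^{\alpha}$ to obtain a unique common level $\lambda_{\alpha}$, a monotonicity argument for optimality, and the implicit function theorem for regularity. The differences are only in packaging. You apply the implicit function theorem once to the full bordered system, whose Schur complement $\sum_j d_j^{-1}$ is exactly the paper's $\partial_{\lambda}\varphi=\sum_j\partial_{\lambda}G_j$, and your ``some coordinate must fall below $w_j^{\alpha}$'' argument replaces the paper's summation of $w_j\ge G_j(\alpha,m(w))$.
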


\begin{proof}
We solve for the common information rate using the inverse maps, then apply the
implicit function theorem to obtain regularity.
Fix $\alpha\in(0,1)$ and define
\[
  D_j:=\KL{\mu_{\istar}}{\mu_j}>0,
  \qquad j\neq \istar.
\]
By Lemma~\ref{lem:bern-boundary}, for each fixed $\alpha$ the map
$I_j^{\alpha}:(0,\infty)\to(0,\alpha D_j)$ is a strictly increasing
$C^{\infty}$ bijection. Define
\[
  G_j(\alpha,\lambda):=(I_j^{\alpha})^{-1}(\lambda),
  \qquad
  0<\lambda<\alpha D_j.
\]
To impose the total allocation constraint, define
\[
  D_{\min}:=\min_{j\neq \istar}D_j,
  \qquad
  \varphi(\alpha,\lambda):=\sum_{j\neq \istar}G_j(\alpha,\lambda)-(1-\alpha),
\]
for $(\alpha,\lambda)$ in the domain
\[
  \left\{(\alpha,\lambda):\alpha\in(0,1),\ 0<\lambda<\alpha D_{\min}\right\}.
\]
For fixed $\alpha$, the map $\lambda\mapsto\varphi(\alpha,\lambda)$ is
continuous and strictly increasing. In addition,
\[
  \lim_{\lambda\downarrow0}\varphi(\alpha,\lambda)=-(1-\alpha)<0,
\]
whereas
\[
  \lim_{\lambda\uparrow\alpha D_{\min}}\varphi(\alpha,\lambda)=+\infty,
\]
because the inverse for an arm attaining the minimum directed KL diverges as
$\lambda\uparrow\alpha D_{\min}$. Therefore there exists a unique
$\lambda_{\alpha}\in(0,\alpha D_{\min})$ satisfying
$\varphi(\alpha,\lambda_{\alpha})=0$. Set
$w_j^{\alpha}(\mu):=G_j(\alpha,\lambda_{\alpha})$. Then
$w^{\alpha}(\mu)\in\Delta_{1-\alpha}$ and
$I_j^{\alpha}(w_j^{\alpha}(\mu))=\lambda_{\alpha}$ for every
$j\neq \istar$.

To prove optimality, take a vector $w\in\Delta_{1-\alpha}$ and write
\[
  m(w):=\min_{j\neq \istar}I_j^{\alpha}(w_j).
\]
Since each $I_j^{\alpha}$ is increasing,
\[
  w_j\ge G_j(\alpha,m(w))
  \qquad\forall j\neq \istar.
\]
Summing over $j\neq \istar$ gives
\[
  1-\alpha=
  \sum_{j\neq \istar}w_j
  \ge
  \sum_{j\neq \istar}G_j(\alpha,m(w)).
\]
By strict monotonicity of the sum of inverse maps and the definition of
$\lambda_{\alpha}$, this implies
$m(w)\le\lambda_{\alpha}$. Hence
\[
  \max_{w\in\Delta_{1-\alpha}}\min_{j\neq \istar}I_j^{\alpha}(w_j)
  =\lambda_{\alpha},
\]
and therefore $\lambda_{\alpha}=\Tstar_{\alpha}(\mu)^{-1}$. If $w$ is any
maximizer, then $m(w)=\lambda_{\alpha}$, so
$w_j\ge G_j(\alpha,\lambda_{\alpha})=w_j^{\alpha}(\mu)$ for every
$j\neq \istar$. Both vectors sum to $1-\alpha$, so equality holds in every
coordinate, proving uniqueness.

For regularity, define
\[
  \Psi_j(\alpha,\omega,\lambda):=I_j^{\alpha}(\omega)-\lambda.
\]
Since $\Psi_j$ is $C^1$ and
$\partial_{\omega}\Psi_j(\alpha,\omega,\lambda)=\partial_{\omega} I_j^{\alpha}(\omega)>0$,
the inverse function theorem implies that $G_j$ is $C^1$ on its domain. The residual
$\varphi$ is therefore $C^1$, with
\[
  \partial_{\lambda}\varphi(\alpha,\lambda)
  =
  \sum_{j\neq \istar}\partial_{\lambda}G_j(\alpha,\lambda)>0.
\]
Applying the implicit function theorem to $\varphi(\alpha,\lambda_{\alpha})=0$
shows that $\alpha\mapsto\lambda_{\alpha}$ is $C^1$ on compact subsets of $(0,1)$.
Each coordinate
$\alpha\mapsto w_j^{\alpha}(\mu)=G_j(\alpha,\lambda_{\alpha})$ is therefore
$C^1$ as well.
\end{proof}

\begin{lemma}[Lipschitz control of $\Tstar_{\alpha}(\mu)^{-1}$ on compacts]
\label{lem:T-inv-lipschitz}
Let $U\Subset(0,1)$ be compact. Then there exists a finite constant
$L_{\mu}(U)$ such that for all $\alpha,\alpha'\in U$,
\[
  \left\lvert
  \frac{1}{\Tstar_{\alpha}(\mu)}
  -
  \frac{1}{\Tstar_{\alpha'}(\mu)}
  \right\rvert
  \le L_{\mu}(U)\lvert\alpha-\alpha'\rvert.
\]
\end{lemma}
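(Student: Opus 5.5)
The plan is to use Proposition~\ref{prop:fixedalpha} directly. It says that $\alpha\mapsto\Tstar_{\alpha}(\mu)^{-1}=\lambda_{\alpha}$ is $C^1$ on compact subsets of $(0,1)$, so the claim reduces to bounding the derivative on $U$ and applying the mean value theorem. One subtlety is that $U$ need not be an interval. I will therefore enlarge it to its convex hull $\widetilde U:=[\min U,\max U]\Subset(0,1)$, which is still compact in $(0,1)$, and work on $\widetilde U$. For $\alpha,\alpha'\in U$ the segment between them lies in $\widetilde U$, and the mean value theorem gives
\[
  \left\lvert\lambda_{\alpha}-\lambda_{\alpha'}\right\rvert
  \le
  \Bigl(\sup_{a\in\widetilde U}\lvert\partial_a\lambda_a\rvert\Bigr)\lvert\alpha-\alpha'\rvert.
\]
The supremum is finite because a continuous function on a compact set is bounded, and I take $L_{\mu}(U)$ equal to it.

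To make the $C^1$ claim concrete and avoid relying only on ``on compact subsets'', I would write the derivative explicitly from the implicit function theorem applied to $\varphi(\alpha,\lambda_{\alpha})=0$:
\[
  \partial_\alpha\lambda_\alpha=-\frac{\partial_\alpha\varphi}{\partial_\lambda\varphi}(\alpha,\lambda_\alpha).
\]
Here $\partial_\lambda\varphi=\sum_j 1/(I_j^{\alpha})'(G_j)>0$ and $\partial_\alpha\varphi=1+\sum_j\partial_\alpha G_j$. In turn $\partial_\alpha G_j=-\partial_\alpha I_j^{\alpha}(\omega)/\partial_\omega I_j^{\alpha}(\omega)$ evaluated at $\omega=w_j^{\alpha}$, where the envelope theorem gives $\partial_\alpha I_j^{\alpha}=\KL{\mu_{\istar}}{m_j^{\alpha}}$ and Proposition~\ref{prop:berncalc} gives $\partial_\omega I_j^\alpha=\KL{\mu_j}{m_j^\alpha}$. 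Every quantity here is continuous in $(\alpha,\omega)$ on the open domain, and $\alpha\mapsto(\alpha,w^{\alpha},\lambda_\alpha)$ is continuous by Proposition~\ref{prop:fixedalpha}. Composing, $\partial_\alpha\lambda_\alpha$ is continuous on $(0,1)$, hence bounded on $\widetilde U$.

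The main thing to check is that the image $\{(\alpha,w^\alpha_j):\alpha\in\widetilde U\}$ stays inside the region where these formulas are non-degenerate. Specifically, $w_j^\alpha$ must stay in a compact subset of $(0,\infty)$, so that $m_j^\alpha$ stays in a compact subset of $(\mu_j,\mu_{\istar})$ and $\KL{\mu_j}{m_j^\alpha}$ stays bounded away from $0$. This follows from the same continuity: $w_j^\alpha>0$ for each $\alpha$ and the map is continuous, so its image of the compact set $\widetilde U$ is compact in $(0,\infty)$, and $w_j^\alpha\le 1-\alpha\le1$ gives the upper bound. With that in hand the denominators $\partial_\lambda\varphi$ and $\partial_\omega I_j^\alpha$ are bounded below on $\widetilde U$, and the numerators are bounded above, which yields an explicit finite $L_\mu(U)$.
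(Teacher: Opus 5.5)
Your proposal is correct and matches the paper's proof. Both arguments take the $C^1$ regularity from Proposition~\ref{prop:fixedalpha}, bound the derivative on the compact convex hull of $U$, and apply the mean value theorem along segments in that hull; your explicit implicit-function formula for $\partial_\alpha\lambda_\alpha$ is a correct but optional elaboration of the regularity the paper takes directly from that proposition.
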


\begin{proof}
By Proposition~\ref{prop:fixedalpha}, the map
$\alpha\mapsto \Tstar_{\alpha}(\mu)^{-1}$ is $C^1$ on $(0,1)$.
Its derivative is bounded on the compact convex hull of $U$, which lies in
$(0,1)$. The interval between any two points of $U$ lies in this hull, so the
mean value theorem yields the displayed Lipschitz bound.
\end{proof}

\subsection{Discrete Rounding and Proof of Proposition~\ref{prop:oracle}}
\label{app:oracle-proof}

The rounding lemma preserves the total count while changing each coordinate by
at most one. Applied to the continuous optimizer, it bounds the cost of integer
counts in the discrete penalized allocation benchmark.

\begin{lemma}[Integer rounding with a fixed sum]
\label{lem:rounding}
Let $a_1,\dots,a_{K-1}\ge1$ be real numbers satisfying
$\sum_{j=1}^{K-1}a_j=s\in\mathbb{N}$. Then there exist integers $m_1,\dots,m_{K-1}\ge1$ such that
\[
  \sum_{j=1}^{K-1}m_j=s,
  \qquad
  \lvert m_j-a_j\rvert\le1
  \quad\forall j.
\]
\end{lemma}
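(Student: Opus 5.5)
We will use the standard floor-and-distribute construction. Set $m_j^{0}:=\floor{a_j}$ for every $j$. Since $a_j\ge1$, each $m_j^{0}\ge1$, and each fractional part satisfies $f_j:=a_j-m_j^{0}\in[0,1)$. The deficit is
\[
  r:=s-\sum_{j=1}^{K-1}m_j^{0}=\sum_{j=1}^{K-1}f_j .
\]
It is an integer because $s$ and every $m_j^{0}$ are integers. Since each $f_j$ lies in $[0,1)$, we get $0\le r<K-1$, so $r\in\{0,1,\dots,K-2\}$.

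Next choose any set $S\subseteq\{1,\dots,K-1\}$ of exactly $r$ indices; picking those with $f_j>0$ first is a natural choice but not needed. Define $m_j:=m_j^{0}+1$ for $j\in S$ and $m_j:=m_j^{0}$ otherwise. The sum is then $\sum_j m_j^{0}+r=s$, and every $m_j\ge m_j^{0}\ge1$.

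For the deviation bound, indices outside $S$ satisfy $\lvert m_j-a_j\rvert=f_j<1$. Indices in $S$ satisfy $\lvert m_j-a_j\rvert=1-f_j\le1$.

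The only step needing care is checking that $r$ is a nonnegative integer strictly less than $K-1$, so that $S$ exists. This follows directly from $f_j\in[0,1)$ together with the integrality of $s$. I do not expect any real obstacle.
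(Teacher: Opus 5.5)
Your proposal is correct and matches the paper's proof: floor every coordinate, note that the remainder $r$ is an integer in $\{0,1,\dots,K-2\}$, and add one to $r$ coordinates. The paper adds the ones to the coordinates with the largest fractional parts, while you correctly observe that any $r$ coordinates work, since $1-f_j\le 1$ in every case.
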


\begin{proof}
Let $\widetilde{m}_j:=\floor{a_j}$, so $\widetilde{m}_j\ge1$ for all
$j$, and define $r:=s-\sum_{j=1}^{K-1}\widetilde{m}_j$. Because
$\sum_j a_j=s$ and each fractional part satisfies
$a_j-\floor{a_j}<1$, the remainder satisfies $r\in\{0,1,\dots,K-2\}$.
Add $1$ to the $r$ coordinates with largest fractional parts. The resulting
integers satisfy the required properties.
\end{proof}

\begin{proof}[Proof of Proposition~\ref{prop:oracle}]
We obtain an upper bound from the continuous optimum and a lower bound by
rounding its optimizer and bounding the logarithmic count penalty.
Fix a compact interval $U\Subset(0,1)$ and integers $n\ge1$
and $s\ge K-1$ with $n/(n+s)\in U$. Set
\[
  \alpha:=\frac{n}{n+s}\in U,
  \qquad
  t:=n+s.
\]
We show that there exists $C_{\mu}(U)<\infty$, independent of $n$ and $s$, such
that
\[
  \left\lvert
    \Gdisc_{n,s}(\mu)
    -
    \left(\frac{t}{\Tstar_{\alpha}(\mu)}+\log t\right)
  \right\rvert
  \le C_{\mu}(U).
\]

Let $m=(m_j)_{j\neq \istar}$ be any feasible integer allocation and define
$w_j:=m_j/t$. Then $w\in\Delta_{1-\alpha}$ and
\[
  \Phi_j(n,m_j)
  =
  tI_j^{\alpha}(w_j)+\log t+
  \log w_j.
\]
Since $0<w_j<1$, the residual penalty satisfies $\log w_j\le0$, so
\[
  \min_{j\neq \istar}\Phi_j(n,m_j)
  \le
  \log t+t\min_{j\neq \istar}I_j^{\alpha}(w_j)
  \le
  \log t+\frac{t}{\Tstar_{\alpha}(\mu)}.
\]
Maximizing over feasible $m$ yields
\[
  \Gdisc_{n,s}(\mu)
  \le
  \frac{t}{\Tstar_{\alpha}(\mu)}+\log t.
\]

For the lower bound, let $w^{\alpha}(\mu)$ be the optimizer from
Proposition~\ref{prop:fixedalpha}. Since
$\alpha\mapsto w^{\alpha}(\mu)$ is continuous on $U$ and all coordinates are
positive,
\[
  c_U:=\inf_{\alpha\in U}\min_{j\neq \istar}w_j^{\alpha}(\mu)>0.
\]
Proposition~\ref{prop:berncalc} also gives
\[
  L_U:=
  \sup_{\alpha\in U}
  \sup_{j\neq \istar}
  \sup_{\omega\in[c_U/2,1]}
  \left\lvert\frac{d}{d\omega}I_j^{\alpha}(\omega)\right\rvert<\infty.
\]
Assume first that $t\ge2/c_U$. Then
$a_j:=t w_j^{\alpha}(\mu)\ge2$, so Lemma~\ref{lem:rounding} gives integers
$m_j\ge1$ such that
\[
  \sum_{j\neq \istar}m_j=s,
  \qquad
  \lvert m_j-tw_j^{\alpha}(\mu)\rvert\le1.
\]
Hence
\[
  \left\lvert\frac{m_j}{t}-w_j^{\alpha}(\mu)\right\rvert\le\frac1t,
  \qquad
  \frac{m_j}{t}\ge\frac{c_U}{2}.
\]
By the mean value theorem,
\[
  \left\lvert I_j^{\alpha}\left(\frac{m_j}{t}\right)-I_j^{\alpha}(w_j^{\alpha}(\mu))\right\rvert
  \le\frac{L_U}{t}.
\]
Since $I_j^{\alpha}(w_j^{\alpha}(\mu))=\Tstar_{\alpha}(\mu)^{-1}$,
\[
  tI_j^{\alpha}\left(\frac{m_j}{t}\right)
  \ge
  \frac{t}{\Tstar_{\alpha}(\mu)}-L_U.
\]
Moreover, $\log(m_j/t)\ge\log(c_U/2)$. Therefore
\[
  \Phi_j(n,m_j)
  \ge
  \frac{t}{\Tstar_{\alpha}(\mu)}+
  \log t-L_U+
  \log(c_U/2)
  \qquad\forall j\neq \istar.
\]
Taking the minimum over $j$ gives
\[
  \Gdisc_{n,s}(\mu)
  \ge
  \frac{t}{\Tstar_{\alpha}(\mu)}+
  \log t-C_{\mu,1}(U),
\]
where $C_{\mu,1}(U):=L_U-\log(c_U/2)$.

If $t<2/c_U$, only finitely many admissible triples $(n,s,t)$ satisfy
$n+s=t$ and $n/(n+s)\in U$, so their approximation errors can be absorbed
into a finite constant $C_{\mu,0}(U)$. Taking
$C_{\mu}(U):=\max\{C_{\mu,0}(U),C_{\mu,1}(U)\}$ completes the proof.
\end{proof}

\section{Time-Uniform Pairwise GLR Control}
\label{app:glr}

We prove Theorem~\ref{thm:symmetric-glr-control} by combining armwise KL
confidence sequences with a local Lipschitz bound for the Bernoulli GLR.

\subsection{Armwise KL Confidence Sequences}
\label{app:armwise-cs}

We first control each arm's empirical mean uniformly over its pull count.
These bounds then hold at the adaptive counts used by the algorithm.

\begin{lemma}[Armwise KL confidence sequences]
\label{lem:armwisecs}
There exists a positive finite universal constant $c_0$ such that for every $\eta\in(0,1)$
there is an event $\cK_{\eta}$ with
$\mathbb{P}_{\mu}(\cK_{\eta})\ge1-\eta$ and, on
$\cK_{\eta}$, simultaneously for all $t\ge K$ and all arms $i\in[K]$,
\[
  N_{t,i}\KL{\widehat{\mu}_{t,i}}{\mu_i}
  \le
  c_0\left(\log\frac{K}{\eta}+\log\log(et)\right).
\]
Consequently,
\[
  \lvert \widehat{\mu}_{t,i}-\mu_i\rvert
  \le
  \sqrt{\frac{c_0H_t(\eta)}{2N_{t,i}}}.
\]
\end{lemma}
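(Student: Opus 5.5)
The plan is to apply, for each arm separately, a time-uniform KL confidence sequence indexed by that arm's own sample count, and then to transfer it to the adaptive counts $N_{t,i}$.

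\textbf{Reduction to a single i.i.d.\ sequence per arm.} For each arm $i$, let $X_{i,1},X_{i,2},\dots$ be the successive rewards of arm $i$, i.e.\ a stack-of-rewards representation. Under the bandit model these form an i.i.d.\ $\Ber(\mu_i)$ sequence, whatever the adaptive (possibly randomized, possibly guarded) sampling rule. Writing $\widehat{\mu}_{i,n}$ for the mean of the first $n$ rewards, we have $\widehat{\mu}_{t,i}=\widehat{\mu}_{i,N_{t,i}}$. Hence it suffices to find an event on which, for all $n\ge1$,
\[
  n\KL{\widehat{\mu}_{i,n}}{\mu_i}\le c_0\bigl(\log(K/\eta)+\log\log(en)\bigr).
\]
Since $N_{t,i}\le t$ and $\log\log(e\cdot)$ is increasing, evaluating this bound at $n=N_{t,i}$ gives the claim at time $t$ simultaneously for all $t\ge K$.

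\textbf{Per-arm confidence sequence.} Invoke the one-dimensional KL confidence sequence of \citet[Section~3.3.2, Eq.~(14)]{kaufmann2021mixture} with confidence level $\eta/K$. It states that, with probability at least $1-\eta/K$, for all $n\ge1$,
\[
  n\KL{\widehat{\mu}_{i,n}}{\mu_i}\le \mathcal{C}(\log(K/\eta))+c\log\log(en),
\]
where $\mathcal{C}(x)=x+O(\log x)$ is the calibration function. The main technical point is absorbing $\mathcal{C}(x)$ and the additive constants into $c_0(x+\log\log(en))$ with a universal $c_0$. For this:
\begin{itemize}
  \item since $K\ge2$ and $\eta<1$, we have $x=\log(K/\eta)\ge\log2$, bounded away from $0$;
  \item $\log\log(en)\ge0$;
  \item $\mathcal{C}(x)\le c'x$ for $x\ge\log 2$.
\end{itemize}
If the reference's bound is stated only for the upper and lower deviations separately, treat the two one-sided tails with level $\eta/(2K)$ each and absorb the factor $2$ into $c_0$ as well. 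Alternatively, one can prove the per-arm bound directly by peeling over geometric epochs $n\in[2^k,2^{k+1})$ with Chernoff/Doob maximal inequalities at level $\eta/(K(k+1)^2)$, which yields the $\log\log$ term. A union bound over the $K$ arms then gives $\mathbb{P}_\mu(\cK_\eta)\ge1-\eta$.

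\textbf{Consequence.} Apply Pinsker's inequality, $\KL{p}{q}\ge 2(p-q)^2$, which is valid including at the endpoints $p\in\{0,1\}$ under the stated convention. This gives
\[
  2N_{t,i}(\widehat{\mu}_{t,i}-\mu_i)^2\le c_0H_t(\eta),
\]
which rearranges to the displayed deviation bound.

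The only delicate step is the constant bookkeeping with the cited calibration function. The adaptivity is fully handled by the stacking argument together with uniformity in $n$.
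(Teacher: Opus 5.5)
Your proposal is correct and follows the paper's proof essentially step for step: pull-indexed i.i.d.\ reward sequences per arm, the Kaufmann--Koolen KL confidence sequence at level $\eta/K$ with its calibration term absorbed into a universal multiple of $\log(K/\eta)$ (using $K\ge2$, so that $\eta/K<1/2$), a union bound over arms, evaluation at $n=N_{t,i}\le t$ (where $N_{t,i}\ge1$ by the initial pulls), and then Pinsker's inequality. The peeling alternative and the one-sided versus two-sided bookkeeping you mention are harmless extras that the paper's version does not need.
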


\begin{proof}
We combine the pull-indexed confidence sequences by a union bound, then
evaluate them at the adaptive counts.

For each arm $i\in[K]$, let $(Y_{i,s})_{s\ge1}$ be the reward sequence of arm
$i$, indexed by pull number. These rewards are i.i.d. with law
$\Ber(\mu_i)$. For each positive integer $n$, define
\[
  \overline{Y}_{i,n}:=\frac1n\sum_{s=1}^{n}Y_{i,s}.
\]
By construction,
\[
  \widehat{\mu}_{t,i}=\overline{Y}_{i,N_{t,i}}
  \qquad\text{for all }t\ge K.
\]
Since the algorithm pulls each arm once initially, $N_{t,i}\ge1$.

Apply the one-dimensional time-uniform KL confidence-sequence bound for
one-parameter exponential families from \citet[Section~3.3.2, Equation~(14)]{kaufmann2021mixture},
specialized to Bernoulli distributions. For $\gamma\in(0,1/2]$, the
confidence-dependent term in that bound is at most a universal multiple of
$\log(1/\gamma)$. There is therefore a positive finite universal constant $c_0$
such that, for every $\gamma\in(0,1/2]$, there exists an event
$\cE_i(\gamma)$ with
$\mathbb{P}_{\mu}(\cE_i(\gamma))\ge1-\gamma$. On
$\cE_i(\gamma)$, simultaneously for all $n\ge1$,
\[
  n \KL{\overline{Y}_{i,n}}{\mu_i}
  \le
  c_0\left(\log\frac1\gamma+\log\log(en)\right).
\]
Choose $\gamma=\eta/K<1/2$, using $K\ge2$, and define
$\cK_{\eta}:=\bigcap_{i=1}^{K}\cE_i(\eta/K)$. A union bound gives
$\mathbb{P}_{\mu}(\cK_{\eta})\ge1-\eta$.

Because each arm's bound holds for every sample size $n\ge1$, it also holds
at the adaptive count $n=N_{t,i}$. On $\cK_{\eta}$,
simultaneously for all $t\ge K$ and all $i\in[K]$,
\[
  N_{t,i}\KL{\widehat{\mu}_{t,i}}{\mu_i}
  \le
  c_0\left(\log\frac{K}{\eta}+\log\log(eN_{t,i})\right)
  \le
  c_0\left(\log\frac{K}{\eta}+\log\log(et)\right).
\]
Pinsker's inequality, $\KL{p}{q}\ge2(p-q)^2$, yields the displayed radius bound.
\end{proof}

The pull-indexed construction of $\cK_{\eta}$ above gives
$\cK_{\eta}\subseteq\cK^{\mathrm{sep}}_{\eta}$, the event defined in
Section~\ref{sec:unconditional-closures}. Indeed, Pinsker's inequality at
$n=N_{t,i}$ bounds the estimation error by
$\sqrt{c_0(\log(K/\eta)+\log\log(eN_{t,i}))/(2N_{t,i})}\le r(N_{t,i},\eta)$.
Hence $\mathbb{P}_{\mu}(\cK^{\mathrm{sep}}_{\eta})\ge1-\eta$.

\subsection{Local Lipschitz Control of the Bernoulli GLR}
\label{app:bregman}

We next use an entropy representation of the two-arm GLR to bound its
derivatives on interior mean intervals. The resulting Lipschitz bound scales
with the effective sample size.

\begin{lemma}[Bregman representation and local Lipschitz control]
\label{lem:bregman}
Fix $m,\ell\ge1$ and $p,q\in(0,1)$, and define
\[
  V_{m,\ell}(p,q):=\inf_{x\in(0,1)}\left\{m \KL{p}{x}+\ell \KL{q}{x}\right\}.
\]
For the local weight $\alpha:=m/(m+\ell)$, let
\[
  h(x):=x\log x+(1-x)\log(1-x)\qquad\text{for }x\in(0,1),
\]
and extend $h$ continuously to $[0,1]$. Then
\[
  V_{m,\ell}(p,q)
  =
  (m+\ell)
  \left[
    \alpha h(p)+(1-\alpha)h(q)-h(\alpha p+(1-\alpha)q)
  \right].
\]
Moreover, for every $\eps\in(0,1/2)$ and all
$p,q,u,v\in[\eps,1-\eps]$,
\[
  \lvert V_{m,\ell}(p,q)-V_{m,\ell}(u,v)\rvert
  \le
  \frac{2}{\eps(1-\eps)}
  \frac{m\ell}{m+\ell}
  \left(\lvert p-u\rvert+\lvert q-v\rvert\right).
\]
\end{lemma}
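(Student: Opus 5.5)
The plan is to prove the Bregman representation by substituting the pooled minimizer, and then obtain the Lipschitz bound from the gradient of that closed form.

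\emph{Representation.} By Remark~\ref{rem:optimal-root}, the infimum defining $V_{m,\ell}(p,q)$ is attained at the pooled mean $x^{\star}=\alpha p+(1-\alpha)q\in(0,1)$. For $x\in(0,1)$ we have the identity
\[
  \KL{p}{x}=h(p)-p\log x-(1-p)\log(1-x).
\]
Applying it to both terms and collecting the cross terms gives
\[
  m\KL{p}{x^{\star}}+\ell\KL{q}{x^{\star}}
  =mh(p)+\ell h(q)-(m+\ell)\bigl[x^{\star}\log x^{\star}+(1-x^{\star})\log(1-x^{\star})\bigr].
\]
This equals $(m+\ell)[\alpha h(p)+(1-\alpha)h(q)-h(x^{\star})]$, because $mp+\ell q=(m+\ell)x^{\star}$ and $m(1-p)+\ell(1-q)=(m+\ell)(1-x^{\star})$.

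\emph{Gradient.} Write $F(p,q)=(m+\ell)[\alpha h(p)+(1-\alpha)h(q)-h(\alpha p+(1-\alpha)q)]$. Differentiating,
\[
  \partial_pF=m\bigl(h'(p)-h'(x^{\star})\bigr),
  \qquad
  \partial_qF=\ell\bigl(h'(q)-h'(x^{\star})\bigr),
\]
where $h'(y)=\log\frac{y}{1-y}$ and $h''(y)=1/(y(1-y))$. Suppose $p,q\in[\eps,1-\eps]$. Then $x^{\star}\in[\eps,1-\eps]$ as a convex combination, and on this interval $h''\le 1/(\eps(1-\eps))$. The mean value theorem therefore gives
\[
  |h'(p)-h'(x^{\star})|\le\frac{|p-x^{\star}|}{\eps(1-\eps)}=\frac{(1-\alpha)|p-q|}{\eps(1-\eps)}.
\]
Multiplying by $m$ and using $m(1-\alpha)=m\ell/(m+\ell)$ yields
\[
  |\partial_pF|\le\frac{1}{\eps(1-\eps)}\frac{m\ell}{m+\ell}|p-q|.
\]
Symmetrically, $|\partial_qF|$ has the same bound, since $\ell\alpha=m\ell/(m+\ell)$. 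Since $|p-q|\le 1$, each partial derivative is bounded by $\frac{1}{\eps(1-\eps)}\frac{m\ell}{m+\ell}$.

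\emph{Lipschitz bound.} The square $[\eps,1-\eps]^2$ is convex, so the segment from $(u,v)$ to $(p,q)$ stays inside it and the derivative bounds apply along the whole segment. The mean value theorem along this segment, followed by Hölder's inequality (sup-norm of the gradient against the $\ell^1$ displacement), gives
\[
  |F(p,q)-F(u,v)|\le\frac{1}{\eps(1-\eps)}\frac{m\ell}{m+\ell}\bigl(|p-u|+|q-v|\bigr).
\]
This is the stated bound with room to spare, since the stated constant carries an extra factor $2$. No step is a real obstacle; the only point needing care is that the Remark applies with $x^{\star}$ interior, which holds because $p,q\in(0,1)$.
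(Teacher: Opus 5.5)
Your proposal is correct and follows essentially the same route as the paper. You substitute the pooled minimizer to get the Bregman form, bound $\partial_p=m(h'(p)-h'(x^{\star}))$ and $\partial_q=\ell(h'(q)-h'(x^{\star}))$ using $h''\le 1/(\eps(1-\eps))$ on $[\eps,1-\eps]$, and then apply the mean value theorem. The only cosmetic difference is that you apply the mean value theorem along the segment with an $\ell^\infty$--$\ell^1$ pairing, whereas the paper applies it one coordinate at a time; both give the bound without the factor $2$.
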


\begin{proof}
We substitute the pooled minimizer into the GLR and bound its derivatives on
the interior interval.

The minimizer in the definition of $V_{m,\ell}(p,q)$ is
$r:=\alpha p+(1-\alpha)q$. Substitution gives the Bregman representation.

For the Lipschitz bound, write
\[
  V_{m,\ell}(p,q)
  =
  (m+\ell)\left[\alpha h(p)+(1-\alpha)h(q)-h(r)\right],
  \qquad
  r=\alpha p+(1-\alpha)q.
\]
Then
\[
  \partial_p V_{m,\ell}(p,q)=m(h'(p)-h'(r)),
  \qquad
  \partial_q V_{m,\ell}(p,q)=\ell(h'(q)-h'(r)).
\]
Since $h''(x)=1/[x(1-x)]$, for $x,y\in[\eps,1-\eps]$,
\[
  \lvert h'(x)-h'(y)\rvert\le\frac{\lvert x-y\rvert}{\eps(1-\eps)}.
\]
Moreover,
\[
  \lvert p-r\rvert=(1-\alpha)\lvert p-q\rvert\le1-\alpha,
  \qquad
  \lvert q-r\rvert=\alpha\lvert p-q\rvert\le\alpha.
\]
It follows that
\[
  \lvert \partial_p V_{m,\ell}(p,q)\rvert
  \le
  \frac{m(1-\alpha)}{\eps(1-\eps)}
  =
  \frac{m\ell}{m+\ell}\frac{1}{\eps(1-\eps)},
\]
and the same bound holds for $\lvert \partial_q V_{m,\ell}(p,q)\rvert$.
Applying the mean value theorem first in the $p$ coordinate and then in the
$q$ coordinate keeps both segments inside the interior square and proves
the stated inequality, with the factor $2$ as slack.
\end{proof}

\subsection{Proof of Theorem~\ref{thm:symmetric-glr-control}}
\label{app:proof-glr-control}

We combine the armwise confidence radii with local Lipschitz control to prove
the uniform comparison between the empirical GLR and its population counterpart.

\begin{proof}[Proof of Theorem~\ref{thm:symmetric-glr-control}]
We bound both GLR statistics directly when one count is small and apply local
Lipschitz control when both counts are large.

Fix $\eta\in(0,1)$.
Work on the armwise confidence event $\cK_{\eta}$ from
Lemma~\ref{lem:armwisecs}. This event has probability at least $1-\eta$. Fix
$t\ge K$ and distinct arms $i,j\in[K]$, and set
\[
  m:=N_{t,i},
  \qquad
  \ell:=N_{t,j},
  \qquad
  H:=H_t(\eta),
  \qquad
  \Neff:=\frac{m\ell}{m+\ell}.
\]
Throughout this proof, $C_{\mu}$ denotes a positive finite constant depending only on
$\mu$; its value may increase between inequalities.
On $\cK_{\eta}$,
\[
  \lvert \widehat{\mu}_{t,i}-\mu_i\rvert
  \le
  \sqrt{\frac{c_0H}{2m}},
  \qquad
  \lvert \widehat{\mu}_{t,j}-\mu_j\rvert
  \le
  \sqrt{\frac{c_0H}{2\ell}}.
\]
Let
\[
  \eps_{\mu}
  :=
  \frac12\min_{a\in[K]}\{\mu_a,1-\mu_a\}.
\]

First suppose that
\[
  \min\{m,\ell\}\le\frac{2c_0}{\eps_{\mu}^2}H.
\]
Assume without loss of generality that $m\le\ell$. Evaluating the empirical
infimum defining $V_t(i,j)$ at $x=\mu_j$ gives
\[
  V_t(i,j)
  \le
  m \KL{\widehat{\mu}_{t,i}}{\mu_j}
  +
  \ell \KL{\widehat{\mu}_{t,j}}{\mu_j}.
\]
The first term is at most $C_{\mu}m$ because
$p\mapsto \KL{p}{\mu_j}$ is bounded on $[0,1]$, and the second term is at most
$c_0H$ on $\cK_{\eta}$. The small-count assumption therefore gives
$V_t(i,j)\le C_{\mu}H$. Evaluating the population infimum at $x=\mu_j$ gives
\[
  V_t^{\mu}(i,j)\le m \KL{\mu_i}{\mu_j}\le C_{\mu}m\le C_{\mu}H.
\]
The two bounds imply
\[
  \lvert V_t(i,j)-V_t^{\mu}(i,j)\rvert\le C_{\mu}H,
\]
in the small-count regime.

Now suppose that
\[
  \min\{m,\ell\}>\frac{2c_0}{\eps_{\mu}^2}H.
\]
Then both empirical means belong to
$[\eps_{\mu}/2,1-\eps_{\mu}/2]$, and all true means belong to the
same interval. Lemma~\ref{lem:bregman}, applied with
$\eps=\eps_{\mu}/2$, gives
\[
  \lvert V_t(i,j)-V_t^{\mu}(i,j)\rvert
  \le
  C_{\mu}\Neff
  \left(\lvert \widehat{\mu}_{t,i}-\mu_i\rvert+\lvert \widehat{\mu}_{t,j}-\mu_j\rvert\right).
\]
Using the armwise confidence radii,
\[
  \lvert V_t(i,j)-V_t^{\mu}(i,j)\rvert
  \le
  C_{\mu}\Neff
  \left(\sqrt{\frac{H}{m}}+\sqrt{\frac{H}{\ell}}\right).
\]
Since $\Neff\le m$ and $\Neff\le\ell$,
\[
  \frac{\Neff}{\sqrt{m}}\le\sqrt{\Neff},
  \qquad
  \frac{\Neff}{\sqrt{\ell}}\le\sqrt{\Neff}.
\]
We obtain
\[
  \lvert V_t(i,j)-V_t^{\mu}(i,j)\rvert
  \le
  C_{\mu}\sqrt{\Neff H},
\]
in the large-count regime. The two regimes together give the bound in
Theorem~\ref{thm:symmetric-glr-control}. Since the armwise bounds hold on the
same event $\cK_{\eta}$ for all arms and all pull counts, the conclusion holds
simultaneously over all $t\ge K$ and all pairs $i\neq j$.
Taking $\cG_{\eta}:=\cK_{\eta}$ completes the proof.
\end{proof}

\section{Evidence Bounds in the Stabilized Regime}
\label{app:envelope}

We compare the population challenger indices with the discrete penalized
allocation benchmark to derive the evidence and count-growth bounds, then
bound the separation time. Algorithmic time indices are integers on the
continued sampling process of Section~\ref{sec:stable-entry}.
Generic constants may change between inequalities, with their dependencies
indicated by subscripts and arguments.

For $t\ge K$, let $G_{t+1}\in\{0,1\}$ indicate whether the exploration guard
is used at round $t+1$; for the unguarded algorithm, $G_{t+1}\equiv0$. We write
\[
  \mathcal{J}:=[K]\setminus\{\istar\},
  \qquad
  d_K:=\lvert\mathcal{J}\rvert=K-1.
\]
For each positive integer $n$ and $j\in\mathcal{J}$, extend the population
penalized index to real $m>0$ by
\[
  \Phi_j(n,m)
  :=
  \inf_{x\in(0,1)}
  \left\{n \KL{\mu_{\istar}}{x}+m \KL{\mu_j}{x}\right\}
  +\log m.
\]
For integer budgets $S\ge d_K$, the discrete benchmark is
\[
  \Gdisc_{n,S}(\mu)
  =
  \max_{m_j\in\{1,2,\dots\}:\sum_{j\in\mathcal{J}}m_j=S}
  \min_{j\in\mathcal{J}}\Phi_j(n,m_j).
\]

\subsection{Continuous Equalizers and Slope Bounds}
\label{app:equalizers-slopes}

We first bound the slopes of the population indices and construct a real
allocation at which all indices agree. Rounding this continuous equalizer
relates its value to the discrete benchmark.

\begin{lemma}[Uniform slope bounds for $\Phi_j$]
\label{lem:phi-slopes}
Fix a compact interval $U\Subset(0,1)$. There exist constants
$0<c_{\mu}(U)\le C_{\mu}(U)<\infty$ such that, whenever
$\alpha=n/(n+S)\in U$, $1\le m\le S+1$, and $j\in\mathcal{J}$,
\[
  c_{\mu}(U)\le\partial_m\Phi_j(n,m)\le C_{\mu}(U).
\]
\end{lemma}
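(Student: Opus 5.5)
The plan is to compute $\partial_m\Phi_j(n,m)$ explicitly and bound each of its two pieces. By the envelope argument of Proposition~\ref{prop:berncalc}, with the minimizer $x=m_j(n,m):=(n\mu_{\istar}+m\mu_j)/(n+m)$,
\[
  \partial_m\Phi_j(n,m)=\KL{\mu_j}{m_j(n,m)}+\frac1m .
\]
Both terms are positive, so it suffices to bound the first term above and below by positive constants depending only on $\mu$ and $U$. We then add $0<1/m\le1$.

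Write $a:=n/(n+m)$. Then $m_j(n,m)=a\mu_{\istar}+(1-a)\mu_j$. The key step is to localize $a$ in a compact subinterval of $(0,1)$.
\begin{itemize}[leftmargin=1.35em]
\item For $m\le S+1$ we have $n+m\le n+S+1$. Since $n\ge1$ and $S\ge0$, this gives $n+S+1\le 2(n+S)$. Hence $a\ge n/(2(n+S))=\alpha/2\ge u_-/2$, where $U=[u_-,u_+]$.
\item For $m\ge1$ we have $a\le n/(n+1)$. Since $\alpha=n/(n+S)\le u_+$, we get $n\le u_+S/(1-u_+)$.
\end{itemize}
Bounding $a$ away from $1$ is the only delicate point. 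With $m$ as small as $1$, $a$ can approach $1$ when $n$ is large, and then $\KL{\mu_j}{m_j}\to\KL{\mu_j}{\mu_{\istar}}$. This limit is still a positive finite number, so it causes no trouble for the upper bound.

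For the lower bound we need $m_j$ bounded away from $\mu_j$, which requires $a$ bounded away from $0$. The first bullet gives $a\ge u_-/2$, so
\[
  m_j-\mu_j=a(\mu_{\istar}-\mu_j)\ge \tfrac{u_-}{2}(\mu_{\istar}-\mu_j).
\]
Because $q\mapsto\KL{\mu_j}{q}$ is increasing on $[\mu_j,1)$, this yields
\[
  \KL{\mu_j}{m_j}\ge \KL{\mu_j}{\mu_j+\tfrac{u_-}{2}(\mu_{\istar}-\mu_j)}>0 .
\]
Minimizing over the finitely many $j\in\mathcal{J}$ gives $c_{\mu}(U)$. This bound is strictly positive precisely because $\istar$ is the unique best arm, so $\mu_{\istar}>\mu_j$ for every $j\in\mathcal{J}$.

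For the upper bound, $m_j\in[\mu_j,\mu_{\istar}]\subset(0,1)$, so
\[
  \KL{\mu_j}{m_j}\le\KL{\mu_j}{\mu_{\istar}}\le\max_{j\in\mathcal{J}}\KL{\mu_j}{\mu_{\istar}}<\infty .
\]
Adding $1/m\le1$ gives $C_{\mu}(U):=1+\max_{j\in\mathcal{J}}\KL{\mu_j}{\mu_{\istar}}$. Enlarging $C_\mu(U)$ if necessary ensures $c_\mu(U)\le C_\mu(U)$.

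The main care point is the differentiability claim for real $m$ and the identification of the derivative. This is the same envelope computation as in Proposition~\ref{prop:berncalc}, with $(\alpha,\omega)$ replaced by $(n,m)$; the formula holds verbatim since the pooled minimizer is interior. Beyond that, the only subtlety is checking that the constraint $m\le S+1$ keeps $a$ bounded below. Notably, the upper bound on $\alpha$ (that is, $u_+<1$) is not even needed; only $u_->0$ is used.
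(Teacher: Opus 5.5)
Your proposal is correct and follows the paper's proof: the same envelope formula $\partial_m\Phi_j(n,m)=\KL{\mu_j}{x_j(n,m)}+1/m$, the same chain $n/(n+m)\ge n/(n+S+1)\ge(\inf U)/2$ (using $n+S\ge1$) to keep the pooled mean away from $\mu_j$, and the same upper bound from $x_j(n,m)\in[\mu_j,\mu_{\istar}]$ and $1/m\le1$. Your second bullet, which bounds $n/(n+m)$ away from $1$, is never used, as you note yourself, and can be dropped.
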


\begin{proof}
We differentiate the index and use the compact interval of allocation
fractions to keep its pooled mean separated from the suboptimal-arm mean.
By the envelope theorem,
\[
  \partial_m\Phi_j(n,m)
  =
  \KL{\mu_j}{x_j(n,m)}+\frac1m,
  \qquad
  x_j(n,m):=\frac{n\mu_{\istar}+m\mu_j}{n+m}.
\]
If $a_U:=\inf U$, then for $1\le m\le S+1$,
\[
  \frac{n}{n+m}
  \ge
  \frac{n}{n+S+1}
  =
  \frac{\alpha(n+S)}{n+S+1}
  \ge
  \frac{a_U}{2},
\]
since $n+S\ge1$. Hence
$x_j(n,m)$ stays uniformly separated from $\mu_j$. Since
$\mu_{\istar}>\mu_j$, the term $\KL{\mu_j}{x_j(n,m)}$ has a positive lower
bound depending only on $(\mu,U)$. The upper bound follows from
$x_j(n,m)\in[\mu_j,\mu_{\istar}]$ and $1/m\le1$.
\end{proof}

\begin{lemma}[Continuous equalizer and small-budget cutoff]
\label{lem:continuous-equalizer}
Fix a compact interval $U\Subset(0,1)$. For each integer $n\ge1$, define
\[
  \gamma_{\min}(n):=\max_{k\in\mathcal{J}}\Phi_k(n,1),
  \qquad
  S_0(n,\mu):=\sum_{j\in\mathcal{J}}\Phi_j^{-1}(n,\gamma_{\min}(n)),
\]
where $\Phi_j^{-1}(n,\gamma)$ is the inverse of
$m\mapsto\Phi_j(n,m)$ on $[1,\infty)$. There exists
$\overline{S}_{\mu}<\infty$ such that $S_0(n,\mu)\le\overline{S}_{\mu}$ for all
$n\ge1$. For every real budget $S\ge S_0(n,\mu)$, there is a unique pair
$(\gamma_{n,S},r_{n,S})$, $r_{n,S}=(r_{j,n,S})_{j\in\mathcal{J}}$, satisfying
\[
  \sum_{j\in\mathcal{J}}r_{j,n,S}=S,
  \qquad
  r_{j,n,S}\ge1,
  \qquad
  \Phi_j(n,r_{j,n,S})=\gamma_{n,S}
  \quad\forall j\in\mathcal{J}.
\]
Moreover, if $t=n+S$, $\alpha=n/t\in U$, and $S\ge S_0(n,\mu)$, then
\[
  \gamma_{n,S}=\frac{t}{\Tstar_{\alpha}(\mu)}+\log t+O_{\mu,U}(1),
  \qquad
  r_{j,n,S}=t w_j^{\alpha}(\mu)+O_{\mu,U}(1),
\]
uniformly over $\alpha\in U$, $S$, and $j\in\mathcal{J}$. If $S$ is an integer,
then
\[
  \lvert\gamma_{n,S}-\Gdisc_{n,S}(\mu)\rvert\le C_{\mu}(U).
\]
\end{lemma}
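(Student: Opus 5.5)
The plan has four parts: a uniform bound on $S_0$, existence and uniqueness of the equalizer by a water-filling argument, the asymptotic expansions via the continuous optimizer, and the comparison with the discrete benchmark. Throughout I would rely on the monotonicity supplied by Lemma~\ref{lem:phi-slopes}, or, without restricting to $U$, on the envelope formula $\partial_m\Phi_j(n,m)=\KL{\mu_j}{x_j(n,m)}+1/m>0$. This shows that each map $m\mapsto\Phi_j(n,m)$ is strictly increasing, continuous and unbounded on $[1,\infty)$, since $\Phi_j(n,m)\ge\log m$. Hence $\Phi_j^{-1}(n,\cdot)$ is a well-defined increasing continuous bijection from $[\Phi_j(n,1),\infty)$ onto $[1,\infty)$.

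\textbf{Bound on $S_0$.} It suffices to bound $\Phi_j^{-1}(n,\gamma_{\min}(n))$ uniformly in $n$ and $j$. Note that $\Phi_k(n,1)\le\KL{\mu_k}{\mu_{\istar}}$, obtained by evaluating at $x=\mu_{\istar}$, so $\gamma_{\min}(n)\le D:=\max_k\KL{\mu_k}{\mu_{\istar}}$ for every $n$. Since $\Phi_j(n,m)\ge\log m$, we get $\Phi_j^{-1}(n,\gamma)\le e^{\gamma}$, and therefore $S_0(n,\mu)\le d_K e^{D}=:\overline S_\mu$.

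\textbf{Existence and uniqueness.} For $\gamma\ge\gamma_{\min}(n)$, set $R(\gamma):=\sum_j\Phi_j^{-1}(n,\gamma)$. This function is continuous and strictly increasing, equals $S_0(n,\mu)$ at $\gamma_{\min}(n)$, and tends to $\infty$. By the intermediate value theorem, each $S\ge S_0$ gives a unique $\gamma_{n,S}$ with $R(\gamma_{n,S})=S$, and we set $r_{j,n,S}:=\Phi_j^{-1}(n,\gamma_{n,S})\ge1$. Uniqueness of the pair holds because any equalizer with all $r_j\ge1$ has a common value $\gamma\ge\max_k\Phi_k(n,1)=\gamma_{\min}(n)$, and $R$ is injective on that range.

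\textbf{Asymptotics.} Let $a_j:=tw_j^{\alpha}(\mu)$. As in the proof of Proposition~\ref{prop:oracle}, $\Phi_j(n,a_j)=t/\Tstar_\alpha(\mu)+\log t+\log w_j^\alpha(\mu)$. By Proposition~\ref{prop:fixedalpha} and compactness of $U$, $\log w_j^\alpha$ is bounded on $U$, so the values $\Phi_j(n,a_j)$ differ from $\gamma^*:=t/\Tstar_\alpha+\log t$ by at most $B_{\mu,U}$. Since $\sum_j a_j=S=\sum_j r_j$, we have $\min_j(r_j-a_j)\le0\le\max_j(r_j-a_j)$. Then monotonicity sandwiches $\gamma_{n,S}$:
\[
\min_j\Phi_j(n,a_j)\le\gamma_{n,S}\le\max_j\Phi_j(n,a_j),
\]
which gives $\gamma_{n,S}=\gamma^*+O(1)$.

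For the counts, the slope lower bound $c_\mu(U)$ of Lemma~\ref{lem:phi-slopes} converts an $O(1)$ index gap into an $O(1)$ count gap: $|r_j-a_j|\le|\gamma_{n,S}-\Phi_j(n,a_j)|/c_\mu(U)$. This step requires $a_j,r_j\in[1,S+1]$. For $r_j$ this holds since $r_j\le S$. For $a_j$, the bound $a_j\ge tc_U\ge1$ holds once $t$ is large; the remaining small $t$ form finitely many cases, absorbed into the constant as in Proposition~\ref{prop:oracle}. The main technical obstacle is exactly this domain bookkeeping: applying the slope bounds on the right interval and handling small $t$.

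\textbf{Discrete comparison.} For integer $S$ (with $S\ge d_K$, as the discrete benchmark requires), Proposition~\ref{prop:oracle} gives $\Gdisc_{n,S}=\gamma^*+O(1)$. Combined with the expansion of $\gamma_{n,S}$, this yields $|\gamma_{n,S}-\Gdisc_{n,S}|\le C_\mu(U)$. A direct alternative is to round $r_{n,S}$ via Lemma~\ref{lem:rounding} and use the slope upper bound $C_\mu(U)$.
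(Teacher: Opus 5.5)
Your proof is correct. The bound on $S_0(n,\mu)$ and the existence/uniqueness step match the paper, but the asymptotics and the discrete comparison follow a different route.

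The paper first shows that the equalizer solves the real relaxation of the max--min problem. It gets the upper bound on $\gamma_{n,S}$ from $\min_j I_j^{\alpha}(q_j)\le\Tstar_{\alpha}(\mu)^{-1}$, and the lower bound by inserting the allocation $tw^{\alpha}(\mu)$, which is admissible only once $tw_j^{\alpha}(\mu)\ge1$. It then shows by contradiction that $q_j=r_{j,n,S}/t\ge c_U/2$, applies the mean value theorem to $I_j^{\alpha}$ on $[c_U/2,1]$, and handles small budgets by a separate continuity argument.

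Your zero-sum sandwich is simpler. Since $\sum_j r_{j,n,S}=\sum_j tw_j^{\alpha}(\mu)=S$, some coordinate lies on each side, and you only need monotonicity of $\Phi_j(n,\cdot)$ on all of $(0,\infty)$. This gives $\gamma^*+\min_j\log w_j^{\alpha}(\mu)\le\gamma_{n,S}\le\gamma^*$ for every admissible $(n,S)$, with no large-$t$ restriction. The count estimate then follows in one line from the slope lower bound of Lemma~\ref{lem:phi-slopes}, with explicit constants.

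One small imprecision: with real $S$, the range $t<1/c_U$ is not finitely many cases. There, however, $\lvert r_{j,n,S}-tw_j^{\alpha}(\mu)\rvert\le S\le t$ is trivially bounded. Alternatively, the proof of Lemma~\ref{lem:phi-slopes} gives the slope lower bound on all of $(0,S+1]$, so no cutoff is needed.

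For the discrete comparison you import Proposition~\ref{prop:oracle}, which is legitimate since it is proved independently. The paper stays self-contained instead, using max--min optimality for $\Gdisc_{n,S}(\mu)\le\gamma_{n,S}$ and rounding for the reverse. Your rounding ``alternative'' alone yields only $\Gdisc_{n,S}(\mu)\ge\gamma_{n,S}-C_{\mu}(U)$. The missing direction follows from your same zero-sum trick, since any feasible integer $m$ has some $m_j\le r_{j,n,S}$.
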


\begin{proof}
We invert the increasing indices to construct the equalizer, compare it with
the fixed-fraction optimizer, and round its coordinates for integer budgets.
For fixed $n$, the map $m\mapsto\Phi_j(n,m)$ is continuous on $(0,\infty)$ and
strictly increasing. Indeed,
\[
  \partial_m\Phi_j(n,m)=\KL{\mu_j}{x_j(n,m)}+\frac1m>0,
  \qquad
  x_j(n,m)=\frac{n\mu_{\istar}+m\mu_j}{n+m}.
\]
Since $\log m\to+\infty$, we have $\Phi_j(n,m)\to+\infty$ as $m\to\infty$.
The inverse on $[1,\infty)$ is therefore well defined for every
$\gamma\ge\Phi_j(n,1)$.

Let
\[
  D_{\mu,\star}^{\max}:=\max_{k\in\mathcal{J}}\KL{\mu_k}{\mu_{\istar}}.
\]
Evaluating $\Phi_k(n,1)$ at $x=\mu_{\istar}$ gives
$\gamma_{\min}(n)\le D_{\mu,\star}^{\max}$. Since $\Phi_j(n,m)\ge\log m$, any
$m\ge\exp(D_{\mu,\star}^{\max})$ satisfies $\Phi_j(n,m)\ge\gamma_{\min}(n)$. Hence
\[
  \Phi_j^{-1}(n,\gamma_{\min}(n))\le\exp(D_{\mu,\star}^{\max}),
  \qquad
  S_0(n,\mu)\le\overline{S}_{\mu}:=d_K\exp(D_{\mu,\star}^{\max}).
\]

For $S\ge S_0(n,\mu)$, define
\[
  F_n(\gamma):=\sum_{j\in\mathcal{J}}\Phi_j^{-1}(n,\gamma),
  \qquad \gamma\ge\gamma_{\min}(n).
\]
Then $F_n$ is continuous and strictly increasing,
$F_n(\gamma_{\min}(n))=S_0(n,\mu)$, and $F_n(\gamma)\to\infty$ as
$\gamma\to\infty$. Therefore $F_n(\gamma)=S$ has a unique solution
$\gamma_{n,S}$, and the corresponding coordinates
$r_{j,n,S}:=\Phi_j^{-1}(n,\gamma_{n,S})$ give the equalizer. This equalizer
maximizes the real-valued max-min problem: if every index is at least $\gamma$,
then each coordinate must satisfy $m_j\ge\Phi_j^{-1}(n,\gamma)$.
Feasibility therefore requires
$\sum_{j\in\mathcal{J}}\Phi_j^{-1}(n,\gamma)\le S$; the largest such $\gamma$ is
exactly $\gamma_{n,S}$.

Let $\lambda_{\alpha}=\Tstar_{\alpha}(\mu)^{-1}$ and let $w^{\alpha}(\mu)$ be the
unique continuous optimizer from Proposition~\ref{prop:fixedalpha}. Writing
$q_j=r_{j,n,S}/t$, the equalizer identity gives
\[
  \gamma_{n,S}
  =
  tI_j^{\alpha}(q_j)+\log t+\log q_j
  \qquad\forall j\in\mathcal{J}.
\]
For the upper bound, choose $j$ minimizing $I_j^{\alpha}(q_j)$. Since
$\sum_j q_j=1-\alpha$, Proposition~\ref{prop:fixedalpha} gives
$\min_jI_j^{\alpha}(q_j)\le\lambda_{\alpha}$, and $\log q_j\le0$. Hence
$\gamma_{n,S}\le t\lambda_{\alpha}+\log t$. For the lower bound, use
$m_j=tw_j^{\alpha}(\mu)$. Since
$\inf_{\alpha\in U,j}w_j^{\alpha}(\mu)>0$, these coordinates are at least one
for all sufficiently large $t$, so the allocation is feasible and
\[
  \gamma_{n,S}
  \ge
  \min_j\left\{tI_j^{\alpha}(w_j^{\alpha}(\mu))+\log t+\log w_j^{\alpha}(\mu)\right\}
  \ge
  t\lambda_{\alpha}+\log t-C_{\mu}(U).
\]
This proves the estimate for $\gamma_{n,S}$ for all sufficiently large $t$.

We next control $r_{j,n,S}$. Let
$c_U:=\inf_{\alpha\in U,j}w_j^{\alpha}(\mu)>0$. If $q_j<c_U/2$, then uniformly in
$\alpha\in U$ the strict monotonicity of $I_j^{\alpha}$ gives
$I_j^{\alpha}(q_j)\le\lambda_{\alpha}-\rho_U$ for some $\rho_U>0$. For large
$t$, this contradicts the lower bound on $\gamma_{n,S}$. Thus, for all
sufficiently large $t$, $q_j\ge c_U/2$. Since
$\log q_j=O_{\mu,U}(1)$, the equalizer identity and the estimate for $\gamma_{n,S}$
imply
\[
  I_j^{\alpha}(q_j)=\lambda_{\alpha}+O_{\mu,U}(1/t).
\]
On $[c_U/2,1]$, the derivative $\partial_{\omega}I_j^{\alpha}(\omega)$ is
uniformly bounded away from zero. The mean value theorem yields
$\lvert q_j-w_j^{\alpha}(\mu)\rvert\le C_{\mu}(U)/t$, the asserted estimate for
$r_{j,n,S}$ for all sufficiently large $t$.
For any fixed upper bound on $t$, only finitely many integer $n$ occur, and
their admissible real budgets $S$ form compact sets. Since
$1\le r_{j,n,S}\le S$, continuity of $\Phi_j(n,\cdot)$ bounds
$\gamma_{n,S}=\Phi_j(n,r_{j,n,S})$ on these sets. The quantities
$t\lambda_{\alpha}$, $\log t$, and $tw_j^{\alpha}(\mu)$ are bounded there as
well. Enlarging the remainder constants therefore gives both estimates
uniformly over the small-budget range.

Finally, assume $S$ is an integer. The continuous equalizer maximizes the
real-valued max-min problem, so
$\Gdisc_{n,S}(\mu)\le\gamma_{n,S}$. Conversely,
Lemma~\ref{lem:rounding} rounds $r_{n,S}$ to integers
$\overline{r}_j\ge1$ with $\sum_j\overline{r}_j=S$ and
$\lvert\overline{r}_j-r_{j,n,S}\rvert\le1$. Lemma~\ref{lem:phi-slopes} gives
$\Phi_j(n,\overline{r}_j)\ge\gamma_{n,S}-C_{\mu}(U)$, and hence
$\Gdisc_{n,S}(\mu)\ge\gamma_{n,S}-C_{\mu}(U)$.
\end{proof}

\subsection{Guard-Count Control}
\label{app:guard-count}

The number of guard activations grows at most as the square root of time.
We prove this by counting activations within epochs of the guard threshold.

\begin{lemma}[Deterministic bound on the number of guard activations]
\label{lem:guard-count}
For the enabled guard in Algorithm~\ref{alg:ebtci}, $G_{t+1}=1$ if and only if
$\min_{i\in[K]}N_{t,i}<\sqrt{t/K}$, and each guarded pull selects a
least-sampled arm. Then for all $t\ge K$,
\[
  G_t^{\mathrm{tot}}:=\sum_{u=K}^{t-1}G_{u+1}
  \le
  K\left(1+\floor{\sqrt{t/K}}\right)
  \le
  \sqrt{Kt}+K.
\]
\end{lemma}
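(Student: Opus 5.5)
I will prove the bound by an epoch argument on the guard threshold $\sqrt{u/K}$. For each integer $k\ge0$, let the $k$-th epoch be the set of rounds $u\ge K$ with $\floor{\sqrt{u/K}}=k$, that is, $kK^2\le\ldots$; more precisely, $k^2K\le u<(k+1)^2K$. On such a round the guard fires exactly when $\min_iN_{u,i}<\sqrt{u/K}$. Since counts are integers and $k\le\sqrt{u/K}<k+1$, firing is equivalent to $\min_iN_{u,i}\le k$, with the caveat that $\sqrt{u/K}$ may equal $k$ exactly. To avoid this caveat I will use the weaker implication: if the guard fires at round $u+1$ in epoch $k$, then $\min_iN_{u,i}\le k$. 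This implication needs only $\min_iN_{u,i}<\sqrt{u/K}<k+1$.

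The key claim is that within a single epoch $k$ there are at most $K$ guarded pulls. Each guarded pull selects a least-sampled arm, whose count is at most $k$, and raises that count by one. Counts never decrease. Consider the potential $\Psi_u:=\#\{i:N_{u,i}\le k\}$. A guarded pull in epoch $k$ hits an arm with count at most $k$. I will not argue that the count goes to $k+1$. Instead I will track, for each arm, how many guarded pulls it receives in epoch $k$ while its count is at most $k$.

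The main obstacle is that an arm might be guard-pulled repeatedly within one epoch while its count climbs from far below $k$. To handle this, I will bound guarded pulls cumulatively rather than per epoch. Any guarded pull at a round $u<t$ hits an arm whose count before the pull is at most $\floor{\sqrt{u/K}}\le\floor{\sqrt{t/K}}=:k_t$. An arm can receive at most $k_t+1$ pulls while its pre-pull count lies in $\{0,1,\dots,k_t\}$. In fact the count is at least $1$ after initialization, so the bound is $k_t$ pulls. Summing over the $K$ arms gives
\[
  G_t^{\mathrm{tot}}\le K(1+\floor{\sqrt{t/K}}).
\]
This cumulative argument removes the need for the per-epoch claim entirely. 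The key monotonicity is that $\floor{\sqrt{u/K}}$ is nondecreasing in $u$. Each guarded pull strictly increases an arm count that lies in the bounded range $[1,k_t]$, and such increments are limited per arm.

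The final inequality is $K(1+\floor{\sqrt{t/K}})\le K+K\sqrt{t/K}=K+\sqrt{Kt}$. The "if and only if" characterization and the least-sampled selection are read off directly from Algorithm~\ref{alg:ebtci}.
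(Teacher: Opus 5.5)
Your argument is correct, and it takes a genuinely different route from the paper's.

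The paper counts guarded pulls per epoch. It proves by induction that every arm has at least $r$ pulls at time $Kr^2$. This step uses that, after the first round of epoch $r$, the guard fires on every round while some count still equals $r$, and that the epoch has $K(2r+1)-1\ge K$ such rounds. It then concludes that inside epoch $r$ each guarded pull moves a count from $r$ to $r+1$, so there are at most $K$ guarded pulls per epoch, and it multiplies by the number of epochs.

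You count per arm instead. A guarded pull at round $u+1\le t$ targets a least-sampled arm with $N_{u,I_{u+1}}<\sqrt{u/K}$, so its pre-pull count lies in $\{1,\dots,\lfloor\sqrt{t/K}\rfloor\}$. Every pull of an arm increments its count by one, so distinct pulls of the same arm have distinct pre-pull counts. This gives at most $\lfloor\sqrt{t/K}\rfloor$ guarded pulls per arm.

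Your route needs no catch-up invariant. It uses only the ``only if'' half of the trigger together with least-sampled selection, so it would still hold for a guard that fires less often. It also yields the slightly sharper bound $K\lfloor\sqrt{t/K}\rfloor$. The paper's route costs an induction, but it produces as a by-product the deterministic lower bound $\min_i N_{t,i}\ge\lfloor\sqrt{t/K}\rfloor-1$, which Proposition~\ref{prop:guard} reuses.

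Your first two paragraphs (the per-epoch claim and the potential $\Psi_u$) are abandoned scaffolding and can be deleted; the third paragraph is a complete proof by itself.
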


\begin{proof}
There are at most $K$ guard activations before the threshold leaves each epoch.
For each integer $r\ge0$, define
\[
  \mathcal{I}_r:=\left\{u\ge K:r\le\sqrt{u/K}<r+1\right\}.
\]
We first show by induction that every arm has count at least $r$ at time
$Kr^2$, for each integer $r\ge1$. Initialization gives the case $r=1$.
Assume the claim at $Kr^2$. Within $\mathcal{I}_r$, every guarded pull raises
an arm's count from $r$ to $r+1$, so at most $K$ such pulls are possible.
After the first round of this epoch, the threshold strictly exceeds $r$;
while any count remains $r$, every subsequent round is guarded. The epoch
has $K(2r+1)$ rounds, so all counts reach $r+1$ before the next epoch begins.
This proves the induction and the bound of $K$ guard pulls per epoch.
Once all counts reach $r+1$, the guard cannot activate again in
$\mathcal{I}_r$. There are at most
$1+\floor{\sqrt{t/K}}$ epochs up to time $t$, which proves the claim.
\end{proof}

\subsection{Uniform Challenger-Index Lower Bound}
\label{app:spread}

We track deficits in the challenger counts to bound the smallest population
index relative to the discrete benchmark. Let $U\Subset(0,1)$ be a compact
interval containing $\beta$, and let $t_U$ be a $U$-stabilization time:
\[
  B_{t+1}=\istar,
  \qquad
  \alpha_t:=\frac{N_{t,\istar}}{t}\in U,
  \qquad
  \forall t\ge t_U.
\]
Here the local term $U$-stabilization time means a random time
$t_U\in\{K,K+1,\dots\}\cup\{\infty\}$
satisfying these conditions, even if $\beta$ is an endpoint of $U$.
The main-text convention additionally requires $\beta$ to lie in the
interior of $U$; all applications to Proposition~\ref{prop:envelope} use
that convention.
For $s\ge t_U$, define
\[
  n_s:=N_{s,\istar},
  \qquad
  S_s:=s-n_s,
  \qquad
  m_{s,j}:=N_{s,j},\quad j\in\mathcal{J}.
\]
Then $\sum_{j\in\mathcal{J}}m_{s,j}=S_s$ and
$Q_s(j)=\Phi_j(n_s,m_{s,j})$. Define the empirical penalized index and recall
its population counterpart:
\[
  P_t(j):=V_t(\istar,j)+\log N_{t,j},
  \qquad
  Q_t(j)=V_t^{\mu}(\istar,j)+\log N_{t,j}.
\]
For $\eta\in(0,1)$, define the uniform error bound
\[
  \eps_t
  :=
  C_{\mu}\left(\sqrt{tH_t(\eta)}+H_t(\eta)\right).
\]
On the event $\cG_{\eta}$ from Theorem~\ref{thm:symmetric-glr-control},
simultaneously for all $t\ge K$ and all $j\in\mathcal{J}$,
\[
  \lvert P_t(j)-Q_t(j)\rvert
  =
  \lvert V_t(\istar,j)-V_t^{\mu}(\istar,j)\rvert
  \le
  \eps_t.
\]

\begin{lemma}[Challenger-budget uniform lower bound]
\label{lem:spread}
Fix a compact interval $U\Subset(0,1)$ containing $\beta$. There exists a
constant $C_{\mu}(U)<\infty$ such that, on $\cG_{\eta}$, for every
$U$-stabilization time $t_U$ and every $s\ge t_U$,
\begin{equation}
\label{eq:spread-lower-envelope}
  \min_{j\in\mathcal{J}}Q_s(j)
  \ge
  \Gdisc_{n_s,S_s}(\mu)
  -
  C_{\mu}(U)
  \left(
    t_U+G_s^{\mathrm{tot}}+\eps_s+
    \sup_{t_U\le u\le s}u\lvert\alpha_u-\beta\rvert+1
  \right).
\end{equation}
For the unguarded rule, $G_s^{\mathrm{tot}}\equiv0$.
\end{lemma}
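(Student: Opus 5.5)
We track the deficit of the worst challenger below a moving equalizer level and show that the penalized challenger rule repairs this deficit quickly. Fix $s\ge t_U$ and let $\gamma_u:=\gamma_{n_u,S_u}$ be the continuous equalizer level of Lemma~\ref{lem:continuous-equalizer} at each time $u\in[t_U,s]$. That lemma gives $\lvert\gamma_u-\Gdisc_{n_u,S_u}(\mu)\rvert\le C_{\mu}(U)$, so it suffices to bound $\gamma_s-\min_jQ_s(j)$. The plan has three steps.

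\emph{Step 1 (last time the worst arm was pulled).} Let $j$ attain $\min_jQ_s(j)$. Let $u\le s$ be the last round after $t_U$ at which $j$ was the selected challenger on an unguarded round, or $u=t_U$ if there is none. Since $m_{\cdot,j}$ is frozen on $(u,s]$, only the leader count $n$ moves $Q_\cdot(j)$ there. Indeed, $Q_\cdot(j)$ is nondecreasing in $n$ and $\partial_n\Phi_j$ is bounded.

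\emph{Step 2 (comparison at $u$).} Because $B_{u+1}=\istar$, the selection rule gives $P_u(j)\le P_u(k)$ for all $k$. On $\cG_\eta$ this yields $Q_u(j)\le\min_kQ_u(k)+2\eps_u$. Selecting $j$ as challenger does not by itself force $Q_u(j)$ to be low. What we need instead is that $j$ was not too far below the others, which we get by an averaging argument. If every $Q_u(k)$ were below $\gamma_u-D$, the slope bounds of Lemma~\ref{lem:phi-slopes} would force every $m_{u,k}<r_{k,n_u,S_u}-D/C_{\mu}(U)$. Summing gives $S_u<S_u$, a contradiction. Hence $\max_kQ_u(k)\ge\gamma_u$. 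Combining this with the selection comparison, the minimum index is within $2\eps_u$ of the selected index.

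To upgrade the comparison to the minimum we use a potential. Set $\Psi_u:=\sum_k(r_{k,n_u,S_u}-m_{u,k})_+$. Every unguarded challenger pull goes to an arm whose empirical index is minimal. By the slope bounds, that arm's deficit is at least $\max_k(\cdot)_+$ minus $O(\eps_u)$. So such a pull reduces $\Psi$ unless all deficits are already $O(\eps_u)$.

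Meanwhile $\Psi$ can increase only through drift of the targets $r_{k,n,S}=tw_k^{\alpha}(\mu)+O(1)$, which has two sources:
\begin{itemize}
\item The change in $\alpha$ is controlled by $u\lvert\alpha_u-\beta\rvert$ via the $C^1$ regularity in Proposition~\ref{prop:fixedalpha}.
\item Linear growth at rate $(1-\beta)$ per step is matched by the challenger pulls, since their rate is also $1-\beta$ up to $u\lvert\alpha_u-\beta\rvert$ and $G^{\mathrm{tot}}$.
\end{itemize}

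\emph{Step 3 (accumulate).} Telescoping $\Psi$ from $t_U$, where $\Psi_{t_U}\le C\,t_U$ trivially, to $s$ gives
\[
\Psi_s\le C_{\mu}(U)\Bigl(t_U+G_s^{\mathrm{tot}}+\eps_s+\sup_{t_U\le u\le s}u\lvert\alpha_u-\beta\rvert+1\Bigr).
\]
Converting the deficit of arm $j$ back to index values via Lemma~\ref{lem:phi-slopes} gives \eqref{eq:spread-lower-envelope}. Here we use that $\eps_u$ is nondecreasing in $u$.

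\emph{Main obstacle.} The hardest step is the potential-decrease claim in Step~2. It must hold along the entire path, including arms whose $m_{u,k}$ exceeds its target, and it must cope with rounds where the $\beta$-coin sends the pull to the leader. I would first try a cleaner alternative to the potential: bound $\max_k(r_k-m_k)_+$ directly via a ``last crossing'' argument. That is, consider the last time $j$'s deficit was below the threshold, and argue that afterwards every challenger pull went to arms with larger deficit. Summing those pulls then exhausts the slack of order the displayed error terms.
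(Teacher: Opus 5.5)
There is a genuine gap in Steps~2--3. Your potential $\Psi_u=\sum_k(r_{k,n_u,S_u}-m_{u,k})_+$ is measured against the \emph{moving} equalizer, and round-by-round telescoping of it does not close.

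Challenger rounds are in fact harmless. There $n_u$ is fixed and $S_u$ grows by one, so $\gamma_{n_u,S_u}$ increases and every $r_{k,n_u,S_u}=\Phi_k^{-1}(n_u,\gamma_{n_u,S_u})$ increases, with increments summing to one. Pulling an arm with deficit above one therefore does not increase $\Psi$.

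The trouble is exactly the leader rounds you flag as the main obstacle, and your two bullets do not handle them. On a leader round, $S_u$ and all $m_{u,k}$ are frozen while $n_u$ grows. Differentiating $\Phi_k(n,r_k)=\gamma$, $\sum_kr_k=S$ gives $\partial_nr_k=(\partial_n\gamma-\KL{\mu_{\istar}}{x_k})/\partial_m\Phi_k$ with $\sum_k\partial_nr_k=0$, where $x_k$ is the pooled mean. To leading order this equals $(1-\alpha)^2\partial_{\alpha}\bigl(w_k^{\alpha}(\mu)/(1-\alpha)\bigr)$. It is of order one and of mixed sign unless the divergences $\KL{\mu_{\istar}}{x_k}$ coincide across $k$, which fails generically. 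It is present even when $\alpha_u=\beta$ exactly.

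So this drift is not controlled round by round by $u\lvert\alpha_u-\beta\rvert$. That quantity only bounds the net displacement at a single time, $\lvert r_{k,n_u,S_u}-p_k^{\beta}S_u\rvert\le C_{\mu}(U)(u\lvert\alpha_u-\beta\rvert+1)$, where $p_k^{\beta}:=w_k^{\beta}(\mu)/(1-\beta)$. Because $\Psi$ is a positive-part functional and the set of arms with positive deficit changes along the path, you cannot trade the sum of per-round increments for this net displacement. Per-round accounting gives only $\Psi_s=O(s)$. Your bullet ``rate $1-\beta$ is matched by challenger pulls'' is just the identity $\sum_kr_{k,n,S}=S=\sum_km_{u,k}$ and carries no coordinatewise information.

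The missing idea is to run the potential against a target that does not move on leader rounds: $D_{u,j}:=p_j^{\beta}S_u-N_{u,j}$. A leader pull leaves $D_u$ unchanged. A challenger pull of $a$ adds $p^{\beta}-\one_{\{j=a\}}$ with $p^{\beta}\ge0$ and $\sum_jp_j^{\beta}=1$. Hence $D_u^+:=\sum_j(D_{u,j})_+$ rises by at most one, and does not rise when $D_{u,a}>1$. Each guard pull adds at most one, which produces the $G_s^{\mathrm{tot}}$ term.

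The displacement bound above is then used only statically, in two places:
\begin{enumerate}
\item At each selection round, combined with the slope bounds and $\lvert P_u-Q_u\rvert\le\eps_u$, to get $D_{u,a}\ge\kappa_{\mu}(U)\max_jD_{u,j}-A_{\mu}(U)(\eps_u+u\lvert\alpha_u-\beta\rvert+1)$.
\item Once at time $s$, to convert $D_s^+$ back to $\max_j(r_{j,n_s,S_s}-N_{s,j})_+$ and then to index values.
\end{enumerate}
This is why only $\sup_{t_U\le u\le s}u\lvert\alpha_u-\beta\rvert$ appears, rather than a cumulative sum.

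Two smaller corrections. First, the selection step gives the selected deficit as at least $\kappa_{\mu}(U)$ times the maximal one, with $\kappa_{\mu}(U)$ the ratio of the two slope bounds, not ``the maximal deficit minus $O(\eps_u)$''. Second, Step~1 and the averaging claim $\max_kQ_u(k)\ge\gamma_u$ play no role, since being selected bounds $Q_u(j)$ from above rather than below.
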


\begin{proof}
We show that unguarded challenger pulls serve a sufficiently large deficit,
then bound the total positive deficit to control every population index.
Let $\overline{S}_{\mu}$ be the uniform cutoff from
Lemma~\ref{lem:continuous-equalizer} and set
\[
  \overline{\alpha}_U:=\sup U<1,
  \qquad
  s_{\mathrm{eq}}:=\ceil{\frac{\overline{S}_{\mu}}{1-\overline{\alpha}_U}}.
\]
If $u\ge s_{\mathrm{eq}}$ and $\alpha_u\in U$, then
$S_u=(1-\alpha_u)u\ge\overline{S}_{\mu}$, so the continuous equalizer
$(\gamma_{n_u,S_u},r_{n_u,S_u})$ exists. If $s<s_{\mathrm{eq}}$, only finitely
many count configurations are possible; after enlarging $C_{\mu}(U)$,
\eqref{eq:spread-lower-envelope} holds because the factor in parentheses is
at least one.
We therefore assume $s\ge s_{\bullet}:=\max\{t_U,s_{\mathrm{eq}}\}$. Then the
equalizer exists for every $u\in[s_{\bullet},s]$.

Define the fixed-$\beta$ challenger target
\[
  p_j^{\beta}:=\frac{w_j^{\beta}(\mu)}{1-\beta},
  \qquad j\in\mathcal{J},
\]
so that $p_j^{\beta}>0$ and $\sum_jp_j^{\beta}=1$. For
$u\in[s_{\bullet},s]$, set
\[
  D_{u,j}:=p_j^{\beta}S_u-N_{u,j},
  \qquad j\in\mathcal{J}.
\]
Write $D_u:=(D_{u,j})_{j\in\mathcal{J}}$; then $\sum_jD_{u,j}=0$.
Lemma~\ref{lem:continuous-equalizer} and the identity
$S_u=(1-\alpha_u)u$ give
\[
  r_{j,n_u,S_u}-p_j^{\beta}S_u
  =
  u\left(
    w_j^{\alpha_u}(\mu)-\frac{1-\alpha_u}{1-\beta}w_j^{\beta}(\mu)
  \right)+O_{\mu,U}(1).
\]
Consider the map
\[
  \alpha\mapsto w_j^{\alpha}(\mu)-\frac{1-\alpha}{1-\beta}w_j^{\beta}(\mu).
\]
By Proposition~\ref{prop:fixedalpha}, its derivative
$\partial_{\alpha}w_j^{\alpha}(\mu)+p_j^{\beta}$ is uniformly bounded for
$\alpha,\beta\in U$, since $p_j^{\beta}\le1$. The map vanishes at
$\alpha=\beta$, so, uniformly over
$u\in[s_{\bullet},s]$,
\begin{equation}
\label{eq:spread-target-continuous-error}
  \lvert r_{j,n_u,S_u}-p_j^{\beta}S_u\rvert
  \le
  C_{\mu}(U)(u\lvert\alpha_u-\beta\rvert+1).
\end{equation}

We first relate the selected arm's deficit to the largest deficit.
Let $u+1\le s$ be an unguarded challenger-pull round with
$u\ge s_{\bullet}$, and let $a=C_{u+1}$. Put
$M_u:=\max_{j\in\mathcal{J}}D_{u,j}$. We claim that, for constants
$\kappa_{\mu}(U)\in(0,1]$ and $A_{\mu}(U)<\infty$,
\begin{equation}
\label{eq:spread-largest-deficit}
  D_{u,a}
  \ge
  \kappa_{\mu}(U)M_u
  -
  A_{\mu}(U)(\eps_u+u\lvert\alpha_u-\beta\rvert+1).
\end{equation}
Let $j^{+}\in\argmax_{j\in\mathcal{J}}D_{u,j}$ and define
$e_{u,j}:=r_{j,n_u,S_u}-N_{u,j}$. By
\eqref{eq:spread-target-continuous-error},
\begin{equation}
\label{eq:spread-positive-excess}
  e_{u,j^{+}}
  \ge
  M_u-C_{\mu}(U)(u\lvert\alpha_u-\beta\rvert+1).
\end{equation}
Choose the threshold constant in the following case split larger than both
the error coefficient in \eqref{eq:spread-positive-excess} and the error
coefficient in the lower bound for $\gamma_{n_u,S_u}-Q_u(a)$ below divided
by $c_{\mu}(U)$. These coefficients are fixed by the slope bounds and
\eqref{eq:spread-target-continuous-error}.
If $M_u\le C_{\mu}(U)(\eps_u+u\lvert\alpha_u-\beta\rvert+1)$, then
\eqref{eq:spread-largest-deficit} follows after increasing $A_{\mu}(U)$,
because the zero-sum property implies $D_{u,a}\ge-d_KM_u$. Otherwise
$e_{u,j^{+}}>0$, and the slope lower bound in Lemma~\ref{lem:phi-slopes} gives
\[
  Q_u(j^{+})
  \le
  \gamma_{n_u,S_u}
  -c_{\mu}(U)M_u
  +C_{\mu}(U)(u\lvert\alpha_u-\beta\rvert+1).
\]
Since $a$ minimizes $P_u$ and $\lvert P_u(j)-Q_u(j)\rvert\le\eps_u$ on
$\cG_{\eta}$,
\[
  Q_u(a)\le Q_u(j^{+})+2\eps_u.
\]
The difference $\gamma_{n_u,S_u}-Q_u(a)$ is bounded below by
$c_{\mu}(U)M_u-C_{\mu}(U)(\eps_u+u\lvert\alpha_u-\beta\rvert+1)$. In the present
case the choice of threshold makes this lower bound positive, so
$e_{u,a}>0$, and the slope upper bound
yields
\[
  e_{u,a}
  \ge
  c'_{\mu}(U)M_u
  -C'_{\mu}(U)(\eps_u+u\lvert\alpha_u-\beta\rvert+1).
\]
Combining this with \eqref{eq:spread-target-continuous-error} proves
\eqref{eq:spread-largest-deficit}.

For a real $x$, write $(x)_{+}:=\max\{x,0\}$. Define
$D_u^{+}:=\sum_{j\in\mathcal{J}}(D_{u,j})_{+}$. Since the deficits sum to
zero, $M_u\le D_u^{+}$. Fix the final time $s$ and set
\[
  \Xi_s:=\eps_s+
  \sup_{t_U\le v\le s}v\lvert\alpha_v-\beta\rvert+1,
  \qquad
  \Lambda_s:=\frac{A_{\mu}(U)\Xi_s+1}{\kappa_{\mu}(U)}.
\]
Since $\eps_u$ is nondecreasing, \eqref{eq:spread-largest-deficit}
gives the following bound for every unguarded challenger pull
$u+1\le s$ with $u\ge s_{\bullet}$:
\[
  D_{u,C_{u+1}}
  \ge
  \kappa_{\mu}(U)M_u-A_{\mu}(U)\Xi_s.
\]
If $u+1$ is a leader pull, then $D_{u+1}=D_u$. If it is a challenger pull of arm
$a$, then
\[
  D_{u+1,j}=D_{u,j}+p_j^{\beta}-\one\{j=a\}.
\]
Consider an unguarded challenger pull. If $M_u>\Lambda_s$, then
$D_{u,a}>1$. Adding $p^{\beta}:=(p_j^{\beta})_{j\in\mathcal{J}}$ can increase
$D^{+}$ by at most
$\sum_jp_j^{\beta}=1$, while subtracting one from coordinate $a$ decreases the
positive mass by exactly one, since that coordinate remains positive.
Hence $D_{u+1}^{+}\le D_u^{+}$. If
$M_u\le\Lambda_s$, then $D_u^{+}\le d_K\Lambda_s$, and a single challenger
update can increase $D^{+}$ by at most one. Therefore every unguarded challenger
pull satisfies
\[
  D_{u+1}^{+}\le\max\{D_u^{+},d_K\Lambda_s+1\}.
\]
A guard pull need not serve a large deficit: if it pulls $\istar$, the deficit
vector $D$ is unchanged; if it pulls a suboptimal arm, the same update holds,
increasing $D^{+}$ by at most one.

Induction from $s_{\bullet}$ to $s$ gives
\[
  D_s^{+}
  \le
  \max\{D_{s_{\bullet}}^{+},d_K\Lambda_s+1\}
  +G_s^{\mathrm{tot}}.
\]
Moreover,
\[
  D_{s_{\bullet}}^{+}
  \le
  \sum_{j\in\mathcal{J}}p_j^{\beta}S_{s_{\bullet}}
  =S_{s_{\bullet}}
  \le s_{\bullet}
  \le t_U+s_{\mathrm{eq}},
\]
and $s_{\mathrm{eq}}$ is absorbed into the constant. Hence
\[
  D_s^{+}
  \le
  C_{\mu}(U)
  \left(
    t_U+G_s^{\mathrm{tot}}+\eps_s+
    \sup_{t_U\le u\le s}u\lvert\alpha_u-\beta\rvert+1
  \right).
\]
By \eqref{eq:spread-target-continuous-error}, this implies
\[
  \max_{j\in\mathcal{J}}(r_{j,n_s,S_s}-N_{s,j})_{+}
  \le
  C_{\mu}(U)
  \left(
    t_U+G_s^{\mathrm{tot}}+\eps_s+
    \sup_{t_U\le u\le s}u\lvert\alpha_u-\beta\rvert+1
  \right).
\]
Let the right-hand side be $E_s$. Then
$N_{s,j}\ge r_{j,n_s,S_s}-E_s$ for all $j$. If
$N_{s,j}\ge r_{j,n_s,S_s}$, monotonicity gives
$Q_s(j)\ge\gamma_{n_s,S_s}$. Otherwise Lemma~\ref{lem:phi-slopes} gives
$Q_s(j)\ge\gamma_{n_s,S_s}-C_{\mu}(U)E_s$. Thus
\[
  \min_{j\in\mathcal{J}}Q_s(j)
  \ge
  \gamma_{n_s,S_s}-C_{\mu}(U)E_s.
\]
Using $\gamma_{n_s,S_s}\ge\Gdisc_{n_s,S_s}(\mu)-C_{\mu}(U)$ from
Lemma~\ref{lem:continuous-equalizer} proves the lemma.
\end{proof}

\subsection{Proof of Proposition~\ref{prop:envelope}}
\label{app:proof-envelope}

We combine Lemma~\ref{lem:spread} with leader-fraction concentration to
prove the uniform evidence bound.

\begin{proof}[Proof of Proposition~\ref{prop:envelope}]
We absorb guard and allocation fluctuations into the confidence remainder,
then cancel the logarithmic count penalty using the discrete benchmark.
Fix the compact interval $U$ from Proposition~\ref{prop:envelope}, with
$\beta$ in its interior, and $\eta\in(0,1)$. Recall
\[
  \widetilde{H}_s(\eta):=\log\frac{K}{\eta}+\log(es).
\]
Let $\cA_{\eta}$ be the event from Proposition~\ref{prop:alpha}. Define
\[
  \cE_{\mathrm{env}}(\eta):=\cG_{\eta}\cap\cA_{\eta}.
\]
By Theorem~\ref{thm:symmetric-glr-control} and Proposition~\ref{prop:alpha},
$\mathbb{P}_{\mu}(\cE_{\mathrm{env}}(\eta))\ge1-2\eta$. On
$\cE_{\mathrm{env}}(\eta)$, Lemma~\ref{lem:spread} gives, for every
$s\ge t_U$,
\[
  \min_{j\in\mathcal{J}}Q_s(j)
  \ge
  \Gdisc_{n_s,S_s}(\mu)
  -C_{\mu}(U)
  \left(
    t_U+G_s^{\mathrm{tot}}+\eps_s+
    \sup_{t_U\le u\le s}u\lvert\alpha_u-\beta\rvert+1
  \right).
\]
On $\cA_{\eta}$, Proposition~\ref{prop:alpha} applied with $m=t_U$
gives, uniformly over $u\in[t_U,s]$,
\[
  u\lvert\alpha_u-\beta\rvert
  \le
  C_{\beta}t_U+C_{\beta}\sqrt{s\widetilde{H}_s(\eta)}+G_s^{\mathrm{tot}}.
\]
If the guard is disabled, $G_s^{\mathrm{tot}}=0$. If it is enabled,
Lemma~\ref{lem:guard-count} gives
$G_s^{\mathrm{tot}}\le\sqrt{Ks}+K\le C_K\sqrt{s\widetilde{H}_s(\eta)}$.
Since
\[
  \eps_s
  \le
  C_{\mu}\left(\sqrt{s\widetilde{H}_s(\eta)}+\widetilde{H}_s(\eta)\right),
\]
collecting terms gives the population-index bound with constants
$C_{\mu}(U)$ and $\overline{C}_{\mu}(U)$ as in
Proposition~\ref{prop:envelope}.

To pass from population to empirical indices, on $\cG_{\eta}$ we have
\[
  \lvert P_s(j)-Q_s(j)\rvert
  \le
  C_{\mu}
  \left(\sqrt{sH_s(\eta)}+H_s(\eta)\right)
  \le
  C_{\mu}
  \left(\sqrt{s\widetilde{H}_s(\eta)}+\widetilde{H}_s(\eta)\right).
\]
Hence
\[
  \min_jP_s(j)
  \ge
  \min_jQ_s(j)
  -C_{\mu}
  \left(\sqrt{s\widetilde{H}_s(\eta)}+\widetilde{H}_s(\eta)\right).
\]
After entry into the stabilized regime, $B_{s+1}=\istar$, so
$\widehat{\mu}_{s,\istar}\ge\widehat{\mu}_{s,j}$ for every
$j\neq \istar$. Therefore $W_s(\istar,j)=V_s(\istar,j)$ for every
$j\neq \istar$, and
\[
  W_s(\istar,j)=P_s(j)-\log N_{s,j}\ge P_s(j)-\log s.
\]
Combining this with Proposition~\ref{prop:oracle},
\[
  \Gdisc_{n_s,S_s}(\mu)
  \ge
  \frac{s}{\Tstar_{\alpha_s}(\mu)}+
  \log s-C_{\mu}(U),
\]
cancels the logarithmic count penalty $\log s$ and proves the
transportation-cost bound after enlarging the same two constants if needed.
\end{proof}

\subsection{Challenger Growth and Separation}
\label{app:challenger-growth}

The evidence bound forces every suboptimal-arm count to grow linearly.
We use this growth to bound the time at which all separation thresholds are met.

\begin{proof}[Proof of Proposition~\ref{prop:linear}]
We compare the population-index lower bound with an upper bound in terms of
the suboptimal-arm count.
Fix $\eta\in(0,1)$ and work on $\cE_{\mathrm{env}}(\eta)$ from
Proposition~\ref{prop:envelope}. For any $U$-stabilization time $t_U$,
$s\ge t_U$, and $j\neq \istar$, that proposition gives
\[
  Q_s(j)
  \ge
  \frac{s}{\Tstar_{\alpha_s}(\mu)}
  -C_{\mu}(U)
  \left(t_U+\sqrt{s\widetilde{H}_s(\eta)}+\widetilde{H}_s(\eta)\right)
  -\overline{C}_{\mu}(U).
\]
Since $\alpha_s\in U$, the leading term is at least
$\lambda_{\mu,U}s$. On the other hand, evaluating the population transportation
cost at $x=\mu_{\istar}$ yields
\[
  Q_s(j)
  =
  V_s^{\mu}(\istar,j)+\log N_{s,j}
  \le
  \KL{\mu_j}{\mu_{\istar}}N_{s,j}+\log s.
\]
Combining the two inequalities gives the claim.
\end{proof}

\begin{lemma}[Logarithmic separation thresholds]
\label{lem:logsep}
There exists $A_{\mu}<\infty$ such that for all $\eta\in(0,1/2]$,
\[
  n_{\star}(\eta)
  \le
  A_{\mu}
  \left(\log\frac{K}{\eta}+\log\log\frac{e}{\eta}+1\right).
\]
\end{lemma}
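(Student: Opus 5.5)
We need to bound $n_j(\eta)$, the smallest $n$ with $r(n,\eta)\le\Delta_j/4$, where $\Delta_j:=\mu_{\istar}-\mu_j>0$. Set $\Delta_{\min}:=\min_{j\neq\istar}\Delta_j$ and $L:=\log(8K^2/\eta)$. Since $r(n,\eta)\le\Delta_j/4$ is equivalent to
\[
  n\ge\frac{8c_0}{\Delta_j^2}\bigl(L+\log\log(en)\bigr),
\]
it suffices to exhibit one explicit integer $n$ satisfying this inequality with $\Delta_j$ replaced by $\Delta_{\min}$. The minimum defining $n_j(\eta)$ is then at most that $n$, uniformly in $j$, and so is $n_\star(\eta)$.

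The plan is to take $n:=\ceil{2B(L+\log(1+B)+\log(1+L))}$ with $B:=8c_0/\Delta_{\min}^2$, and to verify the inequality $n\ge B(L+\log\log(en))$. Since $n\ge 2BL$ already handles the $L$ term, we only need $n\ge 2B\log\log(en)$, or alternatively to absorb it against the remaining slack. For this I would use the elementary fact that $\log\log(ex)\le \log(ex)\le \log(e)+\log x$, together with the standard inversion: for $x\ge 2B(\log(1+B)+\log(1+L)+1)$, we have $x\ge 2B\log(ex)$ once $x$ exceeds a constant multiple of $B\log(eB)$. The bound $\log x\le \log(2B)+\log(L+\log(1+B)+\log(1+L)+1)+1$ makes $B\log\log(en)\le B\log(en)$ of order $B(\log B+\log(1+L))$, which is dominated by $n/2$ after enlarging the numerical factor $2$ to a suitable universal constant. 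This step, the self-referential inversion of $n\gtrsim L+\log\log n$, is the only mildly delicate point. I would handle it by a crude monotonicity argument. The map $n\mapsto n-B\log\log(en)$ is eventually increasing, so checking the inequality at the chosen explicit $n$ suffices, and we pick the constant large enough that $\log\log(en)\le\log(1+L)+C_\mu$.

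Finally, convert to the stated form. We have $L=\log(K/\eta)+\log(8K)\le 2\log(K/\eta)+\log 8$, using $K/\eta\ge K$. Next, $\log(1+L)\le\log(1+\log 8+2\log(K/\eta))$. This is at most $\log\log(e/\eta)+\log\log(eK)+C$, and $\log\log(eK)\le\log(K/\eta)$, or more simply $\log(1+L)\le L$. The $\log\log(e/\eta)$ term in the statement is then harmless extra slack. The constant $\log(1+B)$ depends only on $\mu$. Collecting gives $n_\star(\eta)\le A_\mu(\log(K/\eta)+\log\log(e/\eta)+1)$ with $A_\mu$ depending on $c_0$ and $\Delta_{\min}$. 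The restriction $\eta\le1/2$ only guarantees that $\log(K/\eta)\ge\log 4>0$, so additive constants can be absorbed.
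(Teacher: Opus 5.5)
Your proposal is correct and takes essentially the same approach as the paper. Both arguments exhibit one explicit count of order $\log(K/\eta)$, absorb the self-referential $\log\log(en)$ term, and conclude by minimality of $n_j(\eta)$: the paper uses $\lceil A(\log(K/\eta)+\log\log(e/\eta)+1)\rceil$ with $A$ large, and you use $\lceil 2B(L+\log(1+B)+\log(1+L))\rceil$ with $B=8c_0/\Delta_{\min}^2$. Your monotonicity remark about $n\mapsto n-B\log\log(en)$ is unnecessary, because $n_j(\eta)$ is defined as a minimum and so a single witness suffices, as the paper notes.
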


\begin{proof}
We compare the logarithmic confidence terms with a sample count proportional
to the confidence scale.
For each $j\neq \istar$, let $\Delta_j:=\mu_{\istar}-\mu_j>0$. The
condition defining $n_j(\eta)$ is
\[
  \frac{c_0}{2n}
  \left(\log\frac{8K^2}{\eta}+\log\log(en)\right)
  \le
  \frac{\Delta_j^2}{16}.
\]
For a fixed positive constant $A$ to be chosen, set
\[
  n=\ceil{A\left(\log\frac{K}{\eta}+\log\log\frac{e}{\eta}+1\right)}.
\]
At this chosen count, $n\le C_K(A+1)\log(e/\eta)$, and hence
$\log\log(en)\le C_K+\log(A+1)+\frac12\log(e/\eta)$ uniformly over
$\eta\in(0,1/2]$. The numerator in the defining condition is therefore
bounded by a constant multiple of the sum of the confidence scale above and
$\log(A+1)$. Since $A$ dominates $\log(A+1)$ as $A$ increases, a sufficiently
large $A$ makes the condition hold for every $j\neq\istar$. By the minimum
defining $n_j(\eta)$, $n_j(\eta)\le n$; no estimate at larger counts is needed.
Taking the maximum over $j$ and absorbing the ceiling into $A_{\mu}$ proves
the claim.
\end{proof}

\begin{proof}[Proof of Corollary~\ref{cor:separation}]
We choose a time proportional to the stabilization time and the confidence
scale, then apply the count-growth bound to meet every separation threshold.
Fix $\eta\in(0,1/4]$ and work on $\cE_{\mathrm{env}}(\eta)$ for any
$U$-stabilization time $t_U$. Let
\[
  R_{\eta}:=\log\frac{K}{\eta}+\log\log\frac{e}{\eta}+1.
\]
By Lemma~\ref{lem:logsep}, $n_{\star}(\eta)\le A_{\mu}R_{\eta}$. Choose
\[
  s=\ceil{C_{\mu,\beta}(U)(t_U+R_{\eta})},
\]
with the constant to be fixed. For any $\rho>0$, the elementary inequalities
$\sqrt{s\widetilde{H}_s(\eta)}\le\rho s+C_{\rho}\widetilde{H}_s(\eta)$ and
$\log(es)\le\rho s+C_{\rho}$ imply
\[
  \sqrt{s\widetilde{H}_s(\eta)}+
  \widetilde{H}_s(\eta)+\log s
  \le
  \rho s+C_{\mu,\rho}(U)R_{\eta}.
\]
Taking $\rho$ small in Proposition~\ref{prop:linear} gives, for each
$j\neq \istar$,
\[
  N_{s,j}
  \ge
  c_{\mu,j}(U)s-C'_{\mu,j}(U)t_U-C''_{\mu,j}(U)R_{\eta}.
\]
Choosing $C_{\mu,\beta}(U)$ sufficiently large ensures $N_{s,j}\ge n_j(\eta)$ for
all $j\neq \istar$. Since $s\ge t_U$, entry into the stabilized regime gives
$N_{s,\istar}=\alpha_s s\ge(\inf U)s$, and the same choice of
$C_{\mu,\beta}(U)$ ensures $N_{s,\istar}\ge n_{\star}(\eta)$. Therefore
$s\ge t_{\mathrm{sep}}(\eta)$, which proves the claim.
\end{proof}

\section{Leader-Fraction Control and Threshold Inversion}
\label{app:conditional}

We bound deviations of the best-arm fraction after permanent leader correctness
and invert the stopping threshold. Together with the evidence envelope, these estimates
prove the stopping bounds in Theorem~\ref{thm:conditional}.

\subsection{Uniform Leader-Fraction Concentration}
\label{app:alpha}

We bound fluctuations of the sampling coins simultaneously over all start times,
so the bound also applies at a random stabilization time. All algorithmic time
indices refer to the continued sampling process defined in
Section~\ref{sec:stable-entry}.

For the start-time range in Proposition~\ref{prop:alpha}, extend the indicators
to initialization rounds by setting $G_{u+1}:=1$ and $B_{u+1}:=I_{u+1}$ for
$1\le u<K$. Here $I_{u+1}$ is the prescribed initialization pull, so
$B_{u+1}$ is only an auxiliary arm label in this range. This convention counts
initialization pulls in $G_t^{\mathrm{tot}}(m)$ when $m<K$; the algorithm and
the total guard count $G_t^{\mathrm{tot}}$, which starts after initialization,
are unchanged. In this proposition, also set $\alpha_t:=N_{t,\istar}/t$ for
every integer $t\ge1$.

\begin{proposition}[Leader-fraction control, uniform in the start time]
\label{prop:alpha}
There exists a constant $C_{\beta}<\infty$ such that for every $\eta\in(0,1)$ there
exists an event $\cA_{\eta}$ with
$\mathbb{P}_{\mu}(\cA_{\eta})\ge1-\eta$ on which the following holds. For all
integers $m\ge1$ and $t\ge m$, if $B_{u+1}=\istar$ for every integer $u\ge m$, then
\[
  \left\lvert
  (N_{t,\istar}-N_{m,\istar})-\beta(t-m)
  \right\rvert
  \le
  C_{\beta}\sqrt{(t-m+1)\log\left(\frac{etm^2}{\eta}\right)}
  +
  G_t^{\mathrm{tot}}(m),
\]
where $G_t^{\mathrm{tot}}(m):=\sum_{u=m}^{t-1}G_{u+1}$ is the number of guard
activations and any initialization pulls in $\{m+1,\dots,t\}$. Consequently,
\[
  \lvert \alpha_t-\beta\rvert
  \le
  \frac{(1+\beta)m}{t}
  +C_{\beta}\sqrt{\frac1t\log\left(\frac{etm^2}{\eta}\right)}
  +\frac{G_t^{\mathrm{tot}}(m)}{t}.
\]
\end{proposition}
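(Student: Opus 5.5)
The plan is to write the best-arm increment as a sum of coin flips plus guarded corrections, and then control the coin part with a single martingale event that holds simultaneously over all start times $m$ and end times $t$. Fix $m\ge1$ and $t\ge m$. On the event that $B_{u+1}=\istar$ for every $u\ge m$, each round $u+1\in\{m+1,\dots,t\}$ is of one of two kinds. If it is unguarded (this forces $u\ge K$), then $\one\{I_{u+1}=\istar\}=U_{u+1}$ whenever $C_{u+1}\neq\istar$, and $C_{u+1}\neq\istar$ holds automatically because the challenger is chosen from $[K]\setminus\{B_{u+1}\}$. If it is guarded, or is an initialization round, then $\one\{I_{u+1}=\istar\}-\beta$ lies in $[-1,1]$. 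This gives the decomposition
\[
  (N_{t,\istar}-N_{m,\istar})-\beta(t-m)
  =\sum_{u=m}^{t-1}(U_{u+1}-\beta)
  +\sum_{u=m}^{t-1}G_{u+1}\bigl(\one\{I_{u+1}=\istar\}-U_{u+1}\bigr).
\]
I will draw the coins $U_{u+1}$ on every round, including guarded and initialization rounds where they are simply ignored. The second sum then has absolute value at most $G_t^{\mathrm{tot}}(m)$.

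The coins $U_{u+1}-\beta$ are i.i.d., centered and bounded, and independent of the leader event, so no adaptivity issue arises. For fixed $m$ and $t$, Hoeffding's inequality gives
\[
  \mathbb{P}\Bigl(\Bigl|\sum_{u=m}^{t-1}(U_{u+1}-\beta)\Bigr|>x\Bigr)\le 2\exp\bigl(-2x^2/(t-m)\bigr).
\]
I will choose a confidence level $\eta_{m,t}=\eta/(c\,m^2t^2)$, with $c$ a constant making $\sum_{m,t}\eta_{m,t}\le\eta$; for instance $\sum_m m^{-2}\sum_t t^{-2}\le(\pi^2/6)^2$. The corresponding threshold is
\[
  x=\sqrt{\tfrac{t-m+1}{2}\log\bigl(2c\,m^2t^2/\eta\bigr)}.
\]
Taking a union bound over all pairs $(m,t)$ and letting $\cA_\eta$ be the intersection of the complements gives $\mathbb{P}_{\mu}(\cA_\eta)\ge1-\eta$.

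It remains to match the stated logarithm. We have $\log(2c\,m^2t^2/\eta)\le C\log(etm^2/\eta)$, because $\log t\le\log(et)$ and the $\log(2c)$ term is absorbed since $\log(etm^2/\eta)\ge1$. Should the problem be the stated $t$ rather than $t^2$, this inequality absorbs it into $C_\beta$. Because $\cA_\eta$ is defined without any reference to the leader, it works for random $m$ as well, in particular $m=t_U$.

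For the consequence, divide the displayed inequality by $t$. I use $N_{m,\istar}\le m$ and $\beta m\le m$, so that
\[
  |N_{t,\istar}-\beta t|\le|\text{increment deviation}|+N_{m,\istar}+\beta m\le(\ldots)+(1+\beta)m.
\]
I also use $\sqrt{t-m+1}\le\sqrt t$.

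The main point requiring care is the first step: checking that on unguarded rounds the pulled arm is $\istar$ exactly when $U_{u+1}=1$, which needs $B_{u+1}=\istar$ and $C_{u+1}\neq B_{u+1}$. Equally important is arranging the coins so that the event does not depend on the random start time. Defining the coins on all rounds as a fixed i.i.d. sequence handles this.
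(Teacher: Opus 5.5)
Your proposal is correct and is essentially the paper's argument: draw a coin on every round, split the increment into coin fluctuations plus a guarded/initialization remainder of absolute value at most $G_t^{\mathrm{tot}}(m)$, and union-bound a Hoeffding tail over start and end times, which gives a coin-only event that applies at a random start such as $m=t_U$. The differences are cosmetic. Because you keep all coins, the fluctuation sum is i.i.d.\ and plain Hoeffding with weights $\eta/(cm^2t^2)$ suffices, whereas the paper masks coins by $\one\{G_{u+1}=0\}$ and applies Azuma--Hoeffding with weights $\eta_m/(6n^2)$, $n=t-m$. Also, your remark that the coins are independent of the leader event is not literally true, since they drive later pulls, but nothing uses it: the implication on $\cA_{\eta}$ is deterministic.
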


\begin{proof}
We apply Azuma--Hoeffding to the centered sampling coins and take a union
bound over horizons and start times. Guarded and initialization pulls contribute
at most their number to the error.
Fix $\eta\in(0,1)$. For each $m\ge1$, set
$\eta_m:=6\eta/(\pi^2m^2)$ so that $\sum_{m\ge1}\eta_m\le\eta$.

For the analysis, draw an independent $U_{u+1}\sim\Ber(\beta)$ at every
round, including guarded and initialization rounds where the coin is unused.
Unused coins do not affect sampling decisions, so this construction leaves the
algorithm's law unchanged. Let $\cF_u$ be the
sigma-field generated by the sampled arms, observed rewards, and strategy
randomization through round $u$, including unused coins through that round.

Fix an integer $m\ge1$. For integers $u\ge m$, define
\[
  Z_{u+1}^{(m)}
  :=
  (U_{u+1}-\beta)\one\{G_{u+1}=0\}.
\]
Since $G_{u+1}$ is $\cF_{u}$-measurable and $U_{u+1}$ is independent of
$\cF_{u}$ with mean $\beta$, one has
\[
  \mathbb{E}_{\mu}\!\left[Z_{u+1}^{(m)}\mid\cF_{u}\right]=0,
  \qquad
  \lvert Z_{u+1}^{(m)}\rvert\le 1.
\]
Thus
\[
  S_{t}^{(m)}
  :=
  \sum_{u=m}^{t-1}Z_{u+1}^{(m)},
  \qquad t\ge m,
\]
is a martingale with bounded increments under $\mathbb{P}_{\mu}$.

For $n\ge1$, put $t=m+n$. Azuma--Hoeffding gives
\[
  \mathbb{P}_{\mu}
  \left(
    \lvert S_{m+n}^{(m)}\rvert
    \ge
    \sqrt{2n\log\!\left(\frac{12n^2}{\eta_m}\right)}
  \right)
  \le
  \frac{\eta_m}{6n^2}.
\]
A union bound over $n\ge1$ gives, with
probability at least $1-\eta_m$,
\[
  \lvert S_t^{(m)}\rvert
  \le
  C_{\beta}
  \sqrt{
    (t-m+1)\log\!\left(\frac{etm^2}{\eta}\right)
  }
  \qquad
  \forall t\ge m,
\]
after increasing the numerical constant $C_{\beta}$.

Suppose that $B_{u+1}=\istar$ for every integer $u\ge m$. On every unguarded
round $u+1$ in this range, the algorithm pulls $\istar$ if and only if
$U_{u+1}=1$. Initialization rounds have $G_{u+1}=1$ and therefore contribute
only to the second sum below. For every $t\ge m$,
\[
\begin{aligned}
  &(N_{t,\istar}-N_{m,\istar})-\beta(t-m)  \\
  &=
  \sum_{u=m}^{t-1}
  (U_{u+1}-\beta)\one\{G_{u+1}=0\}
  +
  \sum_{u=m}^{t-1}
  \left(\one\{I_{u+1}=\istar\}-\beta\right)
  \one\{G_{u+1}=1\}.
\end{aligned}
\]
The first sum is $S_t^{(m)}$. The second sum has absolute value at most
$G_t^{\mathrm{tot}}(m)$, since each guarded or initialization pull contributes
a term of absolute value at most one. Therefore,
\[
  \left\lvert
    (N_{t,\istar}-N_{m,\istar})-\beta(t-m)
  \right\rvert
  \le
  C_{\beta}
  \sqrt{
    (t-m+1)\log\!\left(\frac{etm^2}{\eta}\right)
  }
  +
  G_t^{\mathrm{tot}}(m).
\]
Intersecting the events over all $m\ge1$ and using
$\sum_{m\ge1}\eta_m\le\eta$ gives $\cA_{\eta}$. The fraction bound follows by
adding the initial discrepancy $N_{m,\istar}-\beta m$ and dividing by $t$.
\end{proof}

\subsection{Entrance into a Compact Neighborhood of \texorpdfstring{$\beta$}{beta}}
\label{app:compactU}

We use the fraction bound to control when $\alpha_t$ remains near $\beta$.
Let $t_{\mathrm{lead}}\ge K$ denote an integer time from which
$B_{u+1}=\istar$ for every integer $u\ge t_{\mathrm{lead}}$ on the continued
sampling process. This time requires only permanent leader correctness;
$t_U$ also requires $\alpha_t\in U$ thereafter.

\begin{lemma}[Entrance into a fixed compact neighborhood of $\beta$]
\label{lem:compactU}
Fix $\eta\in(0,1)$ and a compact interval
$U=[\beta-\rho,\beta+\rho]\Subset(0,1)$ with $\rho>0$.
There exists $C_{\beta}(U)<\infty$ such that the following holds. On the event of
Proposition~\ref{prop:alpha}, if $t_{\mathrm{lead}}\le L$ with $L\ge1$, then
\[
  \alpha_t\in U
  \qquad
  \forall t\ge
  C_{\beta}(U)
  \left(L+K+\log\frac{e(L+1)^2}{\eta}\right).
\]
When the guard is disabled, the additive $K$ term can be dropped.
\end{lemma}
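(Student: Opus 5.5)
Work on the event $\cA_{\eta}$ of Proposition~\ref{prop:alpha} and apply its fraction bound with start time $m:=\max\{t_{\mathrm{lead}},K\}\le L+K$. For every $u\ge m$ the leader is $\istar$ (for $u<K$ the auxiliary convention is in force, but we only need $u\ge m\ge t_{\mathrm{lead}}$), so for all $t\ge m$
\[
  \lvert\alpha_t-\beta\rvert
  \le
  \frac{(1+\beta)m}{t}
  +C_{\beta}\sqrt{\frac1t\log\left(\frac{etm^2}{\eta}\right)}
  +\frac{G_t^{\mathrm{tot}}(m)}{t}.
\]
It suffices to make each of the three terms at most $\rho/3$ once $t\ge T_0:=C_{\beta}(U)(L+K+\log(e(L+1)^2/\eta))$. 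Since $m\ge K$, $G_t^{\mathrm{tot}}(m)$ counts only genuine guard activations, so $G_t^{\mathrm{tot}}(m)\le G_t^{\mathrm{tot}}$.

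\textbf{The first and third terms.} The first term is at most $\rho/3$ once $t\ge 6(L+K)/\rho$. For the third term, with the guard disabled $G_t^{\mathrm{tot}}(m)=0$, and then we may take $m=t_{\mathrm{lead}}\le L$ directly, which removes the $K$ from the first term and justifies the final remark. With the guard enabled, Lemma~\ref{lem:guard-count} gives $G_t^{\mathrm{tot}}\le\sqrt{Kt}+K$, so $G_t^{\mathrm{tot}}/t\le\sqrt{K/t}+K/t$. This is at most $\rho/3$ once $t\ge C_\rho K$, which the $K$ term in $T_0$ covers.

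\textbf{The middle term (main obstacle).} The logarithm grows in $t$, so we need a self-consistent choice of $t$. Write
\[
  \log\frac{etm^2}{\eta}\le \log t+2\log\frac{m}{L+1}+\log\frac{e(L+1)^2}{\eta}.
\]
The term $\log(m/(L+1))$ is at most $\log(1+K)$, which is absorbed by the $K$ term; in the unguarded case it is nonpositive. It remains to ensure
\[
  C_{\beta}^2\,\frac{\log t+\Lambda}{t}\le\frac{\rho^2}{9},
  \qquad
  \Lambda:=\log\frac{e(L+1)^2}{\eta}+2\log(1+K).
\]
Use the elementary bound $\log t\le t\rho^2/(18C_\beta^2)+C'_{\beta,\rho}$. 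The condition then reduces to $t\ge (18C_\beta^2/\rho^2)(\Lambda+C'_{\beta,\rho})$. This holds for $t\ge T_0$ once $C_\beta(U)$ is chosen large enough, using that $\Lambda+C'\lesssim\log(e(L+1)^2/\eta)+K$. Because each sufficient condition is monotone in $t$, the bound holds for all $t\ge T_0$, which gives $\alpha_t\in U$ for all such $t$.
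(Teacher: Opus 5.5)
Your proposal is correct and follows essentially the same route as the paper: on $\cA_{\eta}$ you apply Proposition~\ref{prop:alpha} with start time $t_{\mathrm{lead}}$, bound the initial-discrepancy, coin-fluctuation and guarded-pull terms each by $\rho/3$ beyond $C_{\beta}(U)(L+K+\log(e(L+1)^2/\eta))$, and treat the logarithm self-consistently; the only difference is that you use $\log t\le\epsilon t+C_{\epsilon}$ where the paper uses monotonicity of $t\mapsto\log(et(L+1)^2/\eta)/t$ evaluated at $t=CA$. Your $\max\{t_{\mathrm{lead}},K\}$ is harmless but unnecessary, since $t_{\mathrm{lead}}\ge K$ by definition, which also gives $L\ge K$ and makes dropping the $K$ term immediate.
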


\begin{proof}
We bound the initial discrepancy, coin fluctuations, and guarded-pull fraction
uniformly beyond a time proportional to $L$ and the confidence scale.
If the guard is enabled, Proposition~\ref{prop:alpha} yields for all
$t\ge t_{\mathrm{lead}}$,
\[
  \lvert \alpha_t-\beta\rvert
  \le
  \frac{(1+\beta)L}{t}
  +C_{\beta}\sqrt{\frac1t\log\left(\frac{et(L+1)^2}{\eta}\right)}
  +\frac{G_t^{\mathrm{tot}}(t_{\mathrm{lead}})}{t}.
\]
By Lemma~\ref{lem:guard-count}, the last term is at most
$\sqrt{K/t}+K/t$, and it is absent in the unguarded case. Let
\[
  A:=L+K+\log\frac{e(L+1)^2}{\eta}.
\]
For $t\ge CA$, the terms for the initial discrepancy and guarded pulls are each
at most $\rho/3$ once $C$ is large enough. The coin-fluctuation term is also at
most $\rho/3$: the map
$t\mapsto\log(et(L+1)^2/\eta)/t$ is decreasing above an absolute constant, and
at $t=CA$ its numerator is bounded by a constant multiple of $A$. Increasing
$C$ if necessary proves the claim.
\end{proof}

\subsection{Centered Quantile Integration}
\label{app:quantile-integration}

The next results convert upper-quantile bounds into expectation bounds by
integrating the remainder above a deterministic leading term. In this subsection,
$\mathbb{P}$ and $\mathbb{E}$ denote a generic probability law and its expectation.

\begin{theorem}[Centered quantile integration]
\label{thm:centered-quantile-integration}
Let $X\ge0$ be a random variable. Let $A\ge0$ and let
$\rho_X:(0,1]\to[0,\infty)$ satisfy
\[
  \mathbb{P}(X\le A+\rho_X(\eta))\ge1-\eta
  \qquad\forall\eta\in(0,1].
\]
Define the monotone envelope
\[
  \rho_X^{\downarrow}(\eta):=\sup_{\xi\in[\eta,1]}\rho_X(\xi).
\]
Then
\[
  \mathbb{E}[X]
  \le
  A+\int_0^1\rho_X^{\downarrow}(u)\,du.
\]
Moreover, if
\[
  \sum_{k\ge0}2^{-k}\rho_X^{\downarrow}(2^{-k})<\infty,
\]
then
\[
  \mathbb{E}[X]
  \le
  A+
  \sum_{k\ge0}2^{-k}\rho_X^{\downarrow}(2^{-k}).
\]
\end{theorem}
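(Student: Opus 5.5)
We prove the first inequality with the layer-cake formula, writing $\mathbb{E}[X]$ as an integral of tail probabilities. Set $Y:=(X-A)_{+}\ge0$, so that $X\le A+Y$ and $\mathbb{E}[X]\le A+\mathbb{E}[Y]=A+\int_0^\infty\mathbb{P}(Y>y)\,dy$. The envelope $\rho_X^{\downarrow}$ is nonincreasing on $(0,1]$ and dominates $\rho_X$ pointwise. Hence for every $\eta\in(0,1]$,
\[
\mathbb{P}\bigl(Y>\rho_X^{\downarrow}(\eta)\bigr)\le\mathbb{P}\bigl(X>A+\rho_X(\eta)\bigr)\le\eta .
\]
We therefore expect $Y$ to be stochastically dominated by $\rho_X^{\downarrow}(V)$ with $V\sim\mathrm{Unif}(0,1]$, which would give $\mathbb{E}[Y]\le\mathbb{E}[\rho_X^{\downarrow}(V)]=\int_0^1\rho_X^{\downarrow}(u)\,du$. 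The integral is well defined because $\rho_X^{\downarrow}$ is monotone; if it is infinite, the bound is trivial.

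To make the domination rigorous, we introduce the generalized inverse of the envelope. For $y\ge0$, let $\eta(y):=\inf\{\eta\in(0,1]:\rho_X^{\downarrow}(\eta)\le y\}$, with the convention $\eta(y)=1$ when the set is empty. The set is empty only if $y<\rho_X^{\downarrow}(1)$, and in that case $\mathbb{P}(Y>y)\le1=\eta(y)$ trivially. Otherwise, take any $\eta'>\eta(y)$. Monotonicity gives $\rho_X^{\downarrow}(\eta')\le y$, so $\mathbb{P}(Y>y)\le\mathbb{P}(Y>\rho_X^{\downarrow}(\eta'))\le\eta'$. Letting $\eta'\downarrow\eta(y)$ yields $\mathbb{P}(Y>y)\le\eta(y)$. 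By construction, $\eta(y)=\mathrm{Leb}\{u\in(0,1]:\rho_X^{\downarrow}(u)>y\}$ up to a boundary point, and this set is an interval $(0,\eta(y))$ or $(0,\eta(y)]$ because the envelope is nonincreasing. Fubini then gives
\[
\int_0^\infty\eta(y)\,dy=\int_0^1\rho_X^{\downarrow}(u)\,du,
\]
which proves the first claim.

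For the dyadic bound, split $(0,1]$ into the intervals $(2^{-k-1},2^{-k}]$. On the $k$-th interval, monotonicity gives $\rho_X^{\downarrow}(u)\le\rho_X^{\downarrow}(2^{-k-1})$, and the interval has length $2^{-k-1}$. Summing over $k\ge0$ gives
\[
\int_0^1\rho_X^{\downarrow}\le\sum_{k\ge0}2^{-k-1}\rho_X^{\downarrow}(2^{-k-1})=\sum_{k\ge1}2^{-k}\rho_X^{\downarrow}(2^{-k}),
\]
which is at most the stated sum because the added $k=0$ term is nonnegative. The only delicate point is the generalized-inverse step: $\rho_X^{\downarrow}$ need not be continuous, and the plan handles this by taking limits from the right in $\eta'$. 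A cruder alternative is to bound $\mathbb{P}(Y>y)$ by $2^{-k}$ on the band $\rho_X^{\downarrow}(2^{-k+1})<y\le\rho_X^{\downarrow}(2^{-k})$ and sum over bands; this proves the dyadic statement directly and avoids inverses.
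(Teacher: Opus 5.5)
Your proof is correct. It reaches the first inequality from the dual side of the argument the paper uses. The paper works with the upper-tail quantile $q_X(\eta):=\inf\{x\ge0:\mathbb{P}(X\le x)\ge1-\eta\}$. The hypothesis says exactly that $A+\rho_X(\eta)$ is admissible in this infimum, so $q_X\le A+\rho_X\le A+\rho_X^{\downarrow}$ pointwise. The standard quantile representation $\mathbb{E}[X]=\int_0^1q_X(u)\,du$ then closes the argument in two lines. The paper's dyadic step is the same split over $(2^{-k-1},2^{-k}]$ that you use.

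You instead center with $Y=(X-A)_+$ and invert the envelope, turning the family of quantile bounds into a single tail bound $\mathbb{P}(Y>y)\le\eta(y)$. You then identify $\int_0^\infty\eta(y)\,dy$ with $\int_0^1\rho_X^{\downarrow}$ by Tonelli. In effect you reprove, for this instance, the layer-cake identity behind the quantile representation, and you make explicit the domination of $Y$ by $\rho_X^{\downarrow}(V)$ with $V$ uniform on $(0,1]$. The paper's route is shorter because the comparison is pointwise and no inverse is needed. Yours is self-contained from the tail-integral formula.

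Two small remarks. First, the step you single out as delicate needs no limit. Any $\eta'$ in the set defining $\eta(y)$ has $\rho_X^{\downarrow}(\eta')\le y$, hence $\mathbb{P}(Y>y)\le\eta'$, and taking the infimum over that set gives $\mathbb{P}(Y>y)\le\eta(y)$ directly.

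Second, the bands in your closing aside are shifted by one index. For $\rho_X^{\downarrow}(2^{-k+1})<y\le\rho_X^{\downarrow}(2^{-k})$ the hypothesis only gives $\mathbb{P}(Y>y)\le2^{-k+1}$, and the $k=0$ band involves the undefined value $\rho_X^{\downarrow}(2)$. The bound $2^{-k}$ holds on $\rho_X^{\downarrow}(2^{-k})<y\le\rho_X^{\downarrow}(2^{-k-1})$, together with the trivial bound $1$ on $[0,\rho_X^{\downarrow}(1)]$. With this indexing, summation by parts returns $\sum_{k\ge1}2^{-k}\rho_X^{\downarrow}(2^{-k})$, the same bound as your main argument. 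This does not affect your main proof.
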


\begin{proof}
We integrate the upper-tail quantile, then bound the remainder integral on
dyadic intervals.
Let
\[
  q_X(\eta):=
  \inf\{x\ge0:\mathbb{P}(X\le x)\ge1-\eta\},
  \qquad \eta\in(0,1],
\]
be the upper-tail quantile of $X$. The assumed upper-quantile inequality gives
\[
  q_X(\eta)\le A+\rho_X(\eta)\le A+\rho_X^{\downarrow}(\eta)
  \qquad\forall\eta\in(0,1].
\]
The quantile representation of expectation for nonnegative random variables
therefore yields
\[
  \mathbb{E}[X]
  =
  \int_0^1q_X(u)\,du
  \le
  A+
  \int_0^1\rho_X^{\downarrow}(u)\,du.
\]
For the dyadic bound, $\rho_X^{\downarrow}$ is nonincreasing in $\eta$, so for
$u\in[2^{-k-1},2^{-k}]$,
\[
  \rho_X^{\downarrow}(u)\le \rho_X^{\downarrow}(2^{-k-1}).
\]
Consequently,
\[
  \int_0^1\rho_X^{\downarrow}(u)\,du
  \le
  \sum_{k\ge0}2^{-k-1}\rho_X^{\downarrow}(2^{-k-1})
  \le
  \sum_{k\ge0}2^{-k}\rho_X^{\downarrow}(2^{-k}).
\]
Combining this estimate with the first part proves the theorem.
\end{proof}

\begin{lemma}[Monotone summable upper-quantile bound]
\label{lemma:monotone-summable-localization}
Let $T\ge0$ be a random time admitting a summable upper-quantile bound $u_T$, namely
$
  \mathbb{P}(T\le u_T(\eta))\ge 1-\eta, \forall \eta\in(0,1],
$
and
$
  \sum_{k\ge0}2^{-k}u_T(2^{-k})<\infty.
$
Define the upper-tail quantile
\[
  q_T(\eta)
  :=
  \inf\{x\ge0:\mathbb{P}(T\le x)\ge1-\eta\},
  \qquad \eta\in(0,1].
\]
Then $q_T$ is a nonincreasing upper-quantile bound for $T$, and
$
  \sum_{k\ge0}2^{-k}q_T(2^{-k})<\infty.
$
\end{lemma}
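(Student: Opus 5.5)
The plan has three parts: show that $q_T$ is finite and nonincreasing, show that it is an upper-quantile bound, and then compare it pointwise with $u_T$ so that summability transfers.

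\emph{Finiteness and monotonicity.} For each $\eta\in(0,1]$ the set $\{x\ge0:\mathbb{P}(T\le x)\ge1-\eta\}$ contains $u_T(\eta)$, so it is nonempty and
\[
  q_T(\eta)\le u_T(\eta)<\infty .
\]
If $\eta\le\eta'$, then $1-\eta\ge1-\eta'$. Every $x$ admissible for $\eta$ is therefore admissible for $\eta'$, and the admissible set for $\eta'$ contains the one for $\eta$. Taking infima gives $q_T(\eta')\le q_T(\eta)$, so $q_T$ is nonincreasing on $(0,1]$.

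\emph{Upper-quantile property.} The function $x\mapsto\mathbb{P}(T\le x)$ is right-continuous. Choose admissible $x_k\downarrow q_T(\eta)$, which is possible because the admissible set is an up-ray. Then
\[
  \mathbb{P}(T\le q_T(\eta))=\lim_k\mathbb{P}(T\le x_k)\ge1-\eta .
\]
This is exactly the upper-quantile inequality for $q_T$. The infimum is attained precisely because of right-continuity, and this is the only point in the argument that needs any care.

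\emph{Summability.} The pointwise comparison $0\le q_T(2^{-k})\le u_T(2^{-k})$ from the first step gives
\[
  \sum_{k\ge0}2^{-k}q_T(2^{-k})\le\sum_{k\ge0}2^{-k}u_T(2^{-k})<\infty .
\]

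The argument contains no real obstacle. The only subtle point is attainment of the infimum in the second step, which follows from right-continuity of the distribution function; nonnegativity of $T$ ensures that restricting to $x\ge0$ loses nothing.
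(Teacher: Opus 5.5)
Your proposal is correct and follows the paper's proof essentially step for step: the upper-quantile property of $q_T$, monotonicity from the nesting of the admissible sets, and the pointwise bound $q_T\le u_T$, which transfers summability. The only difference is that you justify attainment of the infimum explicitly by right-continuity of $x\mapsto\mathbb{P}(T\le x)$, whereas the paper asserts $\mathbb{P}(T\le q_T(\eta))\ge1-\eta$ directly from the definition.
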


\begin{proof}
We verify the quantile's confidence property and monotonicity, then compare
its dyadic series with that of $u_T$.
By definition of $q_T$, one has
\[
  \mathbb{P}(T\le q_T(\eta))\ge1-\eta
  \qquad
  \forall \eta\in(0,1].
\]
Thus $q_T$ is an upper-quantile bound. If $0<\eta_1\le\eta_2\le1$, then
$1-\eta_1\ge1-\eta_2$, so the confidence requirement at level $\eta_1$ is at
least as stringent as the one at level $\eta_2$. Therefore
\[
  q_T(\eta_1)\ge q_T(\eta_2),
\]
so $q_T$ is nonincreasing in $\eta$.

Since $u_T$ is also an upper-quantile bound, the definition of $q_T$ gives
\[
  q_T(\eta)\le u_T(\eta)
  \qquad
  \forall \eta\in(0,1].
\]
Hence
\[
  \sum_{k\ge0}2^{-k}q_T(2^{-k})
  \le
  \sum_{k\ge0}2^{-k}u_T(2^{-k})
  <\infty.
\]
\end{proof}

\begin{lemma}[Finite mean gives summable dyadic quantiles]
\label{lem:finite-mean-quantiles}
Let $X\ge0$ satisfy $\mathbb{E}[X]<\infty$, and let
\[
  q_X(\eta):=
  \inf\{x\ge0:\mathbb{P}(X\le x)\ge1-\eta\},
  \qquad \eta\in(0,1].
\]
Then
\[
  \sum_{k\ge0}2^{-k}q_X(2^{-k})<\infty.
\]
\end{lemma}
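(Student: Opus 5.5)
The plan is to use the quantile representation of the expectation, $\mathbb{E}[X]=\int_0^1 q_X(u)\,du$, together with the fact that $q_X$ is nonincreasing; this monotonicity was already verified in the proof of Lemma~\ref{lemma:monotone-summable-localization}. The dyadic sum should be dominated by twice the integral of $q_X$ over $(0,1]$, and that integral equals $\mathbb{E}[X]<\infty$.

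First, I record that $q_X(\eta)<\infty$ for every $\eta\in(0,1]$. By Markov's inequality, $\mathbb{P}(X>x)\le\mathbb{E}[X]/x$, so $q_X(\eta)\le\mathbb{E}[X]/\eta$. Next, I justify the identity $\int_0^1 q_X(u)\,du=\mathbb{E}[X]$. If $V$ is uniform on $(0,1)$, then $q_X(V)$ has the law of $X$, since $q_X(\eta)$ is the generalized inverse of the distribution function evaluated at $1-\eta$. This is the same representation already invoked in the proof of Theorem~\ref{thm:centered-quantile-integration}, so I would simply cite it.

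Second, I compare the sum with the integral. For $k\ge1$ and $u\in[2^{-k},2^{-k+1}]$, monotonicity gives $q_X(u)\le q_X(2^{-k})$; this inequality points the wrong way for the comparison. I therefore use the opposite block: for $u\in[2^{-k-1},2^{-k}]$ we have $q_X(u)\ge q_X(2^{-k})$. Integrating over that block gives
\[
2^{-k-1}q_X(2^{-k})\le\int_{2^{-k-1}}^{2^{-k}}q_X(u)\,du .
\]
Summing over $k\ge0$, the blocks are disjoint and cover $(0,1]$, so
\[
\sum_{k\ge0}2^{-k}q_X(2^{-k})\le 2\int_0^1 q_X(u)\,du=2\,\mathbb{E}[X]<\infty .
\]

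The only delicate point is the quantile representation and its measurability, namely that $q_X$ is Borel and $q_X(V)\overset{d}{=}X$. I expect this to be routine: $q_X$ is monotone, hence Borel, and the distributional identity is the standard generalized-inverse fact. If I wanted to avoid citing the representation, I would instead use the Markov-free tail bound directly. Since $\mathbb{P}(X\ge q_X(\eta)-\epsilon)>\eta$ for every $\epsilon>0$, the tail function $x\mapsto\mathbb{P}(X>x)$ lies above a staircase determined by the dyadic quantiles. Summing gives $\mathbb{E}[X]=\int_0^\infty\mathbb{P}(X>x)\,dx\ge\sum_k 2^{-k-1}\bigl(q_X(2^{-k})-q_X(2^{-k+1})\bigr)$. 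Summation by parts then yields the same bound up to a constant factor.
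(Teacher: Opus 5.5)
Your proposal is correct and matches the paper's proof essentially step for step: on each block $[2^{-k-1},2^{-k}]$, monotonicity of $q_X$ gives $2^{-k}q_X(2^{-k})\le 2\int_{2^{-k-1}}^{2^{-k}}q_X(u)\,du$, and summing with the quantile representation $\int_0^1 q_X(u)\,du=\mathbb{E}[X]$ bounds the series by $2\mathbb{E}[X]$. Your extra checks are correct details that the paper leaves implicit, namely finiteness of $q_X$ via Markov's inequality and the identity $q_X(V)\overset{d}{=}X$.
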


\begin{proof}
We use quantile monotonicity to bound each dyadic term by a quantile integral.
The map $\eta\mapsto q_X(\eta)$ is nonincreasing. For each $k\ge0$ and
$u\in[2^{-k-1},2^{-k}]$, one has $q_X(u)\ge q_X(2^{-k})$. Hence
\[
  2^{-k}q_X(2^{-k})
  \le
  2\int_{2^{-k-1}}^{2^{-k}}q_X(u)\,du.
\]
Summing over $k$ and using the quantile representation of expectation gives
\[
  \sum_{k\ge0}2^{-k}q_X(2^{-k})
  \le
  2\int_0^1q_X(u)\,du
  =2\mathbb{E}[X]<\infty.
\]
\end{proof}

\subsection{Threshold Inversion}
\label{app:threshold-inversion}

We give a deterministic threshold-crossing bound that is uniform in the
entrance delay $L$. Here $T$ is a positive scalar parameter, and time arguments
in $c$ and $\widetilde{H}$ are real.

\begin{theorem}[Threshold inversion with a linear entrance delay]
\label{thm:threshold-inversion-linear-delay}
Fix constants $T>0$, $C_0\ge0$, and $a,b,d\ge0$. For $\eta\in(0,1]$ and
$t\ge1$, define
\[
  \widetilde{H}_t(\eta):=\log\frac{K}{\eta}+\log(et).
\]
There exists a finite constant $A_{\mathrm{inv}}$, depending only on
$T,C_0,a,b,d$ and $K$, such that the following holds. For every
$\delta\in(0,1/2]$ and $\eta\in(0,1/4]$, set
\[
  L_{\delta}:=\log\frac1\delta,
  \qquad
  M_{\delta,\eta}:=
  \log\frac{K}{\eta}+\log\log\frac{e}{\delta},
\]
and
\[
  s_{\delta,\eta}
  :=
  \ceil{
    TL_{\delta}
    +A_{\mathrm{inv}}\sqrt{L_{\delta}M_{\delta,\eta}}
    +A_{\mathrm{inv}}M_{\delta,\eta}
    +A_{\mathrm{inv}}
  }.
\]
Then, for every $L\ge0$ and
\[
  t=(1+C_0)L+s_{\delta,\eta},
\]
one has
\[
  \frac{L+s_{\delta,\eta}}{T}
  -a\sqrt{t\widetilde{H}_t(\eta)}
  -b\widetilde{H}_t(\eta)
  -d
  >
  c(t,\delta).
\]
\end{theorem}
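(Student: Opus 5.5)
The plan is to prove the inequality by direct bookkeeping. Every subtracted term on the left, together with $c(t,\delta)$, will be split into a part absorbed by the linear slack $L/T$ and a part absorbed by the $A_{\mathrm{inv}}$-terms of $s_{\delta,\eta}/T$. Write $s:=s_{\delta,\eta}$ and $L_\delta,M:=M_{\delta,\eta}$ as in the statement. Then
\[
\frac{s}{T}\ge L_\delta+\frac{A_{\mathrm{inv}}}{T}\bigl(\sqrt{L_\delta M}+M+1\bigr).
\]
Since $c(t,\delta)=L_\delta+2\log(1+t/2)+2+\log(K-1)$, the $L_\delta$ cancels exactly. It then suffices to show
\[
\frac{L}{T}+\frac{A_{\mathrm{inv}}}{T}\bigl(\sqrt{L_\delta M}+M+1\bigr)
> a\sqrt{t\widetilde H_t(\eta)}+b\widetilde H_t(\eta)+2\log(1+t/2)+d+2+\log(K-1).
\]

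\textbf{Splitting off $L$.} Since $t=(1+C_0)L+s\le 2\max\{(1+C_0)L,s\}$, I will use
\[
\log t\le \log 2+\log(1+(1+C_0)L)+\log s,
\qquad
\sqrt{t\widetilde H_t}\le\sqrt{(1+C_0)L\,\widetilde H_t}+\sqrt{s\,\widetilde H_t}.
\]
By AM--GM, $a\sqrt{(1+C_0)L\widetilde H_t}\le L/(4T)+a^2T(1+C_0)\widetilde H_t$. Every remaining occurrence of $\log(1+(1+C_0)L)$, whether from $\widetilde H_t$ (now carrying the coefficient $a^2T(1+C_0)+b$) or from $2\log(1+t/2)$, is bounded by $L/(4T)+C$ with $C$ depending only on $T,C_0,a,b$. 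This uses up at most $L/T$ of the slack, uniformly in $L\ge0$. What is left is the same inequality with $L=0$, except that the constants now multiply $\widetilde H_s(\eta)$ and $\log s$ in place of $\widetilde H_t$ and $\log t$, plus additive constants.

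\textbf{The $L=0$ inequality.} Here $\widetilde H_s\le \log(K/\eta)+1+\log s$. Since $\log\log(e/\delta)\ge\log L_\delta$ up to constants, we get $\log(TL_\delta)\le M+\log T$. Likewise, $\log s\le \log(4T L_\delta)+\log(4A_{\mathrm{inv}}(M+1))$ up to constants, so $\log s\le C(M+1)+\log A_{\mathrm{inv}}$ because $\log M\le M$. Thus $\widetilde H_s\le C_1(M+1)+\log A_{\mathrm{inv}}$.

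For the square-root term I write $s\le 2TL_\delta+2A'X$, where $X:=\sqrt{L_\delta M}+M+1$ and $A':=2A_{\mathrm{inv}}$. This gives
\[
a\sqrt{s\widetilde H_s}\le a\sqrt{2TL_\delta\widetilde H_s}+a\sqrt{2A'X\widetilde H_s}.
\]
The first piece is at most $C\sqrt{L_\delta M}+C\sqrt{L_\delta\log A_{\mathrm{inv}}}$. The second is at most $\tfrac{A_{\mathrm{inv}}}{4T}X+C\widetilde H_s$ by AM--GM. The terms linear in $\widetilde H_s$, and the constants, are then $O(M+1+\log A_{\mathrm{inv}})$. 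The $A_{\mathrm{inv}}X/T$ slack dominates all of them, since $X\ge M+1$ and $A_{\mathrm{inv}}\gg\log A_{\mathrm{inv}}$.

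\textbf{Main obstacle.} The delicate step is this circularity: $A_{\mathrm{inv}}$ reappears inside $\log s$, and hence inside $\widetilde H_s$ and the square root. In particular the term $C\sqrt{L_\delta\log A_{\mathrm{inv}}}$ must be beaten by $(A_{\mathrm{inv}}/T)\sqrt{L_\delta M}$. This holds because $M\ge\log(K/\eta)\ge\log 8>0$ for $\eta\le1/4$, so $\sqrt{L_\delta\log A_{\mathrm{inv}}}\le \sqrt{\log A_{\mathrm{inv}}/\log 8}\,\sqrt{L_\delta M}$, and $\sqrt{\log A}=o(A)$. I would therefore fix all the constants $C$ first, as functions of $T,C_0,a,b,d,K$ only. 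I would then choose $A_{\mathrm{inv}}$ large enough that $A/(4T)$ exceeds each of $C$, $C\sqrt{\log A}$ and $C\log A$. Strictness of the final inequality is obtained from the $+A_{\mathrm{inv}}$ constant slack.
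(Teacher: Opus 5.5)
Your reduction to $L=0$ does not go through as written. After $\sqrt{t\widetilde H_t}\le\sqrt{(1+C_0)L\,\widetilde H_t}+\sqrt{s\,\widetilde H_t}$, you absorb the $L$-dependence of only two kinds of terms: those linear in $\widetilde H_t$, and $2\log(1+t/2)$. The cross term $a\sqrt{s\,\widetilde H_t(\eta)}$ still depends on $L$, yet your next paragraph treats it as $a\sqrt{s\,\widetilde H_s(\eta)}$. Your additive split $\log t\le\log2+\log(1+(1+C_0)L)+\log s$ bounds the excess only by $a\sqrt{s\,(\log2+\log(1+(1+C_0)L))}$. Since $s\ge TL_\delta$ is unbounded in $\delta$, this is not of the form $L/(4T)+C$ with $C$ independent of $\delta,\eta$. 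For fixed $\eta$ and $L\approx\sqrt{L_\delta}$, it is of order $\sqrt{L_\delta\log L_\delta}$, while the $L$-slack is only $L/T\approx\sqrt{L_\delta}/T$. So your claim that all $L$-terms cost at most $L/T$ fails for this term. It could only be paid from the $A_{\mathrm{inv}}\sqrt{L_\delta M}/T$ slack, via a case split on $L$ versus $L_\delta$ using $M\ge\log(1+L_\delta)$, and your plan does not contain this.

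The cleanest repair is to split $\log t$ multiplicatively. Write $\widetilde H_t=\widetilde H_s+\log(1+(1+C_0)L/s)\le\widetilde H_s+(1+C_0)L/s$. Then $s\,\widetilde H_t\le s\,\widetilde H_s+(1+C_0)L$, and hence $a\sqrt{s\,\widetilde H_t}\le a\sqrt{s\,\widetilde H_s}+L/(8T)+2a^2T(1+C_0)$. Your budget then totals $L/(4T)+L/(8T)+L/(4T)+L/(4T)<L/T$. What remains is genuinely the $L=0$ inequality with $A_{\mathrm{inv}}$-independent constants. Your treatment of that case is sound, including beating $C\sqrt{L_\delta\log A_{\mathrm{inv}}}$ via $M\ge\log 8$.

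The paper avoids this bookkeeping: it shows the residual is increasing in $L$. Its derivative is at least $1/T-a(1+C_0)\sqrt{\widetilde H_t/t}-(b+2)(1+C_0)/t\ge1/(2T)$, because $\widetilde H_t/t$ is decreasing and $\widetilde H_s/s\lesssim(1+\log A_{\mathrm{inv}})/A_{\mathrm{inv}}$. The cross term is thus handled automatically, and only the base case $L=0$ needs the circularity argument you give. Your repaired route is derivative-free with an explicit slack budget; the paper's is shorter.
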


\begin{proof}
We choose $A_{\mathrm{inv}}$ so that the threshold residual is positive at
$L=0$ and increasing for $L\ge0$.
Write
\[
  \overline{c}:=1+C_0,
  \qquad
  s:=s_{\delta,\eta},
  \qquad
  M:=M_{\delta,\eta},
  \qquad
  L_{\delta}:=\log\frac1\delta.
\]
Because $\delta\in(0,1/2]$ and $\eta\in(0,1/4]$, there is a positive constant
$m_0=m_0(K)$ such that $M\ge m_0$.

We first record two elementary bounds. Put
\[
  R:=\sqrt{L_{\delta}M}+M+1.
\]
For every fixed $A\ge1$ and every
$1\le s\le TL_{\delta}+AR+2$, there exists a constant $C_1=C_1(T,K)$ such that
\[
  \widetilde{H}_s(\eta)
  \le
  C_1(1+\log A)M.
\]
By AM--GM, $R\le(L_{\delta}+3M)/2+1\le2(L_{\delta}+M+1)$.
Absorbing the factor two into $C_T$ gives
$\log(es)\le C_T+\log A+\log(L_{\delta}+M+2)$. Since
$\log(L_{\delta}+M+2)\le C_K+\log(1+L_{\delta})+\log(1+M)$ and
$M=\log(K/\eta)+\log(1+L_{\delta})$, the displayed bound follows after using
$M\ge m_0$. Consequently, for the value $A=A_{\mathrm{inv}}$ and the associated
$s$,
\begin{equation}
\label{eq:threshold-Hs-bound}
  \widetilde{H}_s(\eta)
  \le
  C_1(1+\log A_{\mathrm{inv}})M.
\end{equation}
Moreover, because $s\ge A_{\mathrm{inv}}M$,
\begin{equation}
\label{eq:threshold-Hs-over-s}
  \frac{\widetilde{H}_s(\eta)}{s}
  \le
  \frac{C_1(1+\log A_{\mathrm{inv}})}{A_{\mathrm{inv}}}.
\end{equation}

Define
\[
  \psi(L)
  :=
  \frac{L+s}{T}
  -a\sqrt{(\overline{c}L+s)\widetilde{H}_{\overline{c}L+s}(\eta)}
  -b\widetilde{H}_{\overline{c}L+s}(\eta)
  -d
  -c(\overline{c}L+s,\delta).
\]
For $t\ge1$,
\[
  \frac{d}{dt}\widetilde{H}_t(\eta)=\frac1t,
\]
and
\[
  \frac{d}{dt}\sqrt{t\widetilde{H}_t(\eta)}
  =
  \frac{\widetilde{H}_t(\eta)+1}{2\sqrt{t\widetilde{H}_t(\eta)}}
  \le
  \sqrt{\frac{\widetilde{H}_t(\eta)}{t}},
\]
because $\widetilde{H}_t(\eta)\ge1$. Also
\[
  \frac{d}{dt}c(t,\delta)=\frac{1}{1+t/2}\le\frac{2}{t}.
\]
Hence
\[
  \psi'(L)
  \ge
  \frac1T
  -a\overline{c}\sqrt{\frac{\widetilde{H}_{\overline{c}L+s}(\eta)}{\overline{c}L+s}}
  -(b+2)\frac{\overline{c}}{\overline{c}L+s}.
\]
The map $t\mapsto\widetilde{H}_t(\eta)/t$ is decreasing for $t\ge1$ because
$\widetilde{H}_t(\eta)\ge1$. Using \eqref{eq:threshold-Hs-over-s}, we can choose
$A_{\mathrm{inv}}$ large enough that
\[
  a\overline{c}\sqrt{\frac{\widetilde{H}_s(\eta)}{s}}
  +(b+2)\frac{\overline{c}}{s}
  \le
  \frac{1}{2T},
\]
uniformly in $\delta$ and $\eta$. For such a choice, $\psi'(L)\ge1/(2T)>0$ for all
$L\ge0$.

It remains to make $\psi(0)>0$. Since
\[
  c(s,\delta)
  \le
  L_{\delta}+2\widetilde{H}_s(\eta)+C_2(K),
\]
one has
\[
  \psi(0)
  \ge
  \frac{s}{T}-L_{\delta}
  -a\sqrt{s\widetilde{H}_s(\eta)}
  -(b+2)\widetilde{H}_s(\eta)
  -d-C_2(K).
\]
The definition of $s$ gives
\[
  \frac{s}{T}-L_{\delta}
  \ge
  \frac{A_{\mathrm{inv}}}{T}R.
\]
Using \eqref{eq:threshold-Hs-bound} and $s\le TL_{\delta}+A_{\mathrm{inv}}R+2$,
we get
\[
  s\widetilde{H}_s(\eta)
  \le
  C_3(1+\log A_{\mathrm{inv}})M
  \left(TL_{\delta}+A_{\mathrm{inv}}R+2\right).
\]
Since $L_{\delta}M\le R^2$, $RM\le R^2$, and $M\le R^2$, this implies
\[
  \sqrt{s\widetilde{H}_s(\eta)}
  \le
  C_4\sqrt{A_{\mathrm{inv}}(1+\log A_{\mathrm{inv}})}\,R.
\]
Also, \eqref{eq:threshold-Hs-bound} gives
$\widetilde{H}_s(\eta)\le C_5(1+\log A_{\mathrm{inv}})R$. Therefore
\[
  \psi(0)
  \ge
  \left[
    \frac{A_{\mathrm{inv}}}{T}
    -aC_4\sqrt{A_{\mathrm{inv}}(1+\log A_{\mathrm{inv}})}
    -(b+2)C_5(1+\log A_{\mathrm{inv}})
    -d-C_2(K)
  \right]R.
\]
Because
\[
  \frac{A}{\sqrt{A(1+\log A)}}\to\infty
  \quad\text{and}\quad
  \frac{A}{1+\log A}\to\infty
  \qquad\text{as }A\to\infty,
\]
we can increase $A_{\mathrm{inv}}$ so that the bracket is strictly positive.
The residual therefore satisfies $\psi(L)\ge\psi(0)>0$ for every $L\ge0$,
which proves the desired inequality.
\end{proof}

\subsection{Proof of Theorem~\ref{thm:conditional}}
\label{app:proof-conditional}

The preceding estimates yield the high-probability and expected stopping-time
bounds in the stabilized regime.

\begin{proof}[Proof of Theorem~\ref{thm:conditional}]
We combine the evidence envelope and fraction bound to obtain a threshold
crossing, then integrate the centered stopping-time quantiles.
Fix $\delta\in(0,1/2]$ and $\eta\in(0,1/4]$, and use
$\widetilde{H}_t(\eta)=\log(K/\eta)+\log(et)$ as above.

Let $\cE_{\mathrm{env}}(\eta)$ be the event from
Proposition~\ref{prop:envelope}, with probability at least $1-2\eta$, and let
$\cA_{\eta}$ be the event from Proposition~\ref{prop:alpha}, with
probability at least $1-\eta$. We work on
\[
  \cE:=\cE_{\mathrm{env}}(\eta)\cap\cA_{\eta},
  \qquad
  \mathbb{P}_{\mu}(\cE)\ge1-3\eta.
\]

For all $t\ge t_U$, entry into the stabilized regime gives $B_{t+1}=\istar$ and
$\alpha_t\in U$. Hence the stopping condition is met at time $t$ as soon as
\[
  \min_{j\neq \istar}W_t(\istar,j)>c(t,\delta).
\]
On $\cE_{\mathrm{env}}(\eta)$, Proposition~\ref{prop:envelope} gives,
for all $t\ge t_U$,
\[
  \min_{j\neq \istar}W_t(\istar,j)
  \ge
  \frac{t}{\Tstar_{\alpha_t}(\mu)}
  -B_{\mu,0}(U)t_U
  -B_{\mu,1}(U)\sqrt{t\widetilde{H}_t(\eta)}
  -B_{\mu,2}(U)\widetilde{H}_t(\eta)
  -B_{\mu,3}(U),
\]
for positive finite constants $B_{\mu,\ell}(U)$, $\ell\in\{0,1,2,3\}$.

By Lemma~\ref{lem:T-inv-lipschitz}, for all $\alpha\in U$,
\[
  \frac1{\Tstar_{\alpha}(\mu)}
  \ge
  \frac1{\Tstar_{\beta}(\mu)}-L_{\mu}(U)\lvert \alpha-\beta\rvert.
\]
On $\cA_{\eta}$, Proposition~\ref{prop:alpha} with start time $m=t_U$
implies, uniformly for $t\ge t_U$,
\[
  t\lvert \alpha_t-\beta\rvert
  \le
  C_{\beta}t_U+C_{\beta}\sqrt{t\widetilde{H}_t(\eta)}+G_t^{\mathrm{tot}}(t_U).
\]
The proof of Proposition~\ref{prop:alpha} permits a numerical $C_{\beta}$
uniform in $\beta$; the initial discrepancy also uses only $1+\beta\le2$.
For the unguarded rule the last term is zero. With the guard enabled,
Lemma~\ref{lem:guard-count} gives
$G_t^{\mathrm{tot}}(t_U)\le\sqrt{Kt}+K\le C_K\sqrt{t\widetilde{H}_t(\eta)}$.
Thus
\[
  \frac{t}{\Tstar_{\alpha_t}(\mu)}
  \ge
  \frac{t}{\Tstar_{\beta}(\mu)}
  -A_{\mu,0}(U)t_U-A_{\mu,1}(U)\sqrt{t\widetilde{H}_t(\eta)}.
\]
Combining the envelope and fraction bounds gives constants
$a_{\mu}(U),b_{\mu}(U),d_{\mu}(U)$ and $C_{\mu,0}(U)\ge0$ such that, for all
$t\ge t_U$,
\begin{equation}
\label{eq:conditional-lower-final}
  \min_{j\neq \istar}W_t(\istar,j)
  \ge
  \frac{t}{\Tstar_{\beta}(\mu)}
  -\frac{C_{\mu,0}(U)}{\Tstar_{\beta}(\mu)}t_U
  -a_{\mu}(U)\sqrt{t\widetilde{H}_t(\eta)}
  -b_{\mu}(U)\widetilde{H}_t(\eta)
  -d_{\mu}(U).
\end{equation}

To choose these coefficients uniformly over $\beta\in U$, set
$T_{\max}:=\max_{\alpha\in U}\Tstar_{\alpha}(\mu)<\infty$ and take
$C_{\mu,0}(U)=T_{\max}(B_{\mu,0}(U)+A_{\mu,0}(U))$.
The remaining coefficients already depend only on $(\mu,U)$.
In the inversion proof, use $T_{\max}$ in the upper bound on $s$ and
$1/T\ge1/T_{\max}$ in the derivative and base-value bounds. A single
$A_{\mathrm{inv}}$ then works for all $T=\Tstar_{\beta}(\mu)$ with $\beta\in U$.

Apply Theorem~\ref{thm:threshold-inversion-linear-delay} with
\[
  T=\Tstar_{\beta}(\mu),
  \qquad
  C_0=C_{\mu,0}(U),
  \qquad
  a=a_{\mu}(U),
  \qquad
  b=b_{\mu}(U),
  \qquad
  d=d_{\mu}(U).
\]
Let $s_{\delta,\eta}$ be the time given by this theorem and set
\[
  t^{\star}:=\ceil{(1+C_{\mu,0}(U))t_U+s_{\delta,\eta}}.
\]
Evaluating \eqref{eq:conditional-lower-final} at $t=t^{\star}$ gives
\[
  \min_{j\neq \istar}W_{t^{\star}}(\istar,j)
  \ge
  \frac{t^{\star}-C_{\mu,0}(U)t_U}{\Tstar_{\beta}(\mu)}
  -a_{\mu}(U)\sqrt{t^{\star}\widetilde{H}_{t^{\star}}(\eta)}
  -b_{\mu}(U)\widetilde{H}_{t^{\star}}(\eta)
  -d_{\mu}(U).
\]
To apply Theorem~\ref{thm:threshold-inversion-linear-delay} at this integer
time, take $L=(t^{\star}-s_{\delta,\eta})/(1+C_{\mu,0}(U))\ge t_U$.
Then $t^{\star}=(1+C_{\mu,0}(U))L+s_{\delta,\eta}$ and
$t^{\star}-C_{\mu,0}(U)t_U\ge L+s_{\delta,\eta}$. The theorem therefore
makes the right-hand side strictly larger than $c(t^{\star},\delta)$. Hence
$\tau_{\delta}\le t^{\star}$ on $\cE$, and the stated high-probability
bound follows by expanding $s_{\delta,\eta}$ and absorbing both ceiling errors,
each less than one, into the bounded remainder.

Now assume that $t_U$ admits a summable upper-quantile bound $u_U$.
The expected-bound coefficients may also depend on $u_U$. Let
\[
  L_{\delta}:=\log\frac1\delta,
  \qquad
  A_{\delta}:=\Tstar_{\beta}(\mu)L_{\delta}.
\]
The high-probability bound gives constants $D_{\mu,k}(U)<\infty$,
$k\in\{0,1,2,3\}$, such that, for every $\zeta\in(0,1]$,
\[
  \mathbb{P}_{\mu}
  \left(
    \tau_{\delta}\le A_{\delta}+R_{\delta}(\zeta)
  \right)
  \ge1-\zeta,
\]
where
\begin{align*}
  R_{\delta}(\zeta)
  &=
  D_{\mu,0}(U)u_U(\zeta/4)
  +D_{\mu,1}(U)
  \sqrt{L_{\delta}
  \left(\log\frac{4K}{\zeta}+\log\log\frac{e}{\delta}\right)} \\
  &\quad
  +D_{\mu,2}(U)
  \left(\log\frac{4K}{\zeta}+\log\log\frac{e}{\delta}\right)
  +D_{\mu,3}(U).
\end{align*}
This follows by applying the high-probability bound with $\eta=\zeta/4$ and
intersecting it with the event $\{t_U\le u_U(\zeta/4)\}$.

Let $R_{\delta}^{\downarrow}$ be the monotone envelope of $R_{\delta}$.
Theorem~\ref{thm:centered-quantile-integration} gives
\[
  \mathbb{E}_{\mu}[\tau_{\delta}]
  \le
  A_{\delta}
  +
  \sum_{k\ge0}2^{-k}R_{\delta}^{\downarrow}(2^{-k}).
\]
By Lemma~\ref{lemma:monotone-summable-localization}, we may take $u_U$ to be
the exact upper-tail quantile of $t_U$. Then $R_{\delta}$ is nonincreasing, so
$R_{\delta}^{\downarrow}=R_{\delta}$. The shifted dyadic series also satisfies
\[
  \sum_{k\ge0}2^{-k}u_U(2^{-k-2})<\infty.
\]
In the remaining estimates, $C$ denotes a finite constant independent of
$\delta$ and may change between inequalities.
The term $C\sqrt{L_{\delta}}$ is dominated by
$C\sqrt{L_{\delta}\log\log(e/\delta)}$ for $\delta\in(0,1/2]$. Similarly,
\[
  \sum_{k\ge0}2^{-k}
  \left(
    \log(4K)+k\log2+
    \log\log\frac{e}{\delta}
  \right)
  \le
  C\log\log\frac{e}{\delta}+C.
\]
Combining these bounds yields
\[
\begin{aligned}
  \mathbb{E}_{\mu}[\tau_{\delta}]
  &\le
  \Tstar_{\beta}(\mu)\log(1/\delta)
  +C'_{\mu,1}(U)\sqrt{\log(1/\delta)\log\log(e/\delta)} \\
  &\quad +C'_{\mu,2}(U)\log\log(e/\delta)
  +C'_{\mu,3}(U).
\end{aligned}
\]
\end{proof}

\section{Guarantees Under a Unique Best Arm}
\label{app:uniquebest}

We bound the time to enter the stabilized regime under a unique best arm, then
derive the high-probability and expected stopping-time guarantees. All algorithmic
time indices refer to the continued sampling process in
Section~\ref{sec:stable-entry}.
Throughout, $t_U$ is the earliest $U$-stabilization time defined in
Section~\ref{sec:unconditional-closures}, with value $\infty$ if none exists.
The countable persistence conditions make this time measurable and independent
of the confidence level. Each horizon below bounds this same $t_U$.
For each compact interval $U\Subset(0,1)$ with $\beta$ in its interior, we
apply Lemma~\ref{lem:compactU} to a symmetric compact subinterval centered at
$\beta$. Stabilization in this subinterval implies stabilization in $U$.

\subsection{Separation Implies Permanent Leader Correctness}
\label{app:permanent-proof}

The separation counts ensure that the empirical best-arm mean remains strictly
above every suboptimal-arm mean.

\begin{proof}[Proof of Proposition~\ref{prop:permanent}]
We compare the armwise confidence radii with one quarter of each best-arm gap.
Fix $\eta\in(0,1)$. The real extension of $r(n,\eta)^2$ is decreasing for
$n\ge1$: apart from the positive factor $c_0/(2n^2)$, its derivative is
$1/\log(en)-\log(8K^2/\eta)-\log\log(en)<0$, since $K\ge2$.
Thus the radius bound at $n_j(\eta)$ holds at every larger count.
Fix an integer $t\ge t_{\mathrm{sep}}(\eta)$ and an arm
$j\in[K]\setminus\{\istar\}$. By definition,
$N_{t,\istar}\ge n_{\star}(\eta)\ge n_j(\eta)$ and
$N_{t,j}\ge n_j(\eta)$. Hence
\[
  r(N_{t,\istar},\eta)
  \le
  \frac{\mu_{\istar}-\mu_j}{4},
  \qquad
  r(N_{t,j},\eta)
  \le
  \frac{\mu_{\istar}-\mu_j}{4}.
\]
On $\cK^{\mathrm{sep}}_{\eta}$, these radius bounds yield
\[
  \widehat{\mu}_{t,\istar}
  \ge
  \mu_{\istar}-\frac{\mu_{\istar}-\mu_j}{4},
  \qquad
  \widehat{\mu}_{t,j}
  \le
  \mu_j+\frac{\mu_{\istar}-\mu_j}{4}.
\]
The bounds imply $\widehat{\mu}_{t,\istar}>\widehat{\mu}_{t,j}$. Since
$t\ge t_{\mathrm{sep}}(\eta)$ and $j\neq\istar$ were arbitrary, the
empirical leader is correct at every time from $t_{\mathrm{sep}}(\eta)$ onward.
\end{proof}

\subsection{High-Probability Stabilization Bound for the Unguarded Algorithm}
\label{app:poly-localization}

We bound the penalized index of any nonleader whose count is below its
separation threshold. This bound ensures progress toward separation; a
martingale argument then bounds the time needed to reach the separation counts.

\begin{lemma}[Polynomial-horizon index bound for separation-deficient nonleaders]
\label{lem:deficient}
Let $p:=\min\{\beta,1-\beta\}$.
There exist positive finite constants $\widetilde{A}_{\mu},\widetilde{B}_{\mu}$
such that the following holds. Fix exponents
$a>\widetilde{A}_{\mu}$ and $b>\widetilde{B}_{\mu}+1$. Then there exists a constant
$C_{\mu,\beta,a,b}<\infty$ with the property that, for every $\eta\in(0,1/4]$, if
\[
  M_{\eta}=\ceil{C_{\mu,\beta,a,b}\eta^{-a}(\log(e/\eta))^b},
  \qquad
  T_{\eta}=K+
  \ceil{\frac{2K}{p}(M_{\eta}-1)+\frac{16}{p^2}\log\frac6\eta},
\]
then on the event $\cG_{\eta/6}\cap\cK_{\eta/6}$ the following implication holds
for every integer $K\le t\le T_{\eta}$. If $j\in[K]\setminus\{B_{t+1}\}$ and
\[
  N_{t,j}<
  \begin{cases}
    n_{\star}(\eta/6), & j=\istar,\\
    n_j(\eta/6), & j\neq \istar,
  \end{cases},
\]
then
\[
  W_t(B_{t+1},j)+\log N_{t,j}\le\log(M_{\eta}-1).
\]
Consequently, whenever such a separation-deficient nonleader exists, the
challenger selected by the unguarded algorithm has count strictly smaller than
$M_{\eta}$.
\end{lemma}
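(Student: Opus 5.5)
The plan is to bound the penalized index of a separation-deficient nonleader using only its small count. The point is that Theorem~\ref{thm:symmetric-glr-control} controls the empirical GLR on the effective-sample-size scale, and $N_{\mathrm{eff},t}(B_{t+1},j)\le N_{t,j}$ is small for such $j$. The consequence for the challenger then follows from the trivial lower bound $W_t(B_{t+1},C_{t+1})+\log N_{t,C_{t+1}}\ge\log N_{t,C_{t+1}}$.

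First, fix $\eta\in(0,1/4]$, $t\le T_{\eta}$, and a deficient nonleader $j$, and write $b:=B_{t+1}$. Since $b$ is the empirical leader, $\widehat{\mu}_{t,b}\ge\widehat{\mu}_{t,j}$, so $W_t(b,j)=V_t(b,j)$. On $\cG_{\eta/6}$, Theorem~\ref{thm:symmetric-glr-control} gives
\[
  V_t(b,j)\le V_t^{\mu}(b,j)+C_{\mu}\left(\sqrt{N_{t,j}H_t(\eta/6)}+H_t(\eta/6)\right).
\]
Next I bound the population term by evaluating its infimum at $x=\mu_b$. This gives $V_t^{\mu}(b,j)\le N_{t,j}\KL{\mu_j}{\mu_b}\le D_{\mu}N_{t,j}$, where $D_{\mu}:=\max_{a,c\in[K]}\KL{\mu_a}{\mu_c}<\infty$ because all true means are interior. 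This holds whatever the ordering of $\mu_b$ and $\mu_j$, including the equal-mean case. The crucial feature is that the possibly huge leader count $N_{t,b}$ never enters the bound.

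The size of $N_{t,j}$ is now controlled by deficiency. By Lemma~\ref{lem:logsep}, $N_{t,j}<\max\{n_{\star}(\eta/6),n_j(\eta/6)\}\le A_{\mu}(\log(6K/\eta)+\log\log(6e/\eta)+1)\le A'_{\mu}\log(e/\eta)$. For the confidence term, $H_t(\eta/6)=\log(6K/\eta)+\log\log(et)$ with $t\le T_{\eta}$. Since $T_{\eta}$ is polynomial in $1/\eta$, we have $\log\log(eT_{\eta})\le\log(C+a\log(e/\eta)+b\log\log(e/\eta))$. Hence $H_t(\eta/6)\le C_{\mu,a,b}\log(e/\eta)$, where the dependence on $a,b$ enters only through an additive $\log(1+a+b)$. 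Using $\sqrt{xy}\le x+y$ and $\log N_{t,j}\le N_{t,j}$, I obtain
\[
  W_t(b,j)+\log N_{t,j}\le\widetilde{A}_{\mu}\log(e/\eta)+\widetilde{B}_{\mu}\log\log(e/\eta)+C'_{\mu}\log(1+a+b)+C''_{\mu}.
\]
The separate $\log\log$ term, coming from the $\log\log(et)$ part and from $\log N_{t,j}$, is where $\widetilde{B}_{\mu}$ comes from.

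I expect the main obstacle to be bookkeeping. The threshold $M_{\eta}$ defines $T_{\eta}$, and $T_{\eta}$ feeds back into $H_t$, so the constants must be chosen without circularity. The resolution is that $T_{\eta}$ enters only doubly logarithmically. Then
\[
  \log(M_{\eta}-1)\ge\log C_{\mu,\beta,a,b}+a\log(1/\eta)+b\log\log(e/\eta)-\log2
\]
dominates the right-hand side for every $\eta\in(0,1/4]$. This is because $a>\widetilde{A}_{\mu}$, $b>\widetilde{B}_{\mu}+1$ (the extra $1$ absorbs the $\log$-of-$\log$ from $T_{\eta}$), and $C_{\mu,\beta,a,b}$ is chosen large to absorb every additive constant, including the $\log(1+a+b)$ term and the slack between $\log(1/\eta)$ and $\log(e/\eta)$. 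Finally, when a deficient nonleader $j$ exists, the unguarded challenger minimizes the index, so
\[
  \log N_{t,C_{t+1}}\le W_t(b,C_{t+1})+\log N_{t,C_{t+1}}\le W_t(b,j)+\log N_{t,j}\le\log(M_{\eta}-1).
\]
Hence $N_{t,C_{t+1}}\le M_{\eta}-1<M_{\eta}$. The event $\cK_{\eta/6}$ is only carried along here; it coincides with $\cG_{\eta/6}$ by construction in the proof of Theorem~\ref{thm:symmetric-glr-control}.
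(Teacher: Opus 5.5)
Your proposal is correct and follows essentially the same route as the paper's proof. You evaluate $V_t^{\mu}(B_{t+1},j)$ at $x=\mu_{B_{t+1}}$, apply Theorem~\ref{thm:symmetric-glr-control} with $N_{\mathrm{eff},t}(B_{t+1},j)\le N_{t,j}\le n_{\star}(\eta/6)$, bound $H_{T_\eta}(\eta/6)$ using that $T_\eta$ enters only doubly logarithmically, pick $C_{\mu,\beta,a,b}$ large (the paper's $\log\Lambda_C=o(\log C)$ step), and read off the challenger count from $W_t\ge0$; the only slip is cosmetic: your intermediate display should also carry the additive $\log\log C_{\mu,\beta,a,b}$ and $\beta$-dependent terms that $T_\eta$ feeds back, not just $\log(1+a+b)$, and your circularity paragraph absorbs them exactly as the paper does.
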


\begin{proof}
We bound the population GLR by the deficient arm's count, then apply the
uniform GLR comparison and logarithmic separation-count bound.
Let $j\neq B_{t+1}$ satisfy the displayed separation-deficiency condition. Then
$N_{t,j}\le n_{\star}(\eta/6)$. Since $B_{t+1}$ is an empirical leader,
$\widehat{\mu}_{t,B_{t+1}}\ge\widehat{\mu}_{t,j}$, and therefore
$W_t(B_{t+1},j)=V_t(B_{t+1},j)$. Write
\[
  D_{\mu,\mathrm{pair}}^{\max}:=\max_{a,b\in[K]:a\neq b}\KL{\mu_a}{\mu_b}<\infty.
\]
Evaluating the population symmetric GLR at $x=\mu_{B_{t+1}}$ gives
\[
  V_t^{\mu}(B_{t+1},j)
  \le
  D_{\mu,\mathrm{pair}}^{\max}n_{\star}(\eta/6).
\]
On $\cG_{\eta/6}$, Theorem~\ref{thm:symmetric-glr-control} and
$N_{\mathrm{eff},t}(B_{t+1},j)\le N_{t,j}\le n_{\star}(\eta/6)$ imply
\[
  W_t(B_{t+1},j)
  \le
  D_{\mu,\mathrm{pair}}^{\max}n_{\star}(\eta/6)
  +C_{\mu}
  \left(
    \sqrt{n_{\star}(\eta/6)H_{T_{\eta}}(\eta/6)}
    +H_{T_{\eta}}(\eta/6)
  \right).
\]
We track the dependence on the cap constant before choosing it.
Put $x:=\log(e/\eta)$. Lemma~\ref{lem:logsep} gives a constant
$S_{\mu}\ge1$, independent of $a,b,p$ and the cap constant, such that
$n_{\star}(\eta/6)\le S_{\mu}x$ for all $\eta\in(0,1/4]$.
Write $c:=\max\{1,C_{\mu}\}$ and $d:=D_{\mu,\mathrm{pair}}^{\max}$, and choose
\[
  \widetilde{A}_{\mu}:=(d+c/2)S_{\mu}+3c/2,
  \qquad
  \widetilde{B}_{\mu}:=3c/2+1.
\]
Fix $a>\widetilde{A}_{\mu}$ and $b>\widetilde{B}_{\mu}+1$.
For a candidate $C:=C_{\mu,\beta,a,b}\ge2S_{\mu}$, let
$F_{\eta}:=\eta^{-a}x^b$. Since $a>0$ and $b>1$, $F_{\eta}\ge x\ge1$.
The ceiling bounds $M_{\eta}-1\le CF_{\eta}$ and
$\ceil{u}\le u+1$, together with
$\log(6/\eta)\le(\log6)x$, give
\[
  T_{\eta}\le D_{K,p}CF_{\eta},
  \qquad
  D_{K,p}:=K+1+\frac{2K}{p}+\frac{16\log6}{p^2}.
\]
Define $\Lambda_C:=1+\log D_{K,p}+\log C+a+b$. Using
$\log(1/\eta)\le x$ and $\log x\le x$, we obtain
\[
  \log(eT_{\eta})\le\Lambda_C x,
  \qquad
  H_{T_{\eta}}(\eta/6)
  \le\log(1/\eta)+\log x+\log(6K)+\log\Lambda_C.
\]
Applying $\sqrt{nH}\le(n+H)/2$ to the preceding GLR bound and using
$\log N_{t,j}\le\log(S_{\mu}x)$ now yields
\[
  W_t(B_{t+1},j)+\log N_{t,j}
  \le
  \widetilde{A}_{\mu}\log(1/\eta)
  +\widetilde{B}_{\mu}\log x
  +R_{\mu}+\frac{3c}{2}\log\Lambda_C,
\]
where $R_{\mu}:=(d+c/2)S_{\mu}+(3c/2)\log(6K)+\log S_{\mu}$.
For the fixed $a,b,p$, $\log\Lambda_C=o(\log C)$ as $C\to\infty$.
We can therefore choose $C\ge2S_{\mu}$ so that
\[
  \log C\ge R_{\mu}+\log2+\frac{3c}{2}\log\Lambda_C.
\]
This choice is independent of $\eta$. Since $CF_{\eta}\ge2$, the lower
ceiling bound gives $M_{\eta}-1\ge CF_{\eta}-1\ge CF_{\eta}/2$, and hence
\[
  \log(M_{\eta}-1)
  \ge\log C-\log2+a\log(1/\eta)+b\log x.
\]
The last two displays and the choices of $a,b$ prove the required index
bound. They also give
$M_{\eta}-1\ge CF_{\eta}/2\ge S_{\mu}x\ge n_{\star}(\eta/6)$ uniformly in
$\eta$.

For the challenger-count conclusion, any arm $k$ with $N_{t,k}\ge M_{\eta}$
has penalized index at least $\log M_{\eta}$, since transportation costs are
nonnegative. A separation-deficient nonleader has index at most
$\log(M_{\eta}-1)$, so a
minimizer of the challenger index cannot have count at least $M_{\eta}$.
\end{proof}

\begin{proof}[Proof of Proposition~\ref{prop:poly}]
We bound the separation time using a lower bound on the conditional probability
of pulls below the count cap and Azuma--Hoeffding, then control the best-arm
fraction.
Fix $\eta\in(0,1/4]$ and write $\overline{\eta}:=\eta/6$. Let
$M_{\eta},T_{\eta}$ be the quantities from Lemma~\ref{lem:deficient}, and set
$p:=\min\{\beta,1-\beta\}$. For integers $t\ge K$, define the progress
indicator and total count deficit by
\[
  X_{t+1}:=\one\{N_{t,I_{t+1}}<M_{\eta}\},
  \qquad
  \mathcal{D}_t:=\sum_{i=1}^{K}(M_{\eta}-N_{t,i})_{+},
  \qquad t\ge K.
\]
Since each arm is pulled once initially, $\mathcal{D}_K=K(M_{\eta}-1)$ and
$\mathcal{D}_{t+1}=\mathcal{D}_t-X_{t+1}$. Let
$\tau_M:=\inf\{t\ge K:\mathcal{D}_t=0\}$. At time $\tau_M$, all arms have at
least $M_{\eta}$ pulls. By Lemma~\ref{lem:deficient},
$M_{\eta}\ge n_{\star}(\overline{\eta})$, so
$t_{\mathrm{sep}}(\overline{\eta})\le\tau_M$.

Define the good event
\[
  \cE_{\overline{\eta}}
  :=
  \cG_{\overline{\eta}}
  \cap
  \cK_{\overline{\eta}}.
\]
We use the filtration $\cF_s$ defined in Appendix~\ref{app:alpha}.
All martingale statements below are under $\mathbb{P}_{\mu}$, without
conditioning on the good event. For $s=K,\dots,T_{\eta}-1$, define the
martingale differences
\[
  Y_{s+1}:=X_{s+1}-\mathbb{E}_{\mu}[X_{s+1}\mid\cF_s].
\]
Then $\mathbb{E}_{\mu}[Y_{s+1}\mid\cF_s]=0$ and $\lvert Y_{s+1}\rvert\le1$.

We first show that the conditional probability of progress is bounded below
on the good event. Fix
an integer $K\le s<T_{\eta}\wedge\tau_M\wedge t_{\mathrm{sep}}(\overline{\eta})$
and suppose that $\cE_{\overline{\eta}}$ holds. Since
$s<t_{\mathrm{sep}}(\overline{\eta})$, at least one arm is separation-deficient.
If some separation-deficient arm is not the current leader $B_{s+1}$, then
Lemma~\ref{lem:deficient} implies that the challenger selected by the
unguarded algorithm has count $N_{s,C_{s+1}}<M_{\eta}$. Thus $X_{s+1}=1$ whenever
the challenger is pulled. If no separation-deficient nonleader exists, the
current leader is the only separation-deficient arm. In that case
$N_{s,B_{s+1}}<n_{\star}(\overline{\eta})\le M_{\eta}$, so $X_{s+1}=1$ whenever
the leader is pulled. In both cases,
\begin{equation}
\label{eq:poly-pathwise-drift}
  \mathbb{E}_{\mu}[X_{s+1}\mid\cF_s]
  \ge p
  \quad\text{on }\cE_{\overline{\eta}}\cap
  \{s<T_{\eta}\wedge\tau_M\wedge t_{\mathrm{sep}}(\overline{\eta})\}.
\end{equation}

On $\cE_{\overline{\eta}}\cap\{t_{\mathrm{sep}}(\overline{\eta})>T_{\eta}\}$,
one also has $\tau_M>T_{\eta}$, because $\tau_M\le T_{\eta}$ would imply
$t_{\mathrm{sep}}(\overline{\eta})\le\tau_M\le T_{\eta}$. Therefore
\eqref{eq:poly-pathwise-drift} applies for every $s=K,\dots,T_{\eta}-1$.
Moreover, $\mathcal{D}_{T_{\eta}}>0$, so
\[
  \sum_{s=K}^{T_{\eta}-1}X_{s+1}
  =
  \mathcal{D}_K-\mathcal{D}_{T_{\eta}}
  <\mathcal{D}_K
  =K(M_{\eta}-1).
\]
Consequently,
\begin{equation}
\label{eq:poly-event-inclusion}
  \cE_{\overline{\eta}}
  \cap\{t_{\mathrm{sep}}(\overline{\eta})>T_{\eta}\}
  \subseteq
  \left\{
    \sum_{s=K}^{T_{\eta}-1}Y_{s+1}\le-a_{\eta}
  \right\},
  \qquad
  a_{\eta}:=p(T_{\eta}-K)-K(M_{\eta}-1).
\end{equation}
By the definition of $T_{\eta}$,
\[
  T_{\eta}-K
  \ge
  \frac{2K}{p}(M_{\eta}-1)+\frac{16}{p^2}\log\frac6\eta,
\]
so $a_{\eta}\ge p(T_{\eta}-K)/2$. Azuma--Hoeffding gives
\[
  \mathbb{P}_{\mu}
  \left(\sum_{s=K}^{T_{\eta}-1}Y_{s+1}\le-a_{\eta}\right)
  \le
  \exp\left(-\frac{a_{\eta}^2}{2(T_{\eta}-K)}\right)
  \le
  \exp\left(-\frac{p^2(T_{\eta}-K)}8\right)
  \le\eta/6.
\]
Since $\mathbb{P}_{\mu}(\cE_{\overline{\eta}}^c)\le2\overline{\eta}=\eta/3$,
\eqref{eq:poly-event-inclusion} yields
\[
  \mathbb{P}_{\mu}
  \left(t_{\mathrm{sep}}(\eta/6)>T_{\eta}\right)
  \le
  \eta/2.
\]
The polynomial form of $T_{\eta}$ in Lemma~\ref{lem:deficient} gives the
claimed separation-time bound.

To bound the stabilization time, set $\overline{\eta}:=\eta/12$ and define the
rescaled horizon $\widehat{T}_{\eta}:=T_{6\overline{\eta}}=T_{\eta/2}$.
Applying the preceding separation-time bound at confidence level $\eta/2$ gives
\[
  \mathbb{P}_{\mu}
  \left(t_{\mathrm{sep}}(\overline{\eta})\le\widehat{T}_{\eta}\right)
  \ge1-\eta/4,
\]
because $(\eta/2)/6=\overline{\eta}$. Intersect this event with the
KL confidence event $\cK_{\overline{\eta}}$, which has probability at
least $1-\eta/12$ and is contained in $\cK^{\mathrm{sep}}_{\overline{\eta}}$
by Appendix~\ref{app:armwise-cs}, and with the leader-fraction event
$\cA_{\eta/4}$ of Proposition~\ref{prop:alpha}, which has probability at least
$1-\eta/4$. On
the intersection, Proposition~\ref{prop:permanent} implies
\[
  B_{t+1}=\istar
  \qquad\forall t\ge\widehat{T}_{\eta}.
\]
Lemma~\ref{lem:compactU}, applied at confidence level $\eta/4$, bounds the
earliest stabilization time by
\[
  t_U
  \le
  C_{\beta}(U)
  \left(
    \widehat{T}_{\eta}+K+
    \log\frac{e(\widehat{T}_{\eta}+1)^2}{\eta}
  \right).
\]
Here $C_{\beta}(U)$ absorbs the factor $4$ inside the logarithm and upward
rounding of the resulting horizon to an integer.
The failure probability of the intersection is at most
\[
  \eta/4+\eta/12+\eta/4<\eta.
\]
Since $\widehat{T}_{\eta}$ has the same polynomial form in $1/\eta$ after
changing constants, the claimed upper-quantile bound follows. Letting
$\eta\downarrow0$ shows that this $t_U$ is almost surely finite.
\end{proof}

\begin{proof}[Proof of Corollary~\ref{cor:unguarded-hp}]
Combine Proposition~\ref{prop:poly} with Theorem~\ref{thm:conditional}. The
upper-quantile bound for $t_U$ is polynomial in $1/\eta$, which yields the
claimed high-probability stopping bound. This upper-quantile bound need not
be summable, so it does not by itself yield an expected stopping-time bound.
\end{proof}

\subsection{Finite-Mean Sufficient Exploration}
\label{app:suff-exp}

Finite-mean sufficient exploration yields a summable upper-quantile bound for
the stabilization time, allowing us to apply the conditional expectation bound.

\begin{proof}[Proof of Theorem~\ref{thm:unguarded-suffexp}]
We construct an upper-quantile bound for stabilization from quantiles of
$T_{\mathrm{se}}$ and verify dyadic summability before applying
Theorem~\ref{thm:conditional}.
Fix the compact interval $U\Subset(0,1)$ from the theorem, with $\beta$ in
its interior.

Let
\[
  q_{\mathrm{se}}(\xi):=
  \inf\{x\ge0:\mathbb{P}_{\mu}(T_{\mathrm{se}}\le x)\ge1-\xi\},
  \qquad \xi\in(0,1],
\]
be the upper-tail quantile of $T_{\mathrm{se}}$. By
Lemma~\ref{lem:finite-mean-quantiles},
\[
  \sum_{k\ge0}2^{-k}q_{\mathrm{se}}(2^{-k})<\infty.
\]
Recall the separation scale from Appendix~\ref{app:challenger-growth}: for
$\xi\in(0,1/2]$,
\[
  R_{\xi}:=\log\frac{K}{\xi}+\log\log\frac{e}{\xi}+1.
\]
Lemma~\ref{lem:logsep} gives $n_{\star}(\xi)\le A_{\mu}R_{\xi}$. For
$\eta\in(0,1/4]$, define
\[
  L_{\mathrm{se}}(\eta)
  :=
  q_{\mathrm{se}}(\eta/3)
  +
  \ceil{\frac{K}{c_{\mathrm{se}}^2}n_{\star}(\eta/3)^2}+1.
\]
The $+1$ slack in this horizon ensures
\[
  \floor{L_{\mathrm{se}}(\eta)}\ge q_{\mathrm{se}}(\eta/3),
  \qquad
  \floor{L_{\mathrm{se}}(\eta)}
  \ge\ceil{\frac{K}{c_{\mathrm{se}}^2}n_{\star}(\eta/3)^2}.
\]
On the event $\{T_{\mathrm{se}}\le q_{\mathrm{se}}(\eta/3)\}$, every
integer $t\ge\floor{L_{\mathrm{se}}(\eta)}$ satisfies $t\ge T_{\mathrm{se}}$ and
\[
  \min_iN_{t,i}
  \ge
  c_{\mathrm{se}}\sqrt{t/K}
  \ge
  n_{\star}(\eta/3).
\]
Hence $t_{\mathrm{sep}}(\eta/3)\le\floor{L_{\mathrm{se}}(\eta)}
\le L_{\mathrm{se}}(\eta)$. Intersecting this
event with the KL confidence event $\cK_{\eta/3}$, which is contained in
$\cK^{\mathrm{sep}}_{\eta/3}$ by Appendix~\ref{app:armwise-cs},
Proposition~\ref{prop:permanent} implies
\[
  B_{t+1}=\istar
  \qquad\forall t\ge\floor{L_{\mathrm{se}}(\eta)}.
\]
Now also intersect with the event $\cA_{\eta/3}$ of
Proposition~\ref{prop:alpha}. Since the rule is unguarded, the guard term in
Lemma~\ref{lem:compactU} is absent. Applying the lemma with the integer
leader time $\floor{L_{\mathrm{se}}(\eta)}$ and its upper bound
$L_{\mathrm{se}}(\eta)$ gives
\[
  t_U
  \le
  u_U(\eta)
  :=
  C_{\beta}(U)
  \left(
    L_{\mathrm{se}}(\eta)+
    \log\frac{e(L_{\mathrm{se}}(\eta)+1)^2}{\eta/3}
  \right).
\]
The intersection of the three events has probability at least $1-\eta$. Thus
$u_U$ is an upper-quantile bound on $(0,1/4]$; extend it to $(0,1]$ by setting
$u_U(\eta)=u_U(1/4)$ for $\eta>1/4$.
As above, upward rounding is absorbed into $C_{\beta}(U)$, and letting
$\eta\downarrow0$ shows that $t_U$ is almost surely finite.

It remains to check dyadic summability. Since
$n_{\star}(\eta)\le A_{\mu}R_{\eta}$,
\[
  L_{\mathrm{se}}(\eta)
  \le
  q_{\mathrm{se}}(\eta/3)+C_{\mu,c_{\mathrm{se}}}R_{\eta/3}^2.
\]
For dyadic levels $\eta_k=2^{-k}\le1/4$, monotonicity of $q_{\mathrm{se}}$ gives
$q_{\mathrm{se}}(2^{-k}/3)\le q_{\mathrm{se}}(2^{-k-2})$, and therefore
\[
  \sum_{k\ge0}2^{-k}q_{\mathrm{se}}(2^{-k}/3)<\infty.
\]
The dyadic sums of $R_{2^{-k}/3}^2$ and $\log(e/2^{-k})$ are finite.
The finitely many larger error levels do not affect summability. Finally,
\[
  \log\frac{e(L_{\mathrm{se}}(\eta)+1)^2}{\eta/3}
  \le
  C\left(L_{\mathrm{se}}(\eta)+\log(e/\eta)+1\right),
\]
for a finite constant $C$ independent of $\eta$, using $\log(1+x)\le x$. Hence
\[
  \sum_{k\ge0}2^{-k}u_U(2^{-k})<\infty.
\]
For the stopping-bound coefficients, repeat this construction with the larger
compact interval $U^+:=[(\inf U)/2,(1+\sup U)/2]$.
For every $\beta\in U$, the symmetric interval centered at $\beta$ with radius
$\frac12\min\{\inf U,1-\sup U\}$ lies in $U^+$.
Thus Lemma~\ref{lem:compactU} permits an entrance coefficient depending only
on $U$, since the coin coefficient is numerical and $1+\beta\le2$.
The expectation part of Theorem~\ref{thm:conditional}, applied on $U^+$,
yields the claimed coefficients $C_{\mu,k}(U)$, with the permitted dependence
on $c_{\mathrm{se}}$ and the quantiles of $T_{\mathrm{se}}$.
\end{proof}

\subsection{Pairwise-Distinct Means}
\label{app:distinct-means}

For pairwise-distinct means, the cited finite-mean sufficient-exploration
result supplies the premise of Theorem~\ref{thm:unguarded-suffexp} for the
rule in Corollary~\ref{cor:distinctmeans}.

\begin{proof}[Proof of Corollary~\ref{cor:distinctmeans}]
We apply the cited sufficient-exploration result and then
Theorem~\ref{thm:unguarded-suffexp}.
In \citet[Lemmas~7, 17, and~21, Appendices~C.3 and~D.1]{jourdan2022top},
Lemma~17 verifies the leader property for the empirical leader, and Lemma~21
verifies the challenger property for the TCI index. Bernoulli rewards belong
to their bounded-reward model, and pairwise-distinct means satisfy their
Assumption~2. With tie-breaking as in Corollary~\ref{cor:distinctmeans},
Lemma~7 therefore gives an almost surely finite random time $T_{\mathrm{se}}$ with
$\mathbb{E}_{\mu}[T_{\mathrm{se}}]<\infty$ such that
\[
  \min_{i\in[K]}N_{t,i}\ge\sqrt{t/K}
  \qquad\forall t\ge T_{\mathrm{se}}.
\]
Thus Theorem~\ref{thm:unguarded-suffexp} applies with $c_{\mathrm{se}}=1$ to
the rule in that corollary.
\end{proof}

\subsection{Guarded Expected Stopping-Time Bound}
\label{app:guarded-closure}

We prove the exploration and stabilization bounds below, then derive the
expected stopping-time bound.

\noindent\textbf{Proposition~\ref{prop:guard} (restated).}
Assume the exploration guard in Algorithm~\ref{alg:ebtci} is enabled. Then for
all $t\ge K$,
$
  \min_{i\in[K]}N_{t,i}\ge \floor{\sqrt{t/K}}-1.
$
Consequently, for every $\eta\in(0,1)$,
$
  t_{\mathrm{sep}}(\eta)\le K\left(n_{\star}(\eta)+1\right)^2.
$
For every compact interval $U\Subset(0,1)$ containing $\beta$ in its interior,
there exists a constant $C_{\mu,\beta}(U)<\infty$ such that the earliest
$U$-stabilization time $t_U$ satisfies, for every $\eta\in(0,1/4]$,
\[
  \mathbb{P}_{\mu}\!\left(t_U\le C_{\mu,\beta}(U)(\log(e/\eta))^2\right)
  \ge1-\eta.
\]

\begin{proof}[Proof of Proposition~\ref{prop:guard}]
We prove the count bound by induction over square-root epochs, then combine
separation and leader-fraction control to bound stabilization.
Let $m\ge1$ be an integer and define epoch boundaries $t_m:=Km^2$.
By initialization,
$N_{t_1,i}=1$ for all arms. Assume inductively that $N_{t_m,i}\ge m$ for all
arms. For $t_m<t<t_{m+1}$, we have $m<\sqrt{t/K}<m+1$. Thus every round
after the epoch's first round is guarded while any arm still has count $m$.
Each guard pull raises a least-sampled arm's count from $m$ to $m+1$, so at
most $K$ such pulls suffice. There are $K(2m+1)-1\ge K$ rounds after the first,
and hence all counts are at least $m+1$ at $t_{m+1}$. This proves the epoch
invariant and the stated lower bound. Evaluating that bound at
$K(n_{\star}(\eta)+1)^2$ gives the claimed bound on $t_{\mathrm{sep}}(\eta)$.

For stabilization, fix $U$ as in the proposition and $\eta\in(0,1/4]$, and set
$L:=K(n_{\star}(\eta/2)+1)^2$. On
$\cK_{\eta/2}\cap\cA_{\eta/2}$, an event of probability at least $1-\eta$,
the inclusion $\cK_{\eta/2}\subseteq\cK^{\mathrm{sep}}_{\eta/2}$ and
Proposition~\ref{prop:permanent} give permanent leader correctness from $L$.
Apply Lemma~\ref{lem:compactU} to a symmetric compact subinterval of $U$
centered at $\beta$, with a constant $C_{\beta}(U)\ge1$. Since the following
integer horizon is at least $L$, it bounds the same earliest stabilization time:
\[
  t_U\le\ceil{C_{\beta}(U)
  \left(L+K+\log\frac{e(L+1)^2}{\eta/2}\right)}.
\]
Write $x:=\log(e/\eta)\ge1$. Lemma~\ref{lem:logsep} and
$\log(x+\log2)\le x$ give $n_{\star}(\eta/2)\le S_{\mu}x$, where
$S_{\mu}:=\max\{1,A_{\mu}(2+\log(2K))\}$.
Thus $L\le D_{\mu}x^2$ with $D_{\mu}:=K(S_{\mu}+1)^2$, and
\[
  \log\frac{e(L+1)^2}{\eta/2}
  \le x+\log2+2\log(D_{\mu}+1)+4\log x
  \le \left(5+\log2+2\log(D_{\mu}+1)\right)x^2.
\]
The ceiling adds at most $1\le x^2$, so the constant
\[
  C_{\mu,\beta}(U):=
  C_{\beta}(U)\left(D_{\mu}+K+5+\log2+2\log(D_{\mu}+1)\right)+1
\]
depends only on $\mu,\beta,U$ (with $K$ included in the instance $\mu$) and satisfies
\[
  \mathbb{P}_{\mu}\!\left(
    t_U\le C_{\mu,\beta}(U)(\log(e/\eta))^2
  \right)\ge1-\eta
  \qquad\forall\eta\in(0,1/4].
\]
Letting $\eta\downarrow0$ shows that $t_U$ is almost surely finite.
An upper-quantile bound on $(0,1]$ is
$u_U(\eta):=C_{\mu,\beta}(U)(\log(e/\min\{\eta,1/4\}))^2$.
Its dyadic series is finite because $u_U(2^{-k})=O_{\mu,\beta,U}((k+1)^2)$
and $\sum_{k\ge0}2^{-k}(k+1)^2<\infty$.
\end{proof}

\begin{proof}[Proof of Theorem~\ref{thm:guarded}]
By Proposition~\ref{prop:guard} and Lemma~\ref{lem:compactU}, the guard yields a
polylogarithmic upper-quantile bound for a $U$-stabilization time $t_U$. In
particular, this bound satisfies both the probability requirement and the
dyadic summability condition in Theorem~\ref{thm:conditional}. Applying the
expectation part of Theorem~\ref{thm:conditional} completes the proof.
\end{proof}

\subsection{Limitation of One-Step Index Comparisons}
\label{app:barrier}

We first determine the limiting population evidence as a provisional leader's
count grows while a competitor's count stays fixed. We then use this limit
to prove the one-step index-comparison limitation.

\begin{lemma}[Asymptotics with a fixed leader identity and growing count]
\label{lem:frozen}
Fix distinct arms $b,u\in[K]$ and a positive integer count $m$. The following
map is nondecreasing for positive integer counts $n$:
\[
  n\mapsto
  \inf_{x\in(0,1)}
  \left\{n \KL{\mu_b}{x}+m \KL{\mu_u}{x}\right\}.
\]
As $n\to\infty$, it converges to $m\KL{\mu_u}{\mu_b}$.
\end{lemma}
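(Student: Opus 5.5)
Write $g(n):=\inf_{x\in(0,1)}\{n\KL{\mu_b}{x}+m\KL{\mu_u}{x}\}$. Monotonicity follows because the objective is pointwise nondecreasing in $n$: for each fixed $x$, $n\KL{\mu_b}{x}+m\KL{\mu_u}{x}$ is nondecreasing in $n$ since $\KL{\mu_b}{x}\ge0$. Taking the infimum over $x$ preserves the order, so $g(n)\le g(n+1)$. For the limit, the choice $x=\mu_b\in(0,1)$ gives the upper bound $g(n)\le m\KL{\mu_u}{\mu_b}$ for every $n$. Hence $g$ is bounded and monotone, and converges to some $\ell\le m\KL{\mu_u}{\mu_b}$.

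For the matching lower bound, I would use the pooled-mean formula from Remark~\ref{rem:optimal-root}. The minimizer is $x_n=(n\mu_b+m\mu_u)/(n+m)$, and $x_n\to\mu_b$ as $n\to\infty$. Dropping the nonnegative first term gives
\[
  g(n)\ge m\KL{\mu_u}{x_n}.
\]
The map $q\mapsto\KL{\mu_u}{q}$ is continuous on $(0,1)$, and $\mu_b\in(0,1)$ is interior, so $m\KL{\mu_u}{x_n}\to m\KL{\mu_u}{\mu_b}$. Therefore $\ell\ge m\KL{\mu_u}{\mu_b}$, and the two bounds give $\ell=m\KL{\mu_u}{\mu_b}$.

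No step is a serious obstacle. The only point needing care is that the argument stays in the interior, where $\KL{\cdot}{\cdot}$ is continuous. This holds because the lemma concerns true means in $(0,1)$, so both $x_n$ and its limit $\mu_b$ lie in $(0,1)$. If $\mu_b=\mu_u$, both sides are zero and the statement is trivial, so no separate case is needed.
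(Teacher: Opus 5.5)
Your proof is correct and follows the paper's argument. You get monotonicity from the nonnegative term $n\KL{\mu_b}{x}$, the upper bound from $x=\mu_b$, and the lower bound from the pooled minimizer $x_n$ together with continuity of $\KL{\mu_u}{\cdot}$. The one difference is that you read off $x_n\to\mu_b$ directly from the explicit pooled-mean formula, whereas the paper deduces it indirectly from the uniform bound on $n\KL{\mu_b}{x_n}$; your route is slightly more direct.
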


\begin{proof}
Monotonicity in $n$ follows because the objective contains the nonnegative
term $n \KL{\mu_b}{x}$. Evaluating at $x=\mu_b$ gives the upper bound
$m \KL{\mu_u}{\mu_b}$. Conversely, let
$x_n:=(n\mu_b+m\mu_u)/(n+m)$ be the exact pooled minimizer, as in
Remark~\ref{rem:optimal-root}.
If $\KL{\mu_b}{x_n}$ failed to converge to zero, then $n\KL{\mu_b}{x_n}$ would
be unbounded along a subsequence, contradicting the finite upper bound at
$x=\mu_b$. Hence $x_n\to\mu_b$, and continuity gives the matching lower bound.
\end{proof}

\begin{proof}[Proof of Proposition~\ref{prop:barrier}]
We compare the exact index of the equal-mean arm with the limiting index of
the higher-mean arm as the provisional leader's count grows.
Because $\mu_j=\mu_b=\lambda$, choosing $x=\lambda$ gives
\[
  \Phi_j^{(b)}(n,M(\eta))=\log M(\eta),
\]
for every $n$. Fix any $a<C\KL{\mu_u}{\lambda}$ and choose $\eps>0$ so that
$a<C\KL{\mu_u}{\lambda}-2\eps$. By Lemma~\ref{lem:frozen}, for every
sufficiently small $\eta$ there exists $n_{\eta}$ such that for all
$n\ge n_{\eta}$,
\[
  \inf_{x\in(0,1)}\left\{n \KL{\lambda}{x}+m_u(\eta)\KL{\mu_u}{x}\right\}
  \ge
  m_u(\eta)\KL{\mu_u}{\lambda}-\eps\log(e/\eta).
\]
Since $m_u(\eta)\ge C\log(e/\eta)$ for small $\eta$,
\[
  \Phi_u^{(b)}(n,m_u(\eta))
  \ge
  \left(C\KL{\mu_u}{\lambda}-\eps\right)\log(e/\eta)+\log m_u(\eta).
\]
By assumption, for all sufficiently small $\eta$ and all $n\ge n_{\eta}$,
\[
  \log M(\eta)
  =
  \Phi_j^{(b)}(n,M(\eta))
  \ge
  \Phi_u^{(b)}(n,m_u(\eta))
  \ge
  \left(C\KL{\mu_u}{\lambda}-\eps\right)\log(e/\eta).
\]
Hence
\[
  M(\eta)\ge\eta^{-(C\KL{\mu_u}{\lambda}-\eps)}.
\]
Because $a<C\KL{\mu_u}{\lambda}-2\eps$, this implies
$M(\eta)\ge\eta^{-a}$ for all sufficiently small $\eta$.
\end{proof}

\section{Experiment Details}
\label{app:exp-details}

This section describes the experiments in Section~\ref{sec:experiments},
including the grids, recorded statistics, and supplementary diagnostics. All
experiments use synthetic Bernoulli rewards and no external datasets.

\subsection{Algorithms and Implementation}
\label{app:exp-algorithms}

All variants use the same GLR stopping rule, the threshold
$c(t,\delta)=\log(1/\delta)+2\log(1+t/2)+2+\log(K-1)$, and smallest-index
tie-breaking. We compare three sampling variants, each initialized by one
pull per arm:
\begin{itemize}
    \item \textbf{Unguarded penalized}: the unguarded algorithm, with challenger
    $C_{t+1}\in\argmin_{j\in[K]\setminus\{B_{t+1}\}}\{W_t(B_{t+1},j)+\log N_{t,j}\}$.
    \item \textbf{Guarded penalized}: the same rule, except that if
    $\min_{i\in[K]} N_{t,i}<\sqrt{t/K}$, the algorithm pulls a least-sampled arm.
    \item \textbf{Unpenalized ablation}: the unguarded algorithm whose challenger
    minimizes $W_t(B_{t+1},j)$ instead of $W_t(B_{t+1},j)+\log N_{t,j}$; the stopping
    rule, threshold, initialization, and tie-breaking are unchanged.
\end{itemize}
Challenger comparisons treat index values within an absolute tolerance of
$10^{-12}$ of the current minimum as ties, so the smaller index is retained.
The implementation evaluates the Bernoulli two-sample GLR at the pooled mean,
uses Numba-accelerated simulation kernels, and records every run in a raw CSV
with the recorded time, final recommendation, cap indicator, guard count, final
arm counts, and optimal population weights.

\paragraph{Code and reproduction.}
The supplementary code archive \path{eb_tci_neurips_submission_code.zip}
contains the directory \path{eb_tci_neurips_submission_code/}; all paths below
are relative to this directory. The setup instructions are in \path{README.md}.
The simulation kernels are in \path{src/eb_tci/sim_numba.py}, and the per-run
data are in \path{results/raw_full.csv} and
\path{results/raw_oracle_tracking_full.csv}. From the package root, the full
experiments and plots are reproduced by:
\begingroup
\small
\begin{verbatim}
python scripts/run_experiments.py --profile full --out results
python scripts/run_oracle_tracking_baseline.py --profile full --out results
python scripts/make_plots.py
\end{verbatim}
\endgroup

\paragraph{Population benchmark and threshold-crossing predictor.}
For each instance and each $\beta$, we compute $\Tstar_{\beta}(\mu)$ from the
Bernoulli equalization system in Proposition~\ref{prop:fixedalpha}. For fixed
$\alpha\in(0,1)$ and suboptimal arm
$j\in[K]\setminus\{\istar\}$, recall
\[
  I_j^{\alpha}(\omega)
  :=
  \inf_{x\in(0,1)}
  \{\alpha\KL{\mu_{\istar}}{x}+\omega\KL{\mu_j}{x}\},
  \qquad \omega>0.
\]
The optimizer is characterized by a scalar $\lambda_{\alpha}$ such that
\[
  I_j^{\alpha}(w_j^{\alpha}(\mu))=\lambda_{\alpha}
  \quad\forall j\in[K]\setminus\{\istar\},
  \qquad
  \sum_{j\neq \istar}w_j^{\alpha}(\mu)=1-\alpha,
\]
and $\Tstar_{\alpha}(\mu)=1/\lambda_{\alpha}$. Numerically, we solve this by
bisection on $\lambda_{\alpha}$, evaluating each inverse
$(I_j^{\alpha})^{-1}$ by bisection of the strictly increasing rate
$I_j^{\alpha}$. The inverse-map domain is
$0<\lambda_{\alpha}<\alpha\min_{j\neq\istar}\KL{\mu_{\istar}}{\mu_j}$,
as established in Proposition~\ref{prop:fixedalpha}.

The implementation in \path{src/eb_tci/oracle.py}, function
\texttt{solve\_t\_star}, uses the outer bracket
$[0,(1-10^{-12})\alpha\min_{j\neq\istar}\KL{\mu_{\istar}}{\mu_j}]$.
Each inverse search starts with the weight bracket $[0,1]$ and doubles its
upper endpoint until the rate reaches the trial value of $\lambda_{\alpha}$;
a safeguard ends expansion if that endpoint exceeds $10^{12}$. The outer
and inner searches allow at most $140$ and $120$ bisection iterations,
respectively. Both stop when the bracket width is at most $10^{-12}$ times
the larger of one and the upper endpoint. The returned suboptimal weights
are rescaled to sum to $1-\alpha$, and the information rate is recomputed as
the minimum of their pairwise rates. No separate residual test is performed.
The threshold-crossing predictor retains the exact finite-confidence threshold
and minimizes over integer times:
\[
  t_{\mathrm{pred}}(\mu,\beta,\delta)
  :=
  \min\{t\ge K:t/\Tstar_{\beta}(\mu)>c(t,\delta)\}.
\]

\paragraph{Experiment grids and replication counts.}
The five grids in Table~\ref{tab:app-exp-grid} comprise $23{,}050$ independent
simulation runs.

\begin{table}[t]
\centering
\small
\begin{tabular}{llr}
\toprule
Experiment & Grid & Runs \\
\midrule
Main evaluation & $3$ instances $\times$ $3$ values of $\beta$ $\times$ $5$ values of $\delta$ $\times$ $2$ variants $\times$ $200$ replications & $18{,}000$ \\
$\beta$-sweep & $2$ instances $\times$ $7$ values of $\beta$ $\times$ $200$ replications & $2{,}800$ \\
Stress survival & $2$ penalized variants $\times$ $600$ replications & $1{,}200$ \\
Penalty ablation & $1$ unpenalized variant $\times$ $500$ replications & $500$ \\
Guard overhead & $K\in\{4,8,12,16\}$ with $200,200,100,50$ replications & $550$ \\
\bottomrule
\end{tabular}
\caption{Experiment grids. The three main evaluation instances are
\textsc{sep5}, \textsc{cluster-close5}, and \textsc{tie-close4}. The evaluation
confidence levels are $10^{-4},10^{-5},10^{-6},10^{-7},10^{-8}$.}
\label{tab:app-exp-grid}
\end{table}

The main evaluation instances are
\[
\textsc{sep5}=[0.70,0.50,0.40,0.30,0.20],\qquad
\textsc{cluster-close5}=[0.70,0.64,0.63,0.62,0.61],
\]
and
\[
\textsc{tie-close4}=[0.60,0.56,0.56,0.56].
\]
The $\beta$-sweep uses \textsc{cluster-close5} and \textsc{tie-close4},
$\beta\in\{0.2,0.3,\dots,0.8\}$, $\delta=10^{-6}$, and the guarded penalized
variant. The guard-overhead family uses
$[0.60,0.56,\dots,0.56]$ with $K\in\{4,8,12,16\}$, $\beta=0.5$, and
$\delta=10^{-6}$. The stress experiments use \textsc{tie-close4}, $\beta=0.5$,
and $\delta=10^{-8}$.

\paragraph{Randomness, caps, and uncertainty summaries.}
The seed-to-setting map is defined by \texttt{build\_specs} in
\path{scripts/run_experiments.py}. The full profile starts at $20260424$ and
enumerates the main evaluation, $\beta$-sweep, guard overhead, stress survival,
and penalty ablation in that order. Within the main evaluation, the nested
loops run over instances (\textsc{sep5}, \textsc{cluster-close5},
\textsc{tie-close4}), $\beta$ ($0.3,0.5,0.7$), $\delta$
($10^{-4},\dots,10^{-8}$), and variant (unguarded, then guarded), with
replication as the innermost index. The $\beta$-sweep visits
\textsc{cluster-close5}, then \textsc{tie-close4}, with $\beta$ increasing
from $0.2$ to $0.8$. Guard overhead visits $K=4,8,12,16$; stress survival
visits the unguarded, then guarded variant. Each setting uses consecutive
seeds and advances the seed counter by its replication count, except that
every guard-overhead setting reserves $200$ seeds, including those with
$100$ or $50$ replications. The separately reported population-reference
runs use $200$ consecutive seeds starting at $20268000$ for
\textsc{cluster-close5} and $20269000$ for \textsc{tie-close4}, as specified
in \path{src/eb_tci/oracle_tracking.py}. Both raw CSV files record the seed
of each run.

All runs use a fail-safe cap of $700{,}000$ rounds. Each run records the smaller
of its stopping time
$\tau_{\delta}$ and the cap. A cap hit therefore contributes $700{,}000$ to
recorded-time summaries, while its stopping time remains censored. Only the
unpenalized ablation hits the cap; its cap-hit rate is reported explicitly.

For a scalar statistic $Z$, let $Z^{(r)}$ be its value in replication
$r\in\{1,\dots,R\}$, where $R\ge2$. The empirical mean is
$\overline{Z}:=R^{-1}\sum_{r=1}^{R}Z^{(r)}$, and its empirical standard
error over independent runs is
\[
  \widehat{\mathrm{se}}(\overline{Z})
  =
  \left(
    \frac{1}{R(R-1)}
    \sum_{r=1}^{R}(Z^{(r)}-\overline{Z})^2
  \right)^{1/2}.
\]
Bootstrap percentile $95\%$ confidence intervals use $1000$ resamples with an
independent bootstrap seed $314159$. The finite-confidence scaling figure uses
bootstrap percentile bands for the empirical mean; the $\beta$-sensitivity
figure uses $\pm1.96$ empirical standard errors.

\paragraph{Metrics.}
The main predictor-agreement metric is the absolute relative error
\[
  \operatorname{ARE}(\mu,\beta,\delta)
  :=
  \left\lvert\frac{\overline{\tau}_{\delta}}{t_{\mathrm{pred}}(\mu,\beta,\delta)}-1\right\rvert,
\]
where $\overline{\tau}_{\delta}$ is the Monte Carlo mean stopping time for
the uncapped penalized evaluation runs. We also report the median, $10$th,
$90$th, and $95$th percentiles of the recorded times, the empirical error rate,
the cap-hit rate, and final allocation diagnostics. For the guarded algorithm,
recall the guard count at stopping:
\[
  G_{\tau_{\delta}}^{\mathrm{tot}}
  =
  \sum_{t=K}^{\tau_{\delta}-1}
  \one\{\min_{i\in[K]} N_{t,i}<\sqrt{t/K}\}.
\]
We report the guarded-pull fraction
$G_{\tau_{\delta}}^{\mathrm{tot}}/\tau_{\delta}$ and the normalization
$G_{\tau_{\delta}}^{\mathrm{tot}}/\sqrt{K\tau_{\delta}}$ used in the theoretical
guard-count bound. No guarded run hits the cap.

\paragraph{Aggregate results.}
Table~\ref{tab:app-exp-calibration} reports predictor agreement over the full
$3\times3\times5$ grid. Table~\ref{tab:app-exp-stress} summarizes the penalized
stress tests and unpenalized ablation. Table~\ref{tab:app-exp-guard-overhead}
gives the equal-close guard-overhead family.

\begin{table}[t]
\centering
\small
\begin{tabular}{lcccc}
\toprule
Variant & Settings & Median ARE & Mean ARE & Maximum ARE \\
\midrule
Guarded penalized & $45$ & $1.39\%$ & $1.53\%$ & $3.86\%$ \\
Unguarded penalized & $45$ & $1.60\%$ & $1.66\%$ & $4.20\%$ \\
\bottomrule
\end{tabular}
\caption{Absolute relative error between empirical mean stopping time and
$t_{\mathrm{pred}}$ over the complete evaluation grid. No penalized evaluation
setting hits the simulation cap or yields a Monte Carlo misidentification.}
\label{tab:app-exp-calibration}
\end{table}

\begin{table}[t]
\centering
\small
\begin{tabular}{lrrrr}
\toprule
Variant & Replications & Mean & Median & $95$th percentile \\
\midrule
Unguarded penalized & $600$ & $105{,}593$ & $104{,}918$ & $134{,}140$ \\
Guarded penalized & $600$ & $104{,}022$ & $103{,}743$ & $130{,}792$ \\
Unpenalized ablation & $500$ & $188{,}036$ & $108{,}036$ & $700{,}000$ \\
\bottomrule
\end{tabular}
\caption{Stress-test recorded-time summaries on \textsc{tie-close4} with
$\beta=0.5$ and $\delta=10^{-8}$. The unpenalized ablation hits the
$700{,}000$-round cap in $12.8\%$ of runs; those runs contribute the cap
value to the mean and quantiles.}
\label{tab:app-exp-stress}
\end{table}

\begin{table}[t]
\centering
\small
\begin{tabular}{lrrrrr}
\toprule
Instance & $K$ & Replications & Mean $\tau_{\delta}$ & Mean $G_{\tau_{\delta}}^{\mathrm{tot}}/\tau_{\delta}$ & $95$th percentile $G_{\tau_{\delta}}^{\mathrm{tot}}/\sqrt{K\tau_{\delta}}$ \\
\midrule
\textsc{equal-close4} & $4$ & $200$ & $92{,}922$ & $0.0058\%$ & $1.80\%$ \\
\textsc{equal-close8} & $8$ & $200$ & $196{,}872$ & $0.0065\%$ & $1.80\%$ \\
\textsc{equal-close12} & $12$ & $100$ & $301{,}114$ & $0.0069\%$ & $1.89\%$ \\
\textsc{equal-close16} & $16$ & $50$ & $417{,}489$ & $0.0062\%$ & $1.41\%$ \\
\bottomrule
\end{tabular}
\caption{Guard overhead on the equal-close family with $\beta=0.5$ and
$\delta=10^{-6}$. Both the mean guarded-pull fraction and the $95$th percentile
of the normalized guard count are shown as percentages.}
\label{tab:app-exp-guard-overhead}
\end{table}

\paragraph{$\beta$-sensitivity numerical table.}
Table~\ref{tab:app-exp-beta} gives the numerical values underlying the
right-hand panel of Figure~\ref{fig:exp-main-diagnostics}. The empirical mean
tracks the nonmonotone dependence of $\Tstar_{\beta}(\mu)$ on the leader
sampling probability. The relative error is signed and equals
$\overline{\tau}_{\delta}/t_{\mathrm{pred}}-1$.

\begin{table}[t]
\centering
\small
\begin{tabular}{lrrrr}
\toprule
Instance & $\beta$ & Mean $\tau_{\delta}$ & $t_{\mathrm{pred}}$ & Signed relative error \\
\midrule
\textsc{cluster-close5} & $0.2$ & $34{,}151$ & $33{,}901$ & $0.74\%$ \\
\textsc{cluster-close5} & $0.3$ & $28{,}757$ & $29{,}267$ & $-1.74\%$ \\
\textsc{cluster-close5} & $0.4$ & $29{,}141$ & $29{,}001$ & $0.48\%$ \\
\textsc{cluster-close5} & $0.5$ & $31{,}390$ & $31{,}334$ & $0.18\%$ \\
\textsc{cluster-close5} & $0.6$ & $36{,}063$ & $36{,}505$ & $-1.21\%$ \\
\textsc{cluster-close5} & $0.7$ & $45{,}304$ & $46{,}458$ & $-2.48\%$ \\
\textsc{cluster-close5} & $0.8$ & $65{,}735$ & $67{,}856$ & $-3.13\%$ \\
\textsc{tie-close4} & $0.2$ & $102{,}704$ & $102{,}907$ & $-0.20\%$ \\
\textsc{tie-close4} & $0.3$ & $88{,}998$ & $88{,}819$ & $0.20\%$ \\
\textsc{tie-close4} & $0.4$ & $85{,}348$ & $87{,}269$ & $-2.20\%$ \\
\textsc{tie-close4} & $0.5$ & $92{,}510$ & $93{,}342$ & $-0.89\%$ \\
\textsc{tie-close4} & $0.6$ & $105{,}470$ & $107{,}680$ & $-2.05\%$ \\
\textsc{tie-close4} & $0.7$ & $134{,}493$ & $135{,}786$ & $-0.95\%$ \\
\textsc{tie-close4} & $0.8$ & $191{,}672$ & $196{,}628$ & $-2.52\%$ \\
\bottomrule
\end{tabular}
\caption{$\beta$-sensitivity with $\delta=10^{-6}$, the guarded penalized rule,
and $200$ replications per point.}
\label{tab:app-exp-beta}
\end{table}

\subsection{Idealized Population-Allocation Reference}
\label{app:exp-tracking-reference}

We use an idealized population-allocation reference to compare stopping-time
scales. From the true means, it computes a full $K$-arm allocation
\[
  w^{\star}(\mu)\in
  \argmax_{\substack{w\in(0,1)^K\\ \sum_{i=1}^{K}w_i=1}}
  \min_{j\in[K]\setminus\{\istar\}} I_j^{w_{\istar}}(w_j).
\]
The implementation in \path{src/eb_tci/oracle_tracking.py} initializes with
one pull per arm and then deterministically C-tracks the fixed allocation:
after checking the stopping rule at time $t$, it chooses
\[
  I_{t+1}\in\argmax_{i\in[K]}
  \{(t+1)w_i^{\star}(\mu)-N_{t,i}\},
\]
with smallest-index tie-breaking. It uses no additional projection or
forced-exploration step. The GLR statistic and threshold $c(t,\delta)$ are
the same as in the $\beta$-EB-TCI runs. This construction is motivated by
the allocation principle underlying Track-and-Stop
\citep{garivier2016track}, but it is not an empirical Track-and-Stop
implementation: the reference is given population information unavailable to a
learning algorithm and never recomputes an empirical optimal allocation.

For exact reproduction of the reported reference runs, the default command
uses the stored weights in \texttt{PRECOMPUTED\_GLOBAL\_ALLOCATIONS} in
\path{src/eb_tci/oracle_tracking.py}. Recomputing these weights can change
the deterministic pull order through differences in their last digits.
The optional \texttt{-{}-recompute-allocation} flag calls
\texttt{solve\_global\_optimal\_allocation}: it evaluates the fixed-fraction
solver at $181$ equally spaced values of $\alpha$ in $[0.05,0.95]$, selects
the first maximizing grid point, and refines between the grid points at most
two positions on either side using $40$ golden-section iterations. Among
the two final trial points, it returns the one with the larger computed
information rate, choosing the smaller $\alpha$ in a tie. Each fixed-fraction
evaluation uses the bisection settings described above.

Table~\ref{tab:app-exp-tracking-reference} compares this reference with the
closest guarded $\beta$-EB-TCI point from the $\beta$-sweep. We use $\beta=0.4$
because it is the closest grid value to the optimal best-arm fraction
$\alpha^{\star}:=w_{\istar}^{\star}(\mu)$. The optimal fraction is
$\alpha^{\star}=0.357$ on \textsc{cluster-close5} and
$\alpha^{\star}=0.367$ on \textsc{tie-close4}. Because the methods use
different best-arm fractions and the reference knows $\mu$, the table is a
diagnostic comparison of finite-sample scales, not evidence for an algorithm
ranking.

\begin{table}[t]
\centering
\small
\setlength{\tabcolsep}{3pt}
\begin{tabular}{llrrrrr}
\toprule
Instance & Method & $\beta$ or $\alpha^{\star}$ & Replications & Mean $\tau_{\delta}$ & SE & $95$th percentile \\
\midrule
\textsc{cluster-close5} & Guarded $\beta$-EB-TCI & $0.400$ & $200$ & $29{,}141$ & $401$ & $38{,}530$ \\
\textsc{cluster-close5} & Idealized allocation tracking & $0.357$ & $200$ & $34{,}808$ & $411$ & $43{,}978$ \\
\textsc{tie-close4} & Guarded $\beta$-EB-TCI & $0.400$ & $200$ & $85{,}348$ & $1{,}107$ & $111{,}139$ \\
\textsc{tie-close4} & Idealized allocation tracking & $0.367$ & $200$ & $102{,}085$ & $1{,}373$ & $132{,}011$ \\
\bottomrule
\end{tabular}
\caption{Idealized population-allocation reference at
$\delta=10^{-6}$. The idealized reference rows use the population-optimal Bernoulli
allocation and the same GLR threshold as the $\beta$-EB-TCI experiments. SE is
the empirical standard error of the mean. No run in this
table hits the cap or yields a Monte Carlo misidentification.}
\label{tab:app-exp-tracking-reference}
\end{table}

\subsection{Supplementary Figures}
\label{app:exp-supp-figures}

Figure~\ref{fig:app-exp-guard-diagnostics} reports the guard diagnostics that are
summarized in the main text. The left panel compares stopping-time survival
curves on \textsc{tie-close4}. The right panel reports the normalized guard
overhead across the equal-close family.

\begin{figure*}[t]
  \centering
  \includegraphics[width=0.88\textwidth]{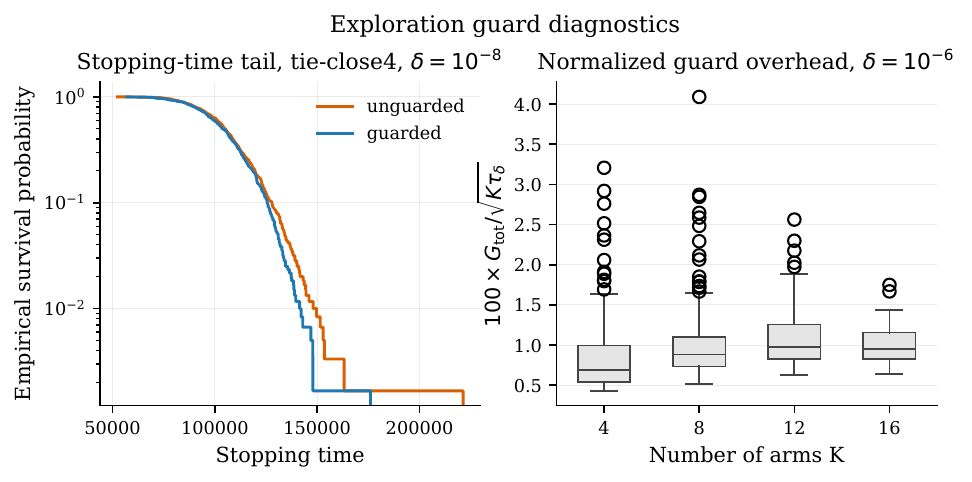}
  \caption{Exploration guard diagnostics. Left: empirical survival curves on
  \textsc{tie-close4} with $\beta=0.5$ and $\delta=10^{-8}$, using $600$
  replications per penalized variant. Right: normalized guard overhead on the
  equal-close family with $K\in\{4,8,12,16\}$ and $\delta=10^{-6}$.}
  \label{fig:app-exp-guard-diagnostics}
\end{figure*}

Figure~\ref{fig:app-exp-penalty-allocation} contains the remaining diagnostics.
The penalty ablation shows the effect of removing $\log N_{t,j}$ from the
challenger index. Its empirical survival curves summarize recorded times:
cap hits contribute the cap value, so the curves do not estimate the
uncensored stopping-time tail at or beyond the cap.

For the allocation panel, define the full count vector
$N_t:=(N_{t,i})_{i\in[K]}$ and the full fixed-$\beta$ target
$v^{\beta}(\mu)$ by
\[
  v_{\istar}^{\beta}(\mu):=\beta,
  \qquad
  v_j^{\beta}(\mu):=w_j^{\beta}(\mu)
  \quad(j\in[K]\setminus\{\istar\}).
\]
The left allocation panel compares mean final sampling fractions with this
target; the right panel reports the per-run distance
$\lVert N_{\tau_{\delta}}/\tau_{\delta}-v^{\beta}(\mu)\rVert_1$.
These are measurements at stopping; no per-run stabilization criterion is
reported.

\begin{figure*}[t]
  \centering
  \begin{minipage}[t]{0.47\textwidth}
    \centering
    \includegraphics[width=\linewidth]{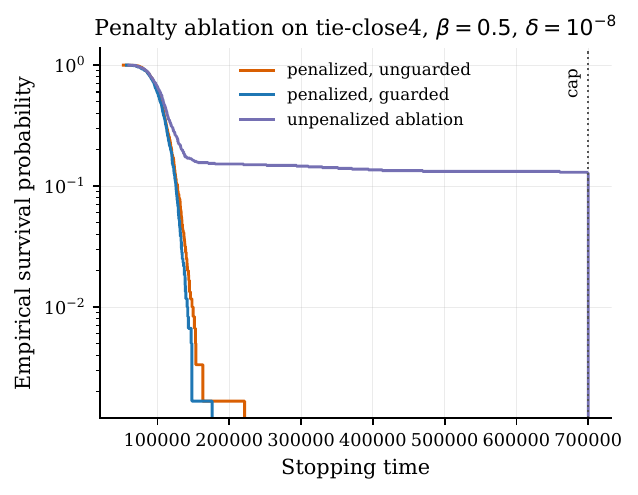}
    \vspace{-0.5em}

    {\small (a) Penalty ablation on the equal-close stress instance.}
  \end{minipage}
  \hfill
  \begin{minipage}[t]{0.47\textwidth}
    \centering
    \includegraphics[width=\linewidth]{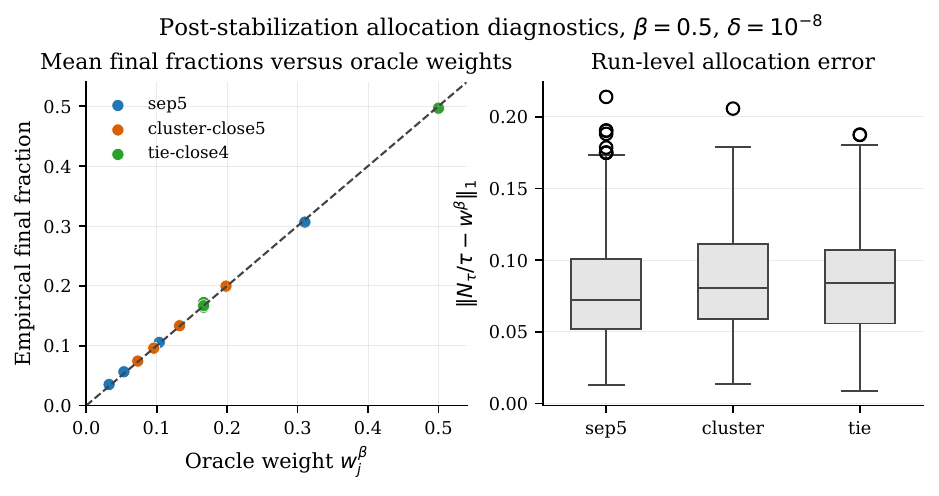}
    \vspace{-0.5em}

    {\small (b) Final allocation diagnostics.}
  \end{minipage}
  \caption{Additional diagnostics. Left: the unpenalized challenger removes
  $\log N_{t,j}$ but leaves the stopping rule and all other details unchanged.
  Right: final sampling fractions for the guarded rule with $\beta=0.5$ and
  $\delta=10^{-8}$, compared with the optimal population weights.}
  \label{fig:app-exp-penalty-allocation}
\end{figure*}

\clearpage
\section*{NeurIPS Paper Checklist}

\begin{enumerate}

\item {\bf Claims}
    \item[] Question: Do the main claims made in the abstract and introduction accurately reflect the paper's contributions and scope?
    \item[] Answer: \answerYes{}.
    \item[] Justification: The abstract and introduction state the exact scope of the paper: a sharp post-stabilization analysis of Bernoulli $\beta$-EB-TCI, unconditional high-probability closures, expectation closures under additional exploration assumptions, and a guarded variant. The claims are matched by Theorems~5.2, 6.4, 6.7, Corollary~6.5, Proposition~6.8, and the experiments in Section~7.
    \item[] Guidelines:
    \begin{itemize}
        \item The answer \answerNA{} means that the abstract and introduction do not include the claims made in the paper.
        \item The abstract and/or introduction should clearly state the claims made, including the contributions made in the paper and important assumptions and limitations. A \answerNo{} or \answerNA{} answer to this question will not be perceived well by the reviewers. 
        \item The claims made should match theoretical and experimental results, and reflect how much the results can be expected to generalize to other settings. 
        \item It is fine to include aspirational goals as motivation as long as it is clear that these goals are not attained by the paper. 
    \end{itemize}

\item {\bf Limitations}
    \item[] Question: Does the paper discuss the limitations of the work performed by the authors?
    \item[] Answer: \answerYes{}.
    \item[] Justification: The paper explicitly identifies the remaining limitation for the original unguarded rule: proving finite-mean stable-entry control under a unique-best instance with suboptimal tie classes. This limitation is discussed in Section~6, especially in the paragraph ``Why the unguarded general-$K$ case remains open,'' and in the conclusion.
    \item[] Guidelines:
    \begin{itemize}
        \item The answer \answerNA{} means that the paper has no limitation while the answer \answerNo{} means that the paper has limitations, but those are not discussed in the paper. 
        \item The authors are encouraged to create a separate ``Limitations'' section in their paper.
        \item The paper should point out any strong assumptions and how robust the results are to violations of these assumptions (e.g., independence assumptions, noiseless settings, model well-specification, asymptotic approximations only holding locally). The authors should reflect on how these assumptions might be violated in practice and what the implications would be.
        \item The authors should reflect on the scope of the claims made, e.g., if the approach was only tested on a few datasets or with a few runs. In general, empirical results often depend on implicit assumptions, which should be articulated.
        \item The authors should reflect on the factors that influence the performance of the approach. For example, a facial recognition algorithm may perform poorly when image resolution is low or images are taken in low lighting. Or a speech-to-text system might not be used reliably to provide closed captions for online lectures because it fails to handle technical jargon.
        \item The authors should discuss the computational efficiency of the proposed algorithms and how they scale with dataset size.
        \item If applicable, the authors should discuss possible limitations of their approach to address problems of privacy and fairness.
        \item While the authors might fear that complete honesty about limitations might be used by reviewers as grounds for rejection, a worse outcome might be that reviewers discover limitations that aren't acknowledged in the paper. The authors should use their best judgment and recognize that individual actions in favor of transparency play an important role in developing norms that preserve the integrity of the community. Reviewers will be specifically instructed to not penalize honesty concerning limitations.
    \end{itemize}

\item {\bf Theory assumptions and proofs}
    \item[] Question: For each theoretical result, does the paper provide the full set of assumptions and a complete (and correct) proof?
    \item[] Answer: \answerYes{}.
    \item[] Justification: The statements specify the relevant assumptions, including Bernoulli rewards, unique best arm, compact intervals $U\Subset(0,1)$, stable-regime entrance, sufficient exploration, and the optional guard. Full proofs are provided in Appendices A--F, with the main theorems and propositions cross-referenced from the body of the paper.
    \item[] Guidelines:
    \begin{itemize}
        \item The answer \answerNA{} means that the paper does not include theoretical results. 
        \item All the theorems, formulas, and proofs in the paper should be numbered and cross-referenced.
        \item All assumptions should be clearly stated or referenced in the statement of any theorems.
        \item The proofs can either appear in the main paper or the supplemental material, but if they appear in the supplemental material, the authors are encouraged to provide a short proof sketch to provide intuition. 
        \item Inversely, any informal proof provided in the core of the paper should be complemented by formal proofs provided in appendix or supplemental material.
        \item Theorems and Lemmas that the proof relies upon should be properly referenced. 
    \end{itemize}

    \item {\bf Experimental result reproducibility}
    \item[] Question: Does the paper fully disclose all the information needed to reproduce the main experimental results of the paper to the extent that it affects the main claims and/or conclusions of the paper (regardless of whether the code and data are provided or not)?
    \item[] Answer: \answerYes{}.
    \item[] Justification: The experiments are synthetic Bernoulli simulations and require no external dataset. Section~7 and Appendix~G specify the algorithmic rule, Bernoulli instances, parameter grids, replication counts, random seeds, computation of $T_\beta^\star(\mu)$, stopping threshold, simulation caps, uncertainty summaries, and reported metrics.
    \item[] Guidelines:
    \begin{itemize}
        \item The answer \answerNA{} means that the paper does not include experiments.
        \item If the paper includes experiments, a \answerNo{} answer to this question will not be perceived well by the reviewers: Making the paper reproducible is important, regardless of whether the code and data are provided or not.
        \item If the contribution is a dataset and\slash or model, the authors should describe the steps taken to make their results reproducible or verifiable. 
        \item Depending on the contribution, reproducibility can be accomplished in various ways. For example, if the contribution is a novel architecture, describing the architecture fully might suffice, or if the contribution is a specific model and empirical evaluation, it may be necessary to either make it possible for others to replicate the model with the same dataset, or provide access to the model. In general. releasing code and data is often one good way to accomplish this, but reproducibility can also be provided via detailed instructions for how to replicate the results, access to a hosted model (e.g., in the case of a large language model), releasing of a model checkpoint, or other means that are appropriate to the research performed.
        \item While NeurIPS does not require releasing code, the conference does require all submissions to provide some reasonable avenue for reproducibility, which may depend on the nature of the contribution. For example
        \begin{enumerate}
            \item If the contribution is primarily a new algorithm, the paper should make it clear how to reproduce that algorithm.
            \item If the contribution is primarily a new model architecture, the paper should describe the architecture clearly and fully.
            \item If the contribution is a new model (e.g., a large language model), then there should either be a way to access this model for reproducing the results or a way to reproduce the model (e.g., with an open-source dataset or instructions for how to construct the dataset).
            \item We recognize that reproducibility may be tricky in some cases, in which case authors are welcome to describe the particular way they provide for reproducibility. In the case of closed-source models, it may be that access to the model is limited in some way (e.g., to registered users), but it should be possible for other researchers to have some path to reproducing or verifying the results.
        \end{enumerate}
    \end{itemize}

\item {\bf Open access to data and code}
    \item[] Question: Does the paper provide open access to the data and code, with sufficient instructions to faithfully reproduce the main experimental results, as described in supplemental material?
    \item[] Answer: \answerYes{}.
    \item[] Justification: We provide an anonymized supplementary code package for the synthetic Bernoulli simulations. The package contains the EB-TCI simulation code, oracle computation for $T_\beta^\star(\mu)$ and $t_{\mathrm{pred}}$, scripts to reproduce the calibration grid, $\beta$-sweep, guard diagnostics, penalty ablation, minimal tracking-reference comparison, and figure/table generation, together with a README, configuration files, and dependency specification. The experiments use no external dataset.
    \item[] Guidelines:
    \begin{itemize}
        \item The answer \answerNA{} means that paper does not include experiments requiring code.
        \item Please see the NeurIPS code and data submission guidelines (\url{https://neurips.cc/public/guides/CodeSubmissionPolicy}) for more details.
        \item While we encourage the release of code and data, we understand that this might not be possible, so \answerNo{} is an acceptable answer. Papers cannot be rejected simply for not including code, unless this is central to the contribution (e.g., for a new open-source benchmark).
        \item The instructions should contain the exact command and environment needed to run to reproduce the results. See the NeurIPS code and data submission guidelines (\url{https://neurips.cc/public/guides/CodeSubmissionPolicy}) for more details.
        \item The authors should provide instructions on data access and preparation, including how to access the raw data, preprocessed data, intermediate data, and generated data, etc.
        \item The authors should provide scripts to reproduce all experimental results for the new proposed method and baselines. If only a subset of experiments are reproducible, they should state which ones are omitted from the script and why.
        \item At submission time, to preserve anonymity, the authors should release anonymized versions (if applicable).
        \item Providing as much information as possible in supplemental material (appended to the paper) is recommended, but including URLs to data and code is permitted.
    \end{itemize}

\item {\bf Experimental setting/details}
    \item[] Question: Does the paper specify all the training and test details (e.g., data splits, hyperparameters, how they were chosen, type of optimizer) necessary to understand the results?
    \item[] Answer: \answerYes{}.
    \item[] Justification: There is no training/test split because the experiments are Monte Carlo simulations of Bernoulli bandit instances. Section~7 and Appendix~G specify the instances, $\beta$ and $\delta$ grids, tie-breaking rule, optional guard status, numerical optimizer for $T_\beta^\star(\mu)$, stopping threshold, replication counts, and simulation caps.
    \item[] Guidelines:
    \begin{itemize}
        \item The answer \answerNA{} means that the paper does not include experiments.
        \item The experimental setting should be presented in the core of the paper to a level of detail that is necessary to appreciate the results and make sense of them.
        \item The full details can be provided either with the code, in appendix, or as supplemental material.
    \end{itemize}

\item {\bf Experiment statistical significance}
    \item[] Question: Does the paper report error bars suitably and correctly defined or other appropriate information about the statistical significance of the experiments?
    \item[] Answer: \answerYes{}.
    \item[] Justification: Appendix~G reports empirical standard errors and bootstrap percentile $95\%$ confidence intervals, with the number of bootstrap resamples and independent simulation replications specified. The main figures omit some error bars for readability, but the corresponding numerical uncertainty summaries are provided in the appendix tables.
    \item[] Guidelines:
    \begin{itemize}
        \item The answer \answerNA{} means that the paper does not include experiments.
        \item The authors should answer \answerYes{} if the results are accompanied by error bars, confidence intervals, or statistical significance tests, at least for the experiments that support the main claims of the paper.
        \item The factors of variability that the error bars are capturing should be clearly stated (for example, train/test split, initialization, random drawing of some parameter, or overall run with given experimental conditions).
        \item The method for calculating the error bars should be explained (closed form formula, call to a library function, bootstrap, etc.)
        \item The assumptions made should be given (e.g., Normally distributed errors).
        \item It should be clear whether the error bar is the standard deviation or the standard error of the mean.
        \item It is OK to report 1-sigma error bars, but one should state it. The authors should preferably report a 2-sigma error bar than state that they have a 96\% CI, if the hypothesis of Normality of errors is not verified.
        \item For asymmetric distributions, the authors should be careful not to show in tables or figures symmetric error bars that would yield results that are out of range (e.g., negative error rates).
        \item If error bars are reported in tables or plots, the authors should explain in the text how they were calculated and reference the corresponding figures or tables in the text.
    \end{itemize}

\item {\bf Experiments compute resources}
    \item[] Question: For each experiment, does the paper provide sufficient information on the computer resources (type of compute workers, memory, time of execution) needed to reproduce the experiments?
    \item[] Answer: \answerNo{}.
    \item[] Justification: The experiments are lightweight CPU-only synthetic Bernoulli simulations and require no GPUs, accelerators, cloud services, external datasets, or specialized hardware. The supplementary code package provides smoke, medium, and full reproduction modes so that the results can be regenerated on standard CPU machines. We do not report exact hardware specifications, memory usage, or wall-clock time for each experimental group.
    \item[] Guidelines:
    \begin{itemize}
        \item The answer \answerNA{} means that the paper does not include experiments.
        \item The paper should indicate the type of compute workers CPU or GPU, internal cluster, or cloud provider, including relevant memory and storage.
        \item The paper should provide the amount of compute required for each of the individual experimental runs as well as estimate the total compute. 
        \item The paper should disclose whether the full research project required more compute than the experiments reported in the paper (e.g., preliminary or failed experiments that didn't make it into the paper). 
    \end{itemize}
    
\item {\bf Code of ethics}
    \item[] Question: Does the research conducted in the paper conform, in every respect, with the NeurIPS Code of Ethics \url{https://neurips.cc/public/EthicsGuidelines}?
    \item[] Answer: \answerYes{}.
    \item[] Justification: The work is theoretical and uses only synthetic Bernoulli simulations. It does not involve human subjects, private data, scraped data, deployed systems, or high-risk model releases.
    \item[] Guidelines:
    \begin{itemize}
        \item The answer \answerNA{} means that the authors have not reviewed the NeurIPS Code of Ethics.
        \item If the authors answer \answerNo, they should explain the special circumstances that require a deviation from the Code of Ethics.
        \item The authors should make sure to preserve anonymity (e.g., if there is a special consideration due to laws or regulations in their jurisdiction).
    \end{itemize}

\item {\bf Broader impacts}
    \item[] Question: Does the paper discuss both potential positive societal impacts and negative societal impacts of the work performed?
    \item[] Answer: \answerNA{}.
    \item[] Justification: The paper is foundational theory for fixed-confidence best-arm identification and does not introduce a deployed system, dataset, or model. Any societal impacts would be indirect and depend on downstream applications of bandit algorithms rather than on the specific theoretical analysis presented here.
    \item[] Guidelines:
    \begin{itemize}
        \item The answer \answerNA{} means that there is no societal impact of the work performed.
        \item If the authors answer \answerNA{} or \answerNo, they should explain why their work has no societal impact or why the paper does not address societal impact.
        \item Examples of negative societal impacts include potential malicious or unintended uses (e.g., disinformation, generating fake profiles, surveillance), fairness considerations (e.g., deployment of technologies that could make decisions that unfairly impact specific groups), privacy considerations, and security considerations.
        \item The conference expects that many papers will be foundational research and not tied to particular applications, let alone deployments. However, if there is a direct path to any negative applications, the authors should point it out. For example, it is legitimate to point out that an improvement in the quality of generative models could be used to generate Deepfakes for disinformation. On the other hand, it is not needed to point out that a generic algorithm for optimizing neural networks could enable people to train models that generate Deepfakes faster.
        \item The authors should consider possible harms that could arise when the technology is being used as intended and functioning correctly, harms that could arise when the technology is being used as intended but gives incorrect results, and harms following from (intentional or unintentional) misuse of the technology.
        \item If there are negative societal impacts, the authors could also discuss possible mitigation strategies (e.g., gated release of models, providing defenses in addition to attacks, mechanisms for monitoring misuse, mechanisms to monitor how a system learns from feedback over time, improving the efficiency and accessibility of ML).
    \end{itemize}
    
\item {\bf Safeguards}
    \item[] Question: Does the paper describe safeguards that have been put in place for responsible release of data or models that have a high risk for misuse (e.g., pre-trained language models, image generators, or scraped datasets)?
    \item[] Answer: \answerNA{}.
    \item[] Justification: The paper does not release datasets, pretrained models, generative models, scraped data, or other artifacts with high misuse risk. The experiments use only synthetic Bernoulli rewards.
    \item[] Guidelines:
    \begin{itemize}
        \item The answer \answerNA{} means that the paper poses no such risks.
        \item Released models that have a high risk for misuse or dual-use should be released with necessary safeguards to allow for controlled use of the model, for example by requiring that users adhere to usage guidelines or restrictions to access the model or implementing safety filters. 
        \item Datasets that have been scraped from the Internet could pose safety risks. The authors should describe how they avoided releasing unsafe images.
        \item We recognize that providing effective safeguards is challenging, and many papers do not require this, but we encourage authors to take this into account and make a best faith effort.
    \end{itemize}

\item {\bf Licenses for existing assets}
    \item[] Question: Are the creators or original owners of assets (e.g., code, data, models), used in the paper, properly credited and are the license and terms of use explicitly mentioned and properly respected?
    \item[] Answer: \answerNA{}.
    \item[] Justification: The paper does not use existing datasets, pretrained models, or third-party code assets as part of the contribution. Prior scientific work is credited through citations, and the simulations are generated synthetically.
    \item[] Guidelines:
    \begin{itemize}
        \item The answer \answerNA{} means that the paper does not use existing assets.
        \item The authors should cite the original paper that produced the code package or dataset.
        \item The authors should state which version of the asset is used and, if possible, include a URL.
        \item The name of the license (e.g., CC-BY 4.0) should be included for each asset.
        \item For scraped data from a particular source (e.g., website), the copyright and terms of service of that source should be provided.
        \item If assets are released, the license, copyright information, and terms of use in the package should be provided. For popular datasets, \url{paperswithcode.com/datasets} has curated licenses for some datasets. Their licensing guide can help determine the license of a dataset.
        \item For existing datasets that are re-packaged, both the original license and the license of the derived asset (if it has changed) should be provided.
        \item If this information is not available online, the authors are encouraged to reach out to the asset's creators.
    \end{itemize}

\item {\bf New assets}
    \item[] Question: Are new assets introduced in the paper well documented and is the documentation provided alongside the assets?
    \item[] Answer: \answerYes{}.
    \item[] Justification: The only new asset is the anonymized supplementary code package used to reproduce the synthetic experiments. The package does not contain external datasets, pretrained models, scraped data, human-subject data, or assets with high misuse risk.
    \item[] Guidelines:
    \begin{itemize}
        \item The answer \answerNA{} means that the paper does not release new assets.
        \item Researchers should communicate the details of the dataset\slash code\slash model as part of their submissions via structured templates. This includes details about training, license, limitations, etc. 
        \item The paper should discuss whether and how consent was obtained from people whose asset is used.
        \item At submission time, remember to anonymize your assets (if applicable). You can either create an anonymized URL or include an anonymized zip file.
    \end{itemize}

\item {\bf Crowdsourcing and research with human subjects}
    \item[] Question: For crowdsourcing experiments and research with human subjects, does the paper include the full text of instructions given to participants and screenshots, if applicable, as well as details about compensation (if any)? 
    \item[] Answer: \answerNA{}.
    \item[] Justification: The paper does not involve crowdsourcing, human subjects, user studies, participant data, or human annotation.
    \item[] Guidelines:
    \begin{itemize}
        \item The answer \answerNA{} means that the paper does not involve crowdsourcing nor research with human subjects.
        \item Including this information in the supplemental material is fine, but if the main contribution of the paper involves human subjects, then as much detail as possible should be included in the main paper. 
        \item According to the NeurIPS Code of Ethics, workers involved in data collection, curation, or other labor should be paid at least the minimum wage in the country of the data collector. 
    \end{itemize}

\item {\bf Institutional review board (IRB) approvals or equivalent for research with human subjects}
    \item[] Question: Does the paper describe potential risks incurred by study participants, whether such risks were disclosed to the subjects, and whether Institutional Review Board (IRB) approvals (or an equivalent approval/review based on the requirements of your country or institution) were obtained?
    \item[] Answer: \answerNA{}.
    \item[] Justification: The paper does not involve crowdsourcing or research with human subjects, so IRB or equivalent review is not applicable.
    \item[] Guidelines:
    \begin{itemize}
        \item The answer \answerNA{} means that the paper does not involve crowdsourcing nor research with human subjects.
        \item Depending on the country in which research is conducted, IRB approval (or equivalent) may be required for any human subjects research. If you obtained IRB approval, you should clearly state this in the paper. 
        \item We recognize that the procedures for this may vary significantly between institutions and locations, and we expect authors to adhere to the NeurIPS Code of Ethics and the guidelines for their institution. 
        \item For initial submissions, do not include any information that would break anonymity (if applicable), such as the institution conducting the review.
    \end{itemize}

\item {\bf Declaration of LLM usage}
    \item[] Question: Does the paper describe the usage of LLMs if it is an important, original, or non-standard component of the core methods in this research? Note that if the LLM is used only for writing, editing, or formatting purposes and does \emph{not} impact the core methodology, scientific rigor, or originality of the research, declaration is not required.
    \item[] Answer: \answerNA{}.
    \item[] Justification: LLM-based assistance was used only for non-core research support, including writing and grammar checking, minor editing, and auxiliary help with organizing experiment scripts, checking reproduction commands, and formatting experimental summaries. The LLM was not used as an algorithmic, experimental, or scientific component of the proposed method. The theoretical results, algorithmic definitions, simulation protocol, numerical outputs, plots, and scientific claims were specified and verified by the authors, and the experiments are reproducible from the released code and the protocol described in Section~7 and Appendix~G.
    \item[] Guidelines:
    \begin{itemize}
        \item The answer \answerNA{} means that the core method development in this research does not involve LLMs as any important, original, or non-standard components.
        \item Please refer to our LLM policy in the NeurIPS handbook for what should or should not be described.
    \end{itemize}

\end{enumerate}
\end{document}